\documentclass[11pt]{article}
\usepackage{jmlr2e}
\firstpageno{1}
\usepackage{microtype}
\usepackage{graphicx}
\usepackage{subfigure}
\usepackage{booktabs} 
\usepackage{multirow}
\usepackage{amssymb}
\usepackage{mathtools}

\usepackage{enumitem}

\usepackage{amsmath}

\usepackage{algorithm}
\usepackage{algorithmic}
\usepackage{makecell}
\usepackage{bookmark}
\usepackage[titletoc,page]{appendix}
\usepackage[nottoc,notlot,notlof]{tocbibind}


\newtheorem{assumption}{Assumption}



\def \X {\mathcal{X}}

\def \R {\mathbb{R}}

\def \x {\mathbf{x}}

\def \E {\mathbb{E}}

\def \x {\mathbf{x}}

\def \y {\mathbf{y}}

\def \F {\mathcal{F}}

\def \P {\mathcal{P}}

\def \C {\mathbf C}

\def \bx {\boldsymbol{x}}
\def \by {\boldsymbol{y}}
\def \bz {\boldsymbol{z}}
\def \bv {\boldsymbol{v}}
\DeclareMathOperator{\argmin}{\arg\min}
\DeclareMathOperator{\argmax}{\arg\max}
\usepackage[capitalize,noabbrev]{cleveref}
\hypersetup{colorlinks={true},linkcolor={blue},citecolor={blue}}

\newcommand{\compilehidecomments}{false} 
\ifthenelse{ \equal{\compilehidecomments}{false} }{%
	\newcommand{\zongqi}[1]{{\color{red}  [\text{Zongqi:} #1]}}
}{
	\newcommand{\zongqi}[1]{}
}

\allowdisplaybreaks

\begin{document}

\title{Boosting Gradient Ascent for Continuous DR-submodular Maximization\thanks{Preliminary results of this paper were presented in part at the 2022 International Conference on Machine Learning~\citep{pmlr-v162-zhang22e}. $\dagger$. Equal Contribution. $\diamondsuit$. Corresponding Authors.}}

\author{\name Qixin Zhang\textsuperscript{$1, \dagger$}\email qxzhang4-c@my.cityu.edu.hk\\
       \name Zongqi Wan\textsuperscript{$3, 4, \dagger$}\email wanzongqi20s@ict.ac.cn 
     \\
       \name Zengde Deng\textsuperscript{$2$} \email dengzengde@gmail.com
       \\
       \name Zaiyi Chen\textsuperscript{$2$}\email zaiyi.chen@outlook.com\\
       \name Xiaoming Sun\textsuperscript{$3,4$} \email sunxiaoming@ict.ac.cn\\
       \name Jialin Zhang\textsuperscript{$3,4,\diamondsuit$} \email zhangjialin@ict.ac.cn \\
       \name Yu Yang\textsuperscript{$1,\diamondsuit$} \email yuyang@cityu.edu.hk
      \\ \addr $^1$ School of Data Science, City University of Hong Kong; $^2$ ByteDance;
      \\ $^3$ State Key Lab of Processors, Institute of Computing Technology, Chinese Academy of Sciences;
      \\ $^4$ School of Computer Science and Technology, University of Chinese Academy of Sciences.
      }
\editor{Anonymous}
\maketitle
\begin{abstract}
    Projected Gradient Ascent (PGA) is the most commonly used optimization scheme in machine learning and operations research areas. Nevertheless, numerous studies and examples have shown that the PGA methods may fail to achieve the tight approximation ratio for continuous DR-submodular maximization problems. To address this challenge,  we present a boosting technique in this paper, which can efficiently improve the approximation guarantee of the standard PGA to \emph{optimal} with only small modifications on the objective function. The fundamental idea of our boosting technique is to exploit non-oblivious search to derive a novel auxiliary function $F$, whose stationary points are excellent approximations to the global maximum of the original DR-submodular objective $f$. Specifically, when $f$ is monotone and $\gamma$-weakly DR-submodular, we propose an auxiliary function $F$ whose stationary points can provide a better $(1-e^{-\gamma})$-approximation than the $(\gamma^2/(1+\gamma^2))$-approximation guaranteed by the stationary points of $f$ itself. Similarly, for the non-monotone case, we devise another auxiliary function $F$ whose stationary points can achieve an optimal $\frac{1-\min_{\bx\in\mathcal{C}}\|\bx\|_{\infty}}{4}$-approximation guarantee where $\mathcal{C}$ is a convex constraint set. In contrast, the stationary points of the original non-monotone DR-submodular function can be arbitrarily bad~\citep{chen2023continuous}. Furthermore, we demonstrate the scalability of our boosting technique on four problems, i.e., offline stochastic DR-submodular maximization, online DR-submodular maximization, bandit DR-submodular maximization, and minimax optimization of convex-submodular function. In all of these four problems, our resulting variants of boosting PGA algorithm beat the previous standard PGA in several aspects such as approximation ratio and efficiency. Finally, we corroborate our theoretical findings with numerical experiments, which demonstrate the effectiveness of our boosting PGA methods.\\
\end{abstract}
\begin{keywords} Continuous DR-submodular Maximization, Approximation Ratio of Stationary Points, Non-Oblivious Search, 
  Boosting Gradient Ascent, Online Optimization.
\end{keywords}

\thispagestyle{empty}
\newpage
\setcounter{page}{1}


\section{Introduction}
Due to the relatively low computational complexity, first-order optimization methods are widely used in machine learning, operations research, and statistics communities. Especially for convex objectives, there is an enormous literature~\citep{nesterov2013introductory,bertsekas2015convex} deriving the corresponding convergence rate of first-order methods. Recent studies have shown that first-order optimization methods also can achieve the global minimum for some special non-convex problems~\citep{NIPS2014_443cb001,arora2016computing,ge2016matrix, du2018gradient,liu2020nonconvex}, although it is in general NP-hard to find a global minima of a non-convex objective function~\citep{murty1987some}. Motivated by this, massive research focused on the structures and conditions under which non-convex optimization is tractable~\citep{bian2017guaranteed,hazan2016graduated}. In this paper, we investigate a subclass of tractable non-convex problems, that is, stochastic continuous DR-submodular maximization. 

Continuous DR-submodular Maximization has drawn much attention recently due to that it admits efficient approximate maximization routines. 
For instance, under the deterministic monotone setting, \citet{bian2017guaranteed} proposed a variant of the Frank-Wolfe method achieving the optimal $(1-1/e)$-approximation guarantee. Although this Frank-Wolfe method plays an important role in achieving the tight approximation ratio, it is not easy to extend it to other settings such as stochastic optimization and online learning. Adapting the Frank-Wolfe method to more complicated settings usually requires some new and customized technical components and assumptions. When the stochastic estimates of the gradient are available, \cite{hassani2017gradient} pointed out that the Frank-Wolfe method \citep{bian2017guaranteed} performs poorly and can produce arbitrarily bad solutions. To tackle this challenge, \citet{mokhtari2018conditional} merged the variance reduction techniques into the previous Frank-Wolfe method~\citep{bian2017guaranteed}. Assuming the Lipschitz continuity of stochastic Hessian, an accelerated Frank-Wolfe algorithm is proposed by \cite{hassani2020stochastic} with the optimal stochastic first-order oracle complexity. Similarly, some other tricks should be involved to generalize Frank-Wolfe methods to the online setting, which makes the algorithm design more complicated. For example, \citet{chen2018online} and \citet{zhang2019online} took the idea of meta actions~\citep{streeter2008online} and blocking procedure to design online Frank-Wolfe algorithms. To achieve $O(\sqrt{T})$-regret, these algorithms require querying a significant number of gradients of the online function in each round, which triggers an efficiency concern. Moreover, in these aforementioned studies of online settings, the environment/adversary reveals the reward and stochastic first-order information immediately after the action is chosen by the learner/algorithm. In practice, the assumption of immediate feedback might be too restrictive. The feedback delays widely exist in many real-world applications, e.g., online advertising~\citep{mehta2007adwords}, influence maximization problem~\citep{chen2012time,yang2016continuous}. Also, the Frank-Wolfe methods for the general non-monotone DR-submodular maximization suffer similar issues \citep{hassani2020stochastic,zhang2023online,mualem2023resolving} when applied to the stochastic and online scenarios.

To address these issues, a natural algorithmic candidate is the Projected Gradient Ascent(PGA) algorithm, whose framework is not only simple to execute but more robust to the fluctuations of optimization environments, compared with Frank-Wolfe algorithms. However, for the DR-Submodular maximization problems, PGA algorithm only can guarantee a sub-optimal approximation ratio. Specifically, in contrast with the tight $(1-e^{-\gamma})$-approximation ratio for monotone $\gamma$-weakly DR-submodular function, PGA only produces a suboptimal $(\frac{\gamma^{2}}{1+\gamma^{2}})$-approximation to the global maximum~\citep{hassani2017gradient}. Not to mention that running PGA on a non-monotone DR-submodular function may produce an arbitrarily bad solution~\citep{chen2023continuous}. Thus, this article aims at revolving around the following question:

\vspace{1pt}
\begin{center}
    \emph{Can we boost the PGA methods to achieve the optimal approximation ratio for continuous DR-submodular maximization problems?}
\end{center}

Our answer to this question is affirmative. According to \citet{hassani2017gradient}, the standard projected gradient ascent method can converge to a stationary point of the continuous DR-submodular objective $f$ under mild assumptions. It is the unsatisfied performance of stationary points of $f$ that severely deteriorates the approximation guarantee of the standard PGA method. To overcome this drawback, we technically hope to devise an auxiliary function whose stationary points provide a better approximation guarantee than those of $f$ itself. Then we can obtain a better solution by running PGA on the auxiliary function. To be specific, for the monotone $\gamma$-weakly DR-submodular objective $f$, we first consider a family of auxiliary functions whose gradient at point $\boldsymbol{x}$ allocates different weight to the gradient $\nabla f(z\cdot\boldsymbol{x})$ where $z\in[0,1]$. By solving a factor-revealing optimization problem, we select the optimal auxiliary function whose stationary points provide a tight $(1-e^{-\gamma})$-approximation to the global maximum of the original function $f$. Then, based on this optimal auxiliary function, we boost the projected gradient ascent method to $(1-e^{-\gamma})$-approximation guarantee under both offline and online settings. 
When the objective function $f$ is non-monotone and DR-submodular, we consider another form of auxiliary functions whose gradient at point $\bx$ allocations different weight to the gradient $\nabla f(z\alpha\cdot\bx+(1-z\alpha)\cdot\underline{\bx})$, where $\alpha\in (0,1)$ is a fixed parameter to be determined and $\underline{\bx}$ is the feasible solution with lowest infinity norm, that is, $\underline{\bx} := \argmin_{\bx\in \mathcal{C}} \|\bx\|_{\infty}$ where $\mathcal{C}$ is the constraint of the problem. By selecting weights and $\alpha$ carefully, we construct an auxiliary function whose stationary point indicates a $\frac{1-\|\underline{\bx}\|_{\infty}}{4}$-approximation solution to the global maximum of $f$. Specially, if $\bx$ is a stationary point of the auxiliary function, then $\frac{\bx+\underline{\bx}}{2}$ is  a $\frac{1-\|\underline{\bx}\|_{\infty}}{4}$-approximation solution to $f$. Then we can boost both offline and online gradient ascent algorithms to $\frac{1-\|\underline{\bx}\|_{\infty}}{4}$-approximation which has been proved optimal~\citep{mualem2023resolving}. 

Furthermore, the auxiliary function elaborately designed by us can also be applied to the minimax optimization of convex-submodular functions. This problem is coined by \cite{adibi2022minimax} where only the case when the submodular part is monotone is considered. We improve the approximation ratio and extend the result for the case where the submodular part of the objective is non-monotone.

\paragraph{Contributions} To summarize, we make the following contributions:
\begin{enumerate} 
	\item 
	We design non-oblivious (auxiliary) functions for both monotone $\gamma$-weakly DR-submodular functions and general non-monotone DR-submodular functions. Any stationary point of the non-oblivious function indicates a $(1-e^{-\gamma})$-approximation solution for monotone $\gamma$-weakly function and $\frac{1-\|\underline{\bx}\|_{\infty}}{4}$-approximation for non-monotone function, respectively. As a comparison, a stationary point of the original objective function itself only provides a $(\frac{\gamma^2}{1+\gamma^2})$-approximation for the monotone $\gamma$-weakly function. Moreover, there is no approximation ratio guarantee so far for the stationary points of a non-monotone function~\citep{chen2023continuous}. Our non-oblivious functions make it possible to boost PGA method to attain tight approximation ratios.

	\item For offline stochastic DR-submodular maximization over a general convex set constraint, we propose the boosting gradient ascent method using the non-oblivious technique. 
	Our algorithm achieves a $(1-e^{-\gamma})$-approximation for monotone $\gamma$-weakly functions, which improves the $(\frac{\gamma^2}{1+\gamma^2})$-approximation of the classical projected gradient ascent algorithm and weakens the assumption of high order smoothness on the objective functions \citep{hassani2020stochastic}. For general non-monotone functions, our algorithm achieves the optimal $\frac{1-\|\underline{\bx}\|_{\infty}}{4}$-approximation, which is in accord with the best-known approximation ratio of deterministic non-monotone Frank-Wolfe variants~\citep{du2022improved, mualem2023resolving} over a general convex constraint.
 
	
	\item Next, we consider an online submodular maximization setting with adversarial feedback delays. 
	When an unbiased stochastic gradient estimation is available, we propose an online boosting gradient ascent algorithm that theoretically achieves the optimal $(1-e^{-\gamma})$-regret of $O(\sqrt{D})$ for monotone functions and $\frac{1-\|\underline{\bx}\|_{\infty}}{4}$-regret of $O(\sqrt{D})$ for non-monotone functions. Here $D=\sum_{t=1}^{T}d_{t}$ and $d_{t}$ is a positive integer delay for round $t$. To the best of our knowledge, our work is the first to investigate adversarial delays in online submodular maximization problems. 
 Remarkably, when $D=T$ for the standard no-delay setting, our proposed online boosting gradient ascent algorithm yields the first result to achieve a tight approximation ratio of $O(\sqrt{T})$ regret with only $O(1)$ stochastic gradient estimate at each round. We also extend our result to the \emph{bandit feedback} model where the algorithm can only observe the function value of the selected action rather than the entire function. Under this feedback model, we boost the bandit gradient ascent method via non-oblivious functions and obtain $(1-e^{-\gamma})$-regret and $\frac{1-\|\underline{\bx}\|_{\infty}}{4}$-regret of $O(T^{4/5})$ for the monotone case and general non-monotone case respectively. Specially, our regret bound of the monotone case improves the results by \cite{zhang2019online}, \cite{niazadeh2022online} as well as \cite{zhang2023online}. Moreover, we are the first result to study the online bandit learning of the non-monotone DR-submodular function over a convex set constraint. 

	\item We also apply our non-oblivious technique on the minimax optimization of convex-submodular functions, where we consider a general matroid constraint for the submodular part. When the submodular part is monotone, our algorithm achieves $(1-1/e)$-approximation, which improves the previous $\frac{1}{2}$-approximation result~\citep{adibi2022minimax}. As for non-monotone settings, our algorithm achieves $\frac{1}{4}$-approximation while we do not recognize any other algorithm that can achieve a constant approximation ratio under the same setting.
	
	\item Finally, we empirically evaluate our proposed boosting methods using the special examples of coverage maximization \citep{hassani2017gradient,chen2023continuous}, the simulated non-convex/non-concave quadratic programming, and movie recommendation. Our algorithms achieve superior performance in all experiments.
\end{enumerate}

\begin{table}[t]
	\renewcommand\arraystretch{1.35}
	\centering
	\caption{\small Comparison of convergence guarantees for continuous DR-submodular function maximization. Note that `\textbf{Mono.}' means the monotonicity of the objective. Especially, `mono.' and `general' means that the object function is monotone and general non-monotone respectively. `\textbf{Cons.}' means the constraint set and `d.c.' represents the downward closed convex set. `det.' and `sto.' represent the deterministic and stochastic setting, respectively. `\textbf{Hess Lip}' means whether the Hessian of functions needs to be Lipschitz continuous, `OPT' is the function value at the global optimum, `\textbf{Complexity}' is the total number of queries to the gradient oracle. }
	\vspace{0.5em}
	\resizebox{\textwidth}{!}{
		\setlength{\tabcolsep}{1.0mm}{
			\begin{tabular}{c|c|c|c|c|c|c}
				\toprule[1.5pt]
				\textbf{Method} &\textbf{Mono.} &\textbf{Cons.} & \textbf{Setting}& \textbf{Hess Lip.} &\textbf{Utility} & \textbf{Complexity} \\
				\hline
				\hline 
				\makecell{Submodular FW\\ \citep{bian2017guaranteed}}&mono. & d.c. & det. & No & $(1-1/e)\rm{OPT}-\epsilon$ & $O(1/\epsilon)$ \\
				\hline
				\makecell{SGA \\ \citep{hassani2017gradient}}&mono. &convex & sto. &No & $(1/2)\rm{OPT}-\epsilon$ & $O(1/\epsilon^2)$ \\
				\hline 
				\makecell{Classical FW \\ \citep{bian2020continuous}} &mono. &convex & det. & No & $(1/2)\rm{OPT}-\epsilon$ & $O(1/\epsilon^2)$ \\
				\hline
				\makecell{SCG \\ \citep{mokhtari2018conditional}} &mono. &convex & sto. & No & $(1-1/e)\rm{OPT}-\epsilon$ & $O(1/\epsilon^3)$ \\
				\hline
				\makecell{SCG++\\ \citep{hassani2020stochastic}}&mono. &convex & sto. & Yes & $(1-1/e)\rm{OPT}-\epsilon$ & $O(1/\epsilon^2)$ \\
				\hline
				\makecell{Non-Oblivious FW\\ \citep{mitra2021submodular+}} &mono. &convex & det. & No & $(1-1/e-\epsilon)\rm{OPT}-\epsilon$ & $O(1/\epsilon^3)$ \\
				\hline
				\hline
				\makecell{Non-monotone FW \\ \citep{bian2017continuous}} &general &d.c. &det. &No & $(1/e)\rm{OPT}-\epsilon$ & $O(1/\epsilon)$\\
				\hline
				\makecell{SMCG++\\ \citep{hassani2020stochastic}} &general &d.c. &sto. &Yes & $(1/e)\rm{OPT}-\epsilon$ & $O(1/\epsilon^2)$\\
				\hline
				\makecell{Non-mon. FW\\
\citep{du2022lyapunov}\\
\citep{mualem2023resolving}}&general &convex &det. &No & $\frac{1-\min_{\bx\in\mathcal{C}}\|\bx\|_{\infty}}{4}\rm{OPT}-\epsilon$ & $O(1/\epsilon)$\\
                \hline
				\makecell{\citep{pedramfar2023unified}}&general &convex &sto.& No & $\frac{1-\min_{\bx\in\mathcal{C}}\|\bx\|_{\infty}}{4}\rm{OPT}-\epsilon$ & $O(1/\epsilon^3)$ \\
				\hline
				\hline
				\multirow{2}{*}{\makecell{Boosting GA\\(\cref{thm:4},\cref{thm:4 nonmonotone})}} &mono. &convex & sto. & No & $(1-1/e)\rm{OPT}-\epsilon$ & $O(1/\epsilon^2)$ \\
				&general &convex &sto. &No &$\frac{1-\min_{\bx\in\mathcal{C}}\|\bx\|_{\infty}}{4}\rm{OPT}-\epsilon$ & $O(1/\epsilon^2)$\\
				\midrule[1.5pt]
			\end{tabular}
	}}
	\vspace{-1.5em}
	\label{tab:offline_convergence}
\end{table}

\subsection{Related Works}
In this section, we review the work related to this paper. We also present comparisons between this work and previous studies in \cref{tab:offline_convergence},  \cref{tab:online_convergence}, \cref{tab:bandit_online_convergence}, \cref{tab:minimax_convergence} and \cref{tab:stationary_points} for offline optimization, online learning, bandit online learning, minimax setting and approximation guarantee of stationary points, respectively.

\paragraph{Submodular Set Functions} Submodular set functions originate from combinatorial optimization problems \citep{nemhauser1978analysis,fisher1978analysis,fujishige2005submodular}, which could be either exactly minimized via Lov\'{a}sz extension~\citep{lovasz1983submodular} or approximately maximized~\citep{chekuri2014submodular,buchbinder2019constrained}. Submodular set functions find numerous applications in machine learning and other related areas, including viral marketing \citep{kempe2003maximizing}, document summarization \citep{lin2011class}, network monitoring \citep{leskovec2007cost}, and variable selection \citep{das2011submodular,elenberg2018restricted}.

\paragraph{Continuous Submodular Maximization} 
Submodularity can be naturally extended to continuous domains. For monotone functions, \citet{bian2017guaranteed} first proposed a variant of Frank-Wolfe~(Submodular FW) for continuous DR-submodular maximization problem with $(1-1/e)$-approximation guarantee after $O(1/\epsilon)$ iterations under deterministic gradient oracle. When considering the stochastic gradient oracle, \citet{hassani2017gradient} proved that the stochastic gradient ascent~(SGA) guarantees a $(1/2)$-approximation after $O(1/\epsilon^{2})$ iterations. Then, \citet{mokhtari2018conditional} proposed the stochastic continuous greedy algorithm~(SCG), which achieves a $(1-1/e)$-approximation after $O(1/\epsilon^{3})$ iterations. Moreover, by assuming the Hessian of objective is Lipschitz continuous, \citet{hassani2020stochastic} proposed the stochastic continuous greedy++~(SCG++), which guarantees a $(1-1/e)$-approximation after $O(1/\epsilon^{2})$ iterations. For non-monotone functions, the maximization problem becomes more challenging, and the state-of-the-art approximation ratios highly depend on the structure of constraint set. \cite{bian2019optimal} and \cite{niazadeh2020optimal} proposed similar $1/2$-approximation algorithms over the hypercube constraint. Under the downward-closed convex constraint, \cite{bian2017continuous} proposed the deterministic Two-Phase Frank-Wolfe and nonmonotone Frank-Wolfe with $1/4$-approximation and $1/e$-approximation guarantee respectively. The above results require the deterministic gradient oracle. As for stochastic gradient oracle, \cite{hassani2020stochastic} improved the nonmonotone Frank-Wolfe by variance reduction technique, which yields a result with $1/e$-approximation ratio. Under general convex constraints, \cite{vondrak2013symmetry} pointed out that any algorithm with a constant-factor approximation ratio requires exponential many queries. Luckily, \cite{durr2021non} found that the approximation ratio can be written in terms of the minimal $\ell_{\infty}$-norm of the vectors in the constraint set. To be specific, they proposed an algorithm with a $\frac{1-\min_{\bx\in\mathcal{C}}\|\bx\|_{\infty}}{3\sqrt{3}}$ approximation ratio. \cite{du2022lyapunov} improved the approximation to $\frac{1}{4}(1-\min_{\bx\in\mathcal{C}}\|\bx\|_{\infty})$, which was shown optimal by \cite{mualem2023resolving}. \cite{pedramfar2023unified} extended this result and proposed the algorithm for stochastic setting, which achieves the $\frac{1-\min_{\bx\in\mathcal{C}}\|\bx\|_{\infty}}{4}$ approximation ratio with $O(1/\epsilon^3)$ queries to the stochastic gradient oracle.

\paragraph{Online Continuous Submodular Maximization} For monotone case, \citet{chen2018online} first investigated the online (stochastic) gradient ascent~(OGA) with a $(1/2)$-regret of $O(\sqrt{T})$. Then, inspired by the meta-action technique~\citep{streeter2008online}, \citet{chen2018online} also proposed the Meta-Frank-Wolfe algorithm with a $(1-1/e)$-regret bound of $O(\sqrt{T})$ under the deterministic setting. Assuming that an unbiased estimation of the gradient is available, \citet{chen2018projection} proposed a variant of the Meta-Frank-Wolfe algorithm~(Meta-FW-VR), having a $(1-1/e)$-regret bound of $O(T^{1/2})$ and requiring $O(T^{3/2})$ stochastic gradient queries for each function. Then, in order to reduce the number of gradient evaluations, \citet{zhang2019online} presented the Mono-Frank-Wolfe taking the blocking procedure, which achieves a $(1-1/e)$-regret bound of $O(T^{4/5})$ with only one stochastic gradient evaluation in each round. This result is improved by \cite{niazadeh2022online}, \citet{liao2023improved} and \cite{pedramfar2024unified}. Additionally, \citet{liao2023improved} also extended their results to the distributed optimization setting.

For non-monotone functions, \cite{thang2021online} first explored the sublinear-regret online algorithm over a downward-closed set, where they devised an algorithm achieving $1/e$-regret of $O(T^{3/4})$ with access to non-convex online maximization oracle and $O(T^{3/4})$ gradient queries per round. \cite{zhang2023online} improved the result,
their algorithm only requires access to linear online maximization oracle and can trade off between the regret and the query complexity, achieving $O(\sqrt{T})$ regret with $O(T^{2/3})$ queries per round and $O(T^{4/5})$ regret with $O(1)$ queries per round. Furthermore, they extend the result to the bandit feedback model. Under the general convex set constraint, \cite{mualem2023resolving} proposed the algorithm with $\frac{1-\min_{\bx\in\mathcal{C}}{\|\bx\|_{\infty}}}{4}$-regret of $O(\sqrt{T})$. \cite{pedramfar2024unified} improved on the previous results for both the downward-closed constraint and the convex-set constraint. They achieved $O(T^{2/3})$ regret with the same approximation ratio, using $O(1)$ queries per round. 

\begin{table}[t]
	\renewcommand\arraystretch{1.35}
	\centering
	\caption{Comparison of regrets for stochastic online continuous DR-submodular function maximization with full-information feedback. Note that `\textbf{\# Grad. Evaluations}' means the number of stochastic gradient evaluations at each round, `\textbf{Ratio}' means approximation ratio, and `\textbf{Delay}' indicates whether the adversarial delayed feedback is considered. $D=T$ means no delay exists.}
	\vspace{0.5em}
	\resizebox{\textwidth}{!}{
		\setlength{\tabcolsep}{1.0mm}{
			\begin{tabular}{c|c|c|c|c|c|c}
				\toprule[1.5pt]
				\textbf{Method}&\textbf{Mono.}&\textbf{Cons.} & \textbf{\# Grad. Evaluations } &\textbf{Ratio} & \textbf{Regret} &\textbf{Delay} \\
				\hline
				\makecell{OGA\\\citep{chen2018online}} &mono. &convex & $O(1)$ & $1/2$ & $O(\sqrt{T})$ & No \\
				\hline
				\makecell{Meta-FW-VR\\ \citep{chen2018projection}}&mono. &convex & $O(T^{3/2})$ & $1-1/e$ & $O(\sqrt{T})$ & No\\
				\hline
				\makecell{Mono-FW\\ \citep{zhang2019online}} &mono. &convex & $O(1)$ & $1-1/e$ & $O(T^{4/5})$ & No\\
                \hline
    			\makecell{\citep{niazadeh2022online}}& mono. &d.c. & $O(\sqrt{T})$ & $1-1/e$ & $O(\sqrt{T})$ & No\\
				\hline
                \makecell{\citep{liao2023improved}} &mono. &convex & $O(1)$ & $1-1/e$ & $O(T^{3/4})$ & No\\
                \hline
                \makecell{\citep{pedramfar2024unified}} &mono. &convex & $O(1)$ & $1-1/e$ & $O(T^{2/3})$ & No\\
                \hline
				\hline
				\makecell{ODC\\ \citep{thang2021online}}& general &d.c. & $O(T^{3/4})$ & $1/e$ & $O(T^{3/4})$ & No\\
     \hline
				\makecell{Meta-MFW\\ \citep{zhang2023online}}&general &d.c. &$O(T^{3/2})$ & $1/e$ & $O(\sqrt{T})$& No\\ \hline
				\makecell{Non-monotone Meta-FW\\ \citep{mualem2023resolving}}&general &convex &$O(\sqrt{T})$& $\frac{1-\min_{\bx\in\mathcal{C}}\|\bx\|_{\infty}}{4}$ &$O(\sqrt{T})$&No \\
                \hline
                \multirow{2}{*}{\citep{pedramfar2024unified}} &general &d.c. & $O(1)$ & $1/e$ & $O(T^{2/3})$ & No\\
                &general &convex & $O(1)$ & $\frac{1-\min_{\bx\in\mathcal{C}}\|\bx\|_{\infty}}{4}$ & $O(T^{2/3})$ & No\\
                \hline
				\hline
				\multirow{2}{*}{\makecell{Boosting OGA\\ (\cref{thm:5},\cref{thm:5 nonmonotone})}}&mono. &convex & $O(1)$ & $1-1/e$ & $O(\sqrt{T})$ & Yes\\
				&general &convex & $O(1)$ & $\frac{1-\min_{\bx\in\mathcal{C}}\|\bx\|_{\infty}}{4}$ &$O(\sqrt{T})$ & Yes
				\\
				\midrule[1.5pt]
			\end{tabular}
	}}
	\vspace{-1.5em}
	\label{tab:online_convergence}
\end{table}

\paragraph{Bandit Continuous Submodular Maximization} \cite{zhang2019online} first studied the continuous submodular maximization problem under the bandit feedback model. The algorithm Bandit-FW they proposed achieves $O(T^{8/9})$ $(1-1/e)$-regret for monotone DR-submodular function and downward-closed convex set constraint. \cite{niazadeh2022online} further improved this result to $O(T^{5/6})$ $(1-1/e)$-regret. \cite{pmlr-v202-wan23e} improved the regret bound to $\widetilde{O}(T^{2/3})$ while assuming multi-linearity of the online functions. Furthermore, they applied this result to the discrete submodular bandit via a special continuous extension. As for the non-monotone case, \cite{zhang2023online} proposed the Bandit-MFW algorithm which achieves $O(T^{8/9})$ of $1/e$-regret over a downward-closed convex set constraint. \citet{pedramfar2024unified} improved this result to $O(T^{5/6})$ regret and designed the algorithm for convex set constraint, which achieved $O(T^{5/6})$ regret with tight approximation ratio. 
Another relevant work is by \cite{pedramfar2023unified}, where they investigated the stochastic bandit setting where the online objective functions are randomly sampled from an unknown distribution.

\begin{table}[t]
	\renewcommand\arraystretch{1.35}
	\centering
	\caption{Comparison of regrets for bandit continuous DR-submodular function maximization. `\textbf{Ratio}' means approximation ratio. For simplicity, we set $\gamma=1$ for our results which reduces to the standard monotone DR-submodular setting. \cite{pmlr-v202-wan23e} also proposed the algorithm which achieves $(1-1/e)$-regret of $\widetilde{O}(T^{3/4})$ when the reward function is monotone and constraint is convex set containing $\boldsymbol{0}$. However, their results make an extra assumption that $f_t(\boldsymbol{0})=0$ compared to other results, so we do not compare our result with this result in the table.}
	\vspace{0.5em}
	\resizebox{0.85\textwidth}{!}{
		\setlength{\tabcolsep}{1.0mm}{
			\begin{tabular}{c|c|c|c|c}
				\toprule[1.5pt]
				\textbf{Method}&\textbf{Monotonicity}&\textbf{Constraint} &\textbf{Ratio} & \textbf{Regret} \\
				\hline
				\makecell{Bandit-FW\\\citep{zhang2019online}} &monotone &downward closed & $1-1/e$ & $O(T^{8/9})$ \\
				\hline
                \makecell{\citep{niazadeh2022online}} &monotone &downward closed & $1-1/e$ & $O(T^{5/6})$ \\
				\hline
				\makecell{Bandit-MFW\\ \citep{zhang2023online}}&general &downward closed & $1/e$ & $O(T^{8/9})$\\
				\hline
                \multirow{2}{*}{\makecell{\citep{pedramfar2024unified}}}&general &downward closed & $1/e$ & $O(T^{5/6})$ \\
				&general &convex & $\frac{1-\min_{\bx\in\mathcal{C}}\|\bx\|_{\infty}}{4}$ &$O(T^{5/6})$
				\\
				\hline
				\hline
				\multirow{2}{*}{\makecell{Boosting BGA\\ (\cref{thm:monotone bandit},\cref{thm:nonmonotone bandit})}}&monotone &convex & $1-1/e$ & $O(T^{4/5})$ \\
				&general &convex & $\frac{1-\min_{\bx\in\mathcal{C}}\|\bx\|_{\infty}}{4}$ &$O(T^{4/5})$
				\\
				\midrule[1.5pt]
			\end{tabular}
	}}
	\vspace{-1.5em}
	\label{tab:bandit_online_convergence}
\end{table}

\paragraph{Minimax Optimization of Convex-Submodular Functions} The problem is formulated by \cite{adibi2022minimax} in the form of $\argmin_{\bx\in\mathcal{K}}\max_{S\in \mathcal{I}} f(\bx,S)$. Here $f$ is convex w.r.t. $\bx$ and monotone submodular w.r.t. $S$. They defined the notion of approximation solution to this minimax problem and proposed the algorithms that achieve $(1-1/e)$-approximation when the set system constraint $\mathcal{I} $ is a cardinality constraint and $\frac{1}{2}$-approximation when $\mathcal{I}$ is a general matroid. Besides, they also show a $1-1/e$ approximation ratio upper bound to this problem. We focus on the situation where $\mathcal{I}$ is a general matroid and the submodular part of the objective function is either monotone or general non-monotone. For monotone case, we improve the approximation ratio from $1/2$ to optimal $1-1/e$. As for the non-monotone case, our proposed algorithm achieves $\frac{1}{4}$-approximation guarantee.

\begin{table}[t]
	\renewcommand\arraystretch{1.35}
	\centering
	\caption{Comparison of the convergence guarantee for minimax optimization of convex-submodular function. Note that `\textbf{Mono.}' means the monotonicity of the submodular part of the convex-submodular function. A rigorous definition of the \textbf{Approximation Ratio} refers to \cref{sec:minimax}. `\textbf{Unbounded Grad}' indicates if the corresponding method applies to the situation where the gradient is not uniformly bounded.}
	\vspace{0.5em}
	\resizebox{\textwidth}{!}{
		\setlength{\tabcolsep}{1.0mm}{
			\begin{tabular}{c|c|c|c|c|c|c}
				\toprule[1.5pt]
				\textbf{Method}&\textbf{Mono.}&\textbf{$\mathcal{I}$}&\textbf{Setting} &\makecell{\textbf{Approximation}\\ \textbf{Ratio}} & \textbf{Complexity}&\makecell{\textbf{Unbounded }\\\textbf{Grad.}}\\
				\hline
				\multirow{2}{*}{\makecell{GG\\\citep{adibi2022minimax}}}&mono. &cardinality&det. & $(1-1/e,\epsilon)$ & $O(1/\epsilon^2)$& No\\
				& mono.&matroid &det. & $(1/2,\epsilon)$ & $O(1/\epsilon^2)$ & No\\
				\hline
				\multirow{2}{*}{\makecell{EGG\\\citep{adibi2022minimax}}}&mono. &cardinality &det. & $(1-1/e,\epsilon)$ & $O(1/\epsilon^2)$& Yes\\
				& mono.&matroid & det. &$(1/2,\epsilon)$ & $O(1/\epsilon^2)$ & Yes\\
				\hline
				\makecell{EGCE\\\citep{adibi2022minimax}}&mono. &matroid& det. & $(1/2,\epsilon)$ & $O(1/\epsilon)$ & Yes\\
				\hline
				\hline
				\multirow{2}{*}{\makecell{Boosting GG\\ (\cref{thm:minimax},\cref{thm:minimax nonmonotone})}}&mono. &matroid& sto. & $(1-1/e,\epsilon)$ & $O(1/\epsilon^2)$& No \\
				&general &matroid &sto. & $\left(1/4,\epsilon\right)$ & $O(1/\epsilon^2)$& No\\
				\midrule[1.5pt]
			\end{tabular}
	}}
	\vspace{-1.5em}
	\label{tab:minimax_convergence}
\end{table}

\paragraph{Stationary Points of Continuous Submodular Function} Stationary points are of independent interest because they characterize the fixed points of the classical gradient ascent method \citep{nesterov2013introductory} and Frank-Wolfe algorithm~\citep{lacoste2016convergence}. \cite{hassani2017gradient} first showed that the value of a monotone DR-Submodular function at stationary points is at least $(1/2)$-approximation to the global maximum. As for the non-monotone case, \cite{chen2023continuous} constructed a simple instance whose stationary points can have arbitrarily bad approximation ratios such that there is no approximation guarantee for stationary points of the general DR-submodular function.
\begin{table}[t]
	\renewcommand\arraystretch{1.35}
	\centering
	\caption{Comparison of the approximation guarantee for the solution related to stationary points of different functions. Note that `\textbf{Solution}' indicates the composition of the target point; `\textbf{Approximation Guarantee}' means the ratio between $f(\text{\textbf{Solution}})$ and $\text{OPT}:=\max_{\bx\in\mathcal{C}}f(\bx)$; when $f$ is monotone and DR-submodular, we set $F(\boldsymbol{x})=\int_{0}^{1}\frac{e^{z-1}}{z}f(z\cdot\boldsymbol{x})dz$; as for the non-monotone DR-submodular case, $F(\bx)= \int_0^1 \frac{1}{4z(1-\frac{z}{2})^3}\left(f\left(\frac{z}{2} \cdot (\bx-\underline{\bx})+\underline{\bx}\right) - f(\underline{\bx})\right)dz$ and
 $\underline{\bx}:=\argmin_{\bx\in \mathcal{C}} \|\bx\|_{\infty}$; ‘d.c.’ represents the down-closed convex set.}
	\vspace{0.5em}
	\resizebox{\textwidth}{!}{
		\setlength{\tabcolsep}{1.0mm}{
			\begin{tabular}{c|c|c|c}
				\toprule[1.5pt]
				\textbf{Solution}&\textbf{Mono.}&\textbf{Constraint} &\textbf{Approximation Guarantee} \\
				\hline
	\makecell{Stationary Point on $f$\\\citep{hassani2017gradient}} &mono. &convex & $1/2$ \\
				\hline
				\makecell{Stationary Point on $f$\\ \citep{chen2023continuous}}&general &d.c.& $0$\\
				\hline
				\hline
				\makecell{Stationary Point on $F$\\ (\cref{corollary: monotone stationary point})}& mono. &convex &$1-1/e$\\ \hline
				\multirow{2}{*}{\makecell{Average of Stationary Point on $F$ and $\underline{\bx}$\\(\cref{corollary: nonmonotone stationary point})}}&general &d.c.&$1/4$\\ 
        &general &convex&$\frac{1 -\min_{\bx\in\mathcal{C}}\|\bx\|_{\infty}}{4}$\\ 
				\midrule[1.5pt]
			\end{tabular}
	}}
	\vspace{-1.5em}
	\label{tab:stationary_points}
\end{table}
\paragraph{Non-Oblivious Search} In many cases, classical local search, e.g., the greedy method, may return a solution with a poor approximation ratio to the global maximum. To avoid this issue, \citet{khanna1998syntactic} and \citet{alimonti1994new} first proposed a technique named \textsl{Non-Oblivious Search} that leverages an auxiliary function to guide the search. After carefully choosing the auxiliary function, the new solution generated by the non-oblivious search may have a better performance than the previous solution found by the classical local search. Inspired by this idea, for the maximum coverage problem over a matroid, \citet{filmus2012power} proposed a $(1-1/e)$-approximation algorithm via a non-oblivious set function allocating extra weights to the solutions that cover some element more than once, which efficiently improves the traditional $(1/2)$-approximation greedy method. After that, \citet{filmus2014monotone} extended this idea to improve the $(1/2)$-approximation greedy method for the general submodular set maximization problem over a matroid. Recently, for the continuous submodular maximization problem with concave regularization, a variant of Frank-Wolfe algorithm~(Non-Oblivious FW) based on a special auxiliary function was proposed for boosting the approximation ratio of the submodular part from $1/2$ to $(1-1/e)$ in \citep{mitra2021submodular+}.
Compared to the proposed algorithm in this paper, i) 
The Non-Oblivious Frank-Wolfe method needs $O(1/\epsilon)$ gradient evaluations at each round under the deterministic setting, 
while our method only needs $O(1)$ evaluations per iteration under the stochastic setting;
ii) The Non-Oblivious Frank-Wolfe method is designed only for the deterministic monotone offline setting, while we present a boosting framework covering the stochastic gradient ascent in both monotone and non-monotone cases under several optimization scenarios.


\section{Preliminaries}\label{sec:pre}
In this section, we define some concepts and notations that we will frequently use.
\subsection{Continuous Submodularity}
\noindent{\textbf{Continuous Submodular Functions:}} A function $f:\mathcal{X}\rightarrow \mathbb{R}_{+}$ is a {\it continuous submodular} function if for any $ \boldsymbol{x} ,\boldsymbol{y}\in\mathcal{X}$,
\begin{align*}
    f(\boldsymbol{x})+f(\boldsymbol{y})\ge f(\boldsymbol{x}\land\boldsymbol{y})+f(\boldsymbol{x}\lor\boldsymbol{y}).
\end{align*}
Here, $\boldsymbol{x}\land\boldsymbol{y}=\min(\boldsymbol{x},\boldsymbol{y})$ and $\boldsymbol{x}\lor\boldsymbol{y}=\max(\boldsymbol{x},\boldsymbol{y})$ are component-wise minimum and component-wise maximum, respectively. $\mathcal{X}=\prod_{i=1}^{n}\mathcal{X}_{i}$ where each $\mathcal{X}_{i}$ is a compact interval in $\mathbb{R}_{+}$. Without loss of generality, we assume $\mathcal{X}_{i}=[0,1]$. If $f$ is twice differentiable, the continuous submodularity is equivalent to 
\begin{align*}
    \forall i\neq j,\forall\boldsymbol{x}\in\mathcal{X},\frac{\partial^{2}f(\boldsymbol{x})}{\partial x_{i}\partial x_{j}}\le 0.
\end{align*}

\noindent{\textbf{DR-Submodularity:}} 
A continuous submodular function $f$ is {\it DR-submodular} if
\begin{align*}
    f(\boldsymbol{x}+z\boldsymbol{e}_{i})-f(\boldsymbol{x})\le f(\boldsymbol{y}+z\boldsymbol{e}_{i})-f(\boldsymbol{y}),
\end{align*} where $\boldsymbol{e}_{i}$ is the $i$-th basic vector, $\boldsymbol{x}\ge\boldsymbol{y}$ and $z\in \mathbb{R}_{+}$ such that $\boldsymbol{x}+z\boldsymbol{e}_{i}, \boldsymbol{y}+z\boldsymbol{e}_{i}\in\mathcal{X}$. 
When the DR-submodular function $f$ is differentiable, we have $\nabla f(\boldsymbol{x})\le\nabla f(\boldsymbol{y})$ if $\boldsymbol{x}\ge\boldsymbol{y}$ \citep{bian2020continuous}. When $f$ is twice differentiable, the DR-submodularity is also equivalent to
\begin{align*}
    \forall i,j\in[n],\forall\boldsymbol{x}\in\mathcal{X},\frac{\partial^{2}f(\boldsymbol{x})}{\partial x_{i}\partial x_{j}}\le 0.
\end{align*} 

\noindent{\textbf{Monotonicity:}} We say $f$ is {\it monotone} if $f(\boldsymbol{x})\ge f(\boldsymbol{y})$ when $\boldsymbol{x}\ge\boldsymbol{y}$. Here the inequality of vectors is component-wise.

\noindent{\textbf{Weak DR-submodularity}: }
We call a \emph{monotone} function $f$ {\it \textbf{weakly} DR-submodular} with parameter $\gamma$, if
\begin{align*}
    \gamma=\inf_{\boldsymbol{x}\le\boldsymbol{y}}\inf_{i\in[n]}\frac{[\nabla f(\boldsymbol{x})]_{i}}{[\nabla f(\boldsymbol{y})]_{i}}.
\end{align*} Note that $\gamma=1$ indicates a differentiable and monotone DR-submodular function.
\subsection{Notations and Concepts}
\noindent{\textbf{Norm:}} $\|\cdot\|$ is the $\ell_{2}$ norm in Euclidean space. $\|\cdot\|_{\infty}$ is the $\ell_{\infty}$-norm in Euclidean space.

\noindent{\textbf{Radius and Diameter:}} For any bounded domain $\mathcal{C}\in\mathcal{X}$, the radius $r(\mathcal{C})=\max_{\boldsymbol{x}\in\mathcal{C}} \left\|\boldsymbol{x}\right\|$ and the diameter $\mathrm{diam}(\mathcal{C})=\max_{\boldsymbol{x},\boldsymbol{y}\in\mathcal{C}} \left\|\boldsymbol{x}-\boldsymbol{y}\right\|$.

\noindent{\textbf{Projection:}} We define the projection to the domain $\mathcal{C}$ as $\mathcal{P}_{\mathcal{C}}(\boldsymbol{x})=\arg\min_{\boldsymbol{z}\in\mathcal{C}}\left\|\boldsymbol{x}-\boldsymbol{z}\right\|$.

\noindent{\textbf{Smoothness:}} A differentiable function $f$ is called $L$-$smooth$ if for any $\boldsymbol{x},\boldsymbol{y}\in\mathcal{X}$,
\begin{equation*}
  \left\|\nabla f(\boldsymbol{x})-\nabla f(\boldsymbol{y})\right\|\le L\left\|\boldsymbol{x}-\boldsymbol{y}\right\|.
\end{equation*}

\noindent{\textbf{$\alpha$-Regret:}} When considering the online learning of DR-submodular functions, people usually use $\alpha$-regret~\citep{streeter2008online, chen2018online} to measure the performance of an algorithm. Online learning can be formulated by a $T$-$round$ game between the algorithm and an adversary. Each round, after the algorithm $\mathcal{A}$ chooses an action $\boldsymbol{x}_{t}\in\mathcal{X}$, the adversary reveals the utility function $f_{t}$. The objective of the algorithm $\mathcal{A}$ is to minimize the $\alpha$-regret, namely, the gap between the accumulative reward and that of the best-fixed action in hindsight with scale parameter $\alpha$, i.e.,
\begin{align*}
    \mathcal{R}_{\alpha}(\mathcal{A},T)=\alpha\max_{\boldsymbol{x}\in\mathcal{X}}\sum_{t=1}^{T}f_{t}(\boldsymbol{x})-\sum_{t=1}^{T}f_{t}(\boldsymbol{x}_{t}).
\end{align*}

\section{Derivation of the Non-oblivious Function}\label{sec:non-oblivious}
In this section, we present in detail how to derive non-oblivious functions for both monotone and non-monotone DR-submodular functions, which play an important role in our boosting framework.
To begin, we recall the definition of stationary points.
\begin{definition}\label{def:1}
A point $\boldsymbol{x}\in\mathcal{C}$ is called a stationary point for function $f:\mathcal{X}\rightarrow \mathbb{R}_{+}$ over the domain $\mathcal{C}\subseteq\mathcal{X}$ if
\begin{equation*}
    \max_{\boldsymbol{y}\in\mathcal{C}}\langle\nabla f(\boldsymbol{x}),\boldsymbol{y}-\boldsymbol{x}\rangle\le 0.
\end{equation*}
\end{definition}
\begin{remark}
Stationary points are crucial for the projected gradient ascent(PGA) methods since they characterize the fixed point of the iterative sequence~\citep{nesterov2013introductory}. Generally speaking, if falling into a stationary point $\bx$,  PGA will not make any progress and will be stuck in $\bx$ because any feasible update $\by-\bx$ has a non-positive correlation with the steepest direction $\nabla f(\bx)$. 
\end{remark}

To our regret, some stationary points of general DR-submodular objectives can only provide a conservative approximation guarantee or even behave extremely badly to the global maxima. To circumvent these issues, we present a boosting technique to avoid these bad stationary points.

\subsection{Non-oblivious Function for Monotone DR-Submodular Function}

We make the following assumptions throughout this paper when we are considering monotone objectives.

\begin{assumption}\label{assumption1}\
\begin{enumerate}
   \item[(i)] The $f:\mathcal{X}\rightarrow \mathbb{R}_{+}$ is a monotone, differentiable, weakly DR-submodular function with parameter $\gamma$. So is each $f_{t}$ in the online settings..
   \item[(ii)] We also assume the knowledge of parameter $\gamma$.
    \item[(iii)] Without loss of generality, $f(\boldsymbol{0})=0$. Also, in online settings,  $f_{t}(\boldsymbol{0})=0$ for $t=1,2,\dots,T$.
\end{enumerate}
\end{assumption} 
With this assumption, we have the following result. 
\begin{lemma}[\cite{hassani2017gradient}]\label{lemma:2} 
If $f$ is a differentiable monotone $\gamma$-weakly DR-submodular function, then for any stationary point $\boldsymbol{x}\in\mathcal{C}$ of $f$, we have 
\begin{equation}
    f(\boldsymbol{x})\ge\frac{\gamma^{2}}{\gamma^{2}+1}\max_{\boldsymbol{y}\in\mathcal{C}}f(\boldsymbol{y}).
\end{equation}
\end{lemma}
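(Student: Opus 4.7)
The plan is to bound $f(\boldsymbol{x}^*)$ above by $f(\boldsymbol{x})$ plus a gradient term that is controllable using stationarity, where $\boldsymbol{x}^*\in\argmax_{\boldsymbol{y}\in\mathcal{C}}f(\boldsymbol{y})$. First I would pass from $\boldsymbol{x}^*$ to the larger point $\boldsymbol{x}\vee\boldsymbol{x}^*$: monotonicity yields $f(\boldsymbol{x}^*)\le f(\boldsymbol{x}\vee\boldsymbol{x}^*)$, and writing the difference as a line integral of $\nabla f$ along the segment from $\boldsymbol{x}$ to $\boldsymbol{x}\vee\boldsymbol{x}^*$ gives
\begin{equation*}
f(\boldsymbol{x}\vee\boldsymbol{x}^*)-f(\boldsymbol{x})=\int_{0}^{1}\bigl\langle\nabla f\bigl(\boldsymbol{x}+t(\boldsymbol{x}\vee\boldsymbol{x}^*-\boldsymbol{x})\bigr),(\boldsymbol{x}^*-\boldsymbol{x})_{+}\bigr\rangle\,dt.
\end{equation*}
Since the integration point is coordinate-wise $\ge\boldsymbol{x}$ and $(\boldsymbol{x}^*-\boldsymbol{x})_{+}\ge\boldsymbol{0}$, the $\gamma$-weak DR-submodularity (which, given monotonicity $\nabla f\ge\boldsymbol{0}$, amounts to $[\nabla f(\boldsymbol{y})]_i\le\gamma^{-1}[\nabla f(\boldsymbol{x})]_i$ when $\boldsymbol{y}\ge\boldsymbol{x}$) upgrades this to $f(\boldsymbol{x}^*)-f(\boldsymbol{x})\le\gamma^{-1}\bigl\langle\nabla f(\boldsymbol{x}),(\boldsymbol{x}^*-\boldsymbol{x})_{+}\bigr\rangle$.

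Next I would play the symmetric game at the meet $\boldsymbol{x}\wedge\boldsymbol{x}^*$. Integrating along the segment from $\boldsymbol{x}\wedge\boldsymbol{x}^*$ up to $\boldsymbol{x}$, every interior point is $\le\boldsymbol{x}$, so weak DR-submodularity gives $\nabla f$ coordinate-wise at least $\gamma\nabla f(\boldsymbol{x})$, and the displacement vector is $(\boldsymbol{x}-\boldsymbol{x}^*)_{+}\ge\boldsymbol{0}$. Combining with $f(\boldsymbol{x}\wedge\boldsymbol{x}^*)\ge f(\boldsymbol{0})=0$ from monotonicity and Assumption~\ref{assumption1}(iii) yields
\begin{equation*}
f(\boldsymbol{x})\ge f(\boldsymbol{x})-f(\boldsymbol{x}\wedge\boldsymbol{x}^*)\ge\gamma\bigl\langle\nabla f(\boldsymbol{x}),(\boldsymbol{x}-\boldsymbol{x}^*)_{+}\bigr\rangle.
\end{equation*}

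To close the loop, I would invoke the stationarity of $\boldsymbol{x}$: taking $\boldsymbol{y}=\boldsymbol{x}^*$ in Definition~\ref{def:1} gives $\langle\nabla f(\boldsymbol{x}),(\boldsymbol{x}^*-\boldsymbol{x})_{+}\rangle\le\langle\nabla f(\boldsymbol{x}),(\boldsymbol{x}-\boldsymbol{x}^*)_{+}\rangle$ after splitting the inner product into positive and negative coordinates of $\boldsymbol{x}^*-\boldsymbol{x}$. Chaining the three inequalities then gives
\begin{equation*}
f(\boldsymbol{x}^*)\le f(\boldsymbol{x})+\tfrac{1}{\gamma}\bigl\langle\nabla f(\boldsymbol{x}),(\boldsymbol{x}^*-\boldsymbol{x})_{+}\bigr\rangle\le f(\boldsymbol{x})+\tfrac{1}{\gamma}\bigl\langle\nabla f(\boldsymbol{x}),(\boldsymbol{x}-\boldsymbol{x}^*)_{+}\bigr\rangle\le f(\boldsymbol{x})+\tfrac{1}{\gamma^{2}}f(\boldsymbol{x}),
\end{equation*}
and rearranging yields the claimed $\tfrac{\gamma^{2}}{\gamma^{2}+1}$-bound.

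The only delicate step is making sure the $\gamma$ factors compound correctly: one factor of $\gamma^{-1}$ comes from the ``upward'' integral (gradients possibly \emph{smaller} than $\nabla f(\boldsymbol{x})$, giving an upper bound on the increment), and one factor of $\gamma$ comes from the ``downward'' integral (gradients possibly \emph{larger}, giving a lower bound on the decrement); the stationarity condition then converts the inner product involving $(\boldsymbol{x}^*-\boldsymbol{x})_{+}$ into the one involving $(\boldsymbol{x}-\boldsymbol{x}^*)_{+}$ at no loss. Everything else—monotonicity, $f(\boldsymbol{0})=0$, and the fundamental theorem of calculus along axis-aligned segments—is routine, so no additional obstacle is anticipated.
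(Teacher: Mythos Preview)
Your proof is correct and follows essentially the same route as the paper's: both split $\boldsymbol{x}^*-\boldsymbol{x}$ via the join/meet decomposition, bound each piece by a line integral controlled through $\gamma$-weak DR-submodularity, and then invoke stationarity together with monotonicity and nonnegativity. The only cosmetic difference is that the paper first packages the two bounds into the single inequality $\langle \boldsymbol{y}-\boldsymbol{x},\nabla f(\boldsymbol{x})\rangle\ge\gamma f(\boldsymbol{x}\lor\boldsymbol{y})+\tfrac{1}{\gamma}f(\boldsymbol{x}\land\boldsymbol{y})-(\gamma+\tfrac{1}{\gamma})f(\boldsymbol{x})$ and then applies stationarity, whereas you apply stationarity in the middle to swap $(\boldsymbol{x}^*-\boldsymbol{x})_+$ for $(\boldsymbol{x}-\boldsymbol{x}^*)_+$; the underlying estimates and constants are identical.
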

We provide the proof of this lemma in \cref{proof:lem1}.
\begin{remark}
The ratio $\frac{\gamma^{2}}{1+\gamma^{2}}$-approximation guarantee is tight for the stationary points of $f$ itself, since a simple DR-submodular instance with a $(1/2+\epsilon)$-approximation local maximum is presented in \citet{hassani2017gradient} for any $\epsilon>0$. As a result, Lemma~\ref{lemma:2} implies that any stationary point of a $\gamma$-weakly DR-submodular function $f$ provides a
$(\frac{\gamma^{2}}{1+\gamma^{2}})$-approximation to the global maximum. 
\end{remark}

As far as we know, projected gradient ascent method~\citep{hassani2017gradient} with small step size usually converges to a stationary point of $f$, resulting in a limited $(\frac{\gamma^{2}}{1+\gamma^{2}})$ approximation guarantee. In order to boost this algorithm, a natural idea is to design some auxiliary functions whose stationary points achieve better approximation to the global maximum.
To be specific, we want to find $F:\mathcal{X}\rightarrow \mathbb{R}_{+}$ based on $f$ such that $\langle\boldsymbol{y}-\boldsymbol{x}, \nabla F(\boldsymbol{x})\rangle\ge \beta_{1}f(\boldsymbol{y})-\beta_{2}f(\boldsymbol{x})$,  
where $\beta_{1}/\beta_{2}\ge\frac{\gamma^{2}}{1+\gamma^{2}}$.

Motivated by \citep{feldman2011unified,filmus2012power,filmus2014monotone,harshaw2019submodular,feldman2021guess,mitra2021submodular+}, we consider the function $F(\boldsymbol{x}):\mathcal{X}\rightarrow \mathbb{R}_{+}$ whose gradient at point $\boldsymbol{x}$ allocates different weights to the gradient $\nabla f(z\cdot\boldsymbol{x})$, i.e.,
$\nabla F(\boldsymbol{x})=\int_{0}^{1} w(z)\nabla f(z\cdot\boldsymbol{x})\mathrm{d}z$, assuming that $\nabla f(z\cdot\boldsymbol{x})$ is Lebesgue integrable w.r.t. $z\in[0,1]$, the weight function $w(z)\in C^{1}[0,1]$, and $w(z)\ge 0$. Then, we investigate a property of $\langle \boldsymbol{y}-\boldsymbol{x}, \nabla F(\boldsymbol{x})\rangle$ in the following lemma.

\begin{lemma}[Proof in \cref{proof:lem2}]\label{lemma:3}
For all $\boldsymbol{x},\boldsymbol{y} \in \mathcal{X}$, we have
    \begin{align*}
     \langle \boldsymbol{y}-\boldsymbol{x}, \nabla F(\boldsymbol{x})\rangle \ge\left(\gamma\int_{0}^{1}w(z)\mathrm{d}z\right)\left(f(\boldsymbol{y})-\theta(w)f(\boldsymbol{x})\right),
\end{align*}
where $\theta(w)=\max_{f,\boldsymbol{x}}\theta(w,f,\boldsymbol{x})$, $\theta(w,f,\boldsymbol{x})=\frac{w(1)+\int^{1}_{0} (\gamma w(z)-w'(z))\frac{f(z\cdot\boldsymbol{x})}{f(\boldsymbol{x})}\mathrm{d}z}{\gamma\int_{0}^{1}w(z)\mathrm{d}z}$ for any $f(\boldsymbol{x})>0$. 
\end{lemma}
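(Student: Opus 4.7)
The plan is to expand $\langle \by - \bx, \nabla F(\bx)\rangle$ using the integral definition of $\nabla F$, split the inner product into a $\by$-part and an $\bx$-part, and handle each part with a different tool: integration by parts for the $\bx$-part (exploiting that $\langle \bx,\nabla f(z\bx)\rangle = \frac{d}{dz}f(z\bx)$), and a weak-DR-submodular first-order inequality for the $\by$-part. Concretely, from $\nabla F(\bx)=\int_0^1 w(z)\nabla f(z\bx)\,\mathrm{d}z$ I obtain
\begin{equation*}
\langle \by-\bx,\nabla F(\bx)\rangle = \int_0^1 w(z)\langle \by,\nabla f(z\bx)\rangle\,\mathrm{d}z \;-\; \int_0^1 w(z)\langle \bx,\nabla f(z\bx)\rangle\,\mathrm{d}z.
\end{equation*}

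First I would dispose of the $\bx$-term. Since $\frac{d}{dz}f(z\bx) = \langle \bx,\nabla f(z\bx)\rangle$, integration by parts (using $w\in C^1[0,1]$ and $f(\mathbf{0})=0$ from \cref{assumption1}) yields
\begin{equation*}
\int_0^1 w(z)\langle \bx,\nabla f(z\bx)\rangle\,\mathrm{d}z = w(1)f(\bx) - \int_0^1 w'(z) f(z\bx)\,\mathrm{d}z.
\end{equation*}
Next I would prove the auxiliary inequality $\langle \by,\nabla f(z\bx)\rangle \ge \gamma\bigl(f(\by) - f(z\bx)\bigr)$. The argument uses $\gamma$-weak DR-submodularity: for any $\bu,\bv\in\mathcal X$ with $\bv\le \bv+t(\bu\vee\bv-\bv)$ the definition gives $[\nabla f(\bv)]_i \ge \gamma[\nabla f(\bv+t(\bu\vee\bv-\bv))]_i$ componentwise; taking inner product with the nonnegative vector $\bu\vee\bv-\bv$ and integrating $t\in[0,1]$ gives $\langle \bu\vee\bv-\bv,\nabla f(\bv)\rangle \ge \gamma\bigl(f(\bu\vee\bv)-f(\bv)\bigr)$. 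Since $\bu\ge \bu\vee\bv-\bv\ge 0$ and $\nabla f(\bv)\ge 0$ by monotonicity, we conclude $\langle \bu,\nabla f(\bv)\rangle\ge \gamma(f(\bu\vee\bv)-f(\bv))$; setting $\bu=\by$, $\bv=z\bx$ and using $f(\by\vee z\bx)\ge f(\by)$ gives the claim.

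Assembling the two pieces,
\begin{equation*}
\langle \by-\bx,\nabla F(\bx)\rangle \ge \gamma\!\left(\int_0^1 w(z)\,\mathrm{d}z\right)\! f(\by) \;-\; w(1)f(\bx) - \int_0^1\!\bigl(\gamma w(z)-w'(z)\bigr)f(z\bx)\,\mathrm{d}z.
\end{equation*}
Dividing and multiplying the two $f(\bx)$-related terms by $\gamma\int_0^1 w(z)\,\mathrm{d}z \cdot f(\bx)$ exposes exactly the quantity $\theta(w,f,\bx)$ defined in the statement, and then bounding $\theta(w,f,\bx)\le \theta(w):=\max_{f,\bx}\theta(w,f,\bx)$ yields the desired inequality (the case $f(\bx)=0$ is handled separately: monotonicity then forces $f(z\bx)=0$ for $z\in[0,1]$, so the $\bx$-terms vanish and the bound is trivial).

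The main subtlety I anticipate is establishing the key pointwise inequality $\langle \by,\nabla f(z\bx)\rangle\ge \gamma(f(\by)-f(z\bx))$ in a way that uses \emph{only} weak DR-submodularity and monotonicity (the usual DR-submodular proof passes through a gradient-monotonicity argument that degrades by exactly a factor $\gamma$ in the weakly DR case). Once that inequality is in hand, the rest is integration by parts and bookkeeping to identify $\theta(w)$; the choice of $w$ that turns the resulting ratio into an optimal $(1-e^{-\gamma})$ guarantee is the subsequent optimization problem addressed later in the paper.
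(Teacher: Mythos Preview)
Your proposal is correct and follows essentially the same approach as the paper: split $\langle \by-\bx,\nabla F(\bx)\rangle$ into the $\bx$- and $\by$-parts, handle the $\bx$-part via integration by parts using $\frac{d}{dz}f(z\bx)=\langle \bx,\nabla f(z\bx)\rangle$ and $f(\mathbf 0)=0$, and bound the $\by$-part via the weak-DR inequality $\langle \by\vee(z\bx)-z\bx,\nabla f(z\bx)\rangle\ge\gamma(f(\by\vee(z\bx))-f(z\bx))$ together with $\by\ge\by\vee(z\bx)-z\bx\ge\mathbf 0$, $\nabla f(z\bx)\ge\mathbf 0$, and monotonicity $f(\by\vee(z\bx))\ge f(\by)$. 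Your explicit treatment of the degenerate case $f(\bx)=0$ is a minor addition the paper leaves implicit.
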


Fixing a weight function $w(z)$, Lemma~\ref{lemma:3} indicates that the stationary points of auxiliary function $F$ achieve at least $\frac{1}{\theta(w)}$-approximation guarantee. To maximize the approximation ratio, we consider the following factor-revealing optimization problem:
\begin{equation}\label{frop}
   \begin{aligned}
    \min_{w}\theta(w)=\min_{w}&\max_{f,\boldsymbol{x}} \frac{w(1)+\int^{1}_{0} (\gamma w(z)-w'(z))\frac{f(z\cdot\boldsymbol{x})}{f(\boldsymbol{x})}\mathrm{d}z}{\gamma\int_{0}^{1}w(z)\mathrm{d}z}\\
          \rm{s.t.} \ &w(z)\ge 0,\\
          &w(z)\in C^{1}[0,1],\\
          &f(\boldsymbol{x})>0, \\
          &\nabla f(\boldsymbol{x}_1)\ge\gamma\nabla f(\boldsymbol{y}_1)\ge\boldsymbol{0},  \forall \boldsymbol{x}_1\le \boldsymbol{y}_1\in\mathcal{X}.
    \end{aligned} 
\end{equation}
At first glance, problem~\eqref{frop} looks challenging to solve. Fortunately, we could directly find the optimal solution, which is provided in the following theorem.  
\begin{theorem}[Proof in the \cref{proof:thm1}]\label{thm:1} For problem~\eqref{frop}, we have $\hat{w}(z)=e^{\gamma(z-1)}\in\arg\min_{w}\theta(w)$ and $\min_{w}\max_{f,\boldsymbol{x}}\theta(w,f,\boldsymbol{x})=\frac{1}{1-e^{-\gamma}}$. 
\end{theorem}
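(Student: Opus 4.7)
The plan is to prove $\min_w \theta(w) = \tfrac{1}{1-e^{-\gamma}}$ by establishing achievability at $\hat{w}$ and a matching lower bound via a single hard instance that leaves no room for any weight function to do better. For achievability, substituting $\hat{w}(z) = e^{\gamma(z-1)}$ is immediate: since $\hat{w}$ satisfies the ODE $\gamma \hat{w}(z) - \hat{w}'(z) \equiv 0$, the $(f,\boldsymbol{x})$-dependent integral in the numerator of $\theta(\hat{w}, f, \boldsymbol{x})$ vanishes for every admissible $(f, \boldsymbol{x})$, leaving only $\hat{w}(1) = 1$. The denominator is the routine calculation $\gamma \int_0^1 e^{\gamma(z-1)} dz = 1 - e^{-\gamma}$, so $\theta(\hat{w}, f, \boldsymbol{x}) = \tfrac{1}{1-e^{-\gamma}}$ uniformly in $(f,\boldsymbol{x})$ and the outer max is trivial.

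For the lower bound I would exhibit one worst-case instance $(f^\star, \boldsymbol{x}^\star)$ on which every admissible $w$ attains exactly the target ratio. Writing $g(z) := f(z\boldsymbol{x})/f(\boldsymbol{x})$ and integrating by parts, $\int_0^1 w'(z) g(z) dz = w(1) g(1) - w(0) g(0) - \int_0^1 w(z) g'(z) dz$; combined with $g(0) = 0$ and $g(1) = 1$, the numerator of $\theta(w, f, \boldsymbol{x})$ collapses to $\int_0^1 w(z)\bigl(\gamma g(z) + g'(z)\bigr) dz$. The pivotal step is to \emph{choose} $g^\star$ so that $\gamma g^\star(z) + (g^\star)'(z)$ is a nonzero constant $c$: the numerator becomes $c \int_0^1 w(z) dz$, and the ratio $\theta(w, g^\star) = c/\gamma$ is completely independent of $w$. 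Solving the first-order ODE $\gamma g + g' = c$ with $g(0)=0$ and $g(1)=1$ yields the unique $g^\star(z) = \tfrac{1-e^{-\gamma z}}{1 - e^{-\gamma}}$ and $c = \tfrac{\gamma}{1 - e^{-\gamma}}$, hence $\theta(w, g^\star) = \tfrac{1}{1-e^{-\gamma}}$ for every admissible $w$.

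It remains to certify that $g^\star$ is realized by a genuinely admissible $\gamma$-weakly DR-submodular instance, not merely a feasible $g$. Take the one-dimensional $f^\star(x) = \tfrac{1 - e^{-\gamma x}}{\gamma}$ on $[0,1]$ with $\boldsymbol{x}^\star = 1$: it has $f^\star(0) = 0$, is monotone, and its derivative $(f^\star)'(x) = e^{-\gamma x}$ satisfies $(f^\star)'(x)/(f^\star)'(y) = e^{\gamma(y-x)} \ge 1 \ge \gamma$ for $x \le y$, confirming $\gamma$-weak DR-submodularity and $f^\star(z\boldsymbol{x}^\star)/f^\star(\boldsymbol{x}^\star) = g^\star(z)$. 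Combining the two halves yields $\theta(w) \ge \theta(w, f^\star, \boldsymbol{x}^\star) = \tfrac{1}{1-e^{-\gamma}} = \theta(\hat{w})$, so $\hat{w} \in \argmin_w \theta(w)$ and the optimal value is as claimed. The main obstacle is really the dual insight: the outer ODE $\gamma w - w' = 0$ solved by $\hat{w}$ is mirrored by the inner ODE $\gamma g + g' = c$ that pins down the adversary's extremal $g^\star$; once this saddle-point pairing is spotted, both bounds follow from elementary calculus plus the short DR-submodularity check.
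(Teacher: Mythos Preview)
Your proof is correct and follows essentially the same approach as the paper: both establish achievability at $\hat{w}(z)=e^{\gamma(z-1)}$ and the matching lower bound via the identical extremal profile $g^\star(z)=\hat{R}(z)=\tfrac{1-e^{-\gamma z}}{1-e^{-\gamma}}$, realized by the same one-dimensional $\gamma$-weakly DR-submodular instance. Your integration-by-parts step, which rewrites the numerator as $\int_0^1 w(z)\bigl(\gamma g(z)+g'(z)\bigr)\,dz$ and thereby motivates the adversary's ODE $\gamma g+g'=c$, is a slightly cleaner derivation than the paper's direct verification, but the underlying argument is the same.
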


In the following sections, we consider this optimal auxiliary function $F$ with $\nabla F(\boldsymbol{x})=\int_{0}^{1}\hat{w}(z)\nabla f(z\cdot\boldsymbol{x})\mathrm{d}z$, and $\hat{w}(z)=e^{\gamma(z-1)}$. According to the definition of $\theta(w,f,\boldsymbol{x})$ in \cref{lemma:3}, we could derive that $\theta(\hat{w},f,\boldsymbol{x})=\hat{w}(1)/(\gamma\int_{0}^{1}\hat{w}(z)\mathrm{d}z)=1/(1-e^{-\gamma})$ such that we have $\langle\boldsymbol{y}-\boldsymbol{x}, \nabla F(\boldsymbol{x})\rangle \ge (1-e^{-\gamma})f(\boldsymbol{y})-f(\boldsymbol{x})$ which immediately implies the following
corollary.
\begin{corollary}\label{corollary: monotone stationary point}
    Let $F$ be defined by its gradient $\nabla F(\bx) = \int_0^1 e^{\gamma(z-1)}\nabla f(z\cdot\boldsymbol{x})\mathrm{d}z$,  then for any $\bx,\by\in \mathcal{X}$, we have
    \begin{equation*}
        \langle \nabla F(\bx), \by-\bx\rangle \geq \left(1-e^{-\gamma}\right)f(\by)-f(\bx).
    \end{equation*}
    As a result, If $\bx$ is a stationary point of $F$, then $\bx$ is a $(1-e^{-\gamma})$-approximation solution to the original monotone  $\gamma$-weakly DR-submodular objective $f$.
\end{corollary}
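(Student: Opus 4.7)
The plan is to derive the corollary essentially as a direct corollary of \cref{lemma:3} combined with \cref{thm:1}, followed by a one-line application of the stationarity condition in \cref{def:1}. So this should be a short, clean proof rather than a technical argument.

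First, I would instantiate \cref{lemma:3} with the specific weight function $\hat{w}(z) = e^{\gamma(z-1)}$. The main computation is to evaluate the prefactor $\gamma\int_0^1 \hat{w}(z)\,\mathrm{d}z$, which is an elementary integral giving $\gamma \cdot \tfrac{1}{\gamma}(1 - e^{-\gamma}) = 1 - e^{-\gamma}$. Then, by \cref{thm:1}, $\theta(\hat{w}) = 1/(1-e^{-\gamma})$. Substituting both into the inequality from \cref{lemma:3} yields
\begin{equation*}
\langle \nabla F(\bx), \by - \bx\rangle \ge (1-e^{-\gamma})\!\left(f(\by) - \tfrac{1}{1-e^{-\gamma}} f(\bx)\right) = (1-e^{-\gamma}) f(\by) - f(\bx),
\end{equation*}
which is exactly the first claim of the corollary. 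Strictly speaking, I would also note that the choice $\hat{w}(z) = e^{\gamma(z-1)}$ is nonnegative and in $C^1[0,1]$, so it lies in the admissible class used by \cref{lemma:3}.

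For the approximation-ratio claim, I would let $\bx^\star \in \argmax_{\by\in\mathcal{C}} f(\by)$ and apply the derived inequality at $\by = \bx^\star$. Since $\bx$ is a stationary point of $F$ in the sense of \cref{def:1}, we have $\langle \nabla F(\bx), \bx^\star - \bx\rangle \le 0$, and therefore
\begin{equation*}
0 \ge (1-e^{-\gamma}) f(\bx^\star) - f(\bx),
\end{equation*}
which rearranges to $f(\bx) \ge (1-e^{-\gamma}) f(\bx^\star)$, giving the desired $(1-e^{-\gamma})$-approximation guarantee.

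There is no real obstacle here: all the heavy lifting has been done in \cref{lemma:3} and \cref{thm:1}. The only thing to be mildly careful about is the direction of the inequality after substituting $\theta(\hat{w}) = 1/(1-e^{-\gamma})$ (so that the $1/(1-e^{-\gamma})$ cancels cleanly with the prefactor $1-e^{-\gamma}$), and making sure the stationarity definition is applied at the maximizer $\bx^\star \in \mathcal{C}$ rather than an arbitrary point of $\mathcal{X}$.
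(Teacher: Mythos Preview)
Your proposal is correct and follows essentially the same route as the paper: instantiate \cref{lemma:3} with $\hat{w}(z)=e^{\gamma(z-1)}$, use that $\gamma\int_0^1\hat{w}(z)\,\mathrm{d}z=1-e^{-\gamma}$ and $\theta(\hat{w})=1/(1-e^{-\gamma})$ (the paper notes this directly from $\gamma\hat{w}-\hat{w}'=0$, while you cite \cref{thm:1}, which is equivalent), then apply the stationarity condition from \cref{def:1} at the maximizer.
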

\begin{remark}
\cref{corollary: monotone stationary point} sheds light on the possibility of utilizing $F$ to obtain a better approximation than the classical gradient ascent method, which motivates our boosting methods in the following sections.
\end{remark}

Next, we investigate some properties of this optimal auxiliary function $F(\boldsymbol{x})$. Following the same terminology in \citep{filmus2012power,filmus2014monotone,mitra2021submodular+}, we also call this $F$ the {\it Non-Oblivious Function}. 
\subsubsection{Properties about the Non-Oblivious Function of Monotone Case}
The following theorem establishes some key properties about the boundness and smoothness of the non-oblivious function $F(\boldsymbol{x})$.

\begin{theorem}[Proof in \cref{proof:thm2}]\label{thm:2} 
If $f$ is $L$-smooth, $L_1$-lipschitz continuous, and Assumption~\ref{assumption1} holds, we have 
\begin{enumerate}
\item[(i)] $F$ is well-defined and $F(\boldsymbol{x})=\int_{0}^{1}\frac{e^{\gamma(z-1)}}{z}f(z\cdot\boldsymbol{x})\mathrm{d}z$. Moreover, $F(\boldsymbol{x})\le(1+\ln(\tau))(f(\boldsymbol{x})+c)$ for any positive $c\le Lr^{2}(\mathcal{X})$, where $\tau=\max(\frac{1}{\gamma},\frac{Lr^{2}(\mathcal{X})}{c})$. 
\item [(ii)]
$F$ is $L_{\gamma}$-smooth and $\frac{1-e^{-\gamma}}{\gamma}L_1$-lipschitz continuous where $L_{\gamma}=L\frac{\gamma+e^{-\gamma}-1}{\gamma^2}$.
\end{enumerate}
\end{theorem}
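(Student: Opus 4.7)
Since $f(\boldsymbol{0}) = 0$, I fix $F(\boldsymbol{0}) = 0$ and recover $F$ from its prescribed gradient by integrating along the segment from $\boldsymbol{0}$ to $\boldsymbol{x}$: $F(\boldsymbol{x}) = \int_0^1 \langle \nabla F(s\boldsymbol{x}), \boldsymbol{x}\rangle\, ds$. Plugging in the definition of $\nabla F$, swapping the order of the resulting double integral (justified by boundedness of the integrand), and substituting $u = sz$ in the inner $s$-integral reduces $\int_0^1 \langle \nabla f(sz\boldsymbol{x}), \boldsymbol{x}\rangle ds$ to $f(z\boldsymbol{x})/z$, yielding the claimed formula $F(\boldsymbol{x}) = \int_0^1 e^{\gamma(z-1)} f(z\boldsymbol{x})/z\, dz$. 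The integrand is bounded near $z = 0$ because $L_1$-Lipschitz continuity together with $f(\boldsymbol{0}) = 0$ gives $f(z\boldsymbol{x})/z \le L_1\|\boldsymbol{x}\|$, so $F$ is well-defined.

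\textbf{Proving the additive upper bound.} I would split the integral at $z_0 = 1/\tau$. On $[z_0, 1]$, monotonicity yields $f(z\boldsymbol{x}) \le f(\boldsymbol{x})$ and $e^{\gamma(z-1)} \le 1$, so this contribution is at most $f(\boldsymbol{x})\int_{z_0}^1 dz/z = f(\boldsymbol{x}) \ln\tau$. On $[0, z_0]$ I tame the $1/z$ singularity by combining the two sides of $L$-smoothness at the origin: the upper side gives $f(z\boldsymbol{x}) \le z\langle \nabla f(\boldsymbol{0}), \boldsymbol{x}\rangle + (Lz^2/2)\|\boldsymbol{x}\|^2$, while the lower side gives $\langle \nabla f(\boldsymbol{0}), \boldsymbol{x}\rangle \le f(\boldsymbol{x}) + (L/2)\|\boldsymbol{x}\|^2$; chaining them and using $z \le 1$ produces $f(z\boldsymbol{x}) \le z(f(\boldsymbol{x}) + L\|\boldsymbol{x}\|^2)$. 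The factor $z$ cancels the singularity and this piece is bounded by $z_0(f(\boldsymbol{x}) + Lr^2(\mathcal{X})) = f(\boldsymbol{x})/\tau + Lr^2(\mathcal{X})/\tau$. The definition of $\tau$ ensures $Lr^2(\mathcal{X})/\tau \le c$ and $1/\tau \le 1$, so summing the two contributions gives $(1 + \ln\tau)\, f(\boldsymbol{x}) + c \le (1 + \ln\tau)(f(\boldsymbol{x}) + c)$, since $\tau \ge 1$.

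\textbf{Smoothness and Lipschitz constants of $F$.} For (ii), differentiating under the integral yields $\nabla F(\boldsymbol{x}) - \nabla F(\boldsymbol{y}) = \int_0^1 e^{\gamma(z-1)}[\nabla f(z\boldsymbol{x}) - \nabla f(z\boldsymbol{y})]\, dz$; invoking $L$-smoothness of $f$ inside the norm gives $\|\nabla F(\boldsymbol{x}) - \nabla F(\boldsymbol{y})\| \le L\|\boldsymbol{x} - \boldsymbol{y}\| \int_0^1 z\, e^{\gamma(z-1)} dz$, and a single integration by parts evaluates the remaining integral to $(\gamma + e^{-\gamma} - 1)/\gamma^2 = L_\gamma/L$, establishing the smoothness constant. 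The Lipschitz constant of $F$ follows from $\|\nabla F(\boldsymbol{x})\| \le L_1 \int_0^1 e^{\gamma(z-1)} dz = L_1(1 - e^{-\gamma})/\gamma$. The one genuinely delicate point in the whole argument is the $[0, z_0]$ piece of (i): the bound on $f(z\boldsymbol{x})/z$ needs to be linear in $f(\boldsymbol{x})$ with only a small additive slack, and it is the interplay between the split point $z_0 = 1/\tau$ and the two-sided use of $L$-smoothness at the origin that makes the clean form $(1+\ln\tau)(f(\boldsymbol{x})+c)$ fall out.
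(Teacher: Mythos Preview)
Your proof is correct, and part~(ii) is essentially identical to the paper's argument (differentiate under the integral, use $L$-smoothness and Lipschitzness of $f$ pointwise, and evaluate $\int_0^1 z e^{\gamma(z-1)}dz$ and $\int_0^1 e^{\gamma(z-1)}dz$).

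For part~(i) your route to the additive bound is genuinely different from the paper's. The paper writes $f(z\boldsymbol{x})=\int_0^z\langle\boldsymbol{x},\nabla f(u\boldsymbol{x})\rangle\,du$, applies Fubini to the resulting double integral on $[0,\delta]$, and bounds $\langle\boldsymbol{x},\nabla f(u\boldsymbol{x})\rangle$ by splitting it into $\langle\boldsymbol{x},\nabla f(u\boldsymbol{x})-\nabla f(\boldsymbol{x})\rangle\le L\|\boldsymbol{x}\|^2$ (smoothness) plus $\langle\boldsymbol{x},\nabla f(\boldsymbol{x})\rangle\le f(\boldsymbol{x})/\gamma$ (which uses the $\gamma$-weak DR-submodular structure). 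This yields $\delta\bigl(Lr^2(\mathcal{X})+f(\boldsymbol{x})/\gamma\bigr)$ on $[0,\delta]$, and the paper then chooses the data-dependent split $\delta=\frac{f(\boldsymbol{x})+c}{f(\boldsymbol{x})/\gamma+Lr^2(\mathcal{X})}$ and checks $1/\delta\le\tau$. Your approach instead extracts $f(z\boldsymbol{x})\le z\bigl(f(\boldsymbol{x})+L\|\boldsymbol{x}\|^2\bigr)$ purely from the two-sided $L$-smoothness inequalities at the origin, which cancels the $1/z$ cleanly, and you take the fixed split point $z_0=1/\tau$. What you gain is simplicity: no Fubini step, no appeal to the submodularity inequality $\langle\boldsymbol{x},\nabla f(\boldsymbol{x})\rangle\le f(\boldsymbol{x})/\gamma$, and no data-dependent optimization over the split. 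The paper's route is slightly more intricate but makes explicit where the factor $1/\gamma$ in $\tau$ originates; in your argument that factor enters only through the final check $1/\tau\le\gamma\le 1$.
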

\begin{remark}
Note that the integral $\int_{0}^{1}\frac{e^{\gamma(z-1)}}{z}f(z\cdot\boldsymbol{x})\mathrm{d}z$ in \cref{thm:2}.(i) is well-defined when $f(\boldsymbol{0})=0($Assumption~\ref{assumption1}$)$ and the limit $\lim_{\bx\rightarrow\boldsymbol{0}^{+}}\nabla f(\bx)$ exists. If $f(\boldsymbol{0})\neq0$, we can re-define $f(\bx):=f(\bx)-f(\boldsymbol{0})$ due to the monotone assumption. Furthermore, the existence of $\lim_{\bx\rightarrow\boldsymbol{0}^{+}}\nabla f(\bx)$ follows from the $L$-smoothness and the monotone of $\nabla f(\bx)$.
\end{remark}

Previously, \citet{filmus2014monotone} designed an auxiliary discrete function to improve the standard $1/2$-approximation greedy method for the submodular set maximization problem over a matroid. Next, we unveil the connection between our proposed non-oblivious function and the auxiliary discrete function in \citet{filmus2014monotone}. Roughly speaking, they considered a monotone submodular set function $\bar{f}:2^{\Omega}\rightarrow\mathbf{R}_{+}$ and defined its related auxiliary set function as $\bar{g}(A)=\sum_{B\subset A}m_{|A|-1,|B|-1}\bar{f}(B)$ for any $A\subseteq\Omega$ where $\Omega=\{1,2,\dots,n\}$ and $m_{a,b}=\int_{0}^{1}\frac{e^{p}}{e-1}p^{b}(1-p)^{a-b}\mathrm{d}p$. To maximize $\bar{f}$ over a matroid, \citet{filmus2014monotone} provided an improved greedy method based on $\bar{g}$ instead of the original objective $\bar{f}$ itself with the optimal $(1-1/e)$-approximation ratio. Throughout the multi-linear relaxation~\citep{calinescu2011maximizing}, we can obtain a corresponding monotone continuous DR-submodular function $\bar{F}(\boldsymbol{x})=\sum_{S\in 2^{\Omega}}\bar{f}(S)\prod_{i\in S}x_{i}\prod_{j\in\Omega\setminus S}(1-x_{j})$ where $\bx\in[0,1]^{n}$. If taking the same boosting policy for $\bar{F}$, we could obtain a non-oblivious function $\bar{G}(\boldsymbol{x})=\int_{0}^{1}\frac{e^{z-1}}{z}\bar{F}(z\cdot\boldsymbol{x})\mathrm{d}z$ from Theorem~\ref{thm:2}.(i). After careful reformulations, we could find 
\begin{theorem}\label{qx_add_thm:1} 
If ignoring a constant factor $\frac{e-1}{e}$, we could regard the non-oblivious function $\bar{G}$ as the multi-linear extension of the submodular set function $\bar{g}$.
\end{theorem}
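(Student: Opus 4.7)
The plan is to show directly that $\bar{G}(\bx) = \frac{e-1}{e}\,M_{\bar{g}}(\bx)$, where $M_{\bar{g}}(\bx) := \sum_{A\subseteq\Omega}\bar{g}(A)\prod_{i\in A}x_i\prod_{j\notin A}(1-x_j)$ denotes the multilinear extension of $\bar{g}$. To this end I would first substitute the multilinear expansion $\bar{F}(z\bx) = \sum_{S\subseteq\Omega} \bar{f}(S)\,z^{|S|}\prod_{i\in S}x_i\prod_{j\notin S}(1-zx_j)$ into $\bar{G}(\bx)=\int_0^1 \tfrac{e^{z-1}}{z}\bar{F}(z\bx)\,\mathrm{d}z$. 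The $S=\emptyset$ summand is $\bar{f}(\emptyset)=0$ by \cref{assumption1}(iii), which removes the otherwise suspicious $1/z$ singularity at $z=0$ and ensures everything is well-defined.

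The main algebraic move I would then perform is the decoupling identity $1-zx_j=(1-x_j)+(1-z)x_j$, which yields $\prod_{j\notin S}(1-zx_j)=\sum_{T\subseteq\Omega\setminus S}(1-z)^{|T|}\prod_{j\in T}x_j\prod_{k\notin S\cup T}(1-x_k)$. Setting $A:=S\cup T$ and exchanging the order of summation aggregates, for each $A\subseteq\Omega$, a coefficient $\prod_{i\in A}x_i\prod_{k\notin A}(1-x_k)$ times $\sum_{S\subseteq A}\bar{f}(S)\int_0^1 \tfrac{e^{z-1}}{z}z^{|S|}(1-z)^{|A|-|S|}\,\mathrm{d}z$. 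This reindexing is the crucial step that transforms the $z$-integrated expression over $S$ alone into a multilinear-style expansion over outer sets $A$.

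It then remains to identify this $z$-integral with the weight $m_{|A|-1,|S|-1}$ appearing in $\bar{g}(A)$. Cancelling one power of $z$ and pulling out $e^{-1}$ gives
\begin{equation*}
\int_0^1 \tfrac{e^{z-1}}{z}z^{|S|}(1-z)^{|A|-|S|}\,\mathrm{d}z = \tfrac{1}{e}\int_0^1 e^{z}z^{|S|-1}(1-z)^{|A|-|S|}\,\mathrm{d}z = \tfrac{e-1}{e}\, m_{|A|-1,|S|-1},
\end{equation*}
which is a direct kernel match with the definition $m_{a,b}=\int_0^1 \tfrac{e^p}{e-1} p^b(1-p)^{a-b}\,\mathrm{d}p$. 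Assembling the pieces, $\bar{G}(\bx)=\tfrac{e-1}{e}\sum_A \bar{g}(A)\prod_{i\in A}x_i\prod_{k\notin A}(1-x_k)=\tfrac{e-1}{e}M_{\bar{g}}(\bx)$, which is the claim.

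The hard part is purely bookkeeping: reindexing the two nested sums consistently through the disjoint decomposition $A=S\sqcup T$, and confirming that the problematic $S=\emptyset$ contribution (which would otherwise produce a divergent integral) disappears thanks to $\bar{f}(\emptyset)=0$. Once the reindexing is in place, the integral identification with $m_{|A|-1,|S|-1}$ is a direct matching of kernels and produces the advertised $\tfrac{e-1}{e}$ factor.
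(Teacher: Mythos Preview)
Your proposal is correct and follows essentially the same route as the paper: expand $\bar{F}(z\bx)$, apply the decoupling $1-zx_j=(1-x_j)+(1-z)x_j$, reindex via $A=S\cup T$, and match the resulting $z$-integral to $\tfrac{e-1}{e}\,m_{|A|-1,|S|-1}$. Your explicit handling of the $S=\emptyset$ term via $\bar{f}(\emptyset)=0$ is a welcome point of care that the paper leaves implicit.
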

\begin{proof}
Firstly, for any $S\subseteq\Omega$ and $\bx\in[0,1]^{n}$, we set $\triangle(S,\boldsymbol{x})=\prod_{i\in S}x_{i}\prod_{j\in\Omega\setminus S}(1-x_{j})$. Then,
\begin{align*}
     &\bar{G}(\boldsymbol{x})=\int_{0}^{1}\frac{e^{z-1}}{z}\bar{F}(z\cdot\boldsymbol{x})\mathrm{d}z\\
        &=\int_{0}^{1}\frac{e^{z-1}}{z}\sum_{S\in 2^{\Omega}}\bar{f}(S)\prod_{i\in S}z\cdot x_{i}\prod_{j\in\Omega\setminus S}(1-z\cdot x_{j})\mathrm{d}z\\
        &=\sum_{S\in 2^{\Omega}}\bar{f}(S)\int_{0}^{1}e^{z-1}z^{|S|-1}\prod_{i\in S}x_{i}\prod_{j\in\Omega\setminus S}(1-z\cdot x_{j})\mathrm{d}z\\
        &=\sum_{S\in 2^{\Omega}}\bar{f}(S)\int_{0}^{1}e^{z-1}z^{|S|-1}\prod_{i\in S}x_{i}\prod_{j\in\Omega\setminus S}(1-x_{j}+x_{j}(1-z))\mathrm{d}z\\
        &=\sum_{S\in 2^{\Omega}}\sum_{K\subset\Omega\setminus S}\bar{f}(S)\int_{0}^{1}e^{z-1}z^{|S|-1}(1-z)^{|K|}\triangle(S\cup K,\boldsymbol{x})\mathrm{d}z\\
        &=\sum_{M\in 2^{\Omega}}\sum_{S\subset M}\bar{f}(S)\int_{0}^{1}e^{z-1}z^{|S|-1}(1-z)^{|M|-|S|}\triangle(M,\boldsymbol{x})\mathrm{d}z\\
         &=\sum_{M\in 2^{\Omega}}\sum_{S\subset M}\bar{f}(S)\frac{(e-1)m_{|M|-1,|S|-1}}{e}\triangle(M,\boldsymbol{x})\\
         &=\frac{e-1}{e}\sum_{M\in 2^{\Omega}}\triangle(M,\boldsymbol{x})\bar{g}(M)\\
         &= \frac{e-1}{e}\sum_{M\in 2^{\Omega}}\bar{g}(M)\prod_{i\in M}x_{i}\prod_{j\in\Omega\setminus M}(1-x_{j}).
\end{align*}
\end{proof}
\subsection{Non-oblivious Function for Non-monotone DR-Submodular Function}


Notably, the monotonicity of DR-submodular objectives plays an indispensable role in deriving the previous auxiliary function. However, a large body of real-world applications can be cast into non-monotone DR-submodular maximization problems, such as the Determinantal Point Processes~\citep{kulesza2012determinantal,bian2017continuous} and Revenue Maximization~\citep{bian2017guaranteed}, which motivates our curiosity on how to design a non-oblivious function
for non-monotone counterparts to avoid the bad stationary points.

Before going into the detail, we first recall the results about the stationary points of general non-monotone DR-submodular maximization, i.e., 
\begin{lemma}[\cite{chen2023continuous}]\label{qx_add_lemma:1}
For any $\epsilon>0$, there exists a general continuous DR-submodular function $f$, whose ratio $\frac{f(\bx)}{f(\bx^{\star})}$ is not greater than $\epsilon$ where $\bx$ is the worst stationary point of $f$ itself and $\bx^*$ is the maximum solution over a convex set. 
\end{lemma}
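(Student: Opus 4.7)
The plan is to exhibit an explicit one-parameter family of non-monotone DR-submodular functions for which the ratio between a stationary-point value and the global maximum can be driven below any prescribed $\epsilon>0$. Concretely, for $M\geq 1$ set $\mathcal{C}=[0,1]^2$ and consider
\begin{equation*}
    f_M(x_1,x_2) \;=\; M\,x_2(1-x_1) \;+\; x_1(1-x_2).
\end{equation*}
This expression is manifestly non-negative on $\mathcal{C}$. A direct Hessian computation gives $\partial^2 f_M/(\partial x_1\partial x_2)=-(M+1)\leq 0$ and $\partial^2 f_M/\partial x_i^2=0$, so $f_M$ is continuous DR-submodular per \cref{sec:pre}. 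Non-monotonicity follows from $\partial f_M/\partial x_1 = 1-(M+1)x_2 < 0$ for $x_2>1/(M+1)$.

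The next step is to locate a bad stationary point. At the corner $\bx=(1,0)$ one computes $\nabla f_M(1,0)=(1,0)$, and every $\by\in\mathcal{C}$ satisfies $y_1-1\leq 0$, so $\langle\nabla f_M(1,0),\by-(1,0)\rangle = y_1-1\leq 0$; by \cref{def:1}, $(1,0)$ is a stationary point with $f_M(1,0)=1$. A check of the remaining corners ($f_M(0,0)=0$, $f_M(0,1)=M$, $f_M(1,1)=0$), the open edges, and the unique interior critical point $\bigl(M/(M+1),\,1/(M+1)\bigr)$ (a saddle of value $M/(M+1)$) shows that the global maximum on $\mathcal{C}$ is $\bx^\star=(0,1)$ with $f_M(\bx^\star)=M$. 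Therefore
\begin{equation*}
    \frac{f_M(1,0)}{f_M(\bx^\star)} \;=\; \frac{1}{M},
\end{equation*}
and choosing $M\geq 1/\epsilon$ finishes the argument.

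The delicate point of the construction is placing the bad stationary point on the \emph{boundary} of $\mathcal{C}$. If the gradient vanishes at an interior point of a DR-submodular function, concavity in each coordinate direction already enforces a nontrivial lower bound on the value there relative to the global maximum. At a corner, by contrast, the first-order optimality condition collapses to a one-sided half-space constraint, which lets $\nabla f_M$ point outward while the high-value regions of $\mathcal{C}$ remain unreachable from that corner along feasible directions. With this geometric insight the design work shrinks to engineering a simple bilinear perturbation that makes the two coordinates pull in opposite, non-monotone directions at the chosen corner; checking the Hessian sign pattern together with non-negativity on $\mathcal{C}$ is then entirely routine.
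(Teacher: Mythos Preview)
Your construction works, but there is a computational slip: $\partial f_M/\partial x_2 = M(1-x_1)-x_1$, so $\nabla f_M(1,0)=(1,-1)$, not $(1,0)$. Fortunately the stationarity conclusion survives, since for any $\by\in[0,1]^2$ one has $\langle(1,-1),\by-(1,0)\rangle=(y_1-1)-y_2\leq 0$, both summands being non-positive. With this correction the argument is complete.

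The paper does not prove the lemma itself but cites Chen et~al.\ (2023); their construction, recalled in the experiments section, is the $(2k+1)$-dimensional function
\[
g_k(\bx)=k+1-(1-x_{2k+1})\prod_{i=1}^{k}(1-x_i)-(1-x_{2k+1})\Bigl(k-\sum_{i=1}^{k}x_i\Bigr)-\sum_{i=1}^{k}x_i-x_{2k+1},
\]
the multilinear extension of a regularized coverage set function, with bad stationary point $(\underbrace{1,\dots,1}_{2k},0)$ and ratio at most $1/k$. Your two-dimensional bilinear example is markedly more elementary: the Hessian, non-negativity, and corner-stationarity checks are one-liners, and the mechanism (two coordinates pulling in opposite directions at a corner) is visible by inspection. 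Incidentally, your $f_M$ is itself the multilinear extension of the submodular set function on $\{1,2\}$ with values $\bar f(\emptyset)=\bar f(\{1,2\})=0$, $\bar f(\{1\})=1$, $\bar f(\{2\})=M$, so the two approaches are closer in spirit than they first appear; yours simply strips the coverage example down to its minimal instance.
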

\begin{remark} Lemma~\ref{qx_add_lemma:1} implies that a stationary point of a non-monotone continuous DR-submodular function may be arbitrarily bad. Similarly, there is no approximation guarantee for the standard PGA on a general continuous DR-submodular function $f$, since it may approach a bad stationary point of $f$.
\end{remark}

To avoid these bad stationary points, we also hope to design an auxiliary function $F$ whose stationary points can provide a significant approximation guarantee for the continuous non-monotone DR-submodular function $f$. We first specify some assumptions about the objective function $f$.
\begin{assumption}\label{assumption2}The $f:\X\rightarrow \R_+$ is differentiable and DR-submodular. So is each $f_t$ in the online settings.
\end{assumption}
\begin{remark}
    We do not assume that $f(\boldsymbol{0}) = 0$ here since the reformulation $f(\bx):=f(\bx)-f(\boldsymbol{0})$ may violate the non-negative assumption about the objective function, when $f$ is non-monotone. However, this modification can be done without violating any assumption when we consider the monotone functions.
\end{remark}
Let $\underline{\bx}:=\argmin_{\bx\in \mathcal{C}} \|\bx\|_{\infty}$. Different from the monotone case, we consider a new form of the non-oblivious function $F(\bx): \X\rightarrow \R_+$ whose gradient at $\bx$ allocation different weights to the gradient $\nabla f(z\alpha \cdot\bx + (1-z\alpha)\underline{\bx})$ for $z\in [0,1]$. Here $\alpha\in [0,1]$ is a parameter to be determined. Rigorously, $\nabla F(\bx) = \int_0^1 \omega(z) \nabla f(z\alpha\cdot\bx+(1-z\alpha)\underline{\bx}) \mathrm{d}z $. Then, we show the following property of $\langle \by-\bx, \nabla F(\bx)\rangle$.

\begin{lemma}[Proof in \cref{proof:nonmontone auxilliary}]\label{nonmonotone auxiliary}
    For all $\by,\bx\in\X$, we have,
    \begin{align}
        \langle \by-\bx, \nabla F(\bx)\rangle \geq \left((1-\|\underline{\bx}\|_{\infty})\int_0^1 (1-\alpha z)\omega(z) \mathrm{d}z\right)\left(f(\by)- \theta(\omega)f(\alpha\cdot\bx+ (1-\alpha)\cdot\underline{\bx})\right),
    \end{align}
    where $\theta(\omega) = \max_{f,\bx} \theta(\omega, f, \bx)$ and
    \begin{equation}
        \theta(\omega,f,\bx) = \frac{\frac{(1-\alpha)\omega(1)}{\alpha}+\int_0^1\left(3\omega(z)-\frac{1-\alpha z}{\alpha}\omega'(z)\right)\frac{f(\alpha z\cdot\bx+(1-\alpha z)\cdot\underline{\bx})}{f(z\cdot\bx + (1-z)\cdot\underline{\bx})} \mathrm{d}z}{(1-\|\underline{\bx}\|_{\infty})\int_0^1(1-\alpha z)\omega(z)\mathrm{d}z}.
    \end{equation}
\end{lemma}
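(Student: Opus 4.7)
The plan is to imitate the argument of the monotone counterpart Lemma~\ref{lemma:3}, compensating for the loss of monotonicity through (i) a standard non-monotone DR-submodular estimate $f(\by\lor\bz)\geq(1-\|\bz\|_\infty)f(\by)$ that exploits the smallness of $\underline{\bx}$, and (ii) an integration by parts that produces the boundary term $\frac{(1-\alpha)\omega(1)}{\alpha}$ appearing in $\theta(\omega,f,\bx)$. Abbreviate $\bz(z):=\alpha z\bx+(1-\alpha z)\underline{\bx}$, so that $\nabla F(\bx)=\int_0^1\omega(z)\nabla f(\bz(z))\,\mathrm{d}z$. The key algebraic identity $\bx-\bz(z)=(1-\alpha z)(\bx-\underline{\bx})$ lets me split the integrand as
\begin{equation*}
\langle\by-\bx,\nabla f(\bz(z))\rangle=\langle\by-\bz(z),\nabla f(\bz(z))\rangle-\frac{1-\alpha z}{\alpha}\cdot\frac{d}{dz}f(\bz(z)),
\end{equation*}
using the chain rule $\frac{d}{dz}f(\bz(z))=\alpha\langle\bx-\underline{\bx},\nabla f(\bz(z))\rangle$. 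The first piece will be treated by a non-monotone DR-submodular inequality; the second by integration by parts.

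For the first piece, I combine the standard non-monotone DR-submodular identity $\langle\by-\bz(z),\nabla f(\bz(z))\rangle\geq f(\by\lor\bz(z))+f(\by\land\bz(z))-2f(\bz(z))$ with the classical bound $f(\by\lor\bz(z))\geq(1-\|\bz(z)\|_\infty)f(\by)$ for non-negative DR-submodular $f$, while discarding $f(\by\land\bz(z))\geq 0$. The componentwise observation $\|\bz(z)\|_\infty\leq\|\underline{\bx}\|_\infty+\alpha z(1-\|\underline{\bx}\|_\infty)$, which follows from $\bx\leq\boldsymbol{1}$ and $\underline{\bx}\leq\|\underline{\bx}\|_\infty\boldsymbol{1}$, gives $1-\|\bz(z)\|_\infty\geq(1-\alpha z)(1-\|\underline{\bx}\|_\infty)$. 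Weighting the resulting pointwise lower bound $(1-\alpha z)(1-\|\underline{\bx}\|_\infty)f(\by)-2f(\bz(z))$ by $\omega(z)$ and integrating yields the prefactor $(1-\|\underline{\bx}\|_\infty)\int_0^1(1-\alpha z)\omega(z)\,\mathrm{d}z$ on $f(\by)$. For the second piece, integration by parts with $u=\frac{(1-\alpha z)\omega(z)}{\alpha}$ and $dv=\frac{d}{dz}f(\bz(z))\,\mathrm{d}z$ produces the boundary contribution $\frac{(1-\alpha)\omega(1)}{\alpha}f(\alpha\bx+(1-\alpha)\underline{\bx})$ at $z=1$ (matching the first summand in the numerator of $\theta$), a non-negative $\frac{\omega(0)}{\alpha}f(\underline{\bx})$ at $z=0$ that can be discarded, and an interior remainder whose $f(\bz(z))$ contribution combines with the $-2\omega(z)f(\bz(z))$ from the first piece to produce the total coefficient $-(3\omega(z)-\frac{(1-\alpha z)\omega'(z)}{\alpha})$, which is the source of the ``$3$'' in the lemma.

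Finally, to convert the $f(\bz(z))$ terms in the correction into the ratio $\frac{f(\alpha z\bx+(1-\alpha z)\underline{\bx})}{f(z\bx+(1-z)\underline{\bx})}$ appearing in $\theta(\omega,f,\bx)$, I rewrite $f(\bz(z))=\frac{f(\bz(z))}{f(z\bx+(1-z)\underline{\bx})}\cdot f(z\bx+(1-z)\underline{\bx})$ and then invoke the supremum in the definition $\theta(\omega)=\max_{f,\bx}\theta(\omega,f,\bx)$ to absorb $f(z\bx+(1-z)\underline{\bx})$ into $f(\alpha\bx+(1-\alpha)\underline{\bx})$, yielding the stated inequality. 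The main obstacle is producing the clean $(1-\alpha z)(1-\|\underline{\bx}\|_\infty)f(\by)$ positive contribution in the first piece: this relies crucially on the min-$\ell_\infty$-norm property of $\underline{\bx}$ (which keeps $\bz(z)$ componentwise bounded away from $\boldsymbol{1}$) together with the classical non-monotone DR-submodular estimate relating $f(\by\lor\bz)$ to $f(\by)$. The integration-by-parts computation and the bookkeeping of the coefficient ``$3$'' are routine once the right decomposition is in place, but the final reparametrization step to match the stated ratio is a delicate worst-case-ratio argument that leans heavily on the supremum defining $\theta(\omega)$.
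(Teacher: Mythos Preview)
Your approach is essentially the same as the paper's. Both arguments decompose $\langle\by-\bx,\nabla f(\bz(z))\rangle$ into a piece handled by the DR-submodular inequality $\langle\by-\bz,\nabla f(\bz)\rangle\geq f(\by\lor\bz)+f(\by\land\bz)-2f(\bz)$ together with the bound $f(\by\lor\bz)\geq(1-\|\bz\|_\infty)f(\by)$, and a piece handled by integration by parts against $\tfrac{d}{dz}f(\bz(z))$; the coefficient $3$ emerges from the $-2\omega(z)$ of the first piece plus the $-\omega(z)$ produced by the derivative of $u=\tfrac{(1-\alpha z)\omega(z)}{\alpha}$, exactly as you describe. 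The paper merely organizes the same computation by treating $\langle\by,\nabla F(\bx)\rangle$ and $\langle\bx,\nabla F(\bx)\rangle$ separately (introducing a common $\langle\bz(z),\nabla f(\bz(z))\rangle$ term that cancels), but the content is identical.

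The only place your proposal goes wrong is the very last paragraph. The ``absorption via supremum'' you sketch---rewriting $f(\bz(z))=\frac{f(\bz(z))}{f(z\bx+(1-z)\underline{\bx})}\cdot f(z\bx+(1-z)\underline{\bx})$ and then somehow replacing the $z$-dependent factor $f(z\bx+(1-z)\underline{\bx})$ by the constant $f(\alpha\bx+(1-\alpha)\underline{\bx})$ using $\max_{f,\bx}$---is not a valid step. The denominator $f(z\bx+(1-z)\underline{\bx})$ in the displayed definition of $\theta(\omega,f,\bx)$ is in fact a typo in the statement: the paper's own proof makes clear that the intended denominator is $f(\bx^\alpha)=f(\alpha\bx+(1-\alpha)\underline{\bx})$, which is independent of $z$. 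With that correction the final step is a triviality: simply factor out the constant $f(\bx^\alpha)$ from every $f(\bz(z))$ term to obtain the ratio $\tfrac{f(\bx^{\alpha z})}{f(\bx^\alpha)}$, and then bound by $\theta(\omega)=\max_{f,\bx}\theta(\omega,f,\bx)$. No ``worst-case-ratio argument'' is needed.
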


Let $\omega(z)$ satisfy $3\omega(z) = \frac{1-\alpha z}{\alpha}\omega'(z)$. The solution of this ODE is 
\begin{align}\label{omega}
    \omega(z) =  \frac{C}{(1-\alpha z)^{3}},
\end{align}
where $C$ is an arbitrary constant. Then 
\begin{equation}
    \begin{aligned}
        \theta(\omega,f,\bx) &= \frac{(1-\alpha) \omega(1)}{\alpha (1-\|\underline{\bx}\|_{\infty})\int_0^1(1-\alpha z) \omega(z) \mathrm{d}z}
        \\&=\frac{1}{(1-\|\underline{\bx}\|_{\infty})\alpha(1-\alpha)}.
    \end{aligned}
\end{equation}
Note that $\theta(\omega,f,\bx)$ is independent of $f$ and $\bx$, thus $\theta(\omega) = \frac{1}{(1-\|\underline{\bx}\|_{\infty})\alpha(1-\alpha)}$.
The minimum value of $\theta(\omega)$ is attained at $\alpha=\frac{1}{2}$ which leads to the corresponding $\theta(\omega)= \frac{4}{1-\|\underline{\bx}\|_{\infty}}$. Furthermore, to make our analysis of the subsequent section more concise, we let $C = \frac{1}{8}$ in Eq.\eqref{omega}. Therefore, our choice of the weighting function is $\omega(z) = \frac{1}{8(1-\frac{z}{2})^3}$. The above argument immediately implies the following corollary.
\begin{corollary}\label{corollary: nonmonotone stationary point}
    Let $F$ be defined by its gradient $\nabla F(\bx) = \int_0^1\frac{1}{8(1-\frac{z}{2})^3}\nabla f\left(\frac{z}{2}(\bx-\underline{\bx})+\underline{\bx}\right) \mathrm{d}z$,
    then for any $\bx,\by\in \mathcal{X}$, 
    \begin{equation*}
        \langle \nabla F(\bx), \by-\bx\rangle \geq \frac{1-\|\underline{\bx}\|_{\infty}}{4}f(\by)-f\left(\frac{\bx+\underline{\bx}}{2}\right).
    \end{equation*}
    As a result, if $\bx$ is a stationary point of $F$ over convex domain $\mathcal{C}$, 
    then $\frac{\bx+\underline{\bx}}{2}$ is a $\frac{1-\|\underline{\bx}\|_{\infty}}{4}$-approximation solution to the maximum value $\max_{\by\in\mathcal{C}}f(\by)$.
\end{corollary}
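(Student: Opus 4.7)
The plan is to derive the inequality by specializing Lemma~\ref{nonmonotone auxiliary} to the specific choice $\alpha = 1/2$ and $\omega(z) = \frac{1}{8(1-z/2)^3}$ that was identified in the paragraph preceding the corollary, and then to obtain the approximation guarantee from the definition of a stationary point.

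First, I would verify that the definition of $\nabla F$ given in the corollary agrees with the general template $\nabla F(\bx) = \int_0^1 \omega(z)\,\nabla f(z\alpha\cdot\bx + (1-z\alpha)\underline{\bx})\,\mathrm{d}z$ used in Lemma~\ref{nonmonotone auxiliary}: plugging in $\alpha = 1/2$ and $\omega(z) = \frac{1}{8(1-z/2)^3}$ turns the integrand into $\frac{1}{8(1-z/2)^3}\nabla f\!\left(\tfrac{z}{2}(\bx-\underline{\bx})+\underline{\bx}\right)$, which matches the corollary's formula exactly.

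Next I would compute the two scalar factors appearing in Lemma~\ref{nonmonotone auxiliary}. The weight $\omega$ was constructed precisely so that $3\omega(z) - \frac{1-\alpha z}{\alpha}\omega'(z) = 0$, which collapses $\theta(\omega,f,\bx)$ to the closed form $\frac{1}{(1-\|\underline{\bx}\|_\infty)\alpha(1-\alpha)} = \frac{4}{1-\|\underline{\bx}\|_\infty}$ already derived in the text. For the multiplicative prefactor I would perform the elementary integral
\begin{equation*}
\int_0^1 (1-\tfrac{z}{2})\cdot\tfrac{1}{8(1-z/2)^3}\,\mathrm{d}z = \tfrac{1}{8}\int_0^1\tfrac{1}{(1-z/2)^2}\,\mathrm{d}z = \tfrac{1}{4},
\end{equation*}
so that the prefactor $(1-\|\underline{\bx}\|_\infty)\int_0^1(1-\alpha z)\omega(z)\mathrm{d}z$ equals $\frac{1-\|\underline{\bx}\|_\infty}{4}$. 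Substituting both numbers into Lemma~\ref{nonmonotone auxiliary} gives
\begin{equation*}
\langle \nabla F(\bx), \by-\bx\rangle \;\geq\; \tfrac{1-\|\underline{\bx}\|_\infty}{4}\,f(\by) - \tfrac{1-\|\underline{\bx}\|_\infty}{4}\cdot \tfrac{4}{1-\|\underline{\bx}\|_\infty}\, f\!\left(\tfrac{\bx+\underline{\bx}}{2}\right),
\end{equation*}
and the second coefficient simplifies to $1$, which is exactly the claimed inequality.

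Finally, for the stationary-point consequence, let $\bx^\star \in \argmax_{\by\in\mathcal{C}} f(\by)$. By Definition~\ref{def:1} applied to $F$, the stationarity of $\bx$ over $\mathcal{C}$ yields $\langle \nabla F(\bx), \bx^\star - \bx\rangle \leq 0$. Combining this with the inequality just established (evaluated at $\by = \bx^\star$) gives $f\!\left(\tfrac{\bx+\underline{\bx}}{2}\right) \geq \tfrac{1-\|\underline{\bx}\|_\infty}{4} f(\bx^\star)$, which is the advertised approximation ratio. There is no real obstacle here: all the heavy lifting—the ODE design of $\omega$ and the optimization of $\alpha$—was done before the corollary statement, so the proof is essentially bookkeeping on top of Lemma~\ref{nonmonotone auxiliary}. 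The only point that warrants a brief justification is the evaluation of the integral $\int_0^1(1-z/2)^{-2}\mathrm{d}z = 2$, which makes the two $\tfrac{1}{4}$ factors line up.
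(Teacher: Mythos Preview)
Your proposal is correct and follows exactly the paper's approach: the paper derives the corollary by plugging $\alpha=\tfrac12$ and $\omega(z)=\tfrac{1}{8(1-z/2)^3}$ into Lemma~\ref{nonmonotone auxiliary}, noting that the ODE choice kills the integral term in $\theta(\omega,f,\bx)$ and that the resulting prefactor and $\theta(\omega)$ combine to give the stated coefficients. Your explicit evaluation of $\int_0^1(1-z/2)^{-2}\,\mathrm{d}z=2$ (hence prefactor $\tfrac{1-\|\underline{\bx}\|_\infty}{4}$) and the stationary-point deduction are exactly the bookkeeping the paper leaves implicit when it writes ``the above argument immediately implies the following corollary.''
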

\begin{remark}
    Same as the monotone case, \cref{corollary: nonmonotone stationary point} shows the possibility of boosting the projected gradient ascent to find a $\frac{1-\|\underline{\bx}\|_{\infty}}{4}$-approximation solution of a non-monotone DR-submodular function. We will prove this in the subsequent section. Since $\frac{1-\|\underline{\bx}\|_{\infty}}{4}$ is the optimal approximation ratio when maximizing the non-monotone DR-submodular function over a general convex set constraint if one assumes $P\neq NP$ \citep{mualem2023resolving}. Thus, the weight function we choose is optimal unless $P=NP$.
\end{remark}
\subsubsection{Properties about the Non-Oblivious Function of Non-monotone Case}
Like the monotone case, we also care about the properties of the auxiliary function $F(\bx)$ satisfying \cref{corollary: nonmonotone stationary point}. 
The following theorem establishes its boundness and smoothness.
\begin{theorem}[Proof in \cref{proof: nonmontone nonoblivious properties}]\label{nonmonotone nonoblivious properties}
    If $f$ is $L$-smooth, $L_1$-lipschitz and $f$ satisfies Assumption \ref{assumption2}. Let $F$ be defined according to \cref{corollary: nonmonotone stationary point}, then the following holds.
    \begin{itemize}
        \item[(i)] F is well defined and $F(\bx)= \int_0^1 \frac{1}{4 z(1-\frac{z}{2})^3}\left(f\left(\frac{z}{2} \cdot (\bx-\underline{\bx})+\underline{\bx}\right) - f(\underline{\bx})\right)\mathrm{d}z$. 
        \item[(ii)] $F(\bx)$ is $\frac{1}{8}L$-smooth and $\frac{3}{8}L_1$-lipschitz continuous.
    \end{itemize}
\end{theorem}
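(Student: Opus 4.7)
The plan is to verify both claims by direct computation starting from the definition $\nabla F(\bx) = \int_0^1 \frac{1}{8(1-z/2)^3}\nabla f\bigl(\frac{z}{2}(\bx-\underline{\bx})+\underline{\bx}\bigr)\mathrm{d}z$ given in \cref{corollary: nonmonotone stationary point}. For part (i) I would first guess the candidate antiderivative displayed in the statement and check that differentiating it reproduces $\nabla F$; for part (ii) I would bound $\|\nabla F(\bx)-\nabla F(\by)\|$ and $\|\nabla F(\bx)\|$ by pulling the norm inside the integral and using the smoothness/Lipschitz hypotheses on $f$, reducing everything to two explicit scalar integrals in $z$.

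For part (i), the main issue is not the gradient identity, but well-definedness of the integral near $z=0$, where the factor $\frac{1}{4z(1-z/2)^3}$ blows up. I would handle this by a first-order Taylor expansion of $f$ around $\underline{\bx}$: since $f$ is $L$-smooth, $f\bigl(\tfrac{z}{2}(\bx-\underline{\bx})+\underline{\bx}\bigr) - f(\underline{\bx}) = \tfrac{z}{2}\langle \nabla f(\underline{\bx}),\bx-\underline{\bx}\rangle + O(z^2)$, so the integrand is bounded as $z\to 0^+$ and hence Lebesgue integrable on $[0,1]$. With well-definedness in hand, differentiation under the integral sign (justified by the $L$-smoothness producing a dominating integrable function) and the chain rule $\nabla_{\bx}\bigl[f(\tfrac{z}{2}(\bx-\underline{\bx})+\underline{\bx})\bigr] = \tfrac{z}{2}\nabla f(\cdot)$ cancel the $z$ in the denominator and match $\nabla F$ exactly.

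For part (ii), I would combine the integral form of $\nabla F$ with the $L$-smoothness of $f$ to obtain
\begin{equation*}
\|\nabla F(\bx)-\nabla F(\by)\| \leq \int_0^1 \frac{1}{8(1-z/2)^3}\,L\cdot\frac{z}{2}\|\bx-\by\|\,\mathrm{d}z,
\end{equation*}
and then evaluate $\int_0^1 \frac{z/2}{8(1-z/2)^3}\mathrm{d}z$ by the substitution $u=1-z/2$, which gives the value $\tfrac{1}{8}$ and hence the stated smoothness constant $L/8$. The Lipschitz bound follows the same template: $\|\nabla F(\bx)\|\leq L_1\int_0^1 \frac{1}{8(1-z/2)^3}\mathrm{d}z$, and the same substitution evaluates the integral to $\tfrac{3}{8}$, yielding the constant $3L_1/8$.

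The main obstacle is the minor one of part (i), the apparent singularity at $z=0$; once we show via Taylor expansion that the singular factor is tamed by the vanishing of $f(\cdot)-f(\underline{\bx})$ at the same rate, the rest is two elementary substitutions. The computations involve no delicate inequalities and rely only on the standing assumptions of $L$-smoothness and $L_1$-Lipschitz continuity of $f$ together with Assumption \ref{assumption2}.
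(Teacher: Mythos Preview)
Your proposal is correct and follows essentially the same approach as the paper: both arguments pull the norm inside the integral and reduce to the two scalar integrals $\int_0^1 \frac{z/2}{8(1-z/2)^3}\mathrm{d}z=\tfrac{1}{8}$ and $\int_0^1 \frac{1}{8(1-z/2)^3}\mathrm{d}z=\tfrac{3}{8}$. The only minor differences are that the paper handles well-definedness via the $L_1$-Lipschitz bound $|f(\tfrac{z}{2}(\bx-\underline{\bx})+\underline{\bx})-f(\underline{\bx})|\le \tfrac{z}{2}L_1\|\bx-\underline{\bx}\|$ rather than a Taylor expansion using $L$-smoothness, and proves the Lipschitz constant by bounding $|F(\bx)-F(\by)|$ directly rather than bounding $\|\nabla F\|$; both variants yield the same constants with the same effort.
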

\begin{remark}
From Theorem~\ref{nonmonotone nonoblivious properties}.(i), the non-oblivious function of the non-monotone case is not only related to objective $f$ itself but relies on the selection of constraint set $\mathcal{C}$. Note that the difference with $f(\underline{\bx})$ makes sure the integral $\int_0^1 \frac{1}{4 z(1-\frac{z}{2})^3}\left(f\left(\frac{z}{2} \cdot (\bx-\underline{\bx})+\underline{\bx}\right) - f(\underline{\bx})\right)\mathrm{d}z$ is well-defined if $f$ satisfies Assumption~\ref{assumption2}.
\end{remark}

In the monotone case, Theorem~\ref{qx_add_thm:1} shows that we could view our proposed auxiliary function as a multi-linear extension of the non-oblivious set function in \citep{filmus2014monotone} when $f$ is a multi-linear extension of a set function. Next, we verify a similar result for the non-monotone case. 

\begin{theorem}\label{qx_add_thm:2} 
Considering a submodular set function $\bar{f}:2^{\Omega}\rightarrow\mathbf{R}_{+}$ and its multi-linear extension $\bar{F}(\boldsymbol{x})=\sum_{S\in 2^{\Omega}}\bar{f}(S)\prod_{i\in S}x_{i}\prod_{j\in\Omega\setminus S}(1-x_{j})$ where $\Omega=\{1,2,\dots,n\}$, if we set $\underline{\bx}=\boldsymbol{0}$, we could verify the non-oblivious function of continuous DR-submodular function $\bar{F}$ is the multi-linear extension of set function $\bar{g}(A)=\sum_{B\subset A}m_{|A|-1,|B|-1}\left(\bar{f}(B)-\bar{f}(\varnothing)\right)$ for any $A\subseteq\Omega$ where $m_{a,b}=\frac{1}{8}\int_{0}^{1}(\frac{p}{2})^{b}(1-\frac{p}{2})^{a-b-3}\mathrm{d}p$. 
\end{theorem}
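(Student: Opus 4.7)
The plan is to mirror the computation used in the proof of \cref{qx_add_thm:1}, with two adjustments: the integrand carries the kernel $\frac{1}{4z(1-z/2)^3}$ rather than $\frac{e^{z-1}}{z}$, and the argument is shifted by $\underline{\bx}=\boldsymbol{0}$ (so effectively we feed in $(z/2)\bx$). I will start from the explicit formula for the non-oblivious function given by \cref{nonmonotone nonoblivious properties}.(i), namely
\[
F(\bx)\;=\;\int_0^1\frac{1}{4z(1-z/2)^3}\bigl(\bar{F}\!\bigl((z/2)\cdot\bx\bigr)-\bar{F}(\boldsymbol{0})\bigr)\,\mathrm{d}z,
\]
and observe that $\bar{F}(\boldsymbol{0})=\bar{f}(\varnothing)$. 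Substituting the multi-linear extension and using the identity $\sum_{S\in 2^{\Omega}}\prod_{i\in S}y_i\prod_{j\notin S}(1-y_j)=1$ at $\by=(z/2)\cdot\bx$, the constant $\bar{f}(\varnothing)$ can be absorbed into the sum so that the bracketed integrand becomes $\sum_{S}\bigl(\bar{f}(S)-\bar{f}(\varnothing)\bigr)\prod_{i\in S}(z/2)x_i\prod_{j\notin S}(1-(z/2)x_j)$. This is the crucial reduction: the $S=\varnothing$ coefficient vanishes, which tames the $1/z$ singularity near $z=0$ and makes every subsequent integral absolutely convergent.

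Next, I will expand $\prod_{j\notin S}(1-(z/2)x_j)=\prod_{j\notin S}\bigl[(1-x_j)+x_j(1-z/2)\bigr]$ as a sum over $K\subseteq\Omega\setminus S$, exactly as in the monotone proof. This produces terms of the form $\prod_{i\in S\cup K}x_i\prod_{j\notin S\cup K}(1-x_j)\cdot(z/2)^{|S|}(1-z/2)^{|K|}$. Re-indexing by $M:=S\cup K$ (so $K=M\setminus S$) and swapping the finite sum with the integral, the expression becomes
\[
F(\bx)=\sum_{M\in 2^{\Omega}}\Bigl(\prod_{i\in M}x_i\prod_{j\notin M}(1-x_j)\Bigr)\sum_{S\subseteq M}\bigl(\bar{f}(S)-\bar{f}(\varnothing)\bigr)\int_0^1\frac{(z/2)^{|S|}(1-z/2)^{|M|-|S|}}{4z(1-z/2)^3}\,\mathrm{d}z.
\]

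The remaining step is a direct simplification: using $\frac{(z/2)}{4z}=\frac{1}{8}$, the inner integral equals $\tfrac{1}{8}\int_0^1(z/2)^{|S|-1}(1-z/2)^{|M|-|S|-3}\mathrm{d}z=m_{|M|-1,|S|-1}$, which matches the definition of the constants in the statement. Recognising $\sum_{S\subseteq M}m_{|M|-1,|S|-1}\bigl(\bar{f}(S)-\bar{f}(\varnothing)\bigr)=\bar{g}(M)$ yields $F(\bx)=\sum_{M}\bar{g}(M)\prod_{i\in M}x_i\prod_{j\notin M}(1-x_j)$, which is precisely the multi-linear extension of $\bar{g}$.

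The only genuine subtlety, and the one I would flag as the potential obstacle, is the improper behaviour of the kernel at the endpoints: after the expansion, individual summands for $S=\varnothing$ involve $(z/2)^{-1}$ near $z=0$, and summands with $|M|-|S|<3$ involve $(1-z/2)^{-k}$ for $k>0$, which stays bounded on $[0,1]$ but must be tracked. I will handle this by either excluding the $S=\varnothing$ term (whose coefficient $\bar{f}(\varnothing)-\bar{f}(\varnothing)$ is zero anyway, so its omission does not change the sum) or by noting that the overall integrand $\frac{1}{4z(1-z/2)^3}\bigl[\bar{F}((z/2)\bx)-\bar{F}(\boldsymbol{0})\bigr]$ is bounded on $[0,1]$ thanks to $\bar{F}((z/2)\bx)-\bar{F}(\boldsymbol{0})=O(z)$, so Fubini applies to the finite expansion termwise. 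Beyond this bookkeeping, the proof is essentially the same algebraic identity manipulation as \cref{qx_add_thm:1}.
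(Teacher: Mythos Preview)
Your proposal is correct and follows essentially the same route as the paper's proof: start from the integral formula of \cref{nonmonotone nonoblivious properties}(i), absorb the constant $\bar{f}(\varnothing)$ into the sum via $\sum_S\prod_{i\in S}y_i\prod_{j\notin S}(1-y_j)=1$, expand $\prod_{j\notin S}(1-(z/2)x_j)$ over $K\subseteq\Omega\setminus S$, reindex by $M=S\cup K$, and identify the resulting integral with $m_{|M|-1,|S|-1}$. Your explicit discussion of the $S=\varnothing$ cancellation and the endpoint behaviour is actually a bit more careful than the paper, which performs the same computation without commenting on convergence.
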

\begin{proof}
First, for any $S\subseteq\Omega$ and $\bx\in[0,1]^{n}$, we set $\triangle(S,\boldsymbol{x})=\prod_{i\in S}x_{i}\prod_{j\in\Omega\setminus S}(1-x_{j})$. Also, we use the symbol $\bar{G}$ to represent the non-oblivious function of $\bar{F}$.
Then, according to Theorem~\ref{nonmonotone nonoblivious properties}.(i), we have
\begin{align*}
    &\bar{G}(\boldsymbol{x})=\int_{0}^{1}\frac{1}{4z(1-\frac{z}{2})^3}\left(\bar{F}(\frac{z}{2}\cdot\boldsymbol{x})-\bar{F}(\boldsymbol{0})\right)\mathrm{d}z\\
        &=\int_{0}^{1}\frac{1}{4z(1-\frac{z}{2})^3}\left(\sum_{S\in 2^{\Omega}}\bar{f}(S)\prod_{i\in S}\frac{z}{2}\cdot x_{i}\prod_{j\in\Omega\setminus S}(1-\frac{z}{2}\cdot x_{j}) - \bar{f}(\varnothing)\right)\mathrm{d}z\\
        &=\frac{1}{8}\sum_{S\in 2^{\Omega}}\left(\bar{f}(S) -\bar{f}(\varnothing)\right)\int_{0}^{1}(\frac{z}{2})^{|S|-1}(1-\frac{z}{2})^{-3}\prod_{i\in S}x_{i}\prod_{j\in\Omega\setminus S}(1-\frac{z}{2}\cdot x_{j})\mathrm{d}z\\
        &=\frac{1}{8}\sum_{S\in 2^{\Omega}}\left(\bar{f}(S) -\bar{f}(\varnothing)\right)\int_{0}^{1}(\frac{z}{2})^{|S|-1}(1-\frac{z}{2})^{-3}\prod_{i\in S}x_{i}\prod_{j\in\Omega\setminus S}(1-x_{j}+x_{j}(1-\frac{z}{2}))\mathrm{d}z\\
        &=\frac{1}{8}\sum_{S\in 2^{\Omega}}\sum_{K\subset\Omega\setminus S}\left(\bar{f}(S) -\bar{f}(\varnothing)\right)\int_{0}^{1}(\frac{z}{2})^{|S|-1}(1-\frac{z}{2})^{|K|-3}\triangle(S\cup K,\boldsymbol{x})\mathrm{d}z\\
        &=\frac{1}{8}\sum_{M\in 2^{\Omega}}\sum_{S\subset M}\left(\bar{f}(S) -\bar{f}(\varnothing)\right)\int_{0}^{1}(\frac{z}{2})^{|S|-1}(1-\frac{z}{2})^{|M|-|S|-3}\triangle(M,\boldsymbol{x})\mathrm{d}z\\
        &=\sum_{M\in 2^{\Omega}}\sum_{S\subset M}\left(\bar{f}(S) -\bar{f}(\varnothing)\right)m_{|M|-1,|S|-1}\triangle(M,\boldsymbol{x})\\
         &= \sum_{M\in 2^{\Omega}}\bar{g}(M)\prod_{i\in M}x_{i}\prod_{j\in\Omega\setminus M}(1-x_{j}).
\end{align*}

\end{proof}
\subsection{Unbiased Gradient Estimator of Non-oblivious Functions}\label{sec:unbiased}
In this subsection, we address the urgent problem: Given the gradient oracle of the original objective, how do we construct an unbiased estimator to the gradient of the corresponding non-oblivious function? For the sake of generality, we assume that we have access to an unbiased stochastic oracle $\widetilde{\nabla}f(\boldsymbol{x})$, i.e., $\mathbb{E}(\widetilde{\nabla}f(\boldsymbol{x})|\boldsymbol{x})=\nabla f(\boldsymbol{x})$. 
We first introduce two random variables $\mathbf{Z}_{\uparrow}$ and $\mathbf{Z}_{\sim}$ where Pr$(\mathbf{Z}_{\uparrow}\le z)=\int_{0}^{z}\frac{\gamma e^{\gamma(u-1)}}{1-e^{-\gamma}}\mathrm{d}u$ and $\Pr(\mathbf{Z}_{\sim}\leq z) = \int_0^z \frac{1}{3(1-\frac{u}{2})^3}\mathrm{d}u$. 

When the number $z$ is sampled from r.v. $\mathbf{Z}_{\uparrow}$, we consider $\frac{1-e^{-\gamma}}{\gamma}\widetilde{\nabla}f(z\cdot\boldsymbol{x})$ as an estimator of $\nabla F_{\uparrow}(\boldsymbol{x}):= \int_0^1 e^{\gamma(z-1)}\nabla f(z\cdot\bx) \mathrm{d}z$ with statistical properties given in the following proposition. 

\begin{proposition}[Proof in \cref{proof:prop1}]\label{prop:1} \
\begin{enumerate}
    \item[(i)] If $z$ is sampled from r.v. $\mathbf{Z}_{\uparrow}$ and $\mathbb{E}(\widetilde{\nabla}f(\boldsymbol{x})|\boldsymbol{x})=\nabla f(\boldsymbol{x})$, we have 
    \begin{equation*}
        \mathbb{E}\left(\left.\frac{1-e^{-\gamma}}{\gamma}\widetilde{\nabla}f(z\cdot\boldsymbol{x})\right|\boldsymbol{x}\right)=\nabla F_{\uparrow}(\boldsymbol{x}).
    \end{equation*}
    \item[(ii)] If $z$ is sampled from r.v. $\mathbf{Z}_{\uparrow}$, $\mathbb{E}(\widetilde{\nabla}f(\boldsymbol{x})|\boldsymbol{x})=\nabla f(\boldsymbol{x})$, and $\mathbb{E}(\|\widetilde{\nabla}f(\boldsymbol{x})-\nabla f(\boldsymbol{x})\|^{2}|\boldsymbol{x})\le\sigma^{2}$, we have
       \begin{equation*}
        \mathbb{E}\left(\bigg\|\frac{1-e^{-\gamma}}{\gamma}\widetilde{\nabla}f(z\cdot\boldsymbol{x})-\nabla F_{\uparrow}(\boldsymbol{x})\bigg\|^{2}\bigg|\boldsymbol{x}\right)\le\sigma^{2}_{\gamma},
        \end{equation*}
     where $\sigma^{2}_{\gamma}=2\frac{(1-e^{-\gamma})^{2}\sigma^{2}}{\gamma^{2}}+\frac{2L^{2}r^{2}(\mathcal{X})(1-e^{-2\gamma})}{3\gamma}$.
\end{enumerate}
\end{proposition}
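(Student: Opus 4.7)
The plan for part~(i) is a straightforward conditioning argument. By conditioning on $z$ first and using $\mathbb{E}[\widetilde{\nabla}f(\by)\mid\by]=\nabla f(\by)$, I would write $\mathbb{E}\bigl[\tfrac{1-e^{-\gamma}}{\gamma}\widetilde{\nabla}f(z\cdot\bx)\mid\bx\bigr]=\tfrac{1-e^{-\gamma}}{\gamma}\mathbb{E}_{z\sim\mathbf{Z}_{\uparrow}}[\nabla f(z\cdot\bx)]$. Substituting the density $p(z)=\tfrac{\gamma e^{\gamma(z-1)}}{1-e^{-\gamma}}$ of $\mathbf{Z}_{\uparrow}$ makes the normalizing constants cancel, leaving exactly $\int_0^1 e^{\gamma(z-1)}\nabla f(z\cdot\bx)\mathrm{d}z=\nabla F_{\uparrow}(\bx)$.

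For part~(ii), my plan is to split the total error into a sampling-noise contribution and an importance-sampling-variance contribution via $\|a+b\|^2\le 2\|a\|^2+2\|b\|^2$, where I take $a=\tfrac{1-e^{-\gamma}}{\gamma}\bigl(\widetilde{\nabla}f(z\cdot\bx)-\nabla f(z\cdot\bx)\bigr)$ and $b=\tfrac{1-e^{-\gamma}}{\gamma}\nabla f(z\cdot\bx)-\nabla F_{\uparrow}(\bx)$. The noise term is immediate: conditioning first on $z$ and then invoking the assumption $\mathbb{E}\|\widetilde{\nabla}f(\by)-\nabla f(\by)\|^2\le\sigma^2$ yields $\mathbb{E}\|a\|^2\le\tfrac{(1-e^{-\gamma})^2}{\gamma^2}\sigma^2$, which accounts for the first summand of $\sigma_{\gamma}^2$ after the factor $2$.

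For the importance-sampling term $\mathbb{E}\|b\|^2$, observe that part~(i) forces $\mathbb{E}_z[b]=\boldsymbol{0}$, so $\mathbb{E}\|b\|^2$ is the variance of $\tfrac{1-e^{-\gamma}}{\gamma}\nabla f(z\cdot\bx)$ under $\mathbf{Z}_{\uparrow}$. Applying the elementary fact $\mathrm{Var}(X)\le\mathbb{E}\|X-c\|^2$ with the constant $c=\tfrac{1-e^{-\gamma}}{\gamma}\nabla f(\boldsymbol{0})$ (note $\lim_{\bx\to\boldsymbol{0}^+}\nabla f(\bx)$ exists by Theorem~\ref{thm:2}) and invoking $L$-smoothness of $f$ gives $\mathbb{E}\|b\|^2\le\tfrac{(1-e^{-\gamma})^2}{\gamma^2}L^2r^2(\mathcal{X})\,\mathbb{E}_{z\sim\mathbf{Z}_\uparrow}[z^2]$. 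Unfolding the density, this equals $\tfrac{(1-e^{-\gamma})L^2r^2(\mathcal{X})}{\gamma}\int_0^1 z^2 e^{\gamma(z-1)}\mathrm{d}z$.

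The main technical obstacle is then estimating $\int_0^1 z^2 e^{\gamma(z-1)}\mathrm{d}z$ in a form that recovers the precise factor $\tfrac{1-e^{-2\gamma}}{3\gamma}$ in the statement; a direct integration by parts produces an unwieldy expression. The key trick I would use is the convexity of $t\mapsto e^{\gamma t}$ on $[-\gamma,0]$, giving the linear bound $e^{\gamma(z-1)}\le z + (1-z)e^{-\gamma}$ for $z\in[0,1]$. Integrating against $z^2$ produces $\int_0^1 z^2 e^{\gamma(z-1)}\mathrm{d}z\le \tfrac{1}{4}+\tfrac{e^{-\gamma}}{12}=\tfrac{3+e^{-\gamma}}{12}\le\tfrac{1+e^{-\gamma}}{3}$. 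Multiplying by $\tfrac{1-e^{-\gamma}}{\gamma}$ and invoking $(1-e^{-\gamma})(1+e^{-\gamma})=1-e^{-2\gamma}$ yields $\mathbb{E}\|b\|^2\le\tfrac{L^2 r^2(\mathcal{X})(1-e^{-2\gamma})}{3\gamma}$, and combining this with the noise bound through the factor $2$ from Young's inequality produces the claimed $\sigma_{\gamma}^2$.
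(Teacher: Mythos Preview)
Your argument is correct. Part~(i) and the overall decomposition in part~(ii) (splitting into the noise term $a$ and the importance-sampling term $b$ via $\|a+b\|^2\le 2\|a\|^2+2\|b\|^2$, and bounding $\mathbb{E}\|a\|^2$ by the stochastic-gradient variance assumption) match the paper exactly.

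Where you diverge is in bounding $\mathbb{E}\|b\|^2$. The paper rewrites $b=\int_0^1 e^{\gamma(u-1)}\bigl(\nabla f(z\bx)-\nabla f(u\bx)\bigr)\mathrm{d}u$ using $\tfrac{1-e^{-\gamma}}{\gamma}=\int_0^1 e^{\gamma(u-1)}\mathrm{d}u$, then applies $L$-smoothness inside the integral followed by Cauchy--Schwarz (with weight $e^{\gamma(u-1)}$), arriving at the double integral $2L^2\|\bx\|^2\int_0^1\!\int_0^1 e^{\gamma(u+z-2)}(z-u)^2\,\mathrm{d}u\,\mathrm{d}z$, which it finally asserts is at most $\tfrac{1-e^{-2\gamma}}{3\gamma}$ without further justification. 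Your route---recentering via $\mathrm{Var}(X)\le\mathbb{E}\|X-c\|^2$ at $c=\tfrac{1-e^{-\gamma}}{\gamma}\nabla f(\boldsymbol{0})$, then invoking $L$-smoothness to reduce to $\tfrac{(1-e^{-\gamma})}{\gamma}L^2 r^2(\mathcal{X})\int_0^1 z^2 e^{\gamma(z-1)}\mathrm{d}z$, and finally the chord bound $e^{\gamma(z-1)}\le z+(1-z)e^{-\gamma}$ from convexity---yields the same constant via a single integral and is fully explicit at every step. The only extra ingredient you invoke is the existence of $\nabla f(\boldsymbol{0})$, which the paper already notes follows from $L$-smoothness; the paper's integral over $u\in[0,1]$ implicitly touches gradients at $\boldsymbol{0}$ as well, so there is no real additional assumption.
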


\cref{prop:1} indicates that $\frac{1-e^{-\gamma}}{\gamma}\widetilde{\nabla}f(z\cdot\boldsymbol{x})$ is an unbiased estimator of $\nabla F_{\uparrow}(\boldsymbol{x})$ with a bounded variance. Similarly, we can sample number $z$ from r.v. $\mathbf{Z}_{\sim}$, and consider $\frac{3}{8}\widetilde{\nabla} f\left(\frac{z}{2} (\bx-\underline{\bx}) + \underline{\bx}\right)$ as an estimator of $\nabla F_{\sim}(\bx):= \int_0^1 \frac{1}{8(1-\frac{z}{2})^3}\nabla f\left(\frac{z}{2}(\bx-\underline{\bx}) + \underline{\bx}\right)\mathrm{d}z$. It also has bounded variances, as we proved in the following proposition.

\begin{proposition}[Proof in \cref{proof:prop2}]\label{prop:2} \
    \begin{itemize}
    \item[(i)] If $z$ is sampled from r.v. $\mathbf{Z}_{\sim}$ and $\mathbb{E}(\widetilde{\nabla}f(\boldsymbol{x})|\boldsymbol{x})=\nabla f(\boldsymbol{x})$, we have 
    \begin{equation*}
        \mathbb{E}\left(\left.\frac{3}{8}\widetilde{\nabla}f\left(\frac{z}{2}(\bx-\underline{\bx}) + \underline{\bx}\right)\right|\boldsymbol{x}\right)=\nabla F_{\sim}(\boldsymbol{x}).
    \end{equation*}
    \item[(ii)] If $z$ is sampled from r.v. $\mathbf{Z}_{\sim}$, $\mathbb{E}(\widetilde{\nabla}f(\boldsymbol{x})|\boldsymbol{x})=\nabla f(\boldsymbol{x})$, and $\mathbb{E}(\|\widetilde{\nabla}f(\boldsymbol{x})-\nabla f(\boldsymbol{x})\|^{2}|\boldsymbol{x})\le\sigma^{2}$, we have
       \begin{equation*}
        \mathbb{E}\left(\bigg\|\widetilde{\nabla}f\left(\frac{z}{2}(\bx-\underline{\bx}) + \underline{\bx}\right)-\nabla F_{\sim}(\boldsymbol{x})\bigg\|^{2}\bigg|\boldsymbol{x}\right)\le\frac{3}{8}\sigma^2 + \frac{\ln (64)-4}{12} L^2 \mathrm{diam}^2(\X).
        \end{equation*}
    \end{itemize}
\end{proposition}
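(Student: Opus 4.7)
The proof decomposes naturally into the two claimed parts, each paralleling the corresponding argument for the monotone case (Proposition~\ref{prop:1}). For part (i), the plan is to apply the tower property: condition first on the sample $z\sim \mathbf{Z}_{\sim}$, so the inner expectation $\mathbb{E}[\widetilde{\nabla}f(\tfrac{z}{2}(\bx-\underline{\bx})+\underline{\bx})\mid z,\bx]$ collapses to $\nabla f(\tfrac{z}{2}(\bx-\underline{\bx})+\underline{\bx})$ by unbiasedness. The outer expectation is then an integral over $z\in[0,1]$ against the density $\tfrac{1}{3(1-z/2)^3}$ of $\mathbf{Z}_{\sim}$. Multiplying by the prefactor $\tfrac{3}{8}$ produces the integrand $\tfrac{1}{8(1-z/2)^3}\nabla f(\tfrac{z}{2}(\bx-\underline{\bx})+\underline{\bx})$, which is exactly the kernel defining $\nabla F_{\sim}(\bx)$ as given by Theorem~\ref{nonmonotone nonoblivious properties}.

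For part (ii), the plan is a bias-variance decomposition. I would write
\begin{equation*}
\tfrac{3}{8}\widetilde{\nabla}f(\text{interp})-\nabla F_{\sim}(\bx) = \tfrac{3}{8}\bigl(\widetilde{\nabla}f(\text{interp})-\nabla f(\text{interp})\bigr) + \bigl(\tfrac{3}{8}\nabla f(\text{interp})-\nabla F_{\sim}(\bx)\bigr),
\end{equation*}
where ``interp'' abbreviates $\tfrac{z}{2}(\bx-\underline{\bx})+\underline{\bx}$. Expanding the squared norm, the cross term vanishes upon conditioning on $z$, because $\mathbb{E}[\widetilde{\nabla}f(\text{interp})-\nabla f(\text{interp})\mid z,\bx]=0$. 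The first term contributes at most $(\tfrac{3}{8})^2\sigma^2$ by the variance assumption. The second term is the $z$-variance of $\tfrac{3}{8}\nabla f(\text{interp})$ around its mean $\nabla F_{\sim}(\bx)$; I would upper bound it by the mean-squared deviation from the fixed reference vector $\tfrac{3}{8}\nabla f(\underline{\bx})$ (since variance is minimized at the mean) and then invoke $L$-smoothness to obtain $\|\nabla f(\text{interp})-\nabla f(\underline{\bx})\|\le \tfrac{Lz}{2}\mathrm{diam}(\X)$.

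The remaining computation is closed-form evaluation of the second moment $\mathbb{E}[z^2]=\int_0^1 \tfrac{z^2}{3(1-z/2)^3}\,\mathrm{d}z$ under the substitution $u=1-z/2$, which reduces to an integral of $u^{-3},u^{-2},u^{-1}$ on $[1/2,1]$ and produces a value proportional to $\ln 2 - \tfrac12$. Combining this with the smoothness bound yields the $\tfrac{\ln(64)-4}{12}L^2\,\mathrm{diam}^2(\X)$ term up to the $\tfrac{3}{8}$ scaling conventions. The main obstacle is the bookkeeping of constants in this last integral and in the combination with the $(\tfrac{3}{8})^2\sigma^2$ piece; everything else is mechanical. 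Compared to Proposition~\ref{prop:1}, the two substantive changes are the replacement of the exponential density by $\tfrac{1}{3(1-z/2)^3}$ and the use of $\underline{\bx}$ (rather than $\boldsymbol{0}$) as the anchor point, which is unavoidable in the non-monotone setting since $f(\underline{\bx})$ cannot generally be subtracted without violating non-negativity.
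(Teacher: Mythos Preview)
Your treatment of part (i) and the bias--variance split in part (ii), including the vanishing cross term, are exactly what the paper does. On the first piece of (ii) you are actually sharper: $(\tfrac{3}{8})^2\sigma^2$ is correct, and the paper simply relaxes it to $\tfrac{3}{8}\sigma^2$ to match the statement.

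The one substantive difference is how you bound the second piece, the $z$-variance of $\tfrac{3}{8}\nabla f(\text{interp}_z)$. You anchor at the fixed point $\tfrac{3}{8}\nabla f(\underline{\bx})$ and reduce to $\mathbb{E}_z[z^2]=\tfrac{8}{3}(\ln 2-\tfrac12)$, which after multiplying by $(\tfrac{3}{8})^2\cdot\tfrac{L^2}{4}\,\mathrm{diam}^2(\X)$ yields the coefficient $\tfrac{3}{32}(\ln 2-\tfrac12)\approx 0.018$. That is \emph{strictly larger} than the stated $\tfrac{\ln 64-4}{12}\approx 0.013$, so your route does not recover the constant in the proposition; it only gives a slightly weaker (though still $O(L^2\,\mathrm{diam}^2)$) bound. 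The paper instead writes $\nabla F_\sim(\bx)=\int_0^1\tfrac{1}{8(1-u/2)^3}\nabla f(\text{interp}_u)\,\mathrm{d}u$, pulls the difference inside the integral, and applies smoothness between $\text{interp}_z$ and $\text{interp}_u$ to get $\|\cdot\|\le \tfrac{L|z-u|}{2}\,\mathrm{diam}(\X)$. This leads to the double integral $\int_0^1\!\int_0^1 \tfrac{(z-u)^2}{24(1-z/2)^3(1-u/2)^3}\,\mathrm{d}u\,\mathrm{d}z$, whose closed form is precisely $4\cdot\tfrac{\ln 64-4}{12}$. Your anchoring trick is simpler and perfectly adequate for every downstream application in the paper (the constant is absorbed into $O(\cdot)$ everywhere), but if you want the exact bound as stated you need the paper's $|z-u|$ argument rather than the $z$-only one.
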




\section{Applications}\label{sec:applications}
The non-oblivious function we designed in \cref{sec:non-oblivious} makes it possible to boost the gradient ascent(PGA) methods for several DR-submodular function-related optimization problems. We select four scenarios to explain how to boost the gradient methods via non-oblivious functions. They are offline stochastic optimization, online optimization of DR-submodular function, bandit optimization, and minimax optimization of convex-submodular function.

Before we investigate these problems separately, we point out that the core ideas of all these applications are the same. That is, we leverage the stochastic gradient estimator of $F_{\uparrow}$ or $F_{\sim}$(depends on the monotonicity of the objective function) when executing PGA, instead of the stochastic gradient $\widetilde{\nabla}f$ of the original DR-submodular function $f$. 

In the following sections, we use the symbol $\widetilde{\nabla} F(\bx)$ to denote the gradient estimates of both $\nabla F_{\uparrow}$ and $\nabla F_{\sim}$ in Section~\ref{sec:unbiased}. These two estimators are generated by the sampling method described in~\cref{prop:1} and \cref{prop:2}.

\subsection{Offline Optimization}\label{sec:gradient_ascent}
\begin{algorithm}[t]
	\caption{Boosting Gradient Ascent}\label{alg:1}
	\hspace*{0.02in} {\bf Input:} $T$, $\eta_{t}$ , $\gamma$, $L$, $r(\mathcal{X})$
	\begin{algorithmic}[1]
		\STATE \textbf{Initialize} any  $\boldsymbol{x}_{1}\in\mathcal{C}$.
		\FOR{$t\in [T]$}
		\STATE
		Option I~(monotone): Sample $z_{t}$ from $\mathbf{Z}_{\uparrow}$ and set $\widetilde{\nabla} F(\bx_t) = \frac{1-e^{-\gamma}}{\gamma} \widetilde{\nabla} f(z_t\cdot\bx_t)$
		\STATE
		Option II~(non-monotone): Sample $z_{t}$ from $\mathbf{Z}_{\sim}$ and set $\widetilde{\nabla} F(\bx_t) = \frac{3}{8}\widetilde{\nabla} f(\frac{z_t}{2}\cdot\bx_t + (1-\frac{z_t}{2})\cdot\underline{\bx})$
		\STATE Set $\boldsymbol{y}_{t+1}=\boldsymbol{x}_{t}+\eta_{t}\widetilde{\nabla}F(\boldsymbol{x}_{t})$
		\STATE $\boldsymbol{x}_{t+1}\gets \mathcal{P}_{\mathcal{C}}(\boldsymbol{y}_{t+1})$
		\ENDFOR
		\STATE
          Option I~(monotone): Choose a number $l\in [T-1]$ with the distribution $\Pr(l=t) = \frac{1}{T-1}$ and \textbf{output} $\bx_l$
		\STATE
		Option II~(non-monotone): Choose a number $l\in [T-1]$ with the distribution $\Pr(l=t) = \frac{1}{T-1}$ and \textbf{output} $\frac{\bx_l + \underline{\bx}}{2}$ where $\underline{\bx}:=\argmin_{\bx\in \mathcal{C}} \|\bx\|_{\infty}$
	\end{algorithmic}
\end{algorithm}



\noindent In this subsection, we propose Boosting Gradient Ascent for the offline stochastic  submodular maximization problem, namely, $\max_{\boldsymbol{x}\in\mathcal{C}}f(\boldsymbol{x})$ where $f$ is a continuous DR-submodular function and $\mathcal{C}\subseteq[0,1]^{n}$ is a convex set. The pseudocode is shown in \cref{alg:1}.

As demonstrated in \cref{alg:1}, in each iteration, after calculating the estimated gradient $\widetilde{\nabla} F(\boldsymbol{x})$, we make the standard projected gradient step to update $\boldsymbol{x}$. Finally, according to the history of the trajectory of $\bx_t$, the algorithm randomly selects $l\in [T-1]$ and outputs $\bx_l$ or $\frac{\bx_l+\underline{\bx}}{2}$ depending on the monotonicity of the online function.
For both monotone and non-monotone objective functions, we establish the convergence results of \cref{alg:1}.


\begin{theorem}[Proof in \cref{Appendix:B1}]\label{thm:4}Assume $\mathcal{C}\subseteq\mathcal{X}$ is a bounded convex set, $f$ satisfies Assumption \ref{assumption1} and $f$ is $L$-smooth, the gradient oracle $\widetilde{\nabla}f(\boldsymbol{x})$ is unbiased with $\mathbb{E}(\|\widetilde{\nabla}f(\boldsymbol{x})-\nabla f(\boldsymbol{x})\|^{2}|\boldsymbol{x})\le\sigma^{2}$. If we take Option I and let $\eta_{t}=\frac{1}{\frac{\sigma_{\gamma}\sqrt{t}}{\mathrm{diam}(\mathcal{C})}+L_{\gamma}}$ in Algorithm~\ref{alg:1}, then we have
 \begin{align*}
    \mathbb{E}(f(\boldsymbol{x}_{l}))\ge\big(1-e^{-\gamma}\big)OPT-O\Big(\dfrac{1}{\sqrt{T}}\Big),
 \end{align*}
 where $OPT=\max_{\boldsymbol{x}\in\mathcal{C}}f(\boldsymbol{x})$.
\end{theorem}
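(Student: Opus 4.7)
The plan is to combine the ``variational inequality'' provided by \cref{corollary: monotone stationary point}---which substitutes for the convexity of $F$ in the classical analysis---with a standard stochastic projected gradient argument on the $L_{\gamma}$-smooth auxiliary function $F$. Fix $\bx^{*}\in\argmax_{\bx\in\mathcal{C}}f(\bx)$. Applying \cref{corollary: monotone stationary point} at each iterate gives $f(\bx_{t})\geq (1-e^{-\gamma})\mathrm{OPT}-\langle\nabla F(\bx_{t}),\bx^{*}-\bx_{t}\rangle$, so averaging over $t=1,\dots,T-1$ and using the uniform sampling of $l$ yields
\begin{align*}
\mathbb{E}[f(\bx_{l})]\geq (1-e^{-\gamma})\mathrm{OPT}-\frac{1}{T-1}\sum_{t=1}^{T-1}\mathbb{E}\langle\nabla F(\bx_{t}),\bx^{*}-\bx_{t}\rangle.
\end{align*}
The task therefore reduces to showing $\sum_{t=1}^{T-1}\mathbb{E}\langle\nabla F(\bx_{t}),\bx^{*}-\bx_{t}\rangle=O(\sqrt{T})$.

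To bound this sum, I would start from the first-order optimality of the projection $\bx_{t+1}=\mathcal{P}_{\mathcal{C}}(\bx_{t}+\eta_{t}\widetilde{\nabla}F(\bx_{t}))$ with test point $\bx^{*}$, then decompose $\bx^{*}-\bx_{t}=(\bx^{*}-\bx_{t+1})+(\bx_{t+1}-\bx_{t})$ and expand the quadratic cross-term via the polarization identity. The $L_{\gamma}$-smoothness of $F$ (\cref{thm:2}(ii)) upper bounds $\langle\nabla F(\bx_{t}),\bx_{t+1}-\bx_{t}\rangle\leq F(\bx_{t+1})-F(\bx_{t})+\tfrac{L_{\gamma}}{2}\|\bx_{t+1}-\bx_{t}\|^{2}$, and the noise cross-term $\eta_{t}\langle\widetilde{\nabla}F(\bx_{t})-\nabla F(\bx_{t}),\bx_{t+1}-\bx_{t}\rangle$ is absorbed by Young's inequality together with the variance bound $\sigma_{\gamma}^{2}$ from \cref{prop:1}. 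Taking conditional expectation to eliminate the unbiased noise on the left-hand side produces a per-step inequality of the form
\begin{align*}
\eta_{t}\,\mathbb{E}\langle\nabla F(\bx_{t}),\bx^{*}-\bx_{t}\rangle\leq \tfrac{1}{2}\mathbb{E}\|\bx^{*}-\bx_{t}\|^{2}-\tfrac{1}{2}\mathbb{E}\|\bx^{*}-\bx_{t+1}\|^{2}+\eta_{t}\,\mathbb{E}[F(\bx_{t+1})-F(\bx_{t})]+c\,\eta_{t}^{2}\sigma_{\gamma}^{2}
\end{align*}
for an explicit constant $c$ depending on $L_{\gamma}$.

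To close the argument, I divide by $\eta_{t}$ and sum over $t$. The $F(\bx_{t+1})-F(\bx_{t})$ terms telescope to $\mathbb{E}[F(\bx_{T})-F(\bx_{1})]=O(1)$ by the boundedness of $F$ on $\mathcal{C}$ (\cref{thm:2}(i)); the noise contribution equals $c\sigma_{\gamma}^{2}\sum_{t}\eta_{t}$, which with the prescribed step size $\eta_{t}=1/(\sigma_{\gamma}\sqrt{t}/\mathrm{diam}(\mathcal{C})+L_{\gamma})$ evaluates to $O(\sigma_{\gamma}\,\mathrm{diam}(\mathcal{C})\sqrt{T})$; and the distance terms, because $1/\eta_{t}$ is time-varying, demand an Abel-summation step that, together with the uniform bound $\|\bx^{*}-\bx_{t}\|^{2}\leq\mathrm{diam}^{2}(\mathcal{C})$ and the monotonicity of $1/\eta_{t}$, contributes $O(\sigma_{\gamma}\,\mathrm{diam}(\mathcal{C})\sqrt{T}+L_{\gamma}\,\mathrm{diam}^{2}(\mathcal{C}))$. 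The main obstacle is precisely this Abel-summation step, because naive telescoping of the distance terms requires a constant step size; the specific form of $\eta_{t}$ is engineered so that the $L_{\gamma}$-component keeps the smoothness slack $1-\eta_{t}L_{\gamma}$ positive (making Young's inequality effective against the noise cross-term) while the $\sqrt{t}$-component delivers the desired $\sqrt{T}$-scaling of the cumulative noise. Dividing by $T-1$ yields the advertised $O(1/\sqrt{T})$ error and completes the proof.
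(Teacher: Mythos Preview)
Your proposal is correct and follows essentially the same route as the paper's proof: the paper also derives a per-step inequality by combining the $L_{\gamma}$-smoothness lower bound on $F(\bx_{t+1})-F(\bx_{t})$, the projection (three-point) inequality, Young's inequality with parameter $\mu_{t}=\tfrac{1}{\eta_{t}}-L_{\gamma}$ for the noise cross-term, and \cref{corollary: monotone stationary point}, then sums with an Abel-summation handling of the time-varying $1/\eta_{t}$ on the distance terms, telescopes the $F$ increments, and bounds the cumulative variance by $\sigma_{\gamma}^{2}\sum_{t}\tfrac{1}{2\mu_{t}}=O(\sigma_{\gamma}\,\mathrm{diam}(\mathcal{C})\sqrt{T})$. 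The only cosmetic difference is that the paper carries the $F$ telescoping and the $(1-e^{-\gamma})f(\bx^{*})-f(\bx_{t})$ term together from the start rather than first isolating $\langle\nabla F(\bx_{t}),\bx^{*}-\bx_{t}\rangle$.
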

\begin{remark}
\cref{thm:4} shows that after $O(1/\epsilon^{2})$ iterations, the boosting stochastic gradient ascent achieves $(1-1/e)OPT-\epsilon$, which efficiently improves the $(1/2)$-approximation guarantee of classical stochastic gradient ascent~\citep{hassani2017gradient} for continuous DR-submodular maximization. Moreover, we highlight that the overall gradient complexity is $O(1/\epsilon^2)$ which is optimal~\citep{hassani2020stochastic} under the stochastic setting.
\end{remark}

\noindent Similarly, we can conclude that
\begin{theorem}[Proof in \cref{Appendix:B2}]\label{thm:4 nonmonotone}
    Assume $\mathcal{C}\subseteq\mathcal{X}$ is a bounded convex set, $f$ satisfies Assumption \ref{assumption2} and $f$ is $L$-smooth, the gradient oracle $\widetilde{\nabla}f(\bx)$ is unbiased with $\mathbb{E}(\|\widetilde{\nabla} f(\bx) -\nabla f(\bx)\|^2 \mid \bx )\leq \sigma^2$. If we take Option II and let $\eta_t = \frac{1}{L\sqrt{t}}$ in Algorithm \ref{alg:1}, then we have
    \[\E\left(f\left(\frac{\bx_l+\underline{\bx}}{2}\right)\right)\geq \frac{1-\|\underline{\bx}\|_{\infty}}{4}OPT- O\left(\frac{1}{\sqrt{T}}\right).\]
\end{theorem}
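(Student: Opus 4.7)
The plan is to reduce the convergence guarantee to a standard projected stochastic gradient analysis on the auxiliary function $F$ (defined as in Corollary \ref{corollary: nonmonotone stationary point}), and then translate a telescoping bound back to the original objective $f$ using that corollary. Concretely, I would start from non-expansiveness of $\mathcal{P}_{\mathcal{C}}$: for any fixed $\bx^{*} \in \mathcal{C}$,
\[
\|\bx_{t+1}-\bx^{*}\|^{2} \le \|\bx_{t}+\eta_{t}\widetilde{\nabla}F(\bx_{t})-\bx^{*}\|^{2},
\]
expand, rearrange, and take conditional expectation. Using the unbiasedness of $\widetilde{\nabla}F(\bx_t)$ (Proposition \ref{prop:2}(i)), this yields
\[
\mathbb{E}\langle \nabla F(\bx_{t}),\bx^{*}-\bx_{t}\rangle \;\le\; \frac{\mathbb{E}\|\bx_{t}-\bx^{*}\|^{2}-\mathbb{E}\|\bx_{t+1}-\bx^{*}\|^{2}}{2\eta_{t}} \;+\; \frac{\eta_{t}}{2}\,\mathbb{E}\|\widetilde{\nabla}F(\bx_{t})\|^{2}.
\]

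Next I would pick $\bx^{*}=\arg\max_{\bx\in\mathcal{C}}f(\bx)$ and lower-bound the left-hand side via Corollary \ref{corollary: nonmonotone stationary point}, obtaining
\[
\tfrac{1-\|\underline{\bx}\|_{\infty}}{4}\,f(\bx^{*}) - \mathbb{E}\!\left[f\!\left(\tfrac{\bx_{t}+\underline{\bx}}{2}\right)\right] \;\le\; \frac{\mathbb{E}\|\bx_{t}-\bx^{*}\|^{2}-\mathbb{E}\|\bx_{t+1}-\bx^{*}\|^{2}}{2\eta_{t}} + \frac{\eta_{t}}{2}\,\mathbb{E}\|\widetilde{\nabla}F(\bx_{t})\|^{2}.
\]
Summing $t=1,\dots,T-1$ with $\eta_{t}=1/(L\sqrt{t})$, the first term on the right becomes $\sum_{t}\tfrac{L\sqrt{t}}{2}\,\mathbb{E}[\|\bx_{t}-\bx^{*}\|^{2}-\|\bx_{t+1}-\bx^{*}\|^{2}]$, which I would control by Abel summation: rewriting the sum as $\sqrt{1}\,a_{1}-\sqrt{T-1}\,a_{T}+\sum_{t=2}^{T-1}(\sqrt{t}-\sqrt{t-1})a_{t}$ with $a_{t}=\mathbb{E}\|\bx_{t}-\bx^{*}\|^{2}\le\mathrm{diam}^{2}(\mathcal{C})$ gives a bound of order $L\sqrt{T}\,\mathrm{diam}^{2}(\mathcal{C})$. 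The second term is $\sum_{t}\tfrac{1}{2L\sqrt{t}}\,\mathbb{E}\|\widetilde{\nabla}F(\bx_{t})\|^{2}$, which is $O(\sqrt{T}/L)$ once $\mathbb{E}\|\widetilde{\nabla}F(\bx_{t})\|^{2}$ is uniformly bounded.

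To justify that uniform bound, I would combine Proposition \ref{prop:2}(ii) (which supplies a variance bound in terms of $\sigma^{2}$, $L$, and $\mathrm{diam}(\mathcal{X})$) with an absolute bound on $\|\nabla F\|$ over $\mathcal{C}$: since $f$ is $L$-smooth on the bounded set $\mathcal{X}$, $\|\nabla f\|$ is bounded by $\|\nabla f(\boldsymbol{0})\|+L r(\mathcal{X})$, and the integrand defining $\nabla F(\bx)$ is integrated against the bounded weight $\tfrac{1}{8(1-z/2)^{3}}$ on $[0,1]$, which has finite integral. Hence there is a finite constant $G^{2}$ with $\mathbb{E}\|\widetilde{\nabla}F(\bx_{t})\|^{2}\le G^{2}$ for all $t$.

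Combining and dividing by $T-1$, the average $\tfrac{1}{T-1}\sum_{t=1}^{T-1}\mathbb{E}[f(\tfrac{\bx_{t}+\underline{\bx}}{2})]$ equals $\mathbb{E}[f(\tfrac{\bx_{l}+\underline{\bx}}{2})]$ because $l$ is uniform on $[T-1]$, giving
\[
\mathbb{E}\!\left[f\!\left(\tfrac{\bx_{l}+\underline{\bx}}{2}\right)\right] \;\ge\; \tfrac{1-\|\underline{\bx}\|_{\infty}}{4}\,\mathrm{OPT} - O(1/\sqrt{T}).
\]
The main technical care will be (i) executing the Abel-summation cleanly so that the non-uniform step size $\eta_{t}\propto 1/\sqrt{t}$ still yields a telescoping-like bound $O(L\sqrt{T}\mathrm{diam}^{2}(\mathcal{C}))$, and (ii) correctly propagating the constants from Proposition \ref{prop:2}(ii) and the smoothness of $F$ (Theorem \ref{nonmonotone nonoblivious properties}) into the $O(1/\sqrt{T})$ remainder; the DR-submodularity / non-monotonicity plays no role at this step since Corollary \ref{corollary: nonmonotone stationary point} has already absorbed it into the key inner-product inequality.
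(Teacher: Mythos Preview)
Your argument is correct, but it takes a genuinely simpler route than the paper's own proof. The paper proves a ``single-step'' lemma (Lemma~\ref{nonmonotone single step}) that starts from the $L/8$-smoothness of $F$ (Theorem~\ref{nonmonotone nonoblivious properties}): it writes $F(\bx_{t+1})-F(\bx_t)\ge\langle\bx_{t+1}-\bx_t,\nabla F(\bx_t)\rangle-\tfrac{L}{16}\|\bx_{t+1}-\bx_t\|^2$, splits $\nabla F(\bx_t)$ into $\widetilde{\nabla}F(\bx_t)$ plus an error handled by Young's inequality with an auxiliary parameter $\mu_t$, applies the three-point projection lemma (Lemma~\ref{lem:gradient step}), and only then invokes Corollary~\ref{corollary: nonmonotone stationary point}. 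The resulting bound telescopes on \emph{both} $F(\bx_t)$ and $\|\bx^*-\bx_t\|^2$, and the $F(\bx_1)-F(\bx_T)$ residual must be bounded separately. By contrast, you bypass the smoothness of $F$ entirely and run a plain projected-SGD regret step: nonexpansiveness of $\mathcal{P}_{\mathcal{C}}$ plus unbiasedness yields $\E\langle\nabla F(\bx_t),\bx^*-\bx_t\rangle$ directly, and Corollary~\ref{corollary: nonmonotone stationary point} converts this to the desired approximation gap in one line. Your Abel-summation treatment of the varying step size $\eta_t=1/(L\sqrt t)$ is clean and matches the paper's $O(L\sqrt{T}\,\mathrm{diam}^2(\mathcal{C}))$ bound.

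The one place the trade-off shows is in the noise term: the paper only needs the \emph{variance} bound of Proposition~\ref{prop:2}(ii), whereas you need a uniform second-moment bound $\E\|\widetilde{\nabla}F(\bx_t)\|^2\le G^2$. Your justification for this is fine under the stated hypotheses---$L$-smoothness of $f$ on the bounded $\mathcal{X}$ indeed gives $\sup_{\mathcal{X}}\|\nabla f\|<\infty$, and since $\widetilde{\nabla}F(\bx_t)=\tfrac{3}{8}\widetilde{\nabla}f(\cdot)$ at a point of $\mathcal{C}\subseteq\mathcal{X}$, one gets $\E\|\widetilde{\nabla}F(\bx_t)\|^2\le\tfrac{9}{64}\big(\sigma^2+\sup_{\mathcal{X}}\|\nabla f\|^2\big)$ directly (you do not even need Proposition~\ref{prop:2}(ii) for this). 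So your approach is more elementary and shorter; the paper's approach avoids introducing the gradient-norm constant at the cost of a heavier decomposition. Interestingly, the paper itself uses essentially your style of analysis in the online proofs (Appendix~\ref{Appendix:D}); it just chose the more elaborate route for the offline theorem.
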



\subsection{Online Learning with Delayed Feedback}\label{sec:delay}

In this section, we consider the online setting with delayed feedbacks. To begin, recall the process of classical online optimization. In round $t$, after picking an action $\boldsymbol{x}_{t}\in\mathcal{C}$, the environment~(adversary) gives a utility $f_{t}(\boldsymbol{x}_{t})$ and permits the access to the stochastic gradient of $f_{t}$.
The objective is to minimize the $\alpha$-regret for $T$ planned rounds.
Then, we turn to the (adversarial) feedback delays phenomenon \citep{quanrud2015online} in our online stochastic submodular maximization problem. That is, instead of the prompt feedback, the information about the stochastic gradient of $f_{t}$ could be delivered at the end of round $(t+d_{t}-1)$, where $d_{t}\in\mathbb{Z}_{+}$ is a positive integer delay for round $t$. For instance, the standard online setting sets all $d_{t}=1$~\citep{hazan2019introduction}. 

Next, we introduce some useful notations. We denote the feedback given at the end of round $t$ as $\mathcal{F}_{t}=\{u\in[T]: u+d_{u}-1=t\}$ and $D=\sum_{t=1}^{T}d_{t}$. Hence, at the end of round $t$, we only have access to the stochastic gradients of past $f_{s}$ where $s\in\mathcal{F}_{t}$. 

To improve the suboptimal $1/2$ approximation ratio of online gradient ascent and tackle the adversarial delays simultaneously, we employ the online delayed gradient algorithm~\citep{quanrud2015online} with the stochastic gradient of the non-oblivious function $F$. 
As shown in \cref{alg:2}, at each round $t$, after querying the stochastic gradient $\widetilde{\nabla}F_{t}(\boldsymbol{x}_{t})$, we apply the received stochastic gradients feedback $\widetilde{\nabla}F_{s}(\boldsymbol{x}_{s})$ $( s\in\mathcal{F}_{t})$ in a standard projection gradient step to update $\boldsymbol{x}_{t}$.

We provide the regret bound of Algorithm~\ref{alg:2} while applying on the monotone and non-monotone objectives.
\begin{theorem}[Proof in \cref{Appendix:D1}]\label{thm:5}
Assume that $\mathcal{C}\subseteq\mathcal{X}$ is a bounded convex set, each $f_{t}$ satisfies Assumption \ref{assumption1}. Meanwhile, the gradient oracle is unbiased $\mathbb{E}(\widetilde{\nabla}f_{t}(\boldsymbol{x})|\boldsymbol{x})=\nabla f_{t}(\boldsymbol{x})$ and $\E(\|\widetilde{\nabla}f_t(\bx)\|^2\mid \bx)\leq \widetilde{G}^2$. If we select Option I and set $\eta=\frac{diam(\mathcal{C})}{\widetilde{G}\sqrt{D}}$ in Algorithm~\ref{alg:2}, then we have
\begin{align*}
  (1-e^{-\gamma})\max_{\boldsymbol{x}\in\mathcal{C}}\sum_{t=1}^{T}f_{t}(\boldsymbol{x})-\mathbb{E}\left(\sum_{t=1}^{T}f_{t}(\boldsymbol{x}_{t})\right)\leq O(\sqrt{D}),  
\end{align*}
where $D=\sum_{i=1}^{T}d_{t}$ and $d_{t}\in\mathbb{Z}_{+}$ is a positive delay for the information about $f_{t}$.
\end{theorem}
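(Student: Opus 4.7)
The plan is to execute the standard delayed-online-gradient-ascent analysis on the sequence of non-oblivious surrogates $F_{1},\dots,F_{T}$ generated implicitly by Algorithm~\ref{alg:2}, and then to translate the resulting ``$F$-regret'' against an arbitrary comparator into a $(1-e^{-\gamma})$-approximate ``$f$-regret'' through Corollary~\ref{corollary: monotone stationary point}. The starting point is that, by Proposition~\ref{prop:1}, the vector $\widetilde{\nabla}F_{t}(\bx_{t})=\tfrac{1-e^{-\gamma}}{\gamma}\widetilde{\nabla}f_{t}(z_{t}\cdot\bx_{t})$ used in the update is conditionally unbiased for $\nabla F_{t}(\bx_{t})$, and its second moment is uniformly bounded by $\tfrac{(1-e^{-\gamma})^{2}}{\gamma^{2}}\widetilde{G}^{2}$ thanks to the assumption $\E(\|\widetilde{\nabla}f_{t}(\bx)\|^{2}\mid\bx)\le\widetilde{G}^{2}$. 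I fix a comparator $\bx^{\star}\in\argmax_{\bx\in\mathcal{C}}\sum_{t=1}^{T}f_{t}(\bx)$ throughout.

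I would then perform the telescoping step on $\|\bx_{t+1}-\bx^{\star}\|^{2}$ using the update rule $\bx_{t+1}=\mathcal{P}_{\mathcal{C}}(\bx_{t}+\eta\sum_{s\in\mathcal{F}_{t}}\widetilde{\nabla}F_{s}(\bx_{s}))$. Non-expansiveness of $\mathcal{P}_{\mathcal{C}}$ and a rearrangement, followed by regrouping the double sum according to the issue round $s$ rather than the consumption round $t$, yield an inequality of the form
\begin{equation*}
\sum_{t=1}^{T}\langle\widetilde{\nabla}F_{t}(\bx_{t}),\bx^{\star}-\bx_{t}\rangle\;\le\;\frac{\mathrm{diam}^{2}(\mathcal{C})}{2\eta}+O\!\left(\eta\sum_{t=1}^{T}(1+d_{t})\|\widetilde{\nabla}F_{t}(\bx_{t})\|^{2}\right),
\end{equation*}
following the delayed-OGD argument of Quanrud and Khashabi. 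Taking expectations, using unbiasedness on the left-hand side, plugging in the second-moment bound on the right-hand side, and substituting the prescribed step size $\eta=\mathrm{diam}(\mathcal{C})/(\widetilde{G}\sqrt{D})$ with $D=\sum_{t}d_{t}$ produces
\begin{equation*}
\E\!\left(\sum_{t=1}^{T}\langle\nabla F_{t}(\bx_{t}),\bx^{\star}-\bx_{t}\rangle\right)\;\le\;O(\sqrt{D}).
\end{equation*}

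To finish, I invoke Corollary~\ref{corollary: monotone stationary point} term-by-term to obtain the pointwise inequality $\langle\nabla F_{t}(\bx_{t}),\bx^{\star}-\bx_{t}\rangle\ge(1-e^{-\gamma})f_{t}(\bx^{\star})-f_{t}(\bx_{t})$, sum over $t$, and combine with the bound above; this gives exactly $(1-e^{-\gamma})\sum_{t=1}^{T}f_{t}(\bx^{\star})-\E\sum_{t=1}^{T}f_{t}(\bx_{t})\le O(\sqrt{D})$. The main obstacle is the delay correction: when a gradient issued at round $s$ is only consumed at some round $t\in\{s,\ldots,s+d_{s}-1\}$, the inner product that naturally appears in the telescoping is $\langle\widetilde{\nabla}F_{s}(\bx_{s}),\bx_{t}-\bx^{\star}\rangle$, whereas the application of Corollary~\ref{corollary: monotone stationary point} requires $\langle\nabla F_{s}(\bx_{s}),\bx_{s}-\bx^{\star}\rangle$. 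The gap $\|\bx_{t}-\bx_{s}\|$ must be controlled by summing the update magnitudes over the delay window, and a double-counting argument shows that each gradient $\widetilde{\nabla}F_{s}(\bx_{s})$ contributes at most an additional $\eta d_{s}\|\widetilde{\nabla}F_{s}(\bx_{s})\|^{2}$ to the regret; this is precisely the mechanism that turns the dependence into $\sqrt{D}$, rather than the pessimistic $\sqrt{T\cdot\max_{t}d_{t}}$, once $\eta$ is tuned as specified.
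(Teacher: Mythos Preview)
Your proposal is correct and follows essentially the same route as the paper: telescope $\|\bx_{t+1}-\bx^{\star}\|^{2}$ via non-expansiveness, regroup by issue round, take expectations to replace $\widetilde{\nabla}F_{t}$ by $\nabla F_{t}$, apply Corollary~\ref{corollary: monotone stationary point} term-by-term, and then control the delay correction by a Quanrud--Khashabi-style counting argument that sums to $O(D)$. The only cosmetic difference is that the paper carries out the telescoping by introducing intermediate points $\bx_{t+1,s_{k}}$ within each batch $\mathcal{F}_{t}$ and bounds the cross terms $\langle\bx_{t+1,s}-\bx_{s},\widetilde{\nabla}F_{s}(\bx_{s})\rangle$ directly (yielding products of \emph{different} gradient norms, handled via Cauchy--Schwarz in expectation against $\widetilde{G}^{2}$), whereas you phrase the same contribution as ``$\eta d_{s}\|\widetilde{\nabla}F_{s}(\bx_{s})\|^{2}$'' per gradient; after taking expectations both reduce to the same $O(\eta\widetilde{G}^{2}D)$ term.
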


\begin{remark}
When no delay exists, i.e., $d_{t}=1$ for all $t$, \cref{thm:5} says that the online boosting gradient ascent achieves a ($1-e^{-\gamma}$)-regret of $O(\sqrt{T})$. 
To the best of our knowledge, this is the first result achieving a $(1-e^{-\gamma})$-regret of $O(\sqrt{T})$ with $O(1)$ stochastic gradient queries for each submodular function $f_{t}$. 
\end{remark}

\begin{remark}
Under the delays of stochastic gradients, \cref{thm:5} gives the first regret analysis for the online stochastic submodular maximization problem. 
It is worth mentioning that the $(1-e^{-\gamma})$-regret of $O(\sqrt{D})$ result not only achieves the optimal $(1-e^{-\gamma})$ approximation ratio, but also matches the $O(\sqrt{D})$ regret of online convex optimization with adversarial delays~\citep{quanrud2015online}.
\end{remark}
\begin{theorem}[Proof in \cref{Appendix:D2}]\label{thm:5 nonmonotone}
    Assume that $\mathcal{C}\subseteq\mathcal{X}$ is a bounded convex set and each $f_{t}$ satisfies Assumption \ref{assumption2}. Meanwhile, the gradient oracle is unbiased $\mathbb{E}(\widetilde{\nabla}f_{t}(\boldsymbol{x})|\boldsymbol{x})=\nabla f_{t}(\boldsymbol{x})$ and $\E(\|\widetilde{\nabla}f_t(\bx)\|^2\mid \bx)\leq \widetilde{G}^2$. If we select Option II and set $\eta=\frac{diam(\mathcal{C})}{\widetilde{G}\sqrt{D}}$ in Algorithm~\ref{alg:2}, then we have
    \begin{align*}
        \frac{1-\|\underline{\bx}\|_{\infty}}{4} \max_{\bx\in\mathcal{C}} \sum_{t=1}^T f_t\left(\bx\right) - \E\left(\sum_{t=1}^T f_t\left(\frac{\bx_t+\underline{\bx}}{2}\right)\right)\leq O(\sqrt{D})
    \end{align*}
    where $D=\sum_{i=1}^{T}d_{t}$ and $d_{t}\in\mathbb{Z}_{+}$ is a positive delay for the information about $f_{t}$.
\end{theorem}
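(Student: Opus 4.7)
}

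The plan is to reduce the submodular regret to a standard online linear regret against the stochastic gradients of the non-oblivious functions, and then invoke a delayed online gradient descent bound. Let $\bx^\star = \argmax_{\bx\in\mathcal{C}}\sum_{t=1}^T f_t(\bx)$, and for each $t$ let $F_t$ be the non-oblivious counterpart of $f_t$ defined via the gradient formula in \cref{corollary: nonmonotone stationary point}. Instantiating that corollary at the point $\bx_t$ with $\by=\bx^\star$ yields, for every $t$,
\begin{equation*}
\langle \nabla F_t(\bx_t),\, \bx^\star-\bx_t\rangle \;\geq\; \tfrac{1-\|\underline{\bx}\|_{\infty}}{4}\, f_t(\bx^\star) \;-\; f_t\!\left(\tfrac{\bx_t+\underline{\bx}}{2}\right).
\end{equation*}
Summing over $t\in[T]$ and taking expectations, it suffices to upper-bound $\sum_{t=1}^{T} \E\langle \nabla F_t(\bx_t),\, \bx^\star-\bx_t\rangle$ by $O(\sqrt{D})$.

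Next I would swap in the stochastic estimator. By \cref{prop:2}(i), $\widetilde{\nabla} F_t(\bx_t)=\tfrac{3}{8}\widetilde{\nabla}f_t\!\left(\tfrac{z_t}{2}\cdot\bx_t+(1-\tfrac{z_t}{2})\cdot\underline{\bx}\right)$ with $z_t\sim\mathbf{Z}_{\sim}$ is an unbiased estimate of $\nabla F_t(\bx_t)$ conditional on $\bx_t$. The tower property then gives
\begin{equation*}
\sum_{t=1}^{T}\E\langle \nabla F_t(\bx_t),\, \bx^\star-\bx_t\rangle \;=\; \sum_{t=1}^{T}\E\langle \widetilde{\nabla} F_t(\bx_t),\, \bx^\star-\bx_t\rangle,
\end{equation*}
as long as $\bx_t$ is measurable with respect to information available before the fresh sample $z_t$ is drawn, which holds by construction of \cref{alg:2}. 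The second-moment bound also transfers: since the factor is $\tfrac{3}{8}$, we have $\E\|\widetilde{\nabla}F_t(\bx_t)\|^2 \leq (3/8)^2\widetilde{G}^2$, so the stochastic non-oblivious gradients are uniformly bounded in the same sense as $\widetilde{\nabla}f_t$.

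The remaining task is to bound the delayed linear online regret
\begin{equation*}
\sum_{t=1}^{T}\langle \widetilde{\nabla} F_t(\bx_t),\, \bx^\star-\bx_t\rangle,
\end{equation*}
where the update at round $t$ uses only the batch of gradients $\{\widetilde{\nabla}F_s(\bx_s):s\in\mathcal{F}_t\}$ revealed by the adversary. This is precisely the setting of online gradient descent with adversarial feedback delays; following the standard analysis (Quanrud and Khashabi, as used in the proof of \cref{thm:5}), with the choice $\eta=\mathrm{diam}(\mathcal{C})/(\widetilde{G}\sqrt{D})$ one obtains a deterministic bound of the form $\tfrac{\mathrm{diam}^2(\mathcal{C})}{2\eta}+\eta\,O(D\widetilde{G}^2)=O(\sqrt{D})$ on this sum. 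Chaining this with the preceding display and the inequality from \cref{corollary: nonmonotone stationary point} produces the claimed regret.

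The main obstacle, as in the monotone case \cref{thm:5}, is the delay-handling step: carefully accounting for how stale gradients evaluated at past iterates $\bx_s$ interact with the current query $\bx_t$, so that the cross terms telescope into the $\eta\cdot D\cdot\widetilde{G}^2$ term rather than a $T$-dependent one. The non-monotone setting does not change this combinatorial structure; the only adjustment is the scaling constant $3/8$ in the gradient estimator and the correct use of \cref{corollary: nonmonotone stationary point} (with the output recentered at $\tfrac{\bx_t+\underline{\bx}}{2}$) to translate linear regret back to $\tfrac{1-\|\underline{\bx}\|_{\infty}}{4}$-regret against the original $f_t$.
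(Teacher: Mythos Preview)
Your proposal is correct and follows essentially the same route as the paper: reduce the $\tfrac{1-\|\underline{\bx}\|_\infty}{4}$-regret to a linear regret against the non-oblivious gradients via \cref{corollary: nonmonotone stationary point}, replace $\nabla F_t(\bx_t)$ by the unbiased estimator $\widetilde\nabla F_t(\bx_t)$ using the tower property, and then carry out the delayed OGD telescoping argument exactly as in the proof of \cref{thm:5}. The paper writes out the distance-telescoping and delay bookkeeping explicitly rather than citing Quanrud--Khashabi, but the content is identical, with only the constant $3/8$ and the recentered output $\tfrac{\bx_t+\underline{\bx}}{2}$ differing from the monotone case.
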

\begin{algorithm}[t]
	\caption{Online Boosting Delayed Gradient Ascent}\label{alg:2}
	\hspace*{0.02in} {\bf Input:} $T$, $\eta$, $\gamma$
	\begin{algorithmic}[1]
		\STATE \textbf{Initialize:} any  $\boldsymbol{x}_{1}\in\mathcal{C}$.
		\FOR{$t\in [T]$}
		\STATE Option I~(monotone):
        \STATE \hspace{\algorithmicindent} Play $\boldsymbol{x}_{t}$
        \STATE \hspace{\algorithmicindent} Sample $z_{t}$ from r.v. $\mathbf{Z}_{\uparrow}$ and query $\widetilde{\nabla}F_t(\boldsymbol{x}_{t})=\frac{1-e^{-\gamma}}{\gamma}\widetilde{\nabla}f_{t}(z_{t}*\boldsymbol{x}_{t})$
        \STATE Option II~(non-monotone): 
        \STATE \hspace{\algorithmicindent} Play $\frac{\bx_t+\underline{\bx}}{2}$ where $\underline{\bx}:=\argmin_{\bx\in \mathcal{C}} \|\bx\|_{\infty}$
        \STATE \hspace{\algorithmicindent} Sample $z_t$ from r.v. $\mathbf{Z}_{\sim}$ and query $\widetilde{\nabla}F_t(\bx_t)=\frac{3}{8}\widetilde{\nabla}f_t\left(\frac{z_t}{2}*\bx_t+(1-\frac{z_t}{2})*\underline{\bx}\right)$
		\STATE Receive feedback $\widetilde{\nabla}F_s(\x_s)$, where $s\in \mathcal{F}_t$
		\STATE $\boldsymbol{y}_{t+1}=\boldsymbol{x}_{t}+\eta\sum_{s\in\mathcal{F}_{t}}\widetilde{\nabla}F_s(\boldsymbol{x}_{s})$
		\STATE $\boldsymbol{x}_{t+1}=\mathcal{P}_{\mathcal{C}}(\boldsymbol{y}_{t+1})$
		\ENDFOR
	\end{algorithmic}
\end{algorithm}


\subsection{Bandit Optimization}\label{sec:bandit}
In the bandit setting, the learning agent first picks an action $\bx_t\in\mathcal{C}$ in each round $t$, then the adversary reveals a utility value $f_t(\bx_t)$ to the agent. Different from the standard online learning, the learning agent is not permitted to query the gradient of $f_t$, and the only accessible information is $f_t(\bx_t)$.

\begin{algorithm}[t]
	\caption{Boosting Bandit Gradient Ascent}
	\label{algo:BBGA}
	\hspace*{0.02in}{\bf Input:} smoothing radius $\delta$, $\by$ and $R$ such that $\mathbb{B}(\by,R)\subseteq \mathcal{C}$, weakly DR-submodular parameter $\gamma$ for OPTION I, exploration rate $\lambda \in (0,1)$, learning rate $\eta$
	\begin{algorithmic}[1]
	   \STATE Initialize $x_1\in \mathcal{C}$ arbitrarily
          \STATE $\delta'\leftarrow \frac{\delta}{R-\delta}$
          \STATE Construct Minkowsky set $\mathcal{C}_{\delta',\by}$
          \STATE $\boldsymbol{0}_{\delta'}\leftarrow \mathcal{P}_{\mathcal{C}_{\delta',\by}}(\boldsymbol{0})$
          \STATE $\underline{\bx}\leftarrow \argmin_{\bx\in \mathcal{C}} \|\bx\|_{\infty}$
          \STATE $\underline{\bx}_{\delta'}\leftarrow \mathcal{P}_{\mathcal{C}_{\delta',\by}}\left(\underline{\bx}\right)$
	   \FOR{$t=1,2,\ldots,T$}
            \STATE With probability $\lambda$, set $\Upsilon_t = \mathrm{explore}$ and set $\Upsilon_t = \mathrm{exploit}$ with probability $1-\lambda$.
            \IF{$\Upsilon_t = \mathrm{explore}$}
                    \STATE draw $\bv_t\sim \mathbb{S}_{d-1}$
                    \STATE Option I~(monotone):
                    \STATE \hspace{\algorithmicindent} sample $z_t$ from r.v. $\mathbf{Z}_{\uparrow}$
	                \STATE \hspace{\algorithmicindent} play $\bx_t = z_t\cdot \by_t+(1-z_t)\boldsymbol{0}_{\delta'}+\delta \cdot \bv_t$ and observe $f_t(\bx_t)$
	                \STATE \hspace{\algorithmicindent} $\widetilde{\nabla}F_t(\by_t) \gets \frac{1-e^{-\gamma}}{\gamma}\frac{d}{\lambda\delta}f_t(\bx_t)\bv_t$
                    \STATE Option II~(non-monotone):
                    \STATE \hspace{\algorithmicindent} sample $z_t$ from r.v. $\mathbf{Z}_{\sim}$
                    \STATE \hspace{\algorithmicindent} play $\bx_t = \frac{z_t}{2} (\by_t - \underline{\bx}_{\delta'})+\underline{\bx}_{\delta'} +\delta \cdot \bv_t$ and observe $f_t(\bx_t)$
	                \STATE \hspace{\algorithmicindent} $\widetilde{\nabla}F_t(\by_t) \gets \frac{3}{8}\frac{d}{\lambda\delta}f_t(\bx_t)\bv_t$
            \ENDIF
            \IF{$\Upsilon_t = \mathrm{exploit}$}
                \STATE Option I~(monotone):
                \STATE \hspace{\algorithmicindent} play $\bx_t=\by_t$
                \STATE Option II~(non-monotone):
                \STATE \hspace{\algorithmicindent} play $\bx_t = \frac{\by_t+\underline{\bx}}{2}$
                \STATE $\widetilde{\nabla} F_t(\by_t) \leftarrow \boldsymbol{0}$
	        \ENDIF
            \STATE $\by_{t+1} \gets \mathcal{P}_{\mathcal{C_{\delta',\by}}}\left(\by_t+\eta \widetilde{\nabla}F_t(\by_t)\right)$
	   \ENDFOR
	\end{algorithmic}
\end{algorithm}

Since we are not able to query the gradient of $f_t(\bx_t)$ directly, a natural idea is to estimate the gradient using the zeroth-order information of $f_t$ and then plug this estimate of gradient into online boosting gradient ascent method. \emph{How to realize this high-level idea?} Generally speaking, it is challenging to estimate the gradient of a continuous function  throughout the function value at a single point.
To circumvent this technical obstacle, \cite{flaxman2005online} move their focus on the $\delta$-smoothed version $\hat{f}_{t}^{\delta}(\bx)$ of $f_t(\bx)$, which is defined by the averaging of $f_t$ over a ball of radius $\delta$ and centered at $\bx$. That is, $\hat{f}^{\delta}_t(\bx) = \E_{\boldsymbol{u}\sim \mathbb{B}_d}\left(f_t(\bx+\delta\boldsymbol{u})\right)$ where $\boldsymbol{u}\sim \mathbb{B}_{d}$ indicates that $\boldsymbol{u}$ is selected uniformly at random from a $d$-dimensional unit ball. Surprisingly, \cite{flaxman2005online} find that it is possible to construct an unbiased gradient estimator of $\hat{f}^{\delta}_t(\bx)$ throughout one-point function value. In formal, they prove:
\begin{lemma}[\cite{flaxman2005online}]\label{lem:fkm estimator}
    Let $f_t$ be a continuous function and $\bv$ be a random vector uniformly sampled from the $d-1$ dimensional unit sphere $\mathbb{S}_{d-1}$, then for any $\bx$ and $\delta >0$, we have 
    \[\E_{\bv\sim \mathbb{S}_{d-1}}\left(\frac{d}{\delta}f_t(\bx + \delta\cdot \bv)\bv\right) = \nabla \hat{f}_t^{\delta}(\bx),\]
    where $\hat{f}_t^{\delta}$ is the $\delta$-smoothed version of $f_t$, defined by $\hat{f}_t^{\delta}(\bx) = \E_{\boldsymbol{u}\sim \mathbb{B}_{d}}\left(f_t(\bx+\delta\boldsymbol{u})\right)$.
\end{lemma}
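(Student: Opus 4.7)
The plan is to derive the identity from the divergence theorem, which lets us rewrite the gradient of a volume integral of $f_t$ as a surface integral over the bounding sphere. This is appealing because the outward unit normal to the ball of radius $\delta$ at a boundary point $\boldsymbol{u}$ is exactly $\boldsymbol{u}/\delta$, which is precisely the ``$\bv$'' factor appearing in the stated estimator.

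Concretely, I would first express the smoothed function as a fixed-domain integral,
\[
\hat{f}_t^{\delta}(\bx)=\frac{1}{\mathrm{vol}(\delta\mathbb{B}_d)}\int_{\delta\mathbb{B}_d}f_t(\bx+\boldsymbol{u})\,d\boldsymbol{u},
\]
and then differentiate under the integral sign coordinate by coordinate. Applying the divergence theorem to the vector field $\boldsymbol{u}\mapsto f_t(\bx+\boldsymbol{u})\,\mathbf{e}_i$ converts each partial derivative into a boundary integral,
\[
\partial_{x_i}\hat{f}_t^{\delta}(\bx)=\frac{1}{\mathrm{vol}(\delta\mathbb{B}_d)}\int_{\delta\mathbb{S}_{d-1}}f_t(\bx+\boldsymbol{u})\,\frac{u_i}{\delta}\,dS.
\]
Assembling the components into a vector, re-parameterising $\boldsymbol{u}=\delta\bv$ with $\bv\in\mathbb{S}_{d-1}$, and using the elementary identities $\mathrm{vol}(\delta\mathbb{B}_d)=\delta^d\,\mathrm{vol}(\mathbb{B}_d)$ and $\mathrm{area}(\mathbb{S}_{d-1})=d\,\mathrm{vol}(\mathbb{B}_d)$ together with the definition of the uniform expectation on the sphere produces the prefactor $d/\delta$, giving exactly $\nabla\hat{f}_t^{\delta}(\bx)=\tfrac{d}{\delta}\,\E_{\bv\sim\mathbb{S}_{d-1}}[f_t(\bx+\delta\bv)\bv]$.

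The main technical obstacle is that the lemma only assumes $f_t$ to be continuous, whereas the differentiation under the integral sign and the divergence-theorem step above really want $f_t$ to be $C^1$. I would handle this by first proving the claim for $C^1$ functions and then extending by a mollification argument: replace $f_t$ by $f_t^{\varepsilon}=f_t\ast\rho_{\varepsilon}$ for a standard mollifier $\rho_{\varepsilon}$, apply the established identity to each $f_t^{\varepsilon}$, and pass to the limit $\varepsilon\to 0$. Since $f_t^{\varepsilon}\to f_t$ uniformly on the compact set $\bx+\delta\mathbb{B}_d$, both sides of the identity are continuous in $f_t$ in the sup-norm on this set (the left side because $\hat{f}_t^{\delta}$ and its gradient can be written as integrals of $f_t$ times bounded kernels, the right side trivially), so the limit preserves equality and yields the lemma in the stated generality.
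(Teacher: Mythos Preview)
Your argument is correct and is the standard divergence-theorem derivation of this identity. Note, however, that the paper does not give its own proof of this lemma: it is simply quoted as a known result from \cite{flaxman2005online}, so there is no ``paper's proof'' to compare against. Your proof is essentially the original argument of Flaxman--Kalai--McMahan, with the added care of the mollification step to accommodate the continuity-only hypothesis; that extra care is sound and welcome, since the bare statement for merely continuous $f_t$ does require such an approximation (or an equivalent device) to make the differentiation-under-the-integral and divergence-theorem steps rigorous.
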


This lemma shows that one can query the function value of a random point on the sphere centered at $\bx$ with radius $\delta$, and next use this value to get an unbiased estimate of $\nabla \hat{f}_{t}^{\delta}(\bx)$.
 Furthermore, we can prove that the $\delta$-smoothed version $\hat{f}_t^{\delta}$ is also dr-submodular and has the same monotonicity as $f_t$ (\cref{lem:submodular smooth} in \cref{Appendix:bandit}), which makes it possible to run the boosting online gradient ascent algorithm (Algoritm~\ref{alg:2}) to the sequence $\{\hat{f}_t^{\delta}\}_{t=1}^T$.  When $\hat{f}_t^{\delta}(\bx)$ is close to $f_t(\bx)$, we can transform the regret bound of sequence $\{f_t\}_{t=1}^T$ to the regret bound with respect to $\{\hat{f}_t^{\delta}\}_{t=1}^T$ and a few additive regret loss. 
 
 Nevertheless, the one-sample gradient estimator may not be applied directly since $\bx+\delta \bv$ can fall outside the constraint. To fix this flaw, we need to find a \emph{$\delta$-interior} $\mathcal{C}^{\circ}_{\delta}$ of $\mathcal{C}$ such that, for any $\bx \in \mathcal{C}^{\circ}_{\delta}$ and $\bv\in \mathbb{S}_{d-1}$, we have $\bx+\delta\bv \in \mathcal{C}$. Also, $\mathcal{C}^{\circ}_{\delta}$ should be large enough so that the optimal revenue of fixed action in the $\delta$-interior is close to the optimal revenue of the fixed action in the original constraint. That is, we need that $\left|\max_{\bx\in \mathcal{C}^{\circ}_{\delta}}\sum_{t=1}f_t(\bx)-\max_{\bx\in \mathcal{C}}\sum_{t=1}f_t(\bx)\right|$ is small enough. Such a $\delta$-interior can be constructed through the Minkowsky set~\citep{abernethy2008competing} of $\mathcal{C}$ if $\mathcal{C}$ is compact and convex.
\begin{definition}[Minkowsky set~\citep{abernethy2008competing}]\label{def:Minkowsky}
    Let $\mathcal{C}$ be a compact convex set, the Minkowsky function $\pi_{\by}:\mathcal{C} \rightarrow \mathbb{R}$ parameterized by a pole $\by\in \mathrm{int} (\mathcal{C})$ is defined as $\pi_{\by}(\bx)\triangleq \mathrm{inf}\{t\geq 0 \mid \by+t^{-1}(\bx-\by)\in \mathcal{C}\}$. Given $\delta'\in \mathbb{R}^+$ and $\by_1\in \mathrm{int}(\mathcal{C})$, we define the Minkowsky set $\mathcal{C}_{\delta',\by_1}\triangleq\{\by\in \mathcal{C}\mid \pi_{\by_1}(\by)\leq (1+\delta')^{-1}\}$.
\end{definition}
Under mild assumption, we can construct a $\delta$-interior via Minkowsky set by selecting an appropriate $\delta'$.
\begin{assumption}\label{assum:inner ball}
    There exist a $R>d^{1/3}T^{-1/5}$ and $\by\in \mathcal{C}$, such that $\mathbb{B}(\by, R)\subseteq\mathcal{C}$. Here $\mathbb{B}(\by, R)$ denotes the ball centered at $\by$ with a radius of $R$.
\end{assumption}

\begin{lemma}[Proof in \cref{sect:proof of lemma 6}]\label{lem:interior}
    Under Assumption \ref{assum:inner ball}, the Minkowsky set $\mathcal{C}_{\delta',\by}$ is convex and for $\bx \in \mathcal{C}_{\delta',\by}$, $\mathbb{B}(\bx,\frac{\delta'}{1+\delta'} R)\subseteq \mathcal{C}$. In another word, $\mathcal{C}_{\delta',\by}$ is a $\frac{\delta'}{1+\delta'} R$-interior of $\mathcal{C}$.
\end{lemma}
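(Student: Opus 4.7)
The plan is to reduce everything to a single clean identity: the Minkowsky set is nothing but the affine contraction $\mathcal{C}_{\delta',\by} = \by + (1+\delta')^{-1}(\mathcal{C}-\by)$. Once this identity is in hand, convexity is immediate (affine images of convex sets are convex), and the interior-ball property reduces to exhibiting every ball point as an explicit convex combination of two points in $\mathcal{C}$.

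First I would unpack the Minkowsky condition. Under Assumption \ref{assum:inner ball} the pole $\by$ lies in the interior of the compact convex set $\mathcal{C}$, so for any $\bx\neq\by$ the set $\{t>0:\by+t^{-1}(\bx-\by)\in\mathcal{C}\}$ is a closed half-line $[\pi_{\by}(\bx),\infty)$, with the infimum attained because $\mathcal{C}$ is closed. Plugging $t=(1+\delta')^{-1}$ shows that $\pi_{\by}(\bx)\leq(1+\delta')^{-1}$ is equivalent to the existence of some witness $\boldsymbol{w}\in\mathcal{C}$ with $\bx=\by+(1+\delta')^{-1}(\boldsymbol{w}-\by)=\tfrac{\delta'}{1+\delta'}\by+\tfrac{1}{1+\delta'}\boldsymbol{w}$, which proves the identity above and gives convexity for free.

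For the interior-ball claim I would fix $\bx\in\mathcal{C}_{\delta',\by}$, extract the witness $\boldsymbol{w}\in\mathcal{C}$ from the decomposition just obtained, and for each $\bz$ with $\|\bz-\bx\|\leq\tfrac{\delta'}{1+\delta'}R$ solve for the unique vector $\boldsymbol{u}$ forcing $\bz=\tfrac{\delta'}{1+\delta'}\boldsymbol{u}+\tfrac{1}{1+\delta'}\boldsymbol{w}$, namely $\boldsymbol{u}:=\by+\tfrac{1+\delta'}{\delta'}(\bz-\bx)$. A one-line computation yields $\|\boldsymbol{u}-\by\|=\tfrac{1+\delta'}{\delta'}\|\bz-\bx\|\leq R$, so by Assumption \ref{assum:inner ball} we have $\boldsymbol{u}\in\mathbb{B}(\by,R)\subseteq\mathcal{C}$, and $\bz$ is therefore a convex combination of two points in $\mathcal{C}$, hence lies in $\mathcal{C}$.

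The only real subtlety is justifying that the infimum in the definition of $\pi_{\by}$ is attained, which follows from compactness of $\mathcal{C}$ together with $\by$ being an interior point. Everything else is a matter of correctly identifying two convex combinations that share the same coefficients $\tfrac{\delta'}{1+\delta'},\tfrac{1}{1+\delta'}$, so I do not anticipate any significant obstacle beyond writing down the formulas carefully.
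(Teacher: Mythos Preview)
Your proposal is correct and follows essentially the same approach as the paper: both arguments hinge on writing any candidate point as the convex combination $\tfrac{\delta'}{1+\delta'}\,\boldsymbol{u}+\tfrac{1}{1+\delta'}\,\boldsymbol{w}$ with $\boldsymbol{u}\in\mathbb{B}(\by,R)\subseteq\mathcal{C}$ and $\boldsymbol{w}=\by+(1+\delta')(\bx-\by)\in\mathcal{C}$. The only cosmetic difference is that you first isolate the affine-contraction identity $\mathcal{C}_{\delta',\by}=\by+(1+\delta')^{-1}(\mathcal{C}-\by)$ (getting convexity for free), whereas the paper verifies convexity directly from the definition; the interior-ball computation is literally the same in both.
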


Given $\delta<R$ and Assumption \ref{assum:inner ball}, we can construct $\mathcal{C}_{\delta',\by}$ with $\delta':=\frac{\delta}{R-\delta}$ as a $\delta$-interior. All the above techniques are quite standard in the bandit optimization literature. But the presence of the non-oblivious function introduces extra feasibility issue. Taking the monotonic case as an example: to estimate the boosting gradient of the $\delta$-smoothed objective $\hat{f}_t^{\delta}$ at a specific point $\bx\in\mathcal{C}_{\delta', \by}$, we need to play $z \cdot \bx + \delta \bv$ to get its function value for some $z\in [0,1]$ and $\bv\in \mathbb{S}_{d-1}$, which can jump out $\mathcal{C}$ since $z\cdot \bx$ may not lie in $\mathcal{C}_{\delta', \by}$. To overcome this issue, we play $\widetilde{\bx} = z \cdot \bx + (1-z)\boldsymbol{0}_{\delta'} + \delta \bv$ and use the value $f_t(\widetilde{\bx})$ to replace $f_t(z \cdot \bx + \delta \bv)$ in our gradient estimator, where $\boldsymbol{0}_{\delta'}:=\mathcal{P}_{\mathcal{C}_{\delta',\by}}(\boldsymbol{0})$. Then it's easy to see $z \cdot \bx + (1-z)\boldsymbol{0}_{\delta'} \in \mathcal{C}_{\delta',\by}$. Therefore $\widetilde{\bx}$ is feasible. This idea can also apply to the non-monotone case. In formal, we present the pseudo-code of our bandit algorithm in \cref{algo:BBGA}. Next, we can verify the feasibility of the algorithm, namely,

\begin{lemma}
    In our \cref{algo:BBGA}, for both OPTION I and OPTION II,
    \begin{equation*}
        \bx_t\in \mathcal{C}, \forall t\in [T].
    \end{equation*}
\end{lemma}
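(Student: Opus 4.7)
The plan is to verify feasibility case by case, corresponding to the four combinations of OPTION (I or II) and phase (explore or exploit), relying on three facts: (a) the convexity of the Minkowsky set $\mathcal{C}_{\delta',\by}$ asserted in \cref{lem:interior}, (b) the $\delta$-interior property in the same lemma, which with the choice $\delta' = \frac{\delta}{R-\delta}$ yields $\frac{\delta'}{1+\delta'}R = \delta$ and hence $\mathbb{B}(\bz,\delta)\subseteq\mathcal{C}$ for every $\bz\in\mathcal{C}_{\delta',\by}$, and (c) the fact that the iterate $\by_t$ remains in $\mathcal{C}_{\delta',\by}$ for all $t$.

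First I would establish (c) by a one-line induction: $\by_{t+1}$ is explicitly defined (line 27) as the projection $\mathcal{P}_{\mathcal{C}_{\delta',\by}}(\cdot)$, which automatically lies in $\mathcal{C}_{\delta',\by}$, and the initialization places $\by_1$ in $\mathcal{C}_{\delta',\by}$ as well. By the same reasoning, the two fixed reference points $\boldsymbol{0}_{\delta'}=\mathcal{P}_{\mathcal{C}_{\delta',\by}}(\boldsymbol{0})$ and $\underline{\bx}_{\delta'}=\mathcal{P}_{\mathcal{C}_{\delta',\by}}(\underline{\bx})$ also lie in $\mathcal{C}_{\delta',\by}$.

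Next I would dispatch the four cases. For OPTION I / exploit we have $\bx_t=\by_t\in\mathcal{C}_{\delta',\by}\subseteq\mathcal{C}$. For OPTION I / explore, write $\bx_t = \bigl(z_t\by_t+(1-z_t)\boldsymbol{0}_{\delta'}\bigr)+\delta\bv_t$; the bracketed term is a convex combination of two points of $\mathcal{C}_{\delta',\by}$ with $z_t\in[0,1]$, so by convexity it lies in $\mathcal{C}_{\delta',\by}$, and since $\|\delta\bv_t\|=\delta$ the $\delta$-interior property yields $\bx_t\in\mathcal{C}$. For OPTION II / exploit, $\bx_t=\tfrac{1}{2}\by_t+\tfrac{1}{2}\underline{\bx}$ is a convex combination of two points of $\mathcal{C}$ itself, so convexity of $\mathcal{C}$ gives $\bx_t\in\mathcal{C}$. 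For OPTION II / explore, rewrite $\bx_t = \bigl(\tfrac{z_t}{2}\by_t+(1-\tfrac{z_t}{2})\underline{\bx}_{\delta'}\bigr)+\delta\bv_t$; because $\tfrac{z_t}{2}\in[0,\tfrac{1}{2}]\subseteq[0,1]$, the bracketed term is again a convex combination of two members of $\mathcal{C}_{\delta',\by}$ and therefore lies in $\mathcal{C}_{\delta',\by}$, after which the $\delta$-interior property finishes the argument.

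There is no real obstacle: once one recognizes that each played point decomposes into a center lying in $\mathcal{C}_{\delta',\by}$ plus a perturbation of norm at most $\delta$, the conclusion is immediate from \cref{lem:interior}. The only point that warrants care is the non-monotone explore case, where one must confirm that the coefficient $\tfrac{z_t}{2}$ still lies in $[0,1]$ so that the convex-combination argument applies to the Minkowsky set; this is automatic from $z_t\in[0,1]$.
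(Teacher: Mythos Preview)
Your proposal is correct and follows essentially the same argument as the paper: decompose each played point as a convex combination of members of $\mathcal{C}_{\delta',\by}$ plus a perturbation of norm $\delta$, then invoke convexity and the $\delta$-interior property from \cref{lem:interior}. If anything, your version is slightly more explicit---you spell out the induction for $\by_t\in\mathcal{C}_{\delta',\by}$ and the computation $\frac{\delta'}{1+\delta'}R=\delta$, whereas the paper's proof refers back to the surrounding text for the OPTION~I/explore case and takes $\by_t\in\mathcal{C}_{\delta',\by}$ for granted.
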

\begin{proof}
    When $\Upsilon_t=\mbox{explore}$, we have proved the case of OPTION I above. For OPTION II, note that $\frac{z_t}{2} (\by_t - \underline{\bx}_{\delta'})+ \underline{\bx}_{\delta'}$ is a convex combination of $\by_t$ and $\underline{\bx}_{\delta'}$. By the definition of $\by_t$ and $\underline{\bx}_{\delta'}$, they are both in $\mathcal{C}_{\delta',\by}$. Since $\mathcal{C}_{\delta',\by}$ is convex by \cref{lem:interior}, $\frac{z_t}{2} (\by_t - \underline{\bx}_{\delta'})+ \underline{\bx}_{\delta'}\in \mathcal{C}_{\delta',\by}$. Then $\bx_t = \frac{z_t}{2} (\by_t - \underline{\bx}_{\delta'})+\underline{\bx}_{\delta'} +\delta \cdot \bv_t\in \mathcal{C}$ since $\mathcal{C}_{\delta',\by}$ is a $\delta$-interior.

    When $\Upsilon_t = \mbox{exploit}$, $\bx_t=\by_t\in \mathcal{C}_{\delta',\by}\subseteq \mathcal{C}$ or $\bx_t =\frac{\by_t+\underline{\bx}}{2}$. Since $\by_t,\underline{\bx}\in \mathcal{C}$, their convex combination $\frac{\by_t+\underline{\bx}}{2}\in\mathcal{C}$.  
\end{proof}

Besides the infeasibility problem, \cref{algo:BBGA} also takes the exploration-exploitation trade-off to tackle another technical issue brought by the the non-oblivious functions. Let's consider the monotone case, if we want to obtain as much reward as possible to minimize regret, we need to select an action near the $\by_t$ which denotes the actions recommended via the full-information boosting online gradient ascent of sequence $\{\hat{f}_t^{\delta}\}_{t=1}^T$. However, to estimate the gradient of the non-oblivious function by a one-sample gradient estimator, we must query the function value near the point $z_t \cdot \by_t + (1-z_t)\boldsymbol{0}_{\delta'}$ where $z_{t}\sim\mathbf{Z}_{\uparrow}$, which may be far away from $\by_t$. In the bandit optimization literature, the former is often referred to as exploitation, while the latter is typically referred to as exploration. Given the exploration probability $\lambda\in (0,1)$, at each round, we execute exploration with probability $\lambda$, that is, selecting the point far away from $\by_t$ to obtain the gradient estimate of the non-oblivious function at $\by_t$. With probability $1-\lambda$, we execute exploitation to select $\by_t$ (monotone case) or $\frac{\by_t+\underline{\bx}}{2}$ (non-monotone case) to accumulate reward and set the gradient estimate to $\boldsymbol{0}$.

Before presenting the regret bound of the \cref{algo:BBGA}, we make the following assumption, which is standard in the bandit literature.
\begin{assumption}\label{assumption4}
    There exists a constant $M$ such that, for any $t$ and $\bx\in\mathcal{C}$, $|f_t(\bx)|\leq M$.
\end{assumption}
As a result, we can verify that:
\begin{theorem}[Proof in \cref{Appendix:monotone_bandit}]\label{thm:monotone bandit}
    Assume that $\mathcal{C}\subseteq\mathcal{X}$ is a bounded convex set containing $\boldsymbol{0}$ and satisfies Assumption \ref{assum:inner ball}. Each $f_{t}$ is $L_1$-Lipschitz continuous, $L_2$-smooth and satisfies Assumption \ref{assumption1} and \ref{assumption4}. If we set $\lambda = d^{1/3}T^{-1/5}, \delta = d^{1/3}T^{-1/5}, \eta =d^{-1/3}T^{-4/5}$ and select Option I in \cref{algo:BBGA}, then we have
    \[(1-e^{-\gamma})\max_{\bx\in \mathcal{C}}\sum_{t=1}^T f_t(\bx) - \E \left(\sum_{t=1}^T f_t(\bx_t)\right)= O(d^{1/3} T^{4/5}).\]
\end{theorem}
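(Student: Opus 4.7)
The strategy is to reduce the bandit regret to a full-information-style analysis of stochastic boosting projected gradient ascent on the smoothed sequence $\{\hat f_t^\delta\}_{t=1}^T$, via three standard bandit reductions (smoothing, Minkowsky shrinking, explore/exploit decomposition), and then to invoke Corollary~\ref{corollary: monotone stationary point} in place of a convexity-style inequality.

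I would proceed in four steps. First, $L_1$-Lipschitz continuity gives $|f_t(\bx)-\hat f_t^\delta(\bx)|\le L_1\delta$ uniformly, and the $\delta$-smoothing preserves monotone $\gamma$-weakly DR-submodularity (Lemma~\ref{lem:submodular smooth}), so passing to $\hat f_t^\delta$ costs $O(L_1\delta T)$. Second, by Lemma~\ref{lem:interior} the shrunk optimum $\bx^*_{\delta'}:=(\bx^*+\delta'\by)/(1+\delta')$ lies in $\mathcal{C}_{\delta',\by}$ and satisfies $\|\bx^*-\bx^*_{\delta'}\|\le \delta'\,\mathrm{diam}(\mathcal{C})$, so competing against $\bx^*_{\delta'}$ instead of $\bx^*$ costs $O(L_1\delta T/R)$. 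Third, the Bernoulli switch $\Upsilon_t$ together with $|f_t|\le M$ (Assumption~\ref{assumption4}) yields $\bigl|\E[\sum_t f_t(\by_t)]-\E[\sum_t f_t(\bx_t)]\bigr|\le 2\lambda MT$, so it suffices to bound $(1-e^{-\gamma})\sum_t\hat f_t^\delta(\bx^*_{\delta'})-\E[\sum_t\hat f_t^\delta(\by_t)]$ in the full-information sense along the iterates $\by_t$.

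For the fourth and main step, a direct calculation using Lemma~\ref{lem:fkm estimator} on $\bv_t$ and the density of $\mathbf{Z}_\uparrow$ on $z_t$ (cf.\ Proposition~\ref{prop:1}) shows
\begin{equation*}
    \E\bigl[\widetilde\nabla F_t(\by_t)\,\big|\,\by_t\bigr]=\int_0^1 e^{\gamma(z-1)}\,\nabla \hat f_t^\delta\bigl(z(\by_t-\boldsymbol{0}_{\delta'})+\boldsymbol{0}_{\delta'}\bigr)\,\mathrm{d}z,
\end{equation*}
i.e., the stochastic gradient is unbiased for the non-oblivious gradient of $\hat f_t^\delta$ taken with base $\boldsymbol{0}_{\delta'}$ in place of $\boldsymbol{0}$. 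Applying Corollary~\ref{corollary: monotone stationary point} to the translated function $\bz\mapsto \hat f_t^\delta(\bz+\boldsymbol{0}_{\delta'})-\hat f_t^\delta(\boldsymbol{0}_{\delta'})$ (which is monotone, $\gamma$-weakly DR-submodular, and vanishes at the origin) and translating back yields, for any $\bx^*_{\delta'}\in\mathcal{C}_{\delta',\by}$,
\begin{equation*}
    \bigl\langle \E[\widetilde\nabla F_t(\by_t)\mid\by_t],\,\bx^*_{\delta'}-\by_t\bigr\rangle\ge (1-e^{-\gamma})\hat f_t^\delta(\bx^*_{\delta'})-\hat f_t^\delta(\by_t)+e^{-\gamma}\hat f_t^\delta(\boldsymbol{0}_{\delta'}),
\end{equation*}
and the non-negative residual is dropped. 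Combining this with the standard stochastic projected gradient descent regret bound on $\mathcal{C}_{\delta',\by}$ and the second-moment bound $\E\|\widetilde\nabla F_t(\by_t)\|^2=O(d^2M^2/(\lambda\delta^2))$ (immediate from $|f_t|\le M$, $\|\bv_t\|=1$, and the $1/(\lambda\delta)$ scaling activating only on the $\lambda$-fraction of exploration rounds) gives an additional regret of $\mathrm{diam}^2(\mathcal{C})/(2\eta)+\eta T d^2M^2/(2\lambda\delta^2)$.

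Assembling smoothing, shrinking, exploration, and OGD losses produces a total of $O(L_1\delta T)+O(\lambda MT)+\mathrm{diam}^2(\mathcal{C})/(2\eta)+\eta T d^2M^2/(2\lambda\delta^2)$; plugging in $\lambda=\delta=d^{1/3}T^{-1/5}$ and $\eta=d^{-1/3}T^{-4/5}$ balances these terms to the claimed $O(d^{1/3}T^{4/5})$. The chief technical obstacle is the shifted base point in Step~4: Corollary~\ref{corollary: monotone stationary point} is proved relative to $\boldsymbol{0}$ with $f(\boldsymbol{0})=0$, whereas the algorithm is forced to use $\boldsymbol{0}_{\delta'}=\mathcal{P}_{\mathcal{C}_{\delta',\by}}(\boldsymbol{0})$ to guarantee feasibility of every exploratory query $\bx_t=z_t\by_t+(1-z_t)\boldsymbol{0}_{\delta'}+\delta\bv_t$; one must therefore verify that the translated objective remains monotone and $\gamma$-weakly DR-submodular on the shifted feasible region so that Corollary~\ref{corollary: monotone stationary point} can be invoked, and carefully carry through the residual $e^{-\gamma}\hat f_t^\delta(\boldsymbol{0}_{\delta'})$, which fortunately has a helpful sign.
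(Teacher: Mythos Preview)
Your high-level decomposition (smoothing loss, Minkowsky shrinking loss, explore/exploit loss, plus the stochastic online PGA term) matches the paper's, and your second-moment bound $\E\|\widetilde\nabla F_t(\by_t)\|^2=O(d^2M^2/(\lambda\delta^2))$ is even slightly sharper than the paper's Lemma~\ref{lem:estimate option I}(ii) because you correctly pick up the extra factor of $\lambda$ from the exploration probability.

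The gap is in Step~4. You correctly compute that $\E[\widetilde{\nabla}F_t(\by_t)\mid\by_t]$ equals the non-oblivious gradient of $\hat f_t^\delta$ with base point $\boldsymbol{0}_{\delta'}$, and you propose to apply Corollary~\ref{corollary: monotone stationary point} to the translated function $g(\bz)=\hat f_t^\delta(\bz+\boldsymbol{0}_{\delta'})-\hat f_t^\delta(\boldsymbol{0}_{\delta'})$. Monotonicity and $\gamma$-weak DR-submodularity do transfer under translation, but the proof of Lemma~\ref{lemma:3} (on which Corollary~\ref{corollary: monotone stationary point} rests) uses more than that: the key step $\langle \by,\nabla f(z\bx)\rangle\ge\langle \by\lor z\bx-z\bx,\nabla f(z\bx)\rangle$ in equation~\eqref{equ:appendix2} requires the coordinate-wise inequality $\by\ge\by\lor z\bx-z\bx$, which in turn needs $\by\ge\boldsymbol{0}$ and $\bx\ge\boldsymbol{0}$. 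After translation, the relevant points are $\by_t-\boldsymbol{0}_{\delta'}$ and $\bx_{\delta'}^*-\boldsymbol{0}_{\delta'}$, and nothing guarantees either is coordinate-wise non-negative---the Minkowsky projection $\boldsymbol{0}_{\delta'}=\mathcal{P}_{\mathcal{C}_{\delta',\by}}(\boldsymbol{0})$ need not be dominated by every point of $\mathcal{C}_{\delta',\by}$ for a general convex $\mathcal{C}$. So Corollary~\ref{corollary: monotone stationary point} does not apply to $g$ as stated, and the ``helpful sign'' remark about the residual, while correct, does not rescue the argument.

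The paper sidesteps this entirely: instead of translating, it keeps the non-oblivious function $\hat F_t^\delta$ anchored at the true origin and treats $\widetilde{\nabla}F_t(\by_t)$ as a \emph{biased} estimator of $\nabla\hat F_t^\delta(\by_t)$, bounding the bias by $\tfrac{1-e^{-\gamma}}{\gamma}L_2\|\boldsymbol{0}_{\delta'}\|=O(L_2\delta)$ via $L_2$-smoothness (this is Lemma~\ref{lem:estimate option I}(i)). This adds an extra $O(L_2\delta\,\mathrm{diam}^2(\mathcal{C})\,T)$ term that is absent from your final balance; it is of the same order as your $O(L_1\delta T)$ smoothing term and does not change the asymptotic rate, but it is the mechanism that makes the $\boldsymbol{0}_{\delta'}$ shift harmless. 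Any salvage of your translation route would need to exploit $\|\boldsymbol{0}_{\delta'}\|=O(\delta)$ through Lipschitz or smoothness arguments anyway, at which point it collapses to the paper's bias argument.
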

\begin{theorem}[Proof in \cref{Appendix:non_monotone_bandit}]\label{thm:nonmonotone bandit}
    Assume that $\mathcal{C}\subseteq\mathcal{X}$ is a bounded convex set and satisfies Assumption \ref{assum:inner ball}. Each $f_{t}$ is $L_1$-Lipschitz continuous, $L_2$-smooth and satisfies Assumption \ref{assumption2} and \ref{assumption4}. If we set $\lambda = d^{1/3}T^{-1/5}, \delta = d^{1/3}T^{-1/5}, \eta = d^{-1/3}T^{-4/5}$ and select Option II in \cref{algo:BBGA} and select Option II in \cref{algo:BBGA}, then we have
    \[\frac{1-\|\underline{\bx}\|_{\infty}}{4}\max_{\bx\in \mathcal{C}}\sum_{t=1}^T f_t(\bx) - \E \left(\sum_{t=1}^T f_t(\bx_t)\right)= O(d^{1/3} T^{4/5}).\]
\end{theorem}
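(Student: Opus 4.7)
The plan is to decompose the $\frac{1-\|\underline{\bx}\|_\infty}{4}$-regret into four controllable pieces, in parallel with the monotone case (\cref{thm:monotone bandit}) but using the non-oblivious construction of \cref{corollary: nonmonotone stationary point}. Let $\hat{f}_t^\delta(\bx):=\E_{\boldsymbol{u}\sim\mathbb{B}_d}[f_t(\bx+\delta\boldsymbol{u})]$ be the $\delta$-smoothing of $f_t$; an auxiliary lemma in \cref{Appendix:bandit} ensures $\hat{f}_t^\delta$ inherits DR-submodularity from $f_t$. For each $t$, define the non-oblivious surrogate $F_t^\delta$ on the Minkowsky interior $\mathcal{C}_{\delta',\by}$ with anchor $\underline{\bx}_{\delta'}:=\mathcal{P}_{\mathcal{C}_{\delta',\by}}(\underline{\bx})$ via $\nabla F_t^\delta(\bx)=\int_0^1 \frac{1}{8(1-z/2)^3}\nabla \hat{f}_t^\delta\bigl(\frac{z}{2}(\bx-\underline{\bx}_{\delta'})+\underline{\bx}_{\delta'}\bigr)\mathrm{d}z$. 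I would then split the regret into (i) a shrinkage loss from $\mathcal{C}$ to $\mathcal{C}_{\delta',\by}$, (ii) a smoothing loss $|\hat{f}_t^\delta-f_t|\leq L_1\delta$, (iii) an exploration loss in rounds with $\Upsilon_t=\mathrm{explore}$, and (iv) the ``full-information'' $\frac{1-\|\underline{\bx}_{\delta'}\|_\infty}{4}$-regret of the iterates $\by_t$ on $\{\hat{f}_t^\delta\}$.

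Terms (i) and (ii) are each $O(L_1\delta T)$: (i) follows from $L_1$-Lipschitzness combined with \cref{lem:interior}, which implies every point of $\mathcal{C}$ is $O(\delta)$-close to $\mathcal{C}_{\delta',\by}$ when $\delta'=\delta/(R-\delta)$; (ii) is immediate from Lipschitzness. Term (iii) is bounded by $O(\lambda T M)$ using \cref{assumption4} since $|f_t|\leq M$ and exploration happens with probability $\lambda$. For the main term (iv), the key identity is $\E[\widetilde{\nabla}F_t(\by_t)\mid\by_t]=\nabla F_t^\delta(\by_t)$: by \cref{lem:fkm estimator}, conditional on $z_t$ and $\Upsilon_t=\mathrm{explore}$, the one-point spherical estimator $\frac{d}{\delta}f_t(\bx_t)\bv_t$ is unbiased for $\nabla \hat{f}_t^\delta\bigl(\frac{z_t}{2}(\by_t-\underline{\bx}_{\delta'})+\underline{\bx}_{\delta'}\bigr)$; then averaging $z_t\sim\mathbf{Z}_\sim$ with the $3/8$ prefactor (as in \cref{prop:2}) recovers $\nabla F_t^\delta(\by_t)$, and the $1/\lambda$ factor corrects for the explore/exploit mixture while the zero assignment in exploitation contributes nothing in expectation. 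A crude bound using \cref{assumption4} gives $\E[\|\widetilde{\nabla}F_t(\by_t)\|^2]=O(d^2M^2/(\lambda\delta^2))$.

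Applying \cref{corollary: nonmonotone stationary point} to each $\hat{f}_t^\delta$ with anchor $\underline{\bx}_{\delta'}$ converts $\langle \nabla F_t^\delta(\by_t),\bx^\star-\by_t\rangle$ into a per-round lower bound on $\frac{1-\|\underline{\bx}_{\delta'}\|_\infty}{4}\hat{f}_t^\delta(\bx^\star)-\hat{f}_t^\delta\!\bigl(\frac{\by_t+\underline{\bx}_{\delta'}}{2}\bigr)$. A standard projected stochastic gradient ascent telescoping argument on $\|\by_t-\bx^\star\|^2$, analogous to the proof of \cref{thm:5 nonmonotone} specialized to $D=T$, then yields $(\mathrm{iv})\leq O\bigl(\mathrm{diam}^2(\mathcal{C})/\eta+\eta T d^2M^2/(\lambda\delta^2)\bigr)$. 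Summing all four contributions gives a bound of the form $O(L_1\delta T+\lambda T M+\eta^{-1}+\eta T d^2 M^2/(\lambda\delta^2))$, and substituting $\lambda=\delta=d^{1/3}T^{-1/5}$, $\eta=d^{-1/3}T^{-4/5}$ balances the terms to the claimed $O(d^{1/3}T^{4/5})$. The main obstacle I anticipate is carefully accounting for the discrepancy between the exploitation play $\frac{\by_t+\underline{\bx}}{2}$ and the ``theoretical'' target $\frac{\by_t+\underline{\bx}_{\delta'}}{2}$, which introduces an additional $O(L_1\delta T)$ correction through $\|\underline{\bx}_{\delta'}-\underline{\bx}\|=O(\delta)$, and in handling the change of the approximation coefficient from $\|\underline{\bx}_{\delta'}\|_\infty$ to $\|\underline{\bx}\|_\infty$; both corrections turn out to be of the same order as the dominant terms and do not affect the final rate, but they require careful bookkeeping absent in the monotone case.
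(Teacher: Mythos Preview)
Your proposal is correct and mirrors the paper's overall skeleton (decompose into shrinkage, smoothing, exploration, and an OGA telescoping term on the smoothed sequence), but it handles the central technical issue---the mismatch between the algorithm's anchor $\underline{\bx}_{\delta'}$ and the theoretical anchor $\underline{\bx}$---differently.

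The paper defines the surrogate $\hat{F}_t^\delta$ with anchor $\underline{\bx}$ (so that \cref{corollary: nonmonotone stationary point} applies verbatim with coefficient $\frac{1-\|\underline{\bx}\|_\infty}{4}$ and target $\hat{f}_t^\delta(\frac{\by_t+\underline{\bx}}{2})$), at the cost of a \emph{biased} estimator: $\bigl\|\E[\widetilde{\nabla}F_t(\by_t)\mid\by_t]-\nabla\hat{F}_t^\delta(\by_t)\bigr\|\leq \frac{3}{8}L_2\delta\,\mathrm{diam}(\mathcal{C})$, established as a supporting lemma. This bias enters the telescoping as an extra $O(L_2\delta T)$ term. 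You instead anchor $F_t^\delta$ at $\underline{\bx}_{\delta'}$, obtaining an \emph{unbiased} estimator and pushing the discrepancy to two post-hoc corrections: (a) $|\hat{f}_t^\delta(\frac{\by_t+\underline{\bx}_{\delta'}}{2})-f_t(\frac{\by_t+\underline{\bx}}{2})|=O(L_1\delta)$ and (b) $\frac{1-\|\underline{\bx}_{\delta'}\|_\infty}{4}\geq\frac{1-\|\underline{\bx}\|_\infty}{4}-O(\delta)$, both of which you correctly flag. Either route closes; the paper's is slightly cleaner in bookkeeping since it avoids the approximation-factor shift, while yours avoids introducing a separate bias lemma. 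Your second-moment bound $\E\|\widetilde{\nabla}F_t(\by_t)\|^2=O(d^2M^2/(\lambda\delta^2))$ is in fact sharper than the paper's stated $O(d^2M^2/(\lambda^2\delta^2))$, because you account for the $\lambda$-probability of exploration; this tighter bound is what actually delivers the $T^{4/5}$ rate under the prescribed parameter choices.
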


\subsection{Minimax Optimization of Convex-Submodular Functions}\label{sec:minimax}
Minimax optimization appears in a wide range of domains such as robust optimization\citep{ben2009robust} and game theory\citep{osborne1994course}. In this section, we investigate a special non convex-concave minimax optimization, which is coined by \cite{adibi2022minimax}. Let $f(\bx,S)$ be a function defined on a continuous-discrete mixed constraint $\mathcal{K}\times \mathcal{I}$, where $\mathcal{K}\subseteq[0,1]^{n}$ is a convex body and $\mathcal{I}\subseteq 2^V$ is a collection of subset of a finite ground $V$. Moreover, we assume $f$ is convex-submodular, that is, $f(\bx,S)$ is convex w.r.t. $\bx$ and submodular w.r.t. $S$. Given this convex-submodular objective $f$, we usually consider the following minimax optimization problem:
\begin{equation}\label{eq:minimax}
    \min_{\bx\in \mathcal{K}}\max_{S\in \mathcal{I}} f(\bx,S).
\end{equation}
 According to \citet{adibi2022minimax}, this problem \eqref{eq:minimax} is NP-hard to solve accurately, so we hope to find an approximation solution as follows.
\begin{definition}[\cite{adibi2022minimax}]
    We call a point $\hat{\bx}$ an $(\alpha,\epsilon)$-approximation minimax solution of problem (\ref{eq:minimax}) if it satisfies
    \[\alpha \max_{S\in \mathcal{I}} f(\hat{\bx},S)\leq OPT+\epsilon.\]
\end{definition}

When $\mathcal{I}$ is a \emph{uniform} matroid, \citet{adibi2022minimax} propose several algorithms which can produce an optimal $(1-1/e, \epsilon)$-approximation solution. However, for general matroid $\mathcal{I}$, their algorithms only can guarantee a sub-optimal $(\frac{1}{2},\epsilon)$-approximation solution. To improve this flaw, we leverage our non-oblivious function to devise a tight $(1-1/e,\epsilon)$-approximation algorithm. Furthermore, our boosting technique can achieve a $(\frac{1}{4},\epsilon)$-approximation solution for the non-monotone cases over general matroid constraint. In contrast, all results of \citet{adibi2022minimax} are under the assumption that the submodular part of $f(\bx,S)$ is monotone w.r.t. $S$. Prior to introducing our algorithm, we turn to the continuous extension version of the convex-submodular minimax optimization problem.

\begin{definition}[\cite{adibi2022minimax}]
    The multi-linear extension of the convex-submodular function $f:\R^d \times 2^V\rightarrow \R_+$ is the function $\hat{f}:\R^d \times [0,1]^{|V|}\rightarrow \R_+$ defined as $\hat{f}(\bx,\by)=\E_{S\sim\by}\left(f(\bx,S)\right)$, where $S\sim\by$ indicates that each element $i\in V$ is included in $S$ with probability $y_i$ independently.
\end{definition}

\cite{adibi2022minimax} show that, the original problem is equivalent to its multi-linear extension version. Rigoriously, the following lemma holds.

\begin{lemma}[\cite{adibi2022minimax}]
    Let $\mathcal{C}$ be the convex hull of the matroid $\mathcal{I}$, then for any $\bx\in\mathcal{K}$, $\max_{S\in\mathcal{I}} f(\bx,S)=\max_{\by\in\mathcal{C}} \hat{f}(\bx,\by)$. As a result, any approximate solution $\hat{\bx}$ of the multi-linear version problem maintains its approximation ratio in the original problem.
\end{lemma}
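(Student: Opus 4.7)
The plan is to prove the equality $\max_{S\in\mathcal{I}} f(\bx,S)=\max_{\by\in\mathcal{C}} \hat{f}(\bx,\by)$ for each fixed $\bx\in\mathcal{K}$ by checking both directions separately, and then deduce the ``approximation ratio transfer'' statement as an immediate corollary. Throughout, I will freely use that $\hat f(\bx,\by)$ is multilinear in $\by$ and equals $\E_{S'\sim\by}[f(\bx,S')]$ when elements are included independently with marginals $\by$.

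The direction $\max_{S\in\mathcal{I}} f(\bx,S)\le \max_{\by\in\mathcal{C}} \hat{f}(\bx,\by)$ is almost immediate. For any $S\in\mathcal{I}$, its indicator $\mathbf{1}_S$ is a vertex of the convex hull $\mathcal{C}$ and hence lies in $\mathcal{C}$; moreover, sampling from the product distribution with marginals $\mathbf{1}_S$ returns $S$ almost surely, so $\hat f(\bx,\mathbf{1}_S)=f(\bx,S)$. Taking the maximum over $S\in\mathcal{I}$ gives the bound.

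For the reverse direction $\max_{\by\in\mathcal{C}} \hat{f}(\bx,\by)\le \max_{S\in\mathcal{I}} f(\bx,S)$, a purely combinatorial averaging argument fails because $\hat f(\bx,\by)=\sum_{S'\subseteq V} f(\bx,S')\prod_{i\in S'}y_i\prod_{i\notin S'}(1-y_i)$ averages $f(\bx,\cdot)$ over \emph{all} subsets of $V$, most of which need not belong to $\mathcal{I}$. The remedy is a rounding argument. Given $\by\in\mathcal{C}$, invoke a standard matroid rounding scheme (pipage rounding or randomized swap rounding) to produce a random set $R\in\mathcal{I}$ that is lossless in expectation, i.e.\ $\E[f(\bx,R)]\ge \hat f(\bx,\by)$. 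The key analytic fact powering this is that the multilinear extension of a submodular function is concave along every ``swap'' direction $\mathbf{e}_i-\mathbf{e}_j$, which combined with the matroid exchange axiom lets one iteratively replace fractional coordinates by integral ones without decreasing the objective in expectation. Once this inequality is established, the probabilistic method yields some realization $S^\star\in\mathcal{I}$ with $f(\bx,S^\star)\ge \E[f(\bx,R)]\ge \hat f(\bx,\by)$, and taking the supremum over $\by\in\mathcal{C}$ completes the direction.

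Finally, the ``as a result'' clause follows by applying the equality pointwise in $\bx$ and then taking the outer minimum: $\min_{\bx\in\mathcal{K}}\max_{S\in\mathcal{I}} f(\bx,S)=\min_{\bx\in\mathcal{K}}\max_{\by\in\mathcal{C}}\hat f(\bx,\by)$, so the two notions of OPT coincide. Consequently, any $\hat\bx$ with $\alpha\max_{\by\in\mathcal{C}}\hat f(\hat\bx,\by)\le \mathrm{OPT}+\epsilon$ also satisfies $\alpha\max_{S\in\mathcal{I}} f(\hat\bx,S)\le \mathrm{OPT}+\epsilon$. The main obstacle I anticipate is precisely the non-monotone case of the rounding step: the monotone case is covered by classical pipage rounding with an elementary argument, whereas for general (non-monotone) submodular $f$ one must rely on the sharper concavity-along-matroid-exchange property and verify that the chosen rounding (e.g.\ randomized swap rounding on bases, combined with a reduction from independent sets to bases by padding with a dummy element if needed) preserves the multilinear value in expectation rather than merely approximates it.
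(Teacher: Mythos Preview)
The paper does not prove this lemma at all; it is quoted verbatim from \cite{adibi2022minimax} and used as a black box. So there is no ``paper's own proof'' to compare against here. Your outline is essentially the standard argument and is correct in structure: the $\le$ direction is trivial via indicator vectors, the $\ge$ direction goes through a lossless matroid rounding (pipage or swap rounding), and the corollary about approximation-ratio transfer follows by taking $\min_{\bx}$ on both sides.

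One technical slip worth fixing: you wrote that the multilinear extension is \emph{concave} along swap directions $\mathbf{e}_i-\mathbf{e}_j$. It is actually \emph{convex} there. For a submodular $g$ with multilinear extension $G$, one has $\partial^2 G/\partial y_i\partial y_j\le 0$ for $i\neq j$, and since $G$ is affine in each single coordinate,
\[
\frac{d^2}{dt^2}\,G(\by+t(\mathbf{e}_i-\mathbf{e}_j))
= -2\,\frac{\partial^2 G}{\partial y_i\partial y_j}\;\ge\;0.
\]
Convexity (not concavity) is precisely what makes pipage rounding work: the maximum of $G$ along the feasible swap segment is attained at an endpoint, so one can push a fractional coordinate to $0$ or $1$ without decreasing $G$. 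With this correction your argument goes through, including in the non-monotone case (swap rounding on bases, with the padding trick to embed independent sets into bases, gives $\E[g(R)]\ge G(\by)$ for arbitrary submodular $g$).
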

As a corollary, to find a $(1-1/e,\epsilon)$-approximation solution of the original problem (\ref{eq:minimax}), we only need to find a $(1-1/e,\epsilon)$-approximation solution of the following multi-linear extension version of (\ref{eq:minimax}).
\begin{equation}
    min_{\bx\in \mathcal{K}}\max_{\by\in \mathcal{C}} \hat{f}(\bx,\by).
\end{equation}

Fixing $\bx$, it's well known that $\hat{f}(\bx,\by)$ is a DR-submodular function if $f(\bx,S)$ is convex-submodular. Futhermore, $\hat{f}(\bx,\by)$ has the same monotonicity of $f(\bx,S)$. We further make the following assumption about $\hat{f}$.
\begin{assumption}\label{assumption:minimax}
The gradient of $\hat{f}$ with respect to $\bx$ and $\by$ is uniformly bounded by a constant $G$. That is, for any $\bx\in\mathcal{K}$ and $\by\in\mathcal{C}$, we have $\|\nabla_{\bx} \hat{f}(\bx,\by)\|\le G$ and $\|\nabla_{\by} \hat{f}(\bx,\by)\|\le G$. Furthermore, the stochastic gradient oracle $\widetilde{\nabla}_{\bx} \hat{f}$ and $\widetilde{\nabla}_{\by} \hat{f}$ are unbiased and satisfy $\E\left(\|\widetilde{\nabla}_{\bx} \hat{f}(\bx,\by)\|^2\right)\leq \widetilde{G}^2$ and $\E\left(\|\widetilde{\nabla}_{\by} \hat{f}(\bx,\by)\|^2\right)\leq \widetilde{G}^2$ for a constant $\widetilde{G}$.
\end{assumption}

Our algorithm alternately executes the step of gradient descent or ascent by fixing $\bx$ or $\by$. Especially when we execute the gradient ascent step on the DR-submodular part, we use the gradient of its non-oblivious function. For details, see Algorithm \ref{minimax optimization montone}.

\begin{algorithm}[tbp]
    \caption{Boosting Gradient Descent Ascent}
    \label{minimax optimization montone}
    \hspace*{0.02in} {\bf Input:} $T, \eta$
    \begin{algorithmic}[1]
    \STATE \textbf{Initialize} any $\bx_1\in \mathcal{K}, \by_1\in \mathcal{C}$.
    \FOR{$t\in [T]$}
        \STATE 
        Option I~(monotone): 
        \STATE \hspace{\algorithmicindent} $\bx_{t+1} \gets \mathcal{P}_{\mathcal{K}}\left(\bx_t - \eta\widetilde{\nabla}_{\bx} \hat{f}\left(\bx_t,\by_t\right)\right)$
        \STATE \hspace{\algorithmicindent} Sample $z_t$ from $\boldsymbol{Z}_{\uparrow}$ 
        \STATE \hspace{\algorithmicindent} $\by_{t+1} = \mathcal{P}_{\mathcal{C}}\left(\by_t+\eta (1-e^{-1})\widetilde{\nabla} \hat{f}_{\by}(\bx_t,z_t\cdot \by_t)\right)$
        \STATE
        Option II~(non-monotone): 
        \STATE \hspace{\algorithmicindent} $\bx_{t+1} \gets \mathcal{P}_{\mathcal{K}}\left(\bx_t - \eta\widetilde{\nabla}_{\bx} \hat{f}\left(\bx_t,\frac{\by_t+\underline{\by}}{2}\right)\right)$
        \STATE \hspace{\algorithmicindent} Sample $z_t$ from $\boldsymbol{Z}_{\sim}$
        \STATE \hspace{\algorithmicindent} $\by_t \gets \mathcal{P}_{\mathcal{C}}\left(\by_t+\frac{3\eta}{8}\widetilde{\nabla} \hat{f}_{\by}\left(\bx_t,\frac{z_t}{2}\cdot \boldsymbol{y}_t+(1-\frac{z_t}{2})\cdot\underline{\boldsymbol{y}}\right)\right)$ where $\underline{\boldsymbol{y}}:=\argmin_{\boldsymbol{y}\in \mathcal{C}} \|\boldsymbol{y}\|_{\infty}$
    \ENDFOR
    \RETURN $\bx_{sol}=\sum_{t=1}^T \frac{1}{T}\bx_t$ 
    \end{algorithmic}
\end{algorithm}

Next, we show that our Algorithm~\ref{minimax optimization montone} is able to find a $(1-1/e,\epsilon)$-approximate solution after $O(\frac{1}{\epsilon^{2}})$ iterations, when and $f(\bx,S)$ is monotone submodular w.r.t. $S$.
    \begin{theorem}[Proof in Appendix~\ref{Appendix:mono_minimax}]\label{thm:minimax}
        Let $f(\bx,S)$ be a convex-submodular function, $\hat{f}(\bx,\by)$ be its multi-linear extension. Assuming $\hat{f}(\bx,\by)$ satisfies Assumption \ref{assumption:minimax} and $f(\bx,S)$ is monotone with respect to $S$.
        Let $T = \frac{(3-e^{-1})^2 
\widetilde{G}^2\left(\mathrm{diam}^2(\mathcal{C})+\mathrm{diam}^2(\mathcal{K})\right)}{4\epsilon^2}$, $\eta = \frac{\sqrt{\mathrm{diam}^2(\mathcal{C})+\mathrm{diam}^2(\mathcal{K})}}{\widetilde{G}\sqrt{T}}$ and take Option I in Algorithm \ref{minimax optimization montone}, then
        \[(1-e^{-1})\max_{\by \in \mathcal{C}}\E\left(\hat{f}\left(\bx_{sol},\by\right)\right)\leq OPT + \epsilon.\]
    \end{theorem}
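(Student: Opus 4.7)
The plan is to treat the updates as two no-regret players facing each other: the $\bx$-player runs standard stochastic projected gradient descent against the convex sequence $\hat{f}(\cdot,\by_t)$, and the $\by$-player runs stochastic projected gradient ascent against the DR-submodular sequence $\hat{f}(\bx_t,\cdot)$, but boosted through the non-oblivious reweighting. Crucially, the boosting guarantee from \cref{corollary: monotone stationary point} converts the $\by$-side linear regret into a $(1-e^{-1})$-approximate regret against the true objective, and the $\bx$-side convexity then lets us upgrade the average iterate $\bx_{sol}$ to the desired minimax guarantee.

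First, for the $\by$-side, associate with each round the non-oblivious function $F_t(\by):=\int_0^1 \tfrac{e^{z-1}}{z}\hat{f}(\bx_t,z\cdot\by)\mathrm{d}z$, so that by \cref{corollary: monotone stationary point} we have $\langle \nabla F_t(\by_t),\by-\by_t\rangle \geq (1-e^{-1})\hat{f}(\bx_t,\by)-\hat{f}(\bx_t,\by_t)$ for every $\by\in\mathcal{C}$. By \cref{prop:1}, the quantity $\widetilde{g}_t:=(1-e^{-1})\widetilde{\nabla}_{\by}\hat{f}(\bx_t,z_t\cdot\by_t)$ used in the algorithm is an unbiased estimator of $\nabla F_t(\by_t)$ conditioned on the history. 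A standard projected-gradient-ascent telescoping argument then gives, for every fixed $\by\in\mathcal{C}$, the pathwise inequality $\sum_t \langle \widetilde{g}_t,\by-\by_t\rangle \leq \tfrac{\mathrm{diam}^2(\mathcal{C})}{2\eta} + \tfrac{\eta}{2}\sum_t\|\widetilde{g}_t\|^2$. Taking expectations and using unbiasedness (which is valid here because $\by$ is deterministic and $\by_t$ is measurable with respect to the history) yields $\E\bigl[\sum_t (1-e^{-1})\hat{f}(\bx_t,\by)-\hat{f}(\bx_t,\by_t)\bigr] \leq O(\sqrt{T})$ under Assumption~\ref{assumption:minimax}.

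Second, for the $\bx$-side, since $\hat{f}(\cdot,\by_t)$ is convex and the algorithm uses the unbiased estimator $\widetilde{\nabla}_{\bx}\hat{f}(\bx_t,\by_t)$, the standard stochastic online convex-optimization analysis delivers, for every fixed $\bx\in\mathcal{K}$, the bound $\E\bigl[\sum_t \hat{f}(\bx_t,\by_t)-\hat{f}(\bx,\by_t)\bigr] \leq \tfrac{\mathrm{diam}^2(\mathcal{K})}{2\eta}+\tfrac{\eta T\widetilde{G}^2}{2} = O(\sqrt{T})$. Instantiating the comparator at $\bx^{\dagger}\in\argmin_{\bx\in\mathcal{K}}\max_{\by\in\mathcal{C}}\hat{f}(\bx,\by)$ gives $\sum_t \hat{f}(\bx^{\dagger},\by_t)\leq T\cdot OPT$, hence $\E\bigl[\sum_t \hat{f}(\bx_t,\by_t)\bigr]\leq T\cdot OPT + O(\sqrt{T})$.

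Combining the two inequalities yields $(1-e^{-1})\E\bigl[\tfrac{1}{T}\sum_t \hat{f}(\bx_t,\by)\bigr]\leq OPT + O(1/\sqrt{T})$ for every fixed $\by\in\mathcal{C}$. By convexity of $\hat{f}$ in the first argument, $\hat{f}(\bx_{sol},\by)\leq \tfrac{1}{T}\sum_t\hat{f}(\bx_t,\by)$, so $(1-e^{-1})\E[\hat{f}(\bx_{sol},\by)]\leq OPT + O(1/\sqrt{T})$ for every fixed $\by$; taking the supremum over $\by\in\mathcal{C}$ outside the expectation (which is permissible because the right-hand side is $\by$-independent) gives $(1-e^{-1})\max_{\by}\E[\hat{f}(\bx_{sol},\by)]\leq OPT + O(1/\sqrt{T})$. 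Selecting $T$ and $\eta$ as in the statement drives the error below $\epsilon$. The main obstacle is the subtlety in the $\by$-side expectation step: we must keep the comparator $\by$ deterministic in order to commute the inner product with conditional expectation and apply unbiasedness of $\widetilde{g}_t$; this is precisely why we take $\max_{\by}$ only after all the probabilistic regret inequalities have been invoked, rather than comparing against a data-dependent argmax.
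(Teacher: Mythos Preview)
Your proposal is correct and follows essentially the same approach as the paper's proof: both derive two telescoping projected-gradient regret bounds, one for the $\bx$-side via convexity and one for the $\by$-side via the non-oblivious inequality of \cref{corollary: monotone stationary point}, add them, and finish using convexity of $\hat{f}(\cdot,\by)$ to pass to $\bx_{sol}$. The only cosmetic difference is that the paper adds the two one-step inequalities before summing (obtaining a single bound on $\E\bigl[(1-e^{-1})\hat{f}(\bx_t,\by)-\hat{f}(\bx,\by_t)\bigr]$), whereas you keep the two ``no-regret player'' inequalities separate and then combine; the paper also instantiates $\by=\by^*:=\argmax_{\by}\E[\hat{f}(\bx_{sol},\by)]$ explicitly rather than taking a supremum at the end, but this is the same maneuver you describe and correctly justify.
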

    
Note that when applying Algorithm \ref{minimax optimization montone} on the multi-linear extension version of \eqref{eq:minimax}, $\mathcal{C}$ is the matroid convex hull, then $\underline{\by}=\boldsymbol{0}$. Thus, we have a similar result for non-monotone case:
    \begin{theorem}[Proof in Appendix~\ref{Appendix:non_mono_minimax}]\label{thm:minimax nonmonotone}
        Let $f(\bx,S)$ be a convex-submodular function, $\hat{f}(\bx,\by)$ be its multi-linear extension. Assuming $\hat{f}(\bx,\by)$ satisfies Assumption \ref{assumption:minimax}. Let $T = \frac{361 \widetilde{G}^2\left(\mathrm{diam}^2(\mathcal{C})+\mathrm{diam}^2(\mathcal{K})\right)}{256\epsilon^2}  $, $\eta = \frac{\sqrt{\mathrm{diam}^2(\mathcal{C})+\mathrm{diam}^2(\mathcal{K})}}{\widetilde{G}\sqrt{T}}$ and take Option II in Algorithm \ref{minimax optimization montone}, then
        \[\frac{1}{4}\max_{\by\in\mathcal{C}}\E\left(\hat{f}\left(\bx_{sol},\by\right)\right)\leq OPT+ \epsilon.\]
    \end{theorem}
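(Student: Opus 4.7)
The plan is to mirror the proof strategy of the monotone counterpart (Theorem \ref{thm:minimax}), but replacing the non-oblivious gradient ascent component with the non-monotone variant developed via Corollary \ref{corollary: nonmonotone stationary point}. Conceptually, the two coupled updates are viewed as two parallel online learning games: the $\bx$-iterates arise from stochastic projected gradient \emph{descent} on the sequence of functions $g_t(\bx)\triangleq \hat{f}(\bx,\frac{\by_t+\underline{\by}}{2})$, which are convex in $\bx$, while the $\by$-iterates arise from stochastic projected gradient \emph{ascent} on the sequence of non-oblivious functions $F_t$ associated with $h_t(\by)\triangleq \hat{f}(\bx_t,\by)$ through the construction preceding Corollary \ref{corollary: nonmonotone stationary point}. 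A key simplification comes first: since $\mathcal{C}$ is the convex hull of a matroid, $\boldsymbol{0}\in\mathcal{C}$, so $\underline{\by}=\argmin_{\by\in\mathcal{C}}\|\by\|_\infty=\boldsymbol{0}$ and $\|\underline{\by}\|_\infty=0$, which is precisely what reduces $\tfrac{1-\|\underline{\by}\|_\infty}{4}$ to the claimed $\tfrac{1}{4}$.

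The two regret bounds are then obtained from the standard stochastic OGD template. For the $\bx$-side, convexity of $\hat{f}(\cdot,\by)$ together with Assumption \ref{assumption:minimax} gives
\[\E\sum_{t=1}^T\left[\hat{f}(\bx_t,\tfrac{\by_t+\underline{\by}}{2}) - \hat{f}(\bx^*,\tfrac{\by_t+\underline{\by}}{2})\right] \leq \tfrac{\mathrm{diam}^2(\mathcal{K})}{2\eta} + \tfrac{\eta T\widetilde{G}^2}{2}.\]
For the $\by$-side, the usual ascent analysis (applied to the true, non-concave $F_t$) yields $\E\sum_t \langle\nabla F_t(\by_t),\by^*-\by_t\rangle \leq \tfrac{\mathrm{diam}^2(\mathcal{C})}{2\eta}+\tfrac{\eta T}{2}\cdot\tfrac{9}{64}\widetilde{G}^2$, where I use that $\widetilde{\nabla}F_t = \tfrac{3}{8}\widetilde{\nabla}_\by \hat{f}$ so $\E\|\widetilde{\nabla}F_t\|^2\leq \tfrac{9}{64}\widetilde{G}^2$. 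The crucial step is now to replace the abstract $\langle \nabla F_t(\by_t),\by^*-\by_t\rangle$ by a quantity involving the original $\hat{f}$ via Corollary \ref{corollary: nonmonotone stationary point}:
\[\langle \nabla F_t(\by_t),\by^*-\by_t\rangle \;\geq\; \tfrac{1}{4}\hat{f}(\bx_t,\by^*) - \hat{f}(\bx_t,\tfrac{\by_t+\underline{\by}}{2}).\]

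Summing both bounds and adding them, all terms of the form $\hat{f}(\bx_t,\tfrac{\by_t+\underline{\by}}{2})$ cancel, leaving
\[\E\sum_{t=1}^T\left[\tfrac{1}{4}\hat{f}(\bx_t,\by^*) - \hat{f}(\bx^*,\tfrac{\by_t+\underline{\by}}{2})\right] \leq \tfrac{\mathrm{diam}^2(\mathcal{C})+\mathrm{diam}^2(\mathcal{K})}{2\eta} + \tfrac{\eta T\widetilde{G}^2(1+9/64)}{2},\]
valid for every $\bx^*\in\mathcal{K}$ and $\by^*\in\mathcal{C}$. Choosing $\by^* = \argmax_{\by\in\mathcal{C}}\sum_t \hat{f}(\bx_t,\by)$ and $\bx^*\in\argmin_{\bx}\max_{\by}\hat{f}(\bx,\by)$ gives $\sum_t\hat{f}(\bx^*,\tfrac{\by_t+\underline{\by}}{2})\leq T\cdot\mathrm{OPT}$. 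Dividing by $T$, invoking convexity of $\hat{f}$ in its first argument to apply Jensen ($\hat{f}(\bx_{sol},\by)\leq \tfrac{1}{T}\sum_t\hat{f}(\bx_t,\by)$, hence $\max_\by \hat{f}(\bx_{sol},\by)\leq \tfrac{1}{T}\max_\by\sum_t \hat{f}(\bx_t,\by)$), and using $\max_\by \E\hat{f}(\bx_{sol},\by)\leq \E\max_\by \hat{f}(\bx_{sol},\by)$ on the LHS, yields the desired inequality once $\eta$ and $T$ are plugged in.

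The main obstacle is the bookkeeping in the last step: matching the constants $\tfrac{361}{256}\widetilde{G}^2$ requires combining the gradient-norm bounds on $\widetilde{\nabla}_\bx \hat{f}$ and $\widetilde{\nabla}F_t$ carefully, and verifying that the chosen $\eta=\sqrt{\mathrm{diam}^2(\mathcal{C})+\mathrm{diam}^2(\mathcal{K})}/(\widetilde{G}\sqrt{T})$ balances the two terms so that the aggregated regret is at most $\tfrac{19}{16}\sqrt{\mathrm{diam}^2(\mathcal{C})+\mathrm{diam}^2(\mathcal{K})}\,\widetilde{G}\sqrt{T}$, giving the advertised $T$. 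A smaller subtlety is that Corollary \ref{corollary: nonmonotone stationary point} must be applied pointwise to the (random) function $h_t$, so one should be careful to take conditional expectations in the correct order, exploiting that $\bx_t$ and $\by_t$ are $\mathcal{F}_{t-1}$-measurable while the stochastic gradient noise and the sampled $z_t$ are independent of the past.
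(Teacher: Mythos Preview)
Your approach is essentially the same as the paper's: split the iterates into an OGD-in-$\bx$ game on $g_t(\bx)=\hat f(\bx,\tfrac{\by_t+\underline\by}{2})$ and an ascent-on-the-non-oblivious-$F_t$ game in $\by$, invoke Corollary~\ref{corollary: nonmonotone stationary point} to convert $\langle\nabla F_t(\by_t),\by-\by_t\rangle$ into $\tfrac{1}{4}\hat f(\bx_t,\by)-\hat f(\bx_t,\tfrac{\by_t+\underline\by}{2})$, telescope, and use $\underline\by=\boldsymbol 0$ for the matroid polytope. Your constants are even a little tighter than the paper's (you get $\tfrac{137}{128}$ where the paper rounds to $\tfrac{19}{16}$), so the stated $T$ certainly suffices.

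There is one genuine slip. You take $\by^{*}=\arg\max_{\by}\sum_t \hat f(\bx_t,\by)$, which is a \emph{random} comparator depending on the whole trajectory. The OGA regret inequality you use for the $\by$-side holds deterministically in terms of $\widetilde\nabla F_t$, but to pass from $\widetilde\nabla F_t$ to $\nabla F_t$ you need $\E\langle \widetilde\nabla F_t(\by_t)-\nabla F_t(\by_t),\,\by^{*}-\by_t\rangle=0$. This fails when $\by^{*}$ is correlated with the future noise (the $\bx_s$'s for $s>t$ depend on $z_t$ and the stochastic gradient at round $t$, and $\by^{*}$ depends on all $\bx_s$). The paper avoids this by choosing the \emph{deterministic} comparator $\by^{*}=\arg\max_{\by}\E[\hat f(\bx_{sol},\by)]$; then the unbiasedness step is legitimate, and the final inequality $\max_{\by}\E[\hat f(\bx_{sol},\by)]=\E[\hat f(\bx_{sol},\by^{*})]\le \tfrac1T\sum_t\E[\hat f(\bx_t,\by^{*})]$ follows directly from Jensen in $\bx$, without any detour through $\E\max\ge\max\E$. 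Making this single change fixes your argument and, in fact, shortens the last paragraph.
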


\section{Numerical Experiments}\label{sec:experiments}
In this section, we empirically evaluate our proposed boosting projected gradient algorithms in three different optimization scenarios, namely, offline settings, online learning with different types of feedbacks and convex-submodular cases.  Note that i) all experiments are performed in Python 3.6.5 using CVX optimization tool~\citep{grant2014cvx} on a MacBook Pro with Apple M1 Pro and 16GB RAM; ii) To avoid the randomness of stochastic gradients, we repeat each trial $10$ times and report the average results; iii) For ease of exposition, this section only focuses on special coverage maximization~\citep{hassani2017gradient,chen2023continuous} and real-world movie recommendation. As for the rest experiments about the simulated quadratic programming, we present them in Appendix~\ref{Appendix:QP}.


\subsection{Offline Settings}
In this subsection, we consider offline continuous DR-submodular maximization problems and compare the following algorithms:
\begin{itemize}
    \item\textbf{Boosting Gradient Ascent~(BGA($B$))}: In the framework of Algorithm~\ref{alg:1}, we use the average of $B$ independent stochastic gradients to estimate $\nabla F$ in every iteration.
\item\textbf{Gradient Ascent~(GA($B$))}: We consider Algorithm 1 in \citet{hassani2017gradient}. We also use an average of $B$ independent stochastic gradients to estimate $\nabla f$ in every iteration.
\item\textbf{Continuous Greedy~(CG)}: Algorithm 1 in \cite{bian2017guaranteed} for monotone continuous DR-submodular maximization over general convex constraints.
\item\textbf{Stochastic Continuous Greedy~(SCG)}: Algorithm 2 in \citet{mokhtari2020stochastic} with $\rho_{t}=4/(t+8)^{2/3}$ for monotone DR-submodular maximization over general convex constraints.
\item\textbf{Non-monotone Frank-Wolfe~(Non-mono FW)} Algorithm 1 in \citet{mualem2023resolving} with error parameter $\epsilon=0.01$  for non-monotone DR-submodular maximization over general convex constraints.
\item\textbf{Measured Frank-Wolfe~(Measured FW)}: Algorithm 2 in \cite{mitra2021submodular+} for deterministic non-monotone DR-submodular maximization over down-closed convex constraints with $1/e$-approximation guarantee.
\item\textbf{Variance-reduced Measured Frank-Wolfe~(Measured FW-VR)}: A variant of Frank Wolfe algorithm merges the variance reduction technique\citep{mokhtari2018conditional,mokhtari2020stochastic} into Algorithm 2 of \cite{mitra2021submodular+}  with $\rho_{t}=4/(t+8)^{2/3}$. Notably, this algorithm is also designed for non-monotone stochastic DR-submodular maximization over down-closed convex constraints.

\end{itemize}
\begin{figure*}[t]
\vspace{-1.0em}
\centering
\subfigure[Special Monotone Case \label{graph1}]{\includegraphics[width=0.4\linewidth]{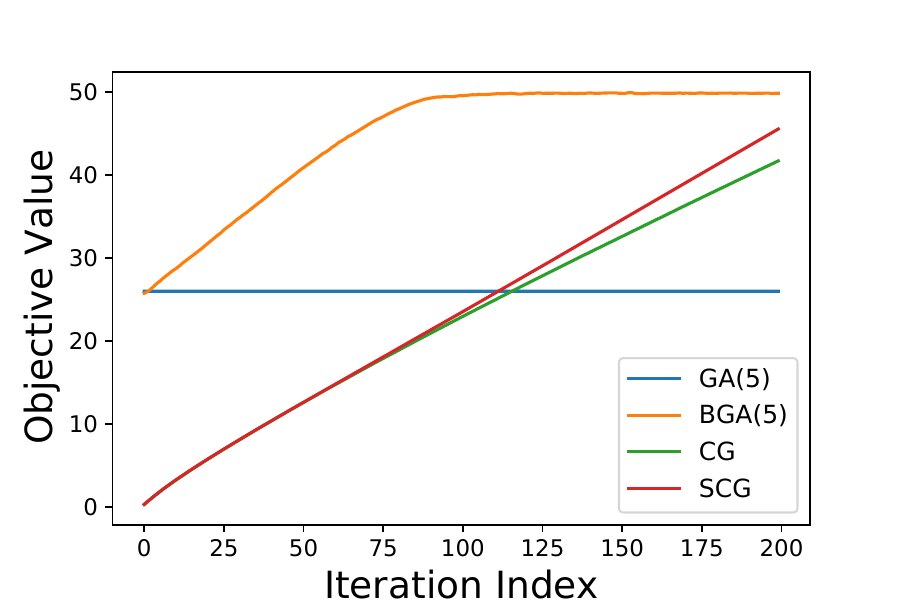}}
\subfigure[Special Monotone Case~(origin)\label{graph2}]{\includegraphics[width=0.4\linewidth]{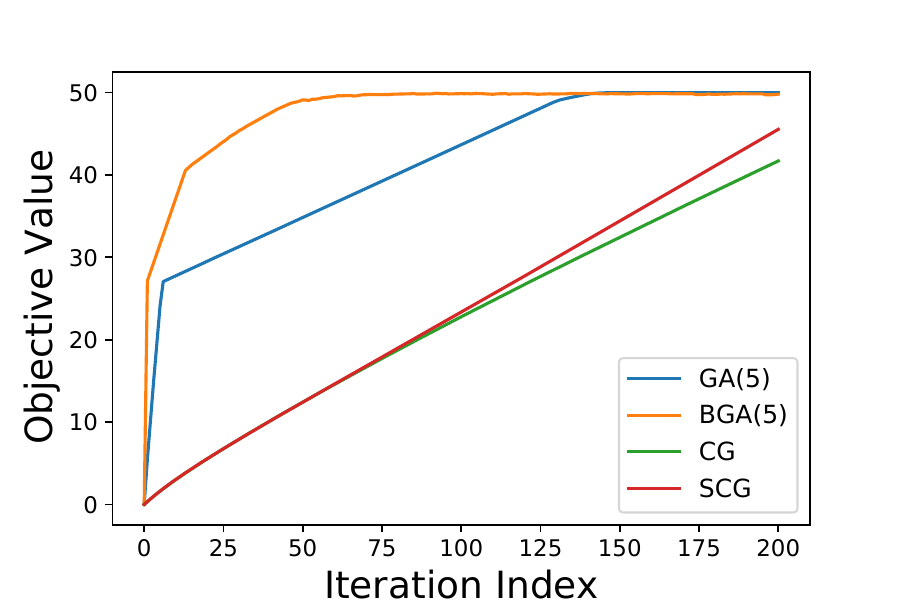}}

\subfigure[Special Non-Monotone Case\label{graph3}]{\includegraphics[width=0.4\linewidth]{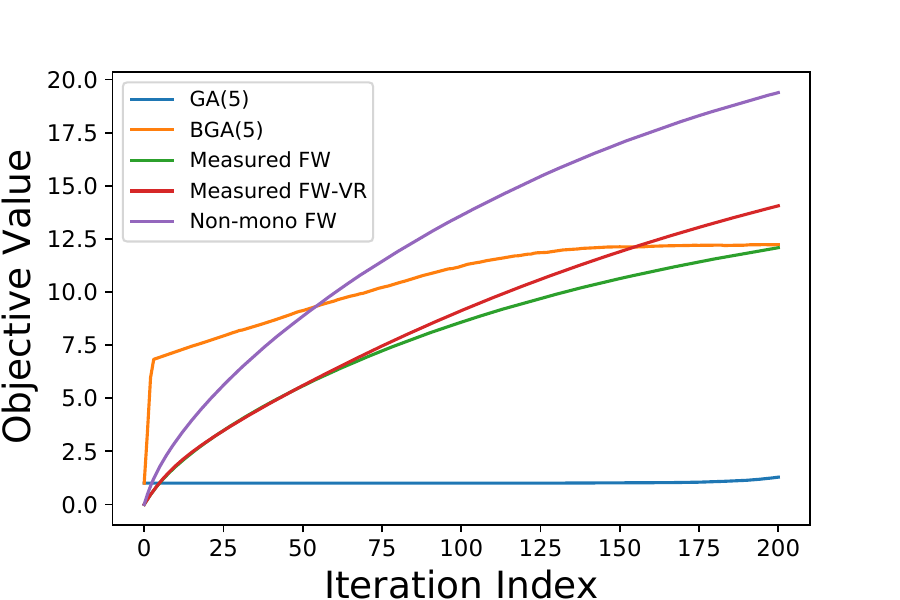}}
\subfigure[Special Non-Monotone Case~(origin)\label{graph4}]{\includegraphics[width=0.4\linewidth]{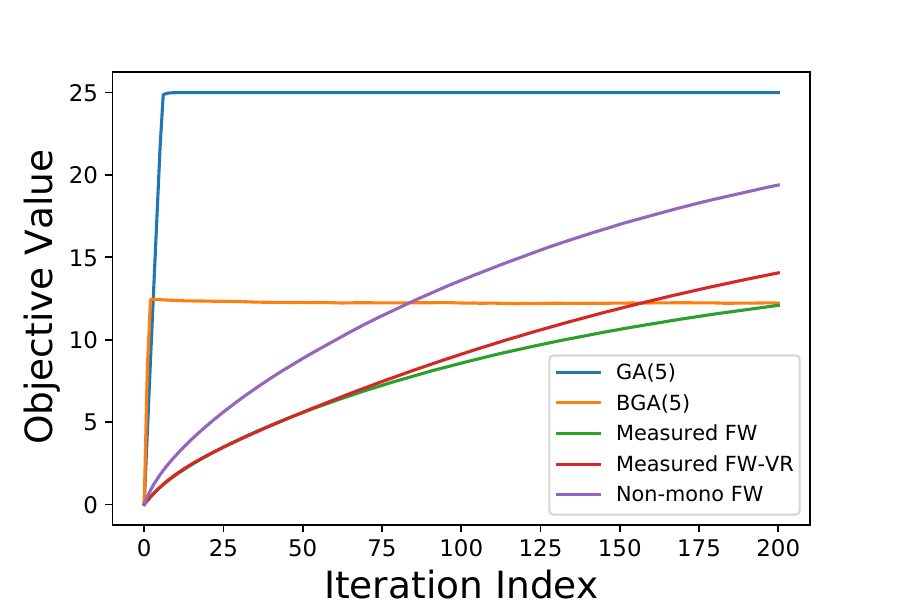}}
\caption{In ~\cref{graph1}, we test the performance of the four algorithms for the special \emph{monotone} submodular function in \cite{hassani2017gradient} where the GA(5) and BGA(5) start from $\boldsymbol{x}_{loc}$. Simultaneously, we present the results for all algorithm starting from the origin in ~\cref{graph2}. ~\cref{graph3} show the performance of four algorithms for the special \emph{non-monotone} submodular function in \cite{chen2023continuous} where the GA(5) and BGA(5) start from $\widetilde{\boldsymbol{x}}_{loc}$. Similarly, ~\cref{graph4} presents the results from the origin point.}
\vspace{-1.0em}
\end{figure*}
\subsubsection{Special Case}
\ \ 
\textbf{Monotone Setting:} \citet{hassani2017gradient} introduced a special \emph{monotone} continuous DR-submodular function $f_{k}$ coming from the multilinear extension of a set cover function. Here,  
$f_{k}(\boldsymbol{x})=k+1-(1-x_{2k+1})\prod_{i=1}^{k}(1-x_{i})-(1-x_{2k+1})(k-\sum_{i=1}^{k}x_{i})+\sum_{i=k+1}^{2k}x_{i}$, where $\boldsymbol{x}=(x_1,x_2,\dots,x_{2k+1})$.
Under the domain $\mathcal{C}=\{\boldsymbol{x}\in[0,1]^{2k+1}: \sum_{i=1}^{2k+1}x_{i}\le k\}$, \citet{hassani2017gradient} also
verified that $\boldsymbol{x}_{loc}=(\overbrace{1,1,\dots,1}^{k},0,\dots,0)$ is a local maximum with $(1/2+1/(2k))$-approximation to the global maximum. Thus, if start at $\boldsymbol{x}_{loc}$, theoretically Gradient Ascent \citep{hassani2017gradient} will get stuck at this local maximum point. In our experiment, we set $k=25$ and consider a  Gaussian noise, i.e., $[\widetilde{\nabla}f(x)]_{i}=[\nabla f(x)]_{i}+\delta\mathcal{N}(0,1)$ for any $i\in[2k]$ where $\delta=0.01$.  

 First, we set the initial point of GA(5) and BGA(5) to be $\boldsymbol{x}_{loc}$. From Figure~\ref{graph1}, we observe that GA(5) stays at $\boldsymbol{x}_{loc}$ as expected. Instead, BGA(5) escapes the local maximum $\boldsymbol{x}_{loc}$ and achieves near-optimal objective values. Then, we run both GA(5) and BGA(5) from the origin and present the results in Figure~\ref{graph2}. It shows that GA(5), starting from the origin, performs much better than the counterpart from a local maximum. Compared to GA(5), BGA(5) from origin converges to the optimal point $\boldsymbol{x}^{*}=(0,\dots,0,\overbrace{1,1,\dots,1}^{k+1})$ more rapidly. Both \cref{graph1} and \cref{graph2} show that BGA(5) also performs better than Frank-Wolfe-type algorithms with respect to the convergence rate and the objective value. 

\textbf{Non-Monotone Setting:} Recently, \citet{chen2023continuous} has presented a special \emph{non-monotone} continuous DR-submodular function $g_{k}$ , which follows from the multi-linear extension of a regularized coverage function, where   
$g_{k}(\boldsymbol{x})=k+1-(1-x_{2k+1})\prod_{i=1}^{k}(1-x_{i})-(1-x_{2k+1})(k-\sum_{i=1}^{k}x_{i})-\sum_{i=1}^{k}x_{i}-x_{2k+1}$. Moreover, \citet{chen2023continuous} showed that $\widetilde{\boldsymbol{x}}_{loc}=(\overbrace{1,1,\dots,1}^{2k},0)$ is a stationary point over the constraint  $\mathcal{C}=\{\boldsymbol{x}:\boldsymbol{x}\in[0,1]^{2k+1}\}$ and $\frac{g_{k}(\widetilde{\boldsymbol{x}}_{loc})}{\max_{\boldsymbol{x}\in\mathcal{C}}g_{k}(\boldsymbol{x})}\le\frac{1}{k}$. As a result, Gradient Ascent starting at $\widetilde{\boldsymbol{x}}_{loc}$ will be stuck at this point, resulting in a bad approximation to the global maximum when $k$ is large. Like the monotone case, we set $k=25$ and consider a standard Gaussian noise with $\sigma=0.01$ in our experiments. 

Firstly, we report the results of GA(5) starting from $\widetilde{\boldsymbol{x}}_{loc}$, BGA(5) from $\widetilde{\boldsymbol{x}}_{loc}$, Measured FW, Measured FW-VR, and Non-mono FW in Figure~\ref{graph3}. As we expect, GA(5) stays at the stationary point $\widetilde{\boldsymbol{x}}_{loc}$ with a  bad $0.04$-approximation guarantee. Instead, BGA(5) escapes the $\widetilde{\boldsymbol{x}}_{loc}$ and finally achieves $0.485$-approximation to the global maximum $\boldsymbol{x}^{*}=(\overbrace{0,\dots,0}^{2k},1)$. Then, we show the outcomes about GA(5) and BGA(5) from the origin in Figure~\ref{graph4}. Surprisingly, GA(5) from the origin 
approaches the optimal value, which is much better than GA(5) from $\widetilde{\boldsymbol{x}}_{loc}$. Due to the down-closed property of $\mathcal{C}$, it is foreseeable that Measured FW-VR achieves better function value than the BGA(5) at the final stage. Non-mono FW also performs better than BGA(5) in both Figure~\ref{graph3} and Figure~\ref{graph4}.
\begin{figure*}[t]
\vspace{-1.0em}
\centering
\subfigure[Monotone Movie Recommendation \label{graph31}]{\includegraphics[width=0.45\linewidth]{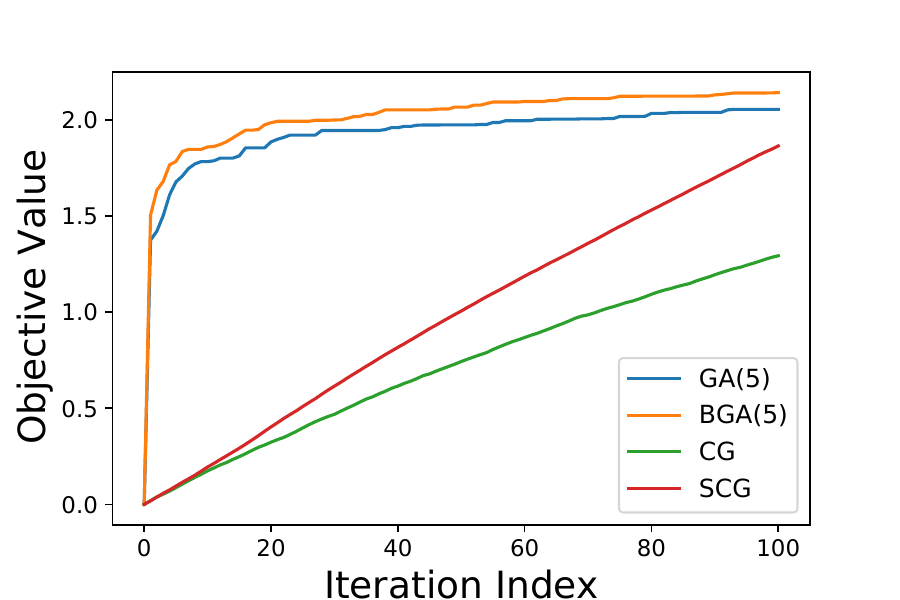}}
\subfigure[Non-Monotone Movie Recommendation\label{graph32}]{\includegraphics[width=0.45\linewidth]{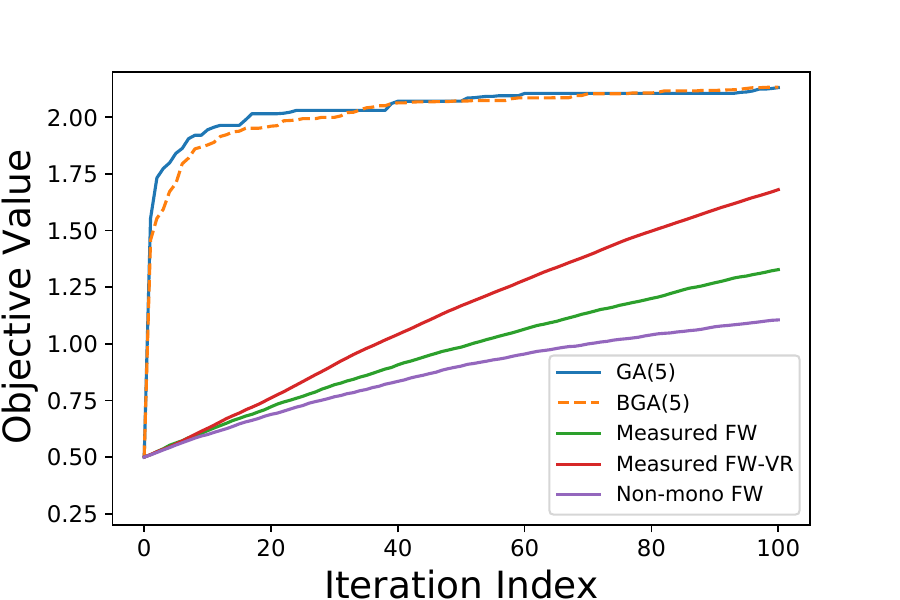}}
\caption{\cref{graph31} shows the performance of \textbf{GA(5)},\textbf{BGA(5)},\textbf{CG},and \textbf{SCG} in monotone movie recommendation task. In \cref{graph32}, we  report the results of \textbf{GA(5)},\textbf{BGA(5)},\textbf{Measured FW},\textbf{Measured FW-VR} and \textbf{Non-mono FW} in non-monotone
movie recommendation.} 
\vspace{-1.0em}
\end{figure*}
\subsubsection{Movie Recommendation}\label{sec:mov.offline}
\ \ \  \textbf{Monotone Setting:} We consider a movie recommendation task~\citep{stan2017probabilistic} with a part of MovieLens data set~\citep{harper2015movielens}. This dataset consists of 5-star ratings by $\mathcal{U}=1200$ users for $M=50$ movies. All Ratings are made with half-star increment.

 Let $r_{u,m}$ denote the rating of user $u$ for movie $m$. For each user $u$, we consider a well-motivated facility location objective function $f_{u}(S)=\max_{m\in S} r_{u,m}$ where $S$ is any subset of the movies with $f(\emptyset)=0$. Such a function shows how much user $u$  is satisfied by a subset $S$ of the movies. To quantify the satisfaction of all users for each set of movies $S$, we naturally investigate the average $f(S)=\frac{1}{|\mathcal{U}|}\sum_{u\in\mathcal{U}}f_{u}(S)$, where $\mathcal{U}$ represents the set of all users in dataset.  Like \citet{mokhtari2020stochastic} and \citet{zhang2023communication}, we consider the multi-linear extension of $f(S)$, that is, $F(\x)=\sum_{S}f(S)\prod_{i\in S}x_{i}\prod_{j\notin S}(1-x_{j})$ and constraint $\P=\{\boldsymbol{x}\in \mathbb{R}^{n}_{+} | \boldsymbol{A}\boldsymbol{x}\le\boldsymbol{b}, \boldsymbol{0}\le\boldsymbol{x}\le\boldsymbol{u}, \boldsymbol{A}\in \mathbb{R}^{m\times n}_{+}, \boldsymbol{b}\in \mathbb{R}_{+}^{m}\}$, where the matrix $\boldsymbol{A}$ is set as a random matrix with entries uniformly distributed in $[0,1]$, $\boldsymbol{b}=\boldsymbol{u}=\boldsymbol{1}$ and $m=\lfloor0.2M\rfloor$. It is easily verified that $F(\x)$ is a monotone continuous DR-submodular function. Our objective is to find the optimal allocation $\x$ over movies under the constraint $\P$, i.e., $\max_{\x\in\P}F(\x)$. In our experiment, we set a standard Gaussian noise for gradient.

\textbf{Non-Monotone Setting:} We investigate a different objective $G(\x)$, which adds a linear regularization in $F(\x)$. In other words, $G(\x)=F(\x)+\lambda(k-\sum_{i=1}^{M}x_{i})$ where $\x=(x_{1},\dots,x_{M})$ and $\lambda=0.1$. We can show $G(\x)$ is a continuous non-monotone DR-submodular function. To ensure $G(\x)\ge 0$, we consider a new constraint $\P_{1}=\{\boldsymbol{x}\in \mathbb{R}^{n}_{+} | \boldsymbol{A}\boldsymbol{x}\le\boldsymbol{b},\sum_{i=1}^{M}x_{i}\le k, \boldsymbol{0}\le\boldsymbol{x}\le\boldsymbol{u}, \boldsymbol{A}\in \mathbb{R}^{m\times n}_{+}, \boldsymbol{b}\in \mathbb{R}_{+}^{m}\}$, where the matrix $\boldsymbol{A}$ is set as a random matrix with entries uniformly distributed in $[0,1]$, $\boldsymbol{b}=\boldsymbol{u}=\boldsymbol{1}$ $m=\lfloor0.2M\rfloor$ and $k=5$. A standard Gaussian noise is also considered for gradient in solving $\max_{\x\in\P_{1}}G(\x)$.

As shown in \cref{graph31}, our BGA(5) performs better than both GA(5) and 
Frank-Wolfe-type algorithms with respect to the convergence rate and the objective value. Compared with CG, SCG is more robust to the gradient noise. In \cref{graph32}, our BGA(5) achieves nearly the same objective value with GA(5) after $60$-th iteration, both of which efficiently exceed Measured FW, Measured FW-VR as well as Non-mono FW. Among all Frank-Wolfe-type
algorithms, Non-mono FW shows the lowest objective value. Similarly, Measured FW-VR is more robust to Measured FW.
\subsection{Online Settings}

We also consider Online DR-submodular Maximization. Here, we present a list of algorithms to be compared:
\begin{itemize}
\item\textbf{Meta-Frank-Wolfe~($\alpha$-Meta-FW)}: We consider Algorithm 1 in \citep {chen2018online} and initialize $T^{\alpha}$ online gradient descent oracles~\citep{zinkevich2003online,hazan2019introduction} with step size $1/\sqrt{T}$.
\item\textbf{Variance-reduced Meta-Frank-Wolfe~($\alpha$-Meta-FW-VR)}: We consider Algorithm 1 in \citep{chen2018projection} with the $\rho_{t}=1/(t+3)^{2/3}$ and $T^{\alpha}$ online gradient descent oracles with step size $1/\sqrt{T}$.
\item\textbf{Mono-Frank-Wolfe~(Mono-FW)}: We consider Algorithm 1 in \citep{zhang2019online} with the $K=T^{3/5}$ and $Q=T^{2/5}$.
\item\textbf{Bandit-Frank-Wolfe~(Bandit-FW)}: We consider Algorithm 2 in \citep{zhang2019online} with the $L=T^{7/9}$ and $K=T^{2/3}$.
\item\textbf{Variance-reduced Measured-Meta-Frank-Wolfe~($\alpha$-Measured-MFW-VR)}: We consider Algorithm 1 in \citep{zhang2023online} with the $\rho_{t}=1/(t+3)^{2/3}$ and $T^{\alpha}$ online gradient descent oracles with step size $1/\sqrt{T}$.
\item\textbf{Measured-Meta-Frank-Wolfe~($\alpha$-Measured-MFW)}: We consider a variant of Algorithm 1 in \citep{zhang2023online} without variance reduction technique and initialize $T^{\alpha}$ online gradient descent oracles~\citep{zinkevich2003online,hazan2019introduction} with step size $1/\sqrt{T}$.
\item\textbf{Mono-Measured-Frank-Wolfe~(Mono-MFW)}: We consider Algorithm 2 in \citep{zhang2023online} with the $K=T^{3/5}$ and $Q=T^{2/5}$.
\item\textbf{Bandit-Measured-Frank-Wolfe~(Bandit-MFW)}: We consider Algorithm 3 in \citep{zhang2023online} with the $L=T^{7/9}$ and $K=T^{2/3}$.
\item\textbf{Non-monotone Meta-Frank-Wolfe~(Non-mono-MFW)}: We consider Algorithm 2 in \citep{mualem2023resolving} with the $L=50$ and $\epsilon=0.01$.
\item\textbf{Online Gradient Ascent~(OGA($B$))}: The delayed gradient ascent algorithm in \citep{quanrud2015online} with step size $1/\sqrt{T}$. We use $B$ independent samples to estimate $\nabla f_{t}(\boldsymbol{x}_{t})$ at each round.
\item\textbf{Online Boosting Gradient Ascent~(OBGA($B$))}:  We consider Algorithm~\ref{alg:2} with the step size $\eta_{t}=1/\sqrt{T}$ and use the average of $B$ independent samples to estimate the gradient at each round.
\item\textbf{Boosting Bandit Gradient Ascent~(Bandit-BGA)}: We consider Algorithm~\ref{algo:BBGA} with the step size $\eta=O(T^{-4/5})$ and $\lambda=O(T^{-1/5})$.
\end{itemize}
\begin{figure*}[t]
\vspace{-1.0em}
\centering
\subfigure[Monotone Case \label{graph41}]{\includegraphics[width=0.3\linewidth]{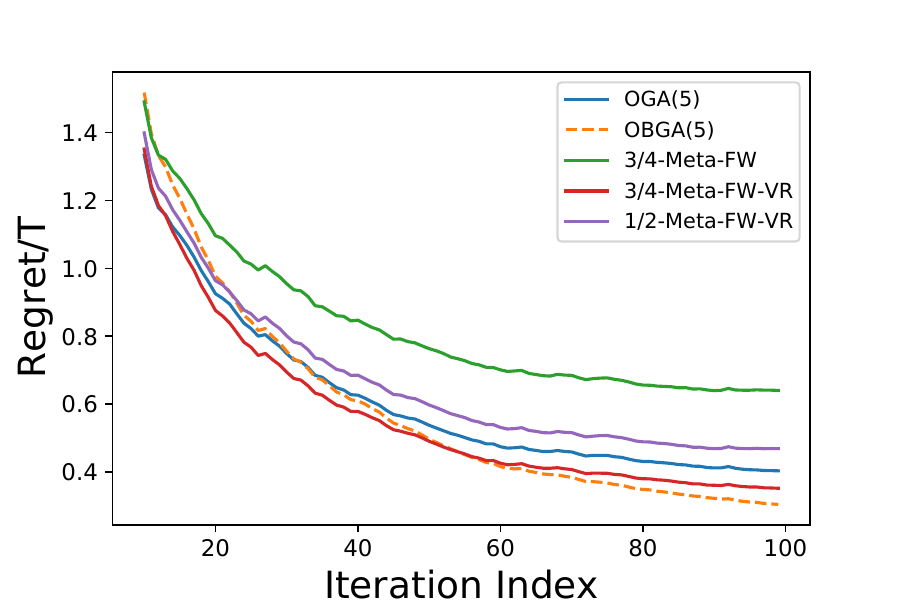}}
\subfigure[Delayed Monotone Case\label{graph42}]{\includegraphics[width=0.3\linewidth]{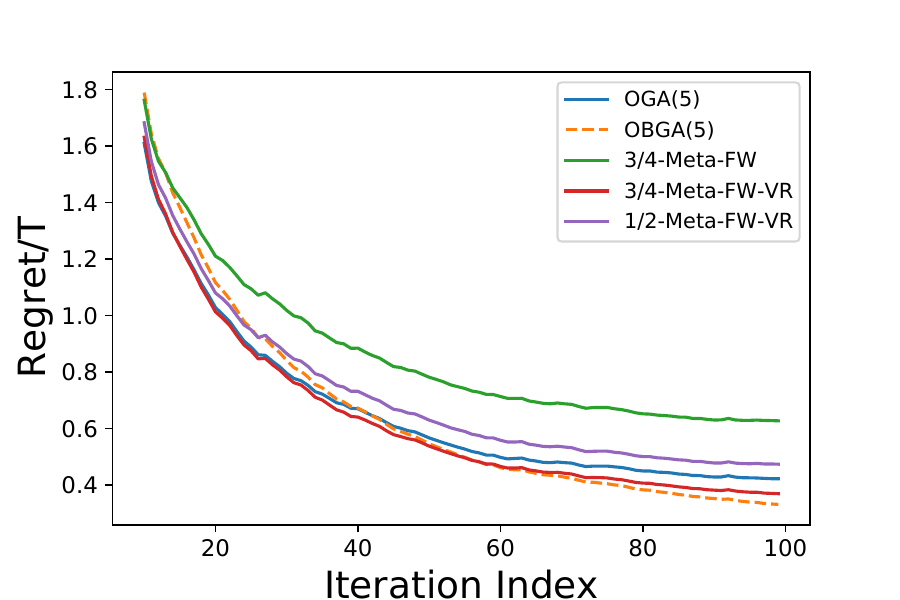}}
\subfigure[Bandit Monotone Case\label{graph43}]{\includegraphics[width=0.3\linewidth]{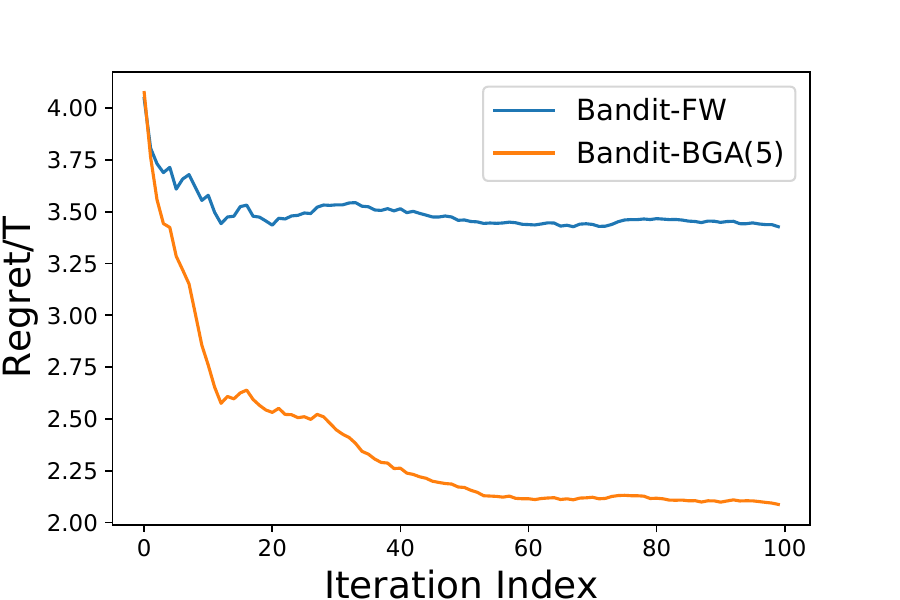}}

\subfigure[Non-Monotone Case\label{graph44}]{\includegraphics[width=0.3\linewidth]{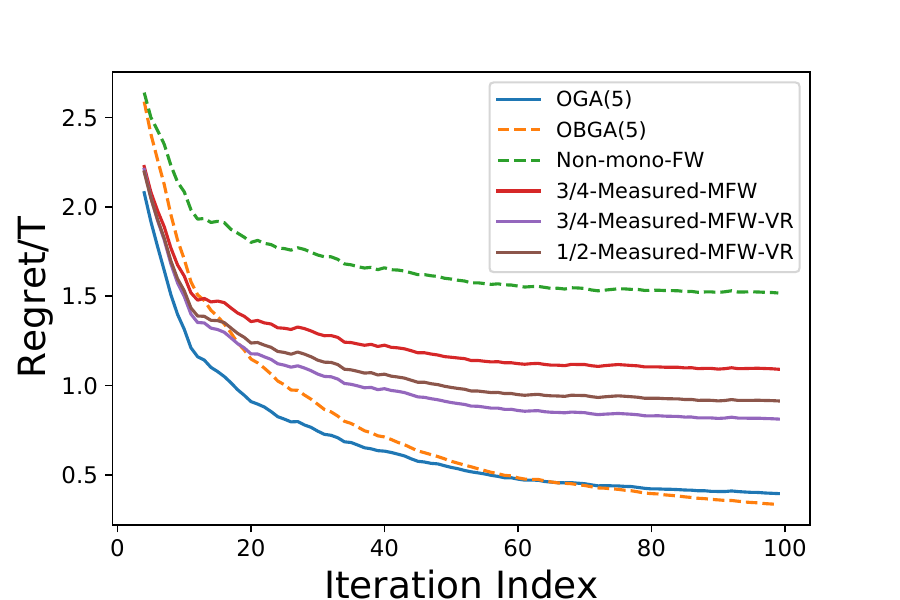}}
\subfigure[Delayed Non-Monotone Case~\label{graph45}]{\includegraphics[width=0.3\linewidth]{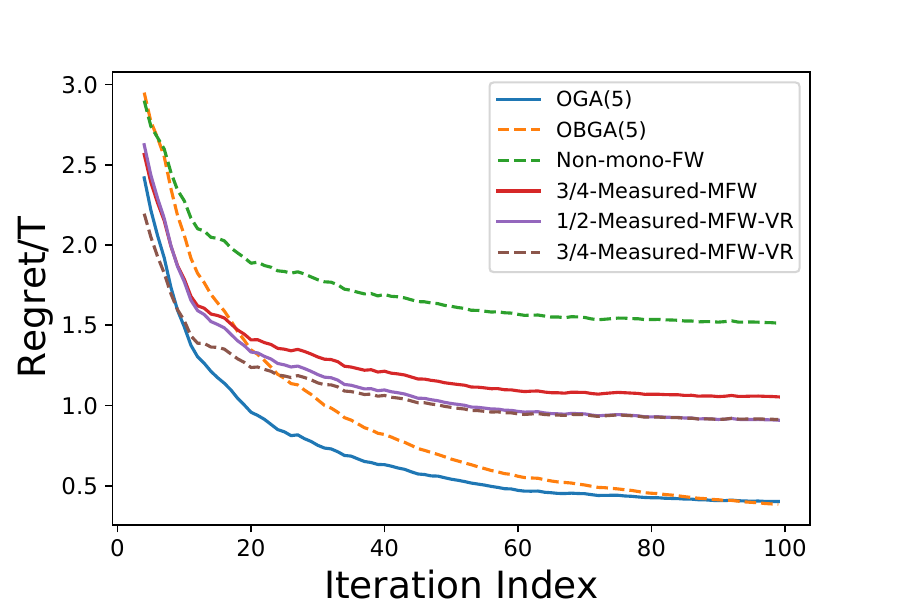}}
\subfigure[Bandit Non-Monotone Case\label{graph46}]{\includegraphics[width=0.3\linewidth]{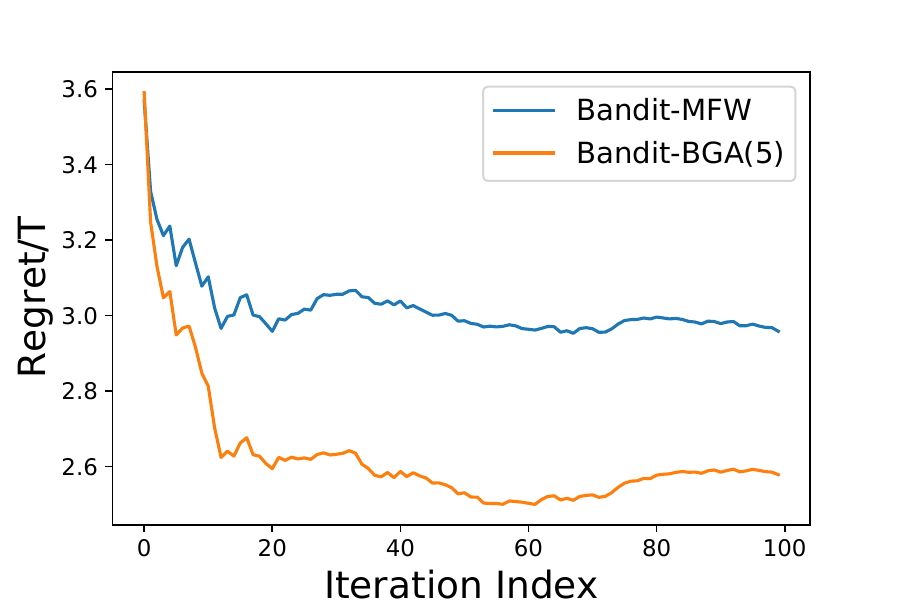}}
\caption{In ~\cref{graph41}-\ref{graph43}, we report the results for the online monotone movie recommendation task under full information, delayed feedback and bandit feedback. Similarly, ~\cref{graph44}-\ref{graph46} show the results of three different scenarios about online non-monotone movie recommendation tasks.}
\vspace{-1.0em}
\end{figure*}
\begin{table}[t]
	\renewcommand\arraystretch{1.35}
	\centering
	\caption{\small \cref{tab:online_movie_mono} shows the final $(1-1/e)$-Regret ratio and running time of online monotone movie recommendation. Note that `\textbf{Feedback Type}' means the form of objectives revealed by the environment during the process of online learning, `Full Feedback', `Delayed Feedback' and `Bandit Feedback' means that the object function is returned in full, delayed and bandit setting respectively. `\textbf{$(1-1/e)$-Regret Ratio}' means the ratio between $(1-1/e)$-Regret and timestamp at the $100$-th iteration, where we use a $500$-round continuous greedy method, namely, Algorithm 1 in \citep{bian2017guaranteed} as baseline to compute the $(1-1/e)$-regret.}
	\vspace{0.5em}
	\resizebox{0.8\textwidth}{!}{
		\setlength{\tabcolsep}{1.0mm}{
			\begin{tabular}{c|c|c|c}
				\toprule[1.5pt]
				\textbf{Feedback Type} &\textbf{Algorithm}& \textbf{$(1-1/e)$-Regret Ratio}&\textbf{Running time(seconds)}\\
				\hline
				\multirow{6}{*}{Full Feedback}& OGA(5)&0.404&18.14s\\
    \cline{2-4}
				~&\textbf{OBGA(5)}&\textbf{0.305}&\textbf{17.80s}\\
    \cline{2-4}
   ~&3/4-Meta-FW&0.641&113.29s\\
    \cline{2-4}
    ~&3/4-Meta-FW-VR&0.353&113.31s\\
    \cline{2-4}
 ~&Mono-FW&1.060&3.35s\\ 
    \cline{2-4}
 ~&1/2-Meta-FW-VR&0.469&36.60s\\
    \hline
     \hline
    \multirow{5}{*}{Delayed Feedback}&OGA(5)&0.422&18.12s\\
    \cline{2-4}
				~&\textbf{OBGA(5)}&\textbf{0.331}&\textbf{17.80s}\\
    \cline{2-4}
 ~&3/4-Meta-FW&0.627&113.26s\\
\cline{2-4}
~&3/4-Meta-FW-VR&0.369&113.34s\\
\cline{2-4}
~&1/2-Meta-FW-VR&0.473&36.60s\\
    \hline
     \hline
\multirow{2}{*}{Bandit Feedback}&\textbf{Bandit-BGA(5)}&\textbf{2.088}&\textbf{0.057s}\\
\cline{2-4}
~&Bandit-FW&3.428&0.116s\\
\midrule[1.5pt]
\end{tabular}}}
	\vspace{-1.5em}
	\label{tab:online_movie_mono}
\end{table}
\begin{table}[t]
	\renewcommand\arraystretch{1.35}
	\centering
 \caption{\small\cref{tab:online_movie_non_mono} shows the final regret ratio and running time of online non-monotone movie recommendation. Note that `\textbf{Feedback Type}' means the form of objectives revealed by the environment during the process of online learning, `Full Feedback', `Delayed Feedback' and `Bandit Feedback' means that the object function is returned in full, delayed and bandit setting respectively. `\textbf{Regret Ratio}' means the ratio between regret and time horizon at the $100$-th iteration, where we use a $500$-round deterministic Measured Frank Wolfe, namely, Algorithm 2 in \citep{mitra2021submodular+} as baseline to compute the regret.}
	\vspace{0.5em}
	\resizebox{0.77\textwidth}{!}{
		\setlength{\tabcolsep}{1.0mm}{
			\begin{tabular}{c|c|c|c}
				\toprule[1.5pt]
				\textbf{Feedback Type} &\textbf{Algorithm}& \textbf{Regret Ratio}&\textbf{Running time(seconds)}\\
				\hline
				\multirow{7}{*}{Full Feedback}& OGA(5)&0.394&19.67s\\
    \cline{2-4}
				~&\textbf{OBGA(5)}&\textbf{0.334}&\textbf{19.27s}\\
    \cline{2-4} 
    ~&Non-mono MFW&1.517&197.83s\\
    \cline{2-4}
   ~&3/4-Measured-MFW&1.090& 122.49s\\
    \cline{2-4}
    ~&1/2-Measured-MFW-VR&0.812&122.68s\\
    \cline{2-4}
 ~&Mono-FW&1.238&3.61s\\ 
    \cline{2-4}
 ~&3/4-Meta-FW-VR&0.913&39.66s\\
    \hline
     \hline
    \multirow{6}{*}{Delayed Feedback}&OGA(5)&0.402&19.71s\\
    \cline{2-4}
				~&\textbf{OBGA(5)}&\textbf{0.384}&\textbf{19.27s}\\
    \cline{2-4}
 ~&3/4-Measured-MFW&1.054&122.80s\\
\cline{2-4}
 ~&Non-mono MFW&1.513&197.63s\\
\cline{2-4}
~&3/4-Measured-MFW-VR&0.812&122.81s\\
\cline{2-4}
~&1/2-Measured-MFW-VR&0.909&39.66s\\
    \hline
     \hline
\multirow{2}{*}{Bandit Feedback}&\textbf{Bandit-BGA(5)}&\textbf{2.578}&\textbf{0.059s}\\
\cline{2-4}
~&Bandit-MFW&2.958&0.123s\\
\midrule[1.5pt]
\end{tabular}}}
	\vspace{-1.5em}
	\label{tab:online_movie_non_mono}
\end{table}

\textbf{Movie Recommendation:} Like section~\ref{sec:mov.offline}, we consider the  facility location objective function for each user $u$, i.e., $f_{u}(S)=\max_{m\in S}r_{u,m}$ where $r_{u,m}$ denote the rating of user $u$ for movie $m$. Then,  we split the first $T\times b$ users into disjoint and equally-sized sets $\mathcal{U}_{1},\dots,\mathcal{U}_{T}$, so $|\mathcal{U}_{i}|=b$ for any $i\in [T]$. At each round $t\in[T]$, the environment/adversary reveals the multi-linear extension of function $f_{t}(S)=\frac{1}{|\mathcal{U}_{t}|}\sum_{u\in\mathcal{U}_{t}}f_{u}(S)$ to the learner, that is, $F_{t}(\x)=\sum_{S}f_{t}(S)\prod_{i\in S}x_{i}\prod_{j\notin S}(1-x_{j})$. As for non-monotone cases, we also consider the $G_{t}(\x)=F_{t}(\x)+\lambda(k-\sum_{i=1}^{M}x_{i})$ where $b=15$, $k=5$ and $\lambda=0.1$. To efficiently find a solution for the $T$-round accumulative reward with theoretical guarantee, we consider the cardinality constraint $P=\{\boldsymbol{x}\in \mathbb{R}^{n}_{+} |\sum_{i=1}^{M}x_{i}\le 5, \boldsymbol{0}\le\boldsymbol{x}\le\boldsymbol{1}\}$. In the experiments, we impose the Gaussian noise to the gradient, i.e., $\widetilde{\nabla}f_{t}(\x)=\nabla f_{t}(\x)+0.01*\mathcal{N}(0,\mathbf{I})$ or $\widetilde{\nabla}g_{t}(\x)=\nabla g_{t}(\x)+0.01*\mathcal{N}(0,\mathbf{I})$ for any $t\in[T]$, $i\in[N]$ and $\x\in[0,1]^{n}$, where $\mathcal{N}(0,\mathbf{I})$ is standard multivariate normal distribution.  To simulate the feedback delays, we generate a uniform random number $d_{t}$ from $\{1,2,3,4,5\}$ for the $t$-th round stochastic gradient information. We present the trend of the ratio between regret and time horizon in the \cref{graph41}-\ref{graph46}, and report the running time and the ratio at $100$-th iteration in \cref{tab:online_movie_mono}-\ref{tab:online_movie_non_mono}, where we use the results of deterministic Frank Wolfe algorithms with $500$ iterations as a baseline to compute the regret at each time horizon.

As shown in \cref{graph41},\ref{graph42},\ref{graph44} and \ref{graph45}, OBGA(5) performs better than OGA(5) and all other Frank-Wolfe-type algorithms at the final stage. Moreover, 3/4-Meta-FW and Non-mono-MFW show the worst regret ratio in the monotone  and general movie recommendation respectively. As for the bandit settings, our Bandit-BGA(5) efficiently lower the regret ratio compared with Bandit-Frank-Wolfe-type algorithms, i.e., Bandit-FW and Bandit-MFW. From \cref{tab:online_movie_mono}, our OBGA(5) can be 6 times faster than the best Frank-Wolfe-tyle algorithm `3/4-Meta-FW-VR' in monotone cases. Similarly, our OBGA(5) is more effective than the best non-monotone Frank-Wolfe-tyle algorithm `3/4-Measured-MFW-VR' according to \cref{tab:online_movie_non_mono}.
\subsection{Minimax Settings}
We also consider minimax optimization of convex-submodular functions. Here, we present a list of algorithms to be compared:
\begin{itemize}
    \item \textbf{Extra-gradient on Continuous Extension~(EGCE)}: We consider Algorithm 3 in \citep{adibi2022minimax} and initialize the step size $\gamma_{t}=O(1/\sqrt{T})$ where $T$ is the predefined total iterations.
    \item \textbf{Boosting Gradient Descent Ascent~(BGDA)}: Algorithm~\ref{minimax optimization montone} in this paper and we initialize the step size $\eta=O(1/\sqrt{T})$ where $T$ is the predefined total iterations.
\end{itemize}
\begin{figure*}[t]
\vspace{-1.0em}
\centering
\subfigure[Convex-facility Location(Mono)\label{graph_minimax_1}]{\includegraphics[width=0.42\linewidth]{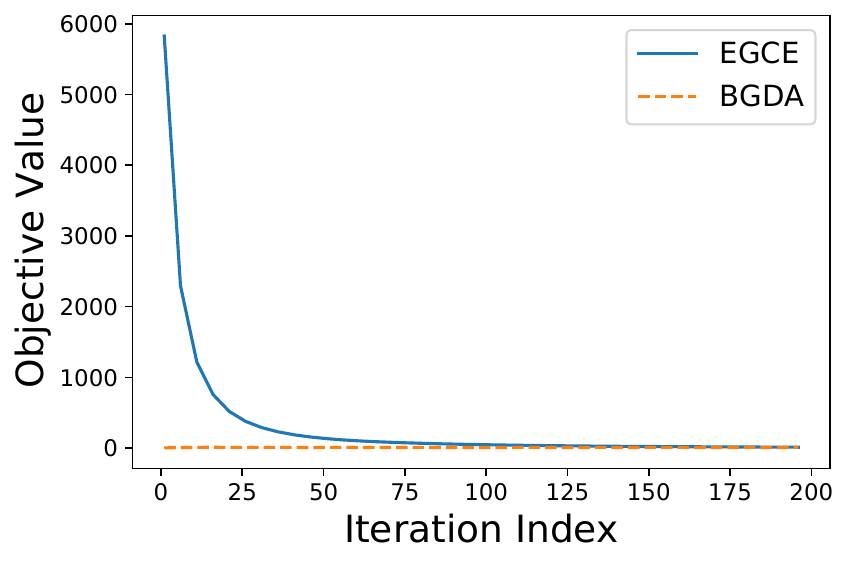}}
\subfigure[Convex-facility Location(Non-Mono)\label{graph_minimax_2}]{\includegraphics[width=0.42\linewidth]{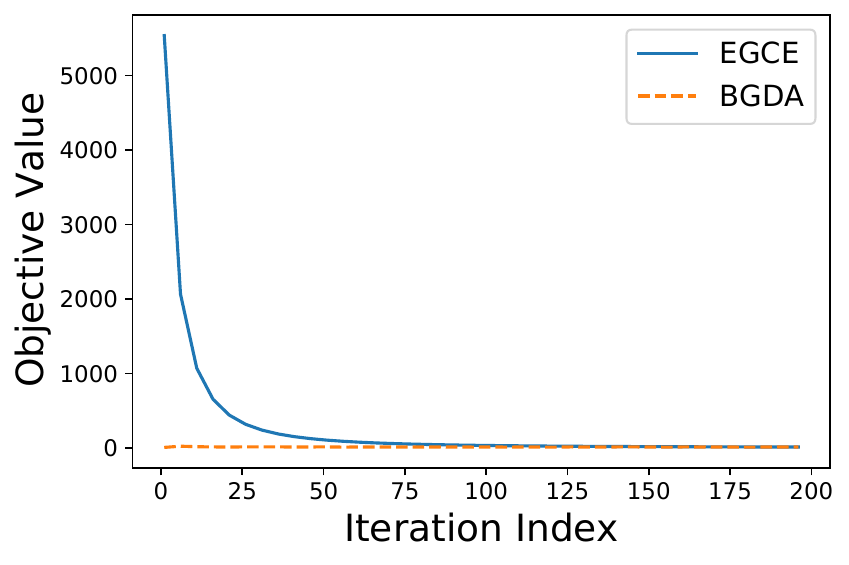}}

\subfigure[Item Recommendation(Mono) \label{graph_minimax_3}]{\includegraphics[width=0.42\linewidth]{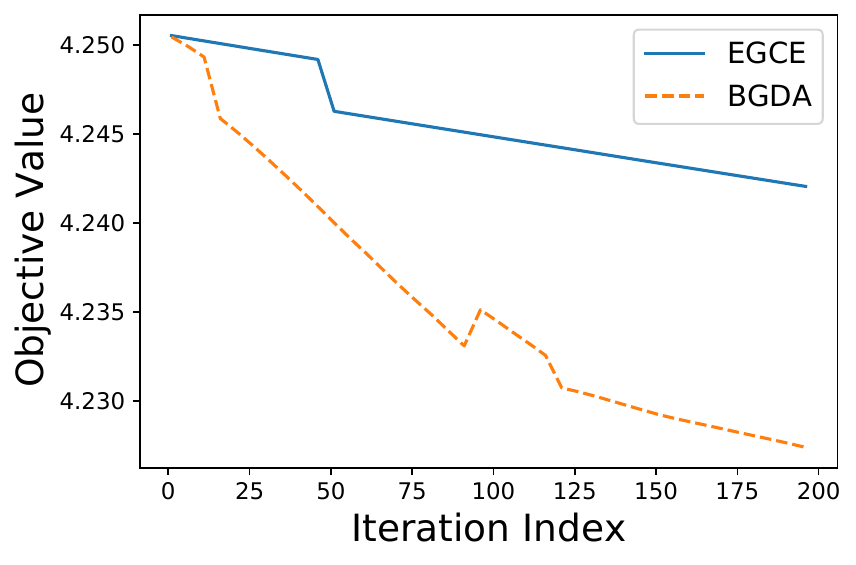}}
\subfigure[Item Recommendation(Non-Mono)\label{graph_minimax_4}]{\includegraphics[width=0.42\linewidth]{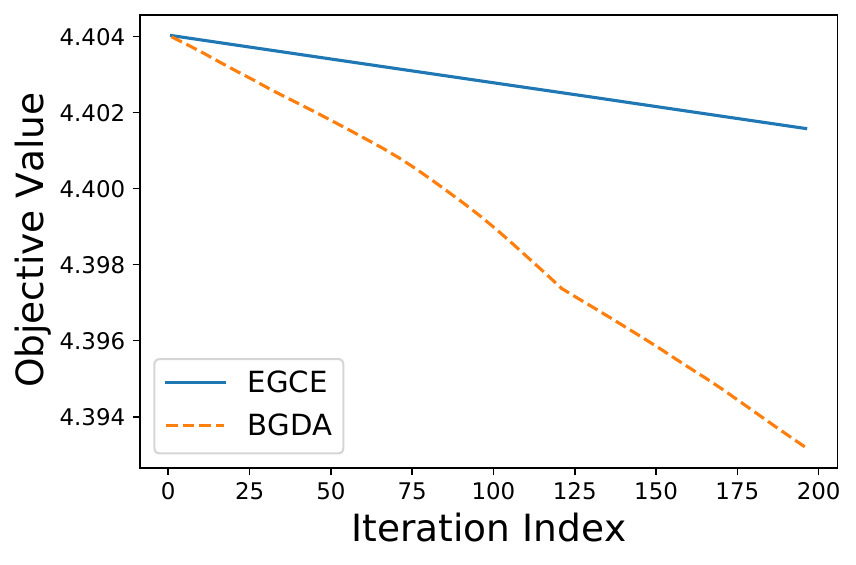}}
\caption{In \cref{graph_minimax_1}-\ref{graph_minimax_2}, we show the comparison of our proposed methods for Convex-facility Location. Similarly, the results about attack for item recommendation are presented in \cref{graph_minimax_3}-\ref{graph_minimax_4}.}
\vspace{-1.0em}
\end{figure*}
\textbf{Convex-facility Location:} In this setting, we consider an objective function $f:\R^{d}\times 2^{V}\rightarrow\R_{+}$ where $f(\x,S)=\sum_{i=1}^{n}\max_{j\in S}f_{i,j}(\x)+g(\x)$. if $f_{i,j}$ and $g$ is a convex function w.r.t. the continuous variable $\x$, we can easily verify that the $f$ is a convex-discrete monotone submodular function. Like what \cite{adibi2022minimax} do, we split the $d$-dimension vector $\x$ into n different parts, namely, $\x=[\x_{1};\dots;\x_{n}]$ where $\x_{i}\in\R^{m}$ and $m\times n=d$. In our experiments, we set $f_{i,j}(\x)=w_{i,j}\x_{i}^{T}\x_{j}$ where each $w_{ij}$ is randomly chosen from $[0,1]$. Furthermore, we consider the regularized term $g(\x)=\lambda(\sum_{i=1}^{n}\|\x_{i}\|^{2})^{-1}$ and set  the constraint about $\x$ as $\mathcal{C}=\{\x|\|\x_{i}\|\le 1,\forall i\in[n]\}$. Our objective is to optimize the multi-linear extension of $f$, i.e., 
\begin{equation*}
\min_{\x\in\C}\max_{\mathbf{y}\in\mathcal{K}}F(\x,\mathbf{y}),
\end{equation*} where $F(\x,\mathbf{y})=\sum_{S}f(\x,S)\prod_{i\in S}y_{i}\prod_{j\notin S}(1-y_{j})$ and $\mathcal{K}=\{\y\in[0,1]^{n}| \sum_{i=1}^{n}y_{i}\le k\}$. As for the non-monotone cases, we consider adding a linear term in $f(\x,S)$, namely, $g(\x,S)=f(\x,S)+k-|S|$. As a result, the multi-linear extension $G(\x,\y)$ of $g(\x,S)$ can be rewritten as $G(\x,\y)=F(\x,\y)+k-\sum_{i=1}^{n}y_{i}$. We then show the comparison of our BGDA and EGCE in \cref{graph_minimax_1}-\ref{graph_minimax_2}, where $m=10$, $n=30$,$k=5$. Note that it is hard to find the optimal $\y^{*}=\max_{\y\in\mathcal{K}}F(\x,\y)$ for any fixed $\x\in\mathcal{C}$. Thus, the reported objective value in \cref{graph_minimax_1} and \cref{graph_minimax_2} is exactly the value $f(\x,\text{GREEDY}(f,\x,k))$ and $g(\x,\text{DISTORTED-GREEDY}(g,\x,k))$ respectively, where `DISTORTED-GREEDY' is Algorithm 1 in \citet{harshaw2019submodular} and `GREEDY' is the classical greedy method. As we observe in \cref{graph_minimax_1} and \cref{graph_minimax_2}, our BGDA converges faster than EGCE.

\textbf{Adversarial Attack for Item Recommendation:} We consider designing an adversarial attack for a movie recommendation task, where there is a completed
rating matrix $R$ whose entry $r_{u,m}$ represents the estimated rating of user $u$ for movie $m$. Given a rating matrix $R$, we want to chooses $k$ movies via maximizing the well-motivated facility location objective function, namely, $\max_{|S|\le k}f(R,S)$ where $f(R,S)=\frac{1}{|\mathcal{U}|}\sum_{u\in\mathcal{U}}\max_{m\in S}r_{u,m}$ and $\mathcal{U}$ stands for the set of all users. The attacker’s goal is to slightly perturb the rating matrix $R$ to a matrix $R'$ such that the utility $\max_{|S|\le k}f(R',S)$ is minimized. That is, the attacker hope to tackle the following  minimax problem:
\begin{equation*}
    \min_{\|R'-R\|_{F}\le\epsilon}\max_{|S|\le k}f(R',S),
\end{equation*} where $\|\cdot\|_{F}$ is the Frobenius norm. Like convex-facility location, we run EGCE and BGDA on the multi-linear extension of $f(R',S)$. As for the non-monotone case, we also consider the multi-linear extension of $g(R',S)=f(R',S)+k-|S|$. In our experiments, we simulate a dataset about $100$ users for $50$ movies with each rating to be independently uniformly chosen from $[0,5]$ and set $k=10$ and $\epsilon=50*100*0.5*0.01$. As shown in \cref{graph_minimax_3}-\ref{graph_minimax_4}, our BGDA performs better than EGCE w.r.t. the convergence rate and objective value for both monotone and non-monotone recommendation attack.

\section{Conclusion}
In this paper, we design the non-oblivious function and leverage it to enhance the PGA method, thereby addressing the question posed at the outset of this article. By incorporating this innovative technical component, we obtain improved results across various settings pertaining to DR-Submodular functions. We believe that the non-oblivious function, together with the boosted PGA method, holds potential for wide-ranging applications in the realm of continuous submodular maximization, and can even be extended to discrete submodular maximization. As an illustration, \cite{pmlr-v202-wan23e} successfully applied the non-oblivious technique to submodular full-bandit problems through a specialized continuous DR-submodular extension.



\bibliography{references}

\begin{thebibliography}{73}
\providecommand{\natexlab}[1]{#1}
\providecommand{\url}[1]{\texttt{#1}}
\expandafter\ifx\csname urlstyle\endcsname\relax
  \providecommand{\doi}[1]{doi: #1}\else
  \providecommand{\doi}{doi: \begingroup \urlstyle{rm}\Url}\fi

\bibitem[Abernethy et~al.(2008)Abernethy, Hazan, and
  Rakhlin]{abernethy2008competing}
Jacob Abernethy, Elad~E Hazan, and Alexander Rakhlin.
\newblock Competing in the dark: An efficient algorithm for bandit linear
  optimization.
\newblock In \emph{21st Annual Conference on Learning Theory, COLT 2008}, pages
  263--273, 2008.

\bibitem[Adibi et~al.(2022)Adibi, Mokhtari, and Hassani]{adibi2022minimax}
Arman Adibi, Aryan Mokhtari, and Hamed Hassani.
\newblock Minimax optimization: The case of convex-submodular.
\newblock In \emph{International Conference on Artificial Intelligence and
  Statistics}, pages 3556--3580. PMLR, 2022.

\bibitem[Alimonti(1994)]{alimonti1994new}
Paola Alimonti.
\newblock New local search approximation techniques for maximum generalized
  satisfiability problems.
\newblock In \emph{Italian Conference on Algorithms and Complexity}, pages
  40--53. Springer, 1994.

\bibitem[Arora et~al.(2016)Arora, Ge, Kannan, and Moitra]{arora2016computing}
Sanjeev Arora, Rong Ge, Ravi Kannan, and Ankur Moitra.
\newblock Computing a nonnegative matrix factorization---provably.
\newblock \emph{SIAM Journal on Computing}, 45\penalty0 (4):\penalty0
  1582--1611, 2016.

\bibitem[Ben-Tal et~al.(2009)Ben-Tal, El~Ghaoui, and Nemirovski]{ben2009robust}
Aharon Ben-Tal, Laurent El~Ghaoui, and Arkadi Nemirovski.
\newblock \emph{Robust optimization}, volume~28.
\newblock Princeton university press, 2009.

\bibitem[Bertsekas(2015)]{bertsekas2015convex}
Dimitri Bertsekas.
\newblock \emph{Convex optimization algorithms}.
\newblock Athena Scientific, 2015.

\bibitem[Bian et~al.(2017{\natexlab{a}})Bian, Levy, Krause, and
  Buhmann]{bian2017continuous}
An~Bian, Kfir Levy, Andreas Krause, and Joachim~M Buhmann.
\newblock Continuous dr-submodular maximization: Structure and algorithms.
\newblock \emph{Advances in Neural Information Processing Systems}, 30,
  2017{\natexlab{a}}.

\bibitem[Bian et~al.(2017{\natexlab{b}})Bian, Mirzasoleiman, Buhmann, and
  Krause]{bian2017guaranteed}
Andrew~An Bian, Baharan Mirzasoleiman, Joachim Buhmann, and Andreas Krause.
\newblock Guaranteed non-convex optimization: Submodular maximization over
  continuous domains.
\newblock In \emph{Artificial Intelligence and Statistics}, pages 111--120.
  PMLR, 2017{\natexlab{b}}.

\bibitem[Bian et~al.(2019)Bian, Buhmann, and Krause]{bian2019optimal}
Yatao Bian, Joachim Buhmann, and Andreas Krause.
\newblock Optimal continuous dr-submodular maximization and applications to
  provable mean field inference.
\newblock In \emph{International Conference on Machine Learning}, pages
  644--653. PMLR, 2019.

\bibitem[Bian et~al.(2020)Bian, Buhmann, and Krause]{bian2020continuous}
Yatao Bian, Joachim~M Buhmann, and Andreas Krause.
\newblock Continuous submodular function maximization.
\newblock \emph{arXiv preprint arXiv:2006.13474}, 2020.

\bibitem[Buchbinder and Feldman(2019)]{buchbinder2019constrained}
Niv Buchbinder and Moran Feldman.
\newblock Constrained submodular maximization via a nonsymmetric technique.
\newblock \emph{Mathematics of Operations Research}, 44\penalty0 (3):\penalty0
  988--1005, 2019.

\bibitem[Calinescu et~al.(2011)Calinescu, Chekuri, Pal, and
  Vondr{\'a}k]{calinescu2011maximizing}
Gruia Calinescu, Chandra Chekuri, Martin Pal, and Jan Vondr{\'a}k.
\newblock Maximizing a monotone submodular function subject to a matroid
  constraint.
\newblock \emph{SIAM Journal on Computing}, 40\penalty0 (6):\penalty0
  1740--1766, 2011.

\bibitem[Chekuri et~al.(2014)Chekuri, Vondr{\'a}k, and
  Zenklusen]{chekuri2014submodular}
Chandra Chekuri, Jan Vondr{\'a}k, and Rico Zenklusen.
\newblock Submodular function maximization via the multilinear relaxation and
  contention resolution schemes.
\newblock \emph{SIAM Journal on Computing}, 43\penalty0 (6):\penalty0
  1831--1879, 2014.

\bibitem[Chen et~al.(2018{\natexlab{a}})Chen, Harshaw, Hassani, and
  Karbasi]{chen2018projection}
Lin Chen, Christopher Harshaw, Hamed Hassani, and Amin Karbasi.
\newblock Projection-free online optimization with stochastic gradient: From
  convexity to submodularity.
\newblock In \emph{International Conference on Machine Learning}, pages
  814--823. PMLR, 2018{\natexlab{a}}.

\bibitem[Chen et~al.(2018{\natexlab{b}})Chen, Hassani, and
  Karbasi]{chen2018online}
Lin Chen, Hamed Hassani, and Amin Karbasi.
\newblock Online continuous submodular maximization.
\newblock In \emph{International Conference on Artificial Intelligence and
  Statistics}, pages 1896--1905. PMLR, 2018{\natexlab{b}}.

\bibitem[Chen et~al.(2023)Chen, Du, Yang, Xu, and Gao]{chen2023continuous}
Shengminjie Chen, Donglei Du, Wenguo Yang, Dachuan Xu, and Suixiang Gao.
\newblock Continuous non-monotone dr-submodular maximization with down-closed
  convex constraint.
\newblock \emph{arXiv preprint arXiv:2307.09616}, 2023.

\bibitem[Chen et~al.(2012)Chen, Lu, and Zhang]{chen2012time}
Wei Chen, Wei Lu, and Ning Zhang.
\newblock Time-critical influence maximization in social networks with
  time-delayed diffusion process.
\newblock In \emph{Twenty-Sixth AAAI Conference on Artificial Intelligence},
  2012.

\bibitem[Das and Kempe(2011)]{das2011submodular}
Abhimanyu Das and David Kempe.
\newblock Submodular meets spectral: greedy algorithms for subset selection,
  sparse approximation and dictionary selection.
\newblock In \emph{International Conference on Machine Learning}, pages
  1057--1064, 2011.

\bibitem[Du(2022)]{du2022lyapunov}
Donglei Du.
\newblock Lyapunov function approach for approximation algorithm design and
  analysis: with applications in submodular maximization.
\newblock \emph{arXiv preprint arXiv:2205.12442}, 2022.

\bibitem[Du et~al.(2022)Du, Liu, Wu, Xu, and Zhou]{du2022improved}
Donglei Du, Zhicheng Liu, Chenchen Wu, Dachuan Xu, and Yang Zhou.
\newblock An improved approximation algorithm for maximizing a dr-submodular
  function over a convex set.
\newblock \emph{arXiv preprint arXiv:2203.14740}, 2022.

\bibitem[Du et~al.(2019)Du, Zhai, Poczos, and Singh]{du2018gradient}
Simon~S. Du, Xiyu Zhai, Barnabas Poczos, and Aarti Singh.
\newblock Gradient descent provably optimizes over-parameterized neural
  networks.
\newblock In \emph{International Conference on Learning Representations}, 2019.

\bibitem[D{\"u}rr et~al.(2021)D{\"u}rr, Thang, Srivastav, and
  Tible]{durr2021non}
Christoph D{\"u}rr, Nguy{\^e}n~Kim Thang, Abhinav Srivastav, and L{\'e}o Tible.
\newblock Non-monotone dr-submodular maximization over general convex sets.
\newblock In \emph{Proceedings of the Twenty-Ninth International Conference on
  International Joint Conferences on Artificial Intelligence}, pages
  2148--2154, 2021.

\bibitem[Elenberg et~al.(2018)Elenberg, Khanna, Dimakis, and
  Negahban]{elenberg2018restricted}
Ethan~R Elenberg, Rajiv Khanna, Alexandros~G Dimakis, and Sahand Negahban.
\newblock Restricted strong convexity implies weak submodularity.
\newblock \emph{The Annals of Statistics}, 46\penalty0 (6B):\penalty0
  3539--3568, 2018.

\bibitem[Feldman(2021)]{feldman2021guess}
Moran Feldman.
\newblock Guess free maximization of submodular and linear sums.
\newblock \emph{Algorithmica}, 83\penalty0 (3):\penalty0 853--878, 2021.

\bibitem[Feldman et~al.(2011)Feldman, Naor, and Schwartz]{feldman2011unified}
Moran Feldman, Joseph Naor, and Roy Schwartz.
\newblock A unified continuous greedy algorithm for submodular maximization.
\newblock In \emph{2011 IEEE 52nd Annual Symposium on Foundations of Computer
  Science}, pages 570--579. IEEE, 2011.

\bibitem[Filmus and Ward(2012)]{filmus2012power}
Yuval Filmus and Justin Ward.
\newblock The power of local search: Maximum coverage over a matroid.
\newblock In \emph{29th Symposium on Theoretical Aspects of Computer Science},
  volume~14, pages 601--612. LIPIcs, 2012.

\bibitem[Filmus and Ward(2014)]{filmus2014monotone}
Yuval Filmus and Justin Ward.
\newblock Monotone submodular maximization over a matroid via non-oblivious
  local search.
\newblock \emph{SIAM Journal on Computing}, 43\penalty0 (2):\penalty0 514--542,
  2014.

\bibitem[Fisher et~al.(1978)Fisher, Nemhauser, and Wolsey]{fisher1978analysis}
Marshall~L Fisher, George~L Nemhauser, and Laurence~A Wolsey.
\newblock An analysis of approximations for maximizing submodular set
  functions—ii.
\newblock In \emph{Polyhedral Combinatorics}, pages 73--87. Springer, 1978.

\bibitem[Flaxman et~al.(2005)Flaxman, Kalai, and McMahan]{flaxman2005online}
Abraham~D Flaxman, Adam~Tauman Kalai, and H~Brendan McMahan.
\newblock Online convex optimization in the bandit setting: gradient descent
  without a gradient.
\newblock In \emph{Proceedings of the sixteenth annual ACM-SIAM symposium on
  Discrete algorithms}, pages 385--394, 2005.

\bibitem[Fujishige(2005)]{fujishige2005submodular}
Satoru Fujishige.
\newblock \emph{Submodular functions and optimization}.
\newblock Elsevier, 2005.

\bibitem[Ge et~al.(2016)Ge, Lee, and Ma]{ge2016matrix}
Rong Ge, Jason~D Lee, and Tengyu Ma.
\newblock Matrix completion has no spurious local minimum.
\newblock In \emph{Advances in Neural Information Processing Systems}, pages
  2973--2981, 2016.

\bibitem[Grant and Boyd(2014)]{grant2014cvx}
Michael Grant and Stephen Boyd.
\newblock Cvx: Matlab software for disciplined convex programming, version 2.1,
  2014.

\bibitem[Harper and Konstan(2015)]{harper2015movielens}
F~Maxwell Harper and Joseph~A Konstan.
\newblock The movielens datasets: History and context.
\newblock \emph{Acm transactions on interactive intelligent systems (tiis)},
  5\penalty0 (4):\penalty0 1--19, 2015.

\bibitem[Harshaw et~al.(2019)Harshaw, Feldman, Ward, and
  Karbasi]{harshaw2019submodular}
Chris Harshaw, Moran Feldman, Justin Ward, and Amin Karbasi.
\newblock Submodular maximization beyond non-negativity: Guarantees, fast
  algorithms, and applications.
\newblock In \emph{International Conference on Machine Learning}, pages
  2634--2643. PMLR, 2019.

\bibitem[Hassani et~al.(2017)Hassani, Soltanolkotabi, and
  Karbasi]{hassani2017gradient}
Hamed Hassani, Mahdi Soltanolkotabi, and Amin Karbasi.
\newblock Gradient methods for submodular maximization.
\newblock In \emph{Advances in Neural Information Processing Systems}, pages
  5841--5851, 2017.

\bibitem[Hassani et~al.(2020)Hassani, Karbasi, Mokhtari, and
  Shen]{hassani2020stochastic}
Hamed Hassani, Amin Karbasi, Aryan Mokhtari, and Zebang Shen.
\newblock Stochastic conditional gradient++:(non) convex minimization and
  continuous submodular maximization.
\newblock \emph{SIAM Journal on Optimization}, 30\penalty0 (4):\penalty0
  3315--3344, 2020.

\bibitem[Hazan et~al.(2016{\natexlab{a}})Hazan, Levy, and
  Shalev-Shwartz]{hazan2016graduated}
Elad Hazan, Kfir~Yehuda Levy, and Shai Shalev-Shwartz.
\newblock On graduated optimization for stochastic non-convex problems.
\newblock In \emph{International Conference on Machine Learning}, pages
  1833--1841. PMLR, 2016{\natexlab{a}}.

\bibitem[Hazan et~al.(2016{\natexlab{b}})]{hazan2019introduction}
Elad Hazan et~al.
\newblock Introduction to online convex optimization.
\newblock \emph{Foundations and Trends{\textregistered} in Optimization},
  2\penalty0 (3-4):\penalty0 157--325, 2016{\natexlab{b}}.

\bibitem[Kempe et~al.(2003)Kempe, Kleinberg, and Tardos]{kempe2003maximizing}
David Kempe, Jon Kleinberg, and {\'E}va Tardos.
\newblock Maximizing the spread of influence through a social network.
\newblock In \emph{Proceedings of the ninth ACM SIGKDD International Conference
  on Knowledge Discovery and Data Mining}, pages 137--146, 2003.

\bibitem[Khanna et~al.(1998)Khanna, Motwani, Sudan, and
  Vazirani]{khanna1998syntactic}
Sanjeev Khanna, Rajeev Motwani, Madhu Sudan, and Umesh Vazirani.
\newblock On syntactic versus computational views of approximability.
\newblock \emph{SIAM Journal on Computing}, 28\penalty0 (1):\penalty0 164--191,
  1998.

\bibitem[Kulesza et~al.(2012)Kulesza, Taskar, et~al.]{kulesza2012determinantal}
Alex Kulesza, Ben Taskar, et~al.
\newblock Determinantal point processes for machine learning.
\newblock \emph{Foundations and Trends{\textregistered} in Machine Learning},
  5\penalty0 (2--3):\penalty0 123--286, 2012.

\bibitem[Lacoste-Julien(2016)]{lacoste2016convergence}
Simon Lacoste-Julien.
\newblock Convergence rate of frank-wolfe for non-convex objectives.
\newblock \emph{arXiv preprint arXiv:1607.00345}, 2016.

\bibitem[Leskovec et~al.(2007)Leskovec, Krause, Guestrin, Faloutsos,
  VanBriesen, and Glance]{leskovec2007cost}
Jure Leskovec, Andreas Krause, Carlos Guestrin, Christos Faloutsos, Jeanne
  VanBriesen, and Natalie Glance.
\newblock Cost-effective outbreak detection in networks.
\newblock In \emph{Proceedings of the 13th ACM SIGKDD International Conference
  on Knowledge Discovery and Data Mining}, pages 420--429, 2007.

\bibitem[Liao et~al.(2023)Liao, Wan, Yao, and Song]{liao2023improved}
Yucheng Liao, Yuanyu Wan, Chang Yao, and Mingli Song.
\newblock Improved projection-free online continuous submodular maximization.
\newblock \emph{arXiv preprint arXiv:2305.18442}, 2023.

\bibitem[Lin and Bilmes(2011)]{lin2011class}
Hui Lin and Jeff Bilmes.
\newblock A class of submodular functions for document summarization.
\newblock In \emph{Proceedings of the 49th Annual Meeting of the Association
  for Computational Linguistics: Human Language Technologies}, pages 510--520,
  2011.

\bibitem[Liu et~al.(2020)Liu, Deng, Li, Chen, and So]{liu2020nonconvex}
Huikang Liu, Zengde Deng, Xiao Li, Shixiang Chen, and Anthony Man-Cho So.
\newblock Nonconvex robust synchronization of rotations.
\newblock In \emph{NeurIPS Annual Workshop on Optimization for Machine
  Learning}, pages 1--7, 2020.

\bibitem[Lov{\'a}sz(1983)]{lovasz1983submodular}
L{\'a}szl{\'o} Lov{\'a}sz.
\newblock Submodular functions and convexity.
\newblock In \emph{Mathematical programming the state of the art}, pages
  235--257. Springer, 1983.

\bibitem[Mehta et~al.(2007)Mehta, Saberi, Vazirani, and
  Vazirani]{mehta2007adwords}
Aranyak Mehta, Amin Saberi, Umesh Vazirani, and Vijay Vazirani.
\newblock Adwords and generalized online matching.
\newblock \emph{Journal of the ACM}, 54\penalty0 (5):\penalty0 22--es, 2007.

\bibitem[Mitra et~al.(2021)Mitra, Feldman, and Karbasi]{mitra2021submodular+}
Siddharth Mitra, Moran Feldman, and Amin Karbasi.
\newblock Submodular+ concave.
\newblock In \emph{Advances in Neural Information Processing Systems}, 2021.

\bibitem[Mokhtari et~al.(2018)Mokhtari, Hassani, and
  Karbasi]{mokhtari2018conditional}
Aryan Mokhtari, Hamed Hassani, and Amin Karbasi.
\newblock Conditional gradient method for stochastic submodular maximization:
  Closing the gap.
\newblock In \emph{International Conference on Artificial Intelligence and
  Statistics}, pages 1886--1895. PMLR, 2018.

\bibitem[Mokhtari et~al.(2020)Mokhtari, Hassani, and
  Karbasi]{mokhtari2020stochastic}
Aryan Mokhtari, Hamed Hassani, and Amin Karbasi.
\newblock Stochastic conditional gradient methods: From convex minimization to
  submodular maximization.
\newblock \emph{Journal of Machine Learning Research}, 2020.

\bibitem[Mualem and Feldman(2023)]{mualem2023resolving}
Loay Mualem and Moran Feldman.
\newblock Resolving the approximability of offline and online non-monotone
  dr-submodular maximization over general convex sets.
\newblock In \emph{International Conference on Artificial Intelligence and
  Statistics}, pages 2542--2564. PMLR, 2023.

\bibitem[Murty and Kabadi(1987)]{murty1987some}
Katta~G Murty and Santosh~N Kabadi.
\newblock Some np-complete problems in quadratic and nonlinear programming.
\newblock \emph{Mathematical Programming}, 39\penalty0 (2):\penalty0 117--129,
  1987.

\bibitem[Nemhauser et~al.(1978)Nemhauser, Wolsey, and
  Fisher]{nemhauser1978analysis}
George~L Nemhauser, Laurence~A Wolsey, and Marshall~L Fisher.
\newblock An analysis of approximations for maximizing submodular set
  functions—i.
\newblock \emph{Mathematical Programming}, 14\penalty0 (1):\penalty0 265--294,
  1978.

\bibitem[Nesterov(2013)]{nesterov2013introductory}
Y~Nesterov.
\newblock \emph{Introductory Lectures on Convex Optimization: A Basic Course},
  volume~87.
\newblock Springer Science \& Business Media, 2013.

\bibitem[Netrapalli et~al.(2014)Netrapalli, U~N, Sanghavi, Anandkumar, and
  Jain]{NIPS2014_443cb001}
Praneeth Netrapalli, Niranjan U~N, Sujay Sanghavi, Animashree Anandkumar, and
  Prateek Jain.
\newblock Non-convex robust pca.
\newblock In \emph{Advances in Neural Information Processing Systems}, pages
  1107--1115, 2014.

\bibitem[Niazadeh et~al.(2020)Niazadeh, Roughgarden, and
  Wang]{niazadeh2020optimal}
Rad Niazadeh, Tim Roughgarden, and Joshua~R Wang.
\newblock Optimal algorithms for continuous non-monotone submodular and
  dr-submodular maximization.
\newblock \emph{The Journal of Machine Learning Research}, 21\penalty0
  (1):\penalty0 4937--4967, 2020.

\bibitem[Niazadeh et~al.(2022)Niazadeh, Golrezaei, Wang, Susan, and
  Badanidiyuru]{niazadeh2022online}
Rad Niazadeh, Negin Golrezaei, Joshua Wang, Fransisca Susan, and Ashwinkumar
  Badanidiyuru.
\newblock Online learning via offline greedy algorithms: Applications in market
  design and optimization.
\newblock \emph{Management Science}, 2022.

\bibitem[Osborne and Rubinstein(1994)]{osborne1994course}
Martin~J Osborne and Ariel Rubinstein.
\newblock \emph{A course in game theory}.
\newblock MIT press, 1994.

\bibitem[Pedramfar et~al.(2023)Pedramfar, Quinn, and
  Aggarwal]{pedramfar2023unified}
Mohammad Pedramfar, Christopher~John Quinn, and Vaneet Aggarwal.
\newblock A unified approach for maximizing continuous dr-submodular functions.
\newblock \emph{arXiv preprint arXiv:2305.16671}, 2023.

\bibitem[Pedramfar et~al.(2024)Pedramfar, Nadew, Quinn, and
  Aggarwal]{pedramfar2024unified}
Mohammad Pedramfar, Yididiya~Y Nadew, Christopher~J Quinn, and Vaneet Aggarwal.
\newblock Unified projection-free algorithms for adversarial dr-submodular
  optimization.
\newblock \emph{arXiv preprint arXiv:2403.10063}, 2024.

\bibitem[Quanrud and Khashabi(2015)]{quanrud2015online}
Kent Quanrud and Daniel Khashabi.
\newblock Online learning with adversarial delays.
\newblock In \emph{Advances in Neural Information Processing Systems}, pages
  1270--1278, 2015.

\bibitem[Stan et~al.(2017)Stan, Zadimoghaddam, Krause, and
  Karbasi]{stan2017probabilistic}
Serban Stan, Morteza Zadimoghaddam, Andreas Krause, and Amin Karbasi.
\newblock Probabilistic submodular maximization in sub-linear time.
\newblock In \emph{International Conference on Machine Learning}, pages
  3241--3250. PMLR, 2017.

\bibitem[Streeter and Golovin(2008)]{streeter2008online}
Matthew Streeter and Daniel Golovin.
\newblock An online algorithm for maximizing submodular functions.
\newblock In \emph{Advances in Neural Information Processing Systems}, pages
  1577--1584, 2008.

\bibitem[Thang and Srivastav(2021)]{thang2021online}
Nguyen~Kim Thang and Abhinav Srivastav.
\newblock Online non-monotone dr-submodular maximization.
\newblock In \emph{Proceedings of the AAAI Conference on Artificial
  Intelligence}, volume~35, pages 9868--9876, 2021.

\bibitem[Vondr{\'a}k(2013)]{vondrak2013symmetry}
Jan Vondr{\'a}k.
\newblock Symmetry and approximability of submodular maximization problems.
\newblock \emph{SIAM Journal on Computing}, 42\penalty0 (1):\penalty0 265--304,
  2013.

\bibitem[Wan et~al.(2023)Wan, Zhang, Chen, Sun, and Zhang]{pmlr-v202-wan23e}
Zongqi Wan, Jialin Zhang, Wei Chen, Xiaoming Sun, and Zhijie Zhang.
\newblock Bandit multi-linear {DR}-submodular maximization and its applications
  on adversarial submodular bandits.
\newblock In \emph{Proceedings of the 40th International Conference on Machine
  Learning}, volume 202 of \emph{Proceedings of Machine Learning Research},
  pages 35491--35524. PMLR, 2023.

\bibitem[Yang et~al.(2016)Yang, Mao, Pei, and He]{yang2016continuous}
Yu~Yang, Xiangbo Mao, Jian Pei, and Xiaofei He.
\newblock Continuous influence maximization: What discounts should we offer to
  social network users?
\newblock In \emph{Proceedings of the 2016 International Conference on
  Management of Data}, pages 727--741, 2016.

\bibitem[Zhang et~al.(2019)Zhang, Chen, Hassani, and Karbasi]{zhang2019online}
Mingrui Zhang, Lin Chen, Hamed Hassani, and Amin Karbasi.
\newblock Online continuous submodular maximization: From full-information to
  bandit feedback.
\newblock In \emph{Advances in Neural Information Processing Systems}, pages
  9206--9217, 2019.

\bibitem[Zhang et~al.(2022)Zhang, Deng, Chen, Hu, and Yang]{pmlr-v162-zhang22e}
Qixin Zhang, Zengde Deng, Zaiyi Chen, Haoyuan Hu, and Yu~Yang.
\newblock Stochastic continuous submodular maximization: Boosting via
  non-oblivious function.
\newblock In Kamalika Chaudhuri, Stefanie Jegelka, Le~Song, Csaba Szepesvari,
  Gang Niu, and Sivan Sabato, editors, \emph{Proceedings of the 39th
  International Conference on Machine Learning}, volume 162 of
  \emph{Proceedings of Machine Learning Research}, pages 26116--26134. PMLR,
  17--23 Jul 2022.

\bibitem[Zhang et~al.(2023{\natexlab{a}})Zhang, Deng, Chen, Zhou, Hu, and
  Yang]{zhang2023online}
Qixin Zhang, Zengde Deng, Zaiyi Chen, Kuangqi Zhou, Haoyuan Hu, and Yu~Yang.
\newblock Online learning for non-monotone dr-submodular maximization: From
  full information to bandit feedback.
\newblock In \emph{International Conference on Artificial Intelligence and
  Statistics}, pages 3515--3537. PMLR, 2023{\natexlab{a}}.

\bibitem[Zhang et~al.(2023{\natexlab{b}})Zhang, Deng, Jian, Chen, Hu, and
  Yang]{zhang2023communication}
Qixin Zhang, Zengde Deng, Xiangru Jian, Zaiyi Chen, Haoyuan Hu, and Yu~Yang.
\newblock Communication-efficient decentralized online continuous dr-submodular
  maximization.
\newblock In \emph{Proceedings of the 32nd ACM International Conference on
  Information and Knowledge Management}, pages 3330--3339, 2023{\natexlab{b}}.

\bibitem[Zinkevich(2003)]{zinkevich2003online}
Martin Zinkevich.
\newblock Online convex programming and generalized infinitesimal gradient
  ascent.
\newblock In \emph{International Conference on Machine Learning}, pages
  928--936, 2003.

\end{thebibliography}

\appendix
\begin{appendices}
\section{Technical Lemmas}\label{Appendix:lemmas}
The following folklore lemma gives an upper bound and lower bound of $L$-smooth functions.
\begin{lemma}\label{smooth function bound}
    If $f$ is $L$-smooth, then for any $\bx$ and $\by$, we have
    \begin{align}
        f(\by)\leq f(\bx) + \langle\by -\bx,\nabla f(\bx)\rangle + \frac{L}{2}\|\by-\bx\|^2
    \end{align}
    and 
    \begin{align}
        f(\by)\geq f(\bx) + \langle\by -\bx,\nabla f(\bx)\rangle - \frac{L}{2}\|\by-\bx\|^2.
    \end{align}
\end{lemma}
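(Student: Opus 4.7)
The plan is to prove the two bounds together via the standard integral representation of $f(\by) - f(\bx)$ combined with the definition of $L$-smoothness. Specifically, I would start from the fundamental theorem of calculus applied along the segment from $\bx$ to $\by$, writing
\begin{equation*}
f(\by) - f(\bx) = \int_0^1 \langle \nabla f(\bx + t(\by-\bx)), \by - \bx\rangle \, \mathrm{d}t.
\end{equation*}
Subtracting the ``linear part'' $\langle \nabla f(\bx), \by-\bx\rangle = \int_0^1 \langle \nabla f(\bx), \by-\bx\rangle \, \mathrm{d}t$ from both sides gives
\begin{equation*}
f(\by) - f(\bx) - \langle \nabla f(\bx), \by-\bx\rangle = \int_0^1 \langle \nabla f(\bx + t(\by-\bx)) - \nabla f(\bx), \by-\bx\rangle \, \mathrm{d}t.
\end{equation*}

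Next, I would bound the integrand by Cauchy--Schwarz and then invoke $L$-smoothness, which yields $\|\nabla f(\bx + t(\by-\bx)) - \nabla f(\bx)\| \le L \cdot t \|\by-\bx\|$. Therefore the absolute value of the right-hand side is at most $\int_0^1 L t \|\by-\bx\|^2 \, \mathrm{d}t = \tfrac{L}{2}\|\by-\bx\|^2$. Stripping the absolute value produces the two-sided bound
\begin{equation*}
\left| f(\by) - f(\bx) - \langle \nabla f(\bx), \by-\bx\rangle \right| \le \frac{L}{2}\|\by-\bx\|^2,
\end{equation*}
which is exactly the conjunction of the claimed upper and lower bounds after rearrangement.

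There is no real obstacle here: the only subtlety is that the fundamental theorem of calculus requires $f$ to be continuously differentiable along the segment, but this is implicit in the $L$-smoothness assumption (which in particular requires $\nabla f$ to exist and be Lipschitz). Since the paper only posits that $f$ is differentiable with an $L$-Lipschitz gradient, one could alternatively justify the integral identity by approximation or simply invoke it as a standard consequence of differentiability plus continuity of $\nabla f$. The whole argument is a few lines of routine manipulation and should slot cleanly into the appendix.
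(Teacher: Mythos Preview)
Your proof is correct and is the standard argument for this folklore result. The paper itself does not supply a proof for this lemma---it is simply stated as a well-known technical fact in the appendix---so there is nothing to compare against; your integral-representation argument would serve perfectly well if a proof were required.
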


Then we recall some lemmas about the projection operator and projected gradient ascent step.
\begin{lemma}[Bourbaki-Cheney-Goldstein inequality\citep{bertsekas2015convex}]\label{lemma:9}
For the projection $\mathcal{P}_{\mathcal{C}}(\boldsymbol{x})=\arg\min_{\boldsymbol{z}\in\mathcal{C}}\left\|\boldsymbol{z}-\boldsymbol{x}\right\|$,  we have
\begin{equation}\label{equ:41}
    \begin{aligned}
      \langle\mathcal{P}_{\mathcal{C}}(\boldsymbol{x})-\boldsymbol{x},\boldsymbol{z}-\mathcal{P}_{\mathcal{C}}(\boldsymbol{x})\rangle\ge 0, \forall \boldsymbol{z}\in\mathcal{C}.
    \end{aligned}
\end{equation}
\end{lemma}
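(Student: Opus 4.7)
The statement is the classical first-order optimality condition for the Euclidean projection onto a convex set, so I would prove it in the standard way by exploiting the minimality of $\mathcal{P}_{\mathcal{C}}(\boldsymbol{x})$ together with the convexity of $\mathcal{C}$. Denote $\boldsymbol{p} = \mathcal{P}_{\mathcal{C}}(\boldsymbol{x})$ for brevity. The plan is to perturb $\boldsymbol{p}$ toward an arbitrary competitor $\boldsymbol{z} \in \mathcal{C}$, use the definition of the projection to obtain a quadratic inequality in the perturbation parameter, and then take a one-sided limit.

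Concretely, I would first fix $\boldsymbol{z} \in \mathcal{C}$ and $t \in (0,1]$, and introduce $\boldsymbol{z}_t := (1-t)\boldsymbol{p} + t\boldsymbol{z} = \boldsymbol{p} + t(\boldsymbol{z}-\boldsymbol{p})$. Convexity of $\mathcal{C}$ guarantees $\boldsymbol{z}_t \in \mathcal{C}$, so by the defining property of the projection $\|\boldsymbol{p}-\boldsymbol{x}\|^2 \le \|\boldsymbol{z}_t - \boldsymbol{x}\|^2$. Expanding the right-hand side gives
\begin{equation*}
\|\boldsymbol{z}_t-\boldsymbol{x}\|^2 = \|\boldsymbol{p}-\boldsymbol{x}\|^2 + 2t\langle \boldsymbol{p}-\boldsymbol{x},\,\boldsymbol{z}-\boldsymbol{p}\rangle + t^2\|\boldsymbol{z}-\boldsymbol{p}\|^2,
\end{equation*}
so after cancellation and division by $t>0$ I obtain $0 \le 2\langle \boldsymbol{p}-\boldsymbol{x},\,\boldsymbol{z}-\boldsymbol{p}\rangle + t\|\boldsymbol{z}-\boldsymbol{p}\|^2$. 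Letting $t \to 0^+$ yields $\langle \boldsymbol{p}-\boldsymbol{x},\,\boldsymbol{z}-\boldsymbol{p}\rangle \ge 0$, which is precisely the desired inequality $\langle\mathcal{P}_{\mathcal{C}}(\boldsymbol{x})-\boldsymbol{x},\boldsymbol{z}-\mathcal{P}_{\mathcal{C}}(\boldsymbol{x})\rangle\ge 0$.

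There is essentially no obstacle here; the only mild subtlety is ensuring that the projection $\boldsymbol{p}$ is well-defined (which follows from strong convexity of the squared-norm objective together with closedness/convexity of $\mathcal{C}$) and that the perturbation $\boldsymbol{z}_t$ remains feasible, both of which are immediate. An alternative one-line proof would invoke the general first-order optimality condition for smooth convex minimization on a convex set applied to $g(\boldsymbol{w}) = \tfrac{1}{2}\|\boldsymbol{w}-\boldsymbol{x}\|^2$, since $\nabla g(\boldsymbol{p}) = \boldsymbol{p}-\boldsymbol{x}$ and the condition reads $\langle \nabla g(\boldsymbol{p}), \boldsymbol{z}-\boldsymbol{p}\rangle \ge 0$ for all $\boldsymbol{z} \in \mathcal{C}$; but the perturbation argument above is self-contained and thus preferable.
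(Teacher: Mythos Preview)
Your proof is correct and is the standard variational argument for the projection theorem. Note, however, that the paper does not actually prove this lemma: it is stated in the technical appendix as a known result cited from \citet{bertsekas2015convex}, with no accompanying proof. Your self-contained perturbation argument is exactly what one would find in such a reference, so there is nothing to compare.
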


\begin{lemma}[Gradient step]\label{lem:gradient step}
    Let $g(\bx)$ be any differentiable function, $\mathcal{C}$ be any convex body, $\eta\in \R$, $\bx\in \mathcal{C}$ and $\by \triangleq \mathcal{P}_{\mathcal{C}}(\bx-\eta \nabla g(\bx))$. For any $\bz\in \mathcal{C}$, it holds
    \begin{equation}\label{eq: gradient step}
        2\eta \langle \by-\bz, \nabla g(\bx)\rangle\leq \|\bx-\bz\|^2-\|\by-\bz\|^2-\|\by-\bx\|^2.
    \end{equation}
    Moreover, if $\by \triangleq \mathcal{P}_{\mathcal{C}}(\bx+\eta \nabla g(\bx))$. For any $\bz\in \mathcal{C}$, it holds
    \begin{equation}\label{eq: gradient step 2}
        2\eta \langle \by-\bz, \nabla g(\bx)\rangle\geq \|\by-\bz\|^2+\|\by-\bx\|^2-\|\bx-\bz\|^2.
    \end{equation}
\end{lemma}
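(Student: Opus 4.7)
The plan is to derive both inequalities directly from the Bourbaki--Cheney--Goldstein inequality (\cref{lemma:9}) combined with the polarization identity $2\langle \mathbf{a},\mathbf{b}\rangle = \|\mathbf{a}+\mathbf{b}\|^2 - \|\mathbf{a}\|^2 - \|\mathbf{b}\|^2$. Since the two inequalities have identical structure and differ only in the sign in front of $\eta$, I would prove the first one in full and obtain the second by an almost verbatim repetition of the argument.

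For \eqref{eq: gradient step}, I would instantiate \cref{lemma:9} with the pre-projection point taken to be $\bx - \eta \nabla g(\bx)$ and the feasible test point taken to be any $\bz\in\mathcal{C}$. This yields
\begin{equation*}
\langle \by - \bx + \eta \nabla g(\bx),\, \bz - \by\rangle \geq 0,
\end{equation*}
which separates into a ``projection'' term and a ``gradient'' term and rearranges to
\begin{equation*}
\eta \langle \nabla g(\bx),\, \by-\bz\rangle \leq \langle \by-\bx,\, \bz-\by\rangle.
\end{equation*}
Doubling both sides and applying polarization to the right-hand side with $\mathbf{a}=\by-\bx$ and $\mathbf{b}=\bz-\by$ (so that $\mathbf{a}+\mathbf{b}=\bz-\bx$) produces exactly $\|\bx-\bz\|^2 - \|\by-\bx\|^2 - \|\by-\bz\|^2$, which is the right-hand side of \eqref{eq: gradient step}.

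For \eqref{eq: gradient step 2}, I would re-run the same chain with $\bx+\eta\nabla g(\bx)$ in place of $\bx-\eta\nabla g(\bx)$. The sign of the gradient term flips, and after the analogous rearrangement I obtain
\begin{equation*}
\eta \langle \nabla g(\bx),\, \by-\bz\rangle \geq \langle \by-\bx,\, \by-\bz\rangle.
\end{equation*}
Applying polarization this time to $\mathbf{a}=\by-\bx$ and $\mathbf{b}=\by-\bz$ (so that $\mathbf{a}-\mathbf{b}=\bz-\bx$) converts the right-hand side into $\|\by-\bx\|^2 + \|\by-\bz\|^2 - \|\bx-\bz\|^2$, which is the desired bound.

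There is essentially no obstacle in this proof; it is a standard three-point inequality for projected gradient steps. The only care required is to keep the signs straight when swapping $\eta \to -\eta$, and to select the correct form of the polarization identity (based on $\|\mathbf{a}+\mathbf{b}\|^2$ versus $\|\mathbf{a}-\mathbf{b}\|^2$) for each of the two cases so that the cross term $\pm\bx\mp\bz$ collapses into the $\|\bx-\bz\|^2$ term appearing on the right-hand side.
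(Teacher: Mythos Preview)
Your proposal is correct and follows essentially the same approach as the paper: both proofs hinge on the Bourbaki--Cheney--Goldstein inequality (\cref{lemma:9}) applied at the pre-projection point, and both obtain \eqref{eq: gradient step 2} from \eqref{eq: gradient step} by flipping the sign of $\eta$. Your presentation is slightly more streamlined---you apply \cref{lemma:9} once and finish with a polarization identity, whereas the paper starts from the nonexpansiveness bound $\|\by-\bz\|^2\le\|\bx-\eta\nabla g(\bx)-\bz\|^2$, expands, and then still invokes \cref{lemma:9} to control the cross term---but the underlying argument is the same.
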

\begin{proof}
    \begin{equation}\label{eq: gradient step 1}
       \begin{aligned}
           &\|\by-\bz\|^2
           \\&\leq \|\bx-\bz -\eta \nabla g(\bx)\|^2
           \\&= \|\bx-\bz\|^2 -2\eta\langle \bx-\bz, \nabla g(\bx)\rangle + \|\eta\nabla g(\bx)\|^2
           \\&=\|\bx-\bz\|^2 - 2\eta\langle \bx-\by, \nabla g(\bx)\rangle - 2\eta\langle \by-\bz, \nabla g(\bx)\rangle + \|\by-\bx\|^2
           \\&= \|\bx-\bz\|^2 - 2\langle \bx-\by, \bx-\by\rangle - 2\langle \bx-\by, \by-(\bx-\eta g(\bx))\rangle - 2\eta\langle \by-\bz, \nabla g(\bx)\rangle + \|\by-\bx\|^2
           \\&\leq \|\bx-\bz\|^2 - 2\|\by-\bx\|^2- 2\eta\langle \by-\bz, \nabla g(\bx)\rangle + \|\by-\bx\|^2
           \\&=\|\bx-\bz\|^2 - \|\by-\bx\|^2- 2\eta\langle \by-\bz, \nabla g(\bx)\rangle
       \end{aligned} 
    \end{equation}
    The second inequality is because of \cref{lemma:9}. (\ref{eq: gradient step}) comes immediately from (\ref{eq: gradient step 1}). \eqref{eq: gradient step 2} is obtained by substitute $\eta$ with $-\eta$ in \eqref{eq: gradient step}.
\end{proof}
\section{Proofs in Section~\ref{sec:non-oblivious}}\label{Appendix:A}
\subsection{Proof of \texorpdfstring{\cref{lemma:2}}{1}} \label{proof:lem1}
First, we review some basic inequalities for $\gamma$-weakly continuous DR-submodular function $f$.
\begin{lemma}\label{lemma:a1}
For a monotone, differentiable, and $\gamma$-weakly continuous DR-submodular function $f$, we have
\begin{enumerate}
\item For any $\boldsymbol{x}\le\boldsymbol{y}$, we have $\langle\boldsymbol{y}-\boldsymbol{x}, \nabla f(\boldsymbol{x})\rangle\ge \gamma(f(\boldsymbol{y})-f(\boldsymbol{x}))$ and $\langle \boldsymbol{y}-\boldsymbol{x}, \nabla f(\boldsymbol{y})\rangle\le\frac{1}{\gamma}(f(\boldsymbol{y})-f(\boldsymbol{x}))$.
\item For any $\boldsymbol{x},\boldsymbol{y}\in\mathcal{X}$, we also could derive $ \langle \boldsymbol{y}-\boldsymbol{x}, \nabla f(\boldsymbol{x})\rangle\ge\gamma f(\boldsymbol{x}\lor \boldsymbol{y})+\frac{1}{\gamma}f(\boldsymbol{x}\land \boldsymbol{y} )-(\gamma+\frac{1}{\gamma})f(\boldsymbol{x})$.
\end{enumerate}\end{lemma}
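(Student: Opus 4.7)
My plan is to prove the two parts in order, with part (1) giving two one-sided gradient inequalities along a line segment, and part (2) reducing to part (1) by decomposing the displacement $\by-\bx$ through the lattice points $\bx\lor\by$ and $\bx\land\by$.

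For part (1), I would parametrize the segment from $\bx$ to $\by$ by $\phi(t)=\bx+t(\by-\bx)$, $t\in[0,1]$, and write $f(\by)-f(\bx)=\int_{0}^{1}\langle\by-\bx,\nabla f(\phi(t))\rangle\,dt$ by the fundamental theorem of calculus. Since $\bx\le\by$, we have $\bx\le\phi(t)\le\by$ componentwise for every $t$, and $\by-\bx\ge\boldsymbol{0}$. Invoking the definition $\gamma=\inf_{\bu\le\bv}\inf_{i}[\nabla f(\bu)]_{i}/[\nabla f(\bv)]_{i}$ (with the convention $[\nabla f(\bv)]_{i}\ge 0$ by monotonicity), I obtain the pointwise gradient sandwich $\gamma\nabla f(\by)\le\nabla f(\phi(t))\le\gamma^{-1}\nabla f(\bx)$ for all $t\in[0,1]$. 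Pairing each side with the nonnegative vector $\by-\bx$ and integrating over $t$ gives the two desired inequalities.

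For part (2), the idea is to split the general displacement into two monotone-comparable pieces. Writing
\begin{equation*}
\by-\bx=(\bx\lor\by-\bx)-(\bx-\bx\land\by),
\end{equation*}
and noting that both pieces are coordinatewise nonnegative, I expand $\langle\by-\bx,\nabla f(\bx)\rangle$ into two inner products. The first piece pairs $\nabla f(\bx)$ with $(\bx\lor\by)-\bx\ge\boldsymbol{0}$ where $\bx\le\bx\lor\by$, so applying the first inequality of part (1) with the pair $(\bx,\bx\lor\by)$ yields a lower bound of $\gamma(f(\bx\lor\by)-f(\bx))$. The second piece pairs $\nabla f(\bx)$ with $\bx-(\bx\land\by)\ge\boldsymbol{0}$ where $\bx\land\by\le\bx$, so applying the second inequality of part (1) with the pair $(\bx\land\by,\bx)$ gives an upper bound of $\gamma^{-1}(f(\bx)-f(\bx\land\by))$. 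Subtracting and rearranging immediately yields $\gamma f(\bx\lor\by)+\gamma^{-1}f(\bx\land\by)-(\gamma+\gamma^{-1})f(\bx)$.

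I do not expect any serious obstacle here: the main subtlety is just making sure the componentwise-nonnegativity of $\by-\bx$, $\bx\lor\by-\bx$, and $\bx-\bx\land\by$ is used to preserve inequality directions when taking inner products, and that the monotonicity of $f$ justifies $[\nabla f(\bv)]_{i}\ge 0$ so the defining ratio is well-posed with the sign conventions above. Once those points are fixed, the derivation is essentially two lines of calculus plus one algebraic decomposition.
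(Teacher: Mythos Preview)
Your proposal is correct and follows essentially the same argument as the paper: integrate the gradient along the segment using the sandwich $\gamma\nabla f(\by)\le\nabla f(\phi(t))\le\gamma^{-1}\nabla f(\bx)$ for part (1), and for part (2) use the lattice decomposition $\by-\bx=(\bx\lor\by-\bx)+(\bx\land\by-\bx)$ together with the two inequalities from part (1) applied to $(\bx,\bx\lor\by)$ and $(\bx\land\by,\bx)$ respectively.
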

\begin{proof} First, according to the definition of DR-submodular function and monotone property in \cref{sec:pre}, we have $\nabla f(\boldsymbol{x})\ge\gamma\nabla f(\boldsymbol{y})$, if $\boldsymbol{x}\le\boldsymbol{y}$. Thus, for any $\boldsymbol{x}\le\boldsymbol{y}$, we have
\begin{equation}\label{equ:11}
    \begin{aligned}
    &f(\boldsymbol{y})-f(\boldsymbol{x})=\int_{0}^{1}\langle \boldsymbol{y}-\boldsymbol{x}, \nabla f(\boldsymbol{x}+z(\boldsymbol{y}-\boldsymbol{x}))\rangle \mathrm{d}z\le \frac{1}{\gamma}\langle \boldsymbol{y}-\boldsymbol{x}, \nabla f(\boldsymbol{x}))\rangle,\\
    &f(\boldsymbol{y})-f(\boldsymbol{x})=\int_{0}^{1}\langle \boldsymbol{y}-\boldsymbol{x}, \nabla f(\boldsymbol{x}+z(\boldsymbol{y}-\boldsymbol{x}))\rangle \mathrm{d}z\ge \gamma\langle \boldsymbol{y}-\boldsymbol{x}, \nabla f(\boldsymbol{y})\rangle,
    \end{aligned} 
\end{equation} where these two inequalities follow from  $\boldsymbol{y}\ge\boldsymbol{x}+z(\boldsymbol{y}-\boldsymbol{x})\ge\boldsymbol{x}$ such that $\frac{1}{\gamma}\nabla f(\boldsymbol{x})\ge\nabla f(\boldsymbol{x}+z(\boldsymbol{y}-\boldsymbol{x}))\ge\gamma\nabla f(\boldsymbol{y})$ for any $z\in[0,1]$. We finish the proof of the first inequality in \cref{lemma:a1}.

Then, from \eqref{equ:11}, we could derive that
\begin{equation}\label{equ:12}
\begin{aligned}
 &\langle \boldsymbol{y}\lor \boldsymbol{x} -\boldsymbol{x}, \nabla f(\boldsymbol{x})\rangle \ge \gamma f(\boldsymbol{y}\lor \boldsymbol{x})-\gamma f(\boldsymbol{x}), \\
 &\langle \boldsymbol{x}\land \boldsymbol{y} -\boldsymbol{x}, \nabla f(\boldsymbol{x})\rangle \ge \frac{1}{\gamma} (f(\boldsymbol{x}\land\boldsymbol{y})-f(\boldsymbol{x})),
 \end{aligned}
 \end{equation} where $ \boldsymbol{y}\lor \boldsymbol{x}\ge\boldsymbol{x}$ and $\boldsymbol{x}\land \boldsymbol{y}\le\boldsymbol{x}$.

Merging the two equations in \eqref{equ:12}, we have, for any $\boldsymbol{x}$ and $\boldsymbol{y}\in\mathcal{X}$,
\begin{equation}\label{equ:13}
\begin{aligned}
\langle\boldsymbol{y}-\boldsymbol{x}, \nabla f(\boldsymbol{x})\rangle
  &= \langle \boldsymbol{y}\lor \boldsymbol{x} -\boldsymbol{x}, \nabla f(\boldsymbol{x})\rangle+\langle \boldsymbol{x}\land \boldsymbol{y} -\boldsymbol{x}, \nabla f(\boldsymbol{x})\rangle\\
  &\ge\gamma f(\boldsymbol{x}\lor \boldsymbol{y})+\frac{1}{\gamma}f(\boldsymbol{x}\land \boldsymbol{y} )-(\gamma+\frac{1}{\gamma})f(\boldsymbol{x}),  
\end{aligned}
\end{equation} 
where $\boldsymbol{x}\land \boldsymbol{y}+ \boldsymbol{x}\lor \boldsymbol{y}=\boldsymbol{x}+\boldsymbol{y}$.
Thus, we prove the second inequality in \cref{lemma:a1}. 
\end{proof}

Next, with the~\cref{lemma:a1}, we prove the~\cref{lemma:2}. 

\begin{proof}From \cref{equ:13}, if $\boldsymbol{x}$ is a stationary point of $f$ in domain $\mathcal{C}$, we have $(\gamma+\frac{1}{\gamma})f(\boldsymbol{x})\ge\gamma f(\boldsymbol{x}\lor \boldsymbol{y})+\frac{1}{\gamma}f(\boldsymbol{x}\land \boldsymbol{y} )  $ for any $\boldsymbol{y}\in\mathcal{C}$. Due to the monotone and non-negative property, $f(\boldsymbol{x})\ge\frac{\gamma^{2}}{\gamma^{2}+1}\max_{\boldsymbol{y}\in\mathcal{C}}f(\boldsymbol{y})$.
\end{proof}
\subsection{Proof of \texorpdfstring{\cref{lemma:3}}{2}} \label{proof:lem2}
\begin{proof} First, we obtain an inequality about $\langle \boldsymbol{x},\nabla F(\boldsymbol{x})\rangle$, i.e., 
\begin{equation}\label{equ:appendix1}
    \begin{aligned}
    \langle \boldsymbol{x},\nabla F(\boldsymbol{x})\rangle&=\int_{0}^{1} w(z)\langle \boldsymbol{x},\nabla f(z\cdot \boldsymbol{x})\rangle\mathrm{d}z\\ 
    &=\int^{1}_{0}w(z) \mathrm{d}f(z\cdot \boldsymbol{x})\\
    &=w(z)f(z\cdot \boldsymbol{x})|_{z=0}^{z=1}-\int_{0}^{1}f(z\cdot \boldsymbol{x})w'(z)\mathrm{d}z\\
    &\le w(1)f(\boldsymbol{x})-\int_{0}^{1}f(z\cdot \boldsymbol{x})w'(z)\mathrm{d}z.
    \end{aligned}
\end{equation}
Then, we also prove some properties about $\langle \boldsymbol{y},\nabla F(\boldsymbol{x})\rangle$, namely, 
\begin{equation}\label{equ:appendix2}
    \begin{aligned}
    \langle \boldsymbol{y},\nabla F(\boldsymbol{x})\rangle&=\int_{0}^{1} w(z)\langle \boldsymbol{y},\nabla f(z\cdot \boldsymbol{x})\rangle\mathrm{d}z\\ 
    &\ge\int^{1}_{0}w(z)\langle \boldsymbol{y}\lor(z\cdot \boldsymbol{x})-z\cdot \boldsymbol{x},\nabla f(z\cdot \boldsymbol{x})\rangle\mathrm{d}z\\
    &\ge\gamma\int^{1}_{0}w(z)(f(\boldsymbol{y}\lor(z\cdot \boldsymbol{x}))-f(z\cdot \boldsymbol{x}))\mathrm{d}z \\
    &\ge(\gamma\int_{0}^{1}w(z)\mathrm{d}z)f(\boldsymbol{y})-\int^{1}_{0}\gamma w(z)f(z\cdot \boldsymbol{x})\mathrm{d}z,
    \end{aligned}
\end{equation}
where the first inequality follows from $\boldsymbol{y}\ge\boldsymbol{y}\lor(z\cdot \boldsymbol{x})-z\cdot \boldsymbol{x}\ge\boldsymbol{0}$ and $\nabla f(z\cdot \boldsymbol{x})\ge\boldsymbol{0}$; the second one comes from the \cref{lemma:2}; and the final inequality follows from $f(\boldsymbol{y}\lor(z\cdot \boldsymbol{x}))\ge f(\boldsymbol{y})$.
    
Finally, putting above the inequality~\eqref{equ:appendix1} and inequality~\eqref{equ:appendix2} together, we have
\begin{equation}\label{equ:rev1}
 \begin{aligned}
    \langle\boldsymbol{y}-\boldsymbol{x}, \nabla F(\boldsymbol{x})\rangle&\ge(\gamma\int_{0}^{1}w(z)\mathrm{d}z)f(\boldsymbol{y})-w(1)f(\boldsymbol{x})+\int^{1}_{0}(w'(z)-\gamma w(z))f(z\cdot \boldsymbol{x})\mathrm{d}z\\
    &=(\gamma\int_{0}^{1}w(z)\mathrm{d}z)(f(\boldsymbol{y})-\frac{w(1)+\int^{1}_{0} (\gamma w(z)-w'(z))\frac{f(z\cdot \boldsymbol{x})}{f(\boldsymbol{x})}\mathrm{d}z}{\gamma\int_{0}^{1}w(z)\mathrm{d}z}f(\boldsymbol{x}))\\
    &=(\gamma\int_{0}^{1}w(z)\mathrm{d}z)(f(\boldsymbol{y})-\theta(w,f,\boldsymbol{x})f(\boldsymbol{x}))\\
    &\ge(\gamma\int_{0}^{1}w(z)\mathrm{d}z)(f(\boldsymbol{y})-\theta(w)f(\boldsymbol{x})),
    \end{aligned}
    \end{equation} where the final inequality follows from $\theta(w)=\max_{f,\boldsymbol{x}}\theta(w,f,\boldsymbol{x})$.
\end{proof}

\subsection{Proof of \texorpdfstring{\cref{thm:1}}{3}} \label{proof:thm1}
\begin{proof}
In this proof, we investigate the optimal value and solution about the following optimization problem:
\begin{equation}\label{equ:19}
   \begin{aligned}
    \min_{w}\theta(w)=\min_{w}\max_{f,\boldsymbol{x}}& \frac{w(1)+\int^{1}_{0} (\gamma w(z)-w'(z))\frac{f(z\cdot \boldsymbol{x})}{f(\boldsymbol{x})}\mathrm{d}z}{\gamma\int_{0}^{1}w(z)\mathrm{d}z}\\
          \rm{s.t.} \ &w(z)\ge 0,\\
          &w(z)\in C^{1}[0,1],\\
          &f(\boldsymbol{x})>0, \\
          &\nabla f(\boldsymbol{x}_1)\ge\gamma\nabla f(\boldsymbol{y}_1)\ge\boldsymbol{0},  \forall \boldsymbol{x}_1\le \boldsymbol{y}_1.
    \end{aligned} 
\end{equation}

(1) Before going into the detail, we first consider a new optimization problem as follows:
\begin{equation}\label{equ:20}
    \begin{aligned}
    \min_{w}\max_{R}\ \ &\theta(w,R)\\
          \rm{s.t.} \ &w(z)\ge 0,\\
          &w(z)\in C^{1}[0,1],\\
          &\gamma\int_{0}^{1}w(z)\mathrm{d}z=1,\\
          &R(z)\ge 0, \\
          &R(1)=1,\\
          & R'(z_{1})\ge\gamma R'(z_{2})\ge 0\ (\forall z_{1}\le z_{2}, z_{1}, z_{2}\in[0,1]),
    \end{aligned} 
\end{equation}where $\theta(w,R)=w(1)+\int^{1}_{0} (\gamma w(z)-w'(z))R(z)\mathrm{d}z$.

Next, we prove the equivalence between problem~\eqref{equ:19} and problem~\eqref{equ:20}. For any fixed point $\boldsymbol{x}\in\mathcal{C}$, we consider the function $m(z)=\frac{f(z\cdot \boldsymbol{x})}{f(\boldsymbol{x})}$ (we assume $f(\boldsymbol{x})>0$), which is satisfied with the constraints of problem~\eqref{equ:20}, i.e., $m(z)\ge 0$, $m(1)=1$, and $m'(z_{1})=\frac{\langle\boldsymbol{x}, \nabla f(z_{1}\cdot\boldsymbol{x})\rangle}{f(\boldsymbol{x})}\ge\frac{\gamma\langle\boldsymbol{x}, \nabla f(z_{2}\cdot\boldsymbol{x})\rangle}{f(\boldsymbol{x})}=\gamma m'(z_{2})\ge 0$ $(\forall z_{1}\le z_{2}, z_{1}, z_{2}\in[0,1])$. Therefore, the optimal objective value of problem~\eqref{equ:20} is larger than that of problem~\eqref{equ:19}. Moreover, for any $R(z)$ satisfying the constrains in problem~\eqref{equ:20}, we can design a function $f_{1}(\boldsymbol{x})=R(x_{1})$, where $x_{1}$ (we assume $x_{1}\in[0,1]$ in the~\cref{sec:pre}) is the first coordinate of point $\boldsymbol{x}$. Also, $f_{1}(\boldsymbol{x})\ge0$ and when $\boldsymbol{x}\le\boldsymbol{y}$, we have $\nabla f_{1}(\boldsymbol{x})\ge\gamma\nabla f_{1}(\boldsymbol{y})$. Hence, $f_{1}$ is also satisfied with the constraints of problem~\eqref{equ:19}. If we set $\boldsymbol{x}=(1,0,\dots,0)\in\mathcal{X}$, $\frac{f_{1}(z\cdot \boldsymbol{x})}{f_{1}(\boldsymbol{x})}=R(z)$ such that the optimal objective value of problem~\eqref{equ:19} is larger than that of problem~\eqref{equ:20}. As a result, the optimization problem~\eqref{equ:20} is equivalent to the problem~\eqref{equ:19}.

(2) Then, we prove the $\min_{w}\max_{f,\boldsymbol{x}}\theta(w,f,\boldsymbol{x})\ge\frac{1}{1-e^{-\gamma}}$. Setting $\widehat{R}(z)=\frac{1-e^{-\gamma z}}{1-e^{-\gamma}}$, we could verify that, if $\gamma\int_{0}^{1}w(z)\mathrm{d}z=1$, 
\begin{equation}\label{equ:21}
    \begin{aligned}
    \theta(w,\widehat{R})&=w(1)+\int^{1}_{0} (\gamma w(z)-w'(z))\widehat{R}(z)\mathrm{d}z\\
    &=w(1)+\frac{\int^{1}_{0} (\gamma w(z)-w'(z))\mathrm{d}z+\int^{1}_{0} e^{-\gamma z}(w'(z)-\gamma w(z))\mathrm{d}z}{1-e^{-\gamma}}\\
    &=w(1)+\frac{1-w(1)+w(0)+e^{-\gamma z}w(z)|_{z=0}^{z=1}}{1-e^{-\gamma}}\\
    &=w(1)+\frac{1-w(1)+w(0)+e^{-\gamma}w(1)-w(0)}{1-e^{-\gamma}}\\
    &=\frac{1}{1-e^{-\gamma}}.
    \end{aligned}
\end{equation}
Also, $\widehat{R}$ is satisfied with the constraints of optimization problem~\eqref{equ:20}, i.e., for any $z\in[0,1]$, $\widehat{R}(z)\ge0$, $\widehat{R}(1)=1$ and $\widehat{R}'(x)=\frac{\gamma e^{-\gamma x}}{1-e^{-\gamma}}\ge\frac{\gamma^{2} e^{-\gamma y}}{1-e^{-\gamma}}=\gamma\widehat{R}'(y)$ where $x\le y$ and $0\le\gamma\le 1$. Therefore, $\max_{R}\theta(w,R)\ge\theta(w,\widehat{R})=\frac{1}{1-e^{-\gamma}}$ and $\min_{w}\max_{f,\boldsymbol{x}}\theta(w,f,\boldsymbol{x})=\min_{w}\max_{R}\theta(w,R)\ge\frac{1}{1-e^{-\gamma}}$.

(3) We consider $\widehat{w}(z)=e^{\gamma(z-1)}$ and observe that $\widehat{w}'(z)=\gamma\widehat{w}(z)$ such that $\theta(\widehat{w},f,\boldsymbol{x})=\frac{\widehat{w}(1)+\int^{1}_{0} (\gamma \widehat{w}(z)-\widehat{w}'(z))\frac{f(z\cdot \boldsymbol{x})}{f(\boldsymbol{x})}\mathrm{d}z}{\gamma\int_{0}^{1}\widehat{w}(z)\mathrm{d}z}=\frac{\widehat{w}(1)}{\gamma\int_{0}^{1}\widehat{w}(z)\mathrm{d}z}=\frac{1}{1-e^{-\gamma}}$ for any function $f$. Also, $\widehat{w}(z)$ is satisfied with the constraints in optimization problem~\eqref{equ:19}, namely, $\widehat{w}(z)\ge 0$ and $\widehat{w}\in C^{1}[0,1]$. Therefore, $\frac{1}{1-e^{-\gamma}}=\min_{w}\max_{f,\boldsymbol{x}}\theta(w,f,\boldsymbol{x})$ and $e^{\gamma(z-1)}\in\arg\min_{w}\theta(w)$.
\end{proof}

\subsection{Proof of \texorpdfstring{\cref{thm:2}}{4}} \label{proof:thm2}
\begin{proof} 

For (i), we first verify that the value $\int_{0}^{1}\frac{e^{\gamma(z-1)}}{z}f(z\cdot \boldsymbol{x})\mathrm{d}z$ is controlled via $f(\boldsymbol{x})$ for any $\boldsymbol{x}\in\mathcal{X}$. For any $\delta\in(0,1)$,  we first have
\begin{equation}\label{equ:22}
\begin{aligned}
 &\int_{0}^{1}\frac{e^{\gamma(z-1)}}{z}f(z\cdot \boldsymbol{x})\mathrm{d}z\\
 &=(\int_{0}^{\delta}+\int_{\delta}^{1}) \frac{e^{\gamma(z-1)}}{z}f(z\cdot \boldsymbol{x})\mathrm{d}z\\
 &\le\int_{0}^{\delta}\frac{f(z\cdot \bx)}{z}\mathrm{d}z+(\int_{\delta}^{1}\frac{1}{z}\mathrm{d}z)f(\boldsymbol{x})\\
 &= \int_{0}^{\delta}\frac{f(z\cdot \bx)}{z}\mathrm{d}z+\ln(\frac{1}{\delta})f(\boldsymbol{x})\\
 &= \int_{0}^{\delta}\frac{\int_{0}^{z}\langle \boldsymbol{x},\nabla f(u\cdot\boldsymbol{x})\rangle \mathrm{d}u}{z}\mathrm{d}z+\ln(\frac{1}{\delta})f(\boldsymbol{x}),
 \end{aligned}
\end{equation} where the first inequality follows from $f(z\cdot \boldsymbol{x})\le f(\boldsymbol{x})$ and $\delta\in[0,1]$, and the final equality from $\int_{0}^{z}\langle \boldsymbol{x},\nabla f(u\cdot\boldsymbol{x})\rangle \mathrm{d}u=f(z\cdot \boldsymbol{x})-f(\boldsymbol{0})=f(z\cdot \boldsymbol{x})$. 

Next, \begin{equation}\label{equ:23}
    \begin{aligned}
     \int_{0}^{\delta}\frac{\int_{0}^{z}\langle \boldsymbol{x},\nabla f(u\cdot\boldsymbol{x})\rangle \mathrm{d}u}{z}\mathrm{d}z&=\int_{0}^{\delta}\langle \boldsymbol{x},\nabla f(u\cdot\boldsymbol{x})\rangle \int_{u}^{\delta}\frac{1}{z}\mathrm{d}z\mathrm{d}u\\
 &=\int_{0}^{\delta}\langle \boldsymbol{x},\nabla f(u\cdot\boldsymbol{x})\rangle \ln(\frac{\delta}{u})\mathrm{d}u\\
 &=\int_{0}^{\delta}(\langle \boldsymbol{x},\nabla f(u\cdot\boldsymbol{x})-\nabla f(\boldsymbol{x})\rangle+\langle \boldsymbol{x}, \nabla f(\boldsymbol{x})\rangle)\ln(\frac{\delta}{u})\mathrm{d}u\\
 &\le\int_{0}^{\delta}\ln(\frac{\delta}{u})\mathrm{d}u(L r^2(\mathcal{X})+\frac{f(\boldsymbol{x})}{\gamma})\\
  &=(u-u\ln(\frac{u}{\delta}))|_{u=0}^{\delta}(L r^2(\mathcal{X})+\frac{f(\boldsymbol{x})}{\gamma})\\
  &=\delta(L r^2(\mathcal{X})+\frac{f(\boldsymbol{x})}{\gamma}),
    \end{aligned}
\end{equation} where the first equality follows from the Fubini's theorem; in the first inequality, we use  $\langle \boldsymbol{x},\nabla f(u\cdot\boldsymbol{x})-\nabla f(\boldsymbol{x})\rangle\le L\left\|\boldsymbol{x}\right\|^{2}$, which is derived from the $L$-smooth property, and $\langle \boldsymbol{x}, \nabla f(\boldsymbol{x})\rangle\le\frac{  f(\boldsymbol{x})}{\gamma}$, following from the \cref{lemma:2} and $f(\boldsymbol{0})=0$; the final equality follows from $\lim_{u\rightarrow 0_{+}} u\ln(u)=0$.

From \cref{equ:22} and \cref{equ:23}, for any $\delta\in(0,1)$, we have 
\begin{equation}\label{equ:24}
    \begin{aligned}
    \int_{0}^{1}\frac{e^{\gamma(z-1)}}{z}f(z\cdot \boldsymbol{x})\mathrm{d}z&\le \ln(\frac{1}{\delta})f(\boldsymbol{x})+\delta(L_{*}r^2(\mathcal{X})+\frac{f(\boldsymbol{x})}{\gamma})\\
     &\le\ln(\frac{1}{\delta})(f(\boldsymbol{x})+c)+\delta(L_{*}r^2(\mathcal{X})+\frac{f(\boldsymbol{x})}{\gamma}),
    \end{aligned}
\end{equation} where the second inequality comes from $c>0$.

If we set $\delta=\frac{f(\boldsymbol{x})+c}{\frac{f(\boldsymbol{x})}{\gamma}+L r^2(\mathcal{X})}\in[0,1]$ ($0\le\gamma\le 1$ and $0<c\le L_{*}r^{2}(\mathcal{X})$), we have 
\begin{align*}
    \int_{0}^{1}\frac{e^{\gamma(z-1)}}{z}f(z\cdot \boldsymbol{x})\mathrm{d}z&\le\ln(\frac{1}{\delta})(f(\boldsymbol{x})+c)+\delta(L_{*}r^2(\mathcal{X})+\frac{f(\boldsymbol{x})}{\gamma})\\
     &=(1+\ln(\frac{1}{\delta}))(f(\boldsymbol{x})+c)\\
     &\le(1+\ln(\tau)(f(\boldsymbol{x})+c),
\end{align*} where the final inequality is derived from $\frac{1}{\delta}\le\tau$ and $\tau=max(\frac{1}{\gamma},\frac{L_{*}r^{2}(\mathcal{X})}{c})$.

As a result, the value $\int_{0}^{1}\frac{e^{\gamma(z-1)}}{z}f(z\cdot \boldsymbol{x})\mathrm{d}z$ is well-defined. We also could verify that $\nabla \int_{0}^{1}\frac{e^{\gamma(z-1)}}{z}f(z\cdot \boldsymbol{x})\mathrm{d}z=\int_{0}^{1}e^{\gamma(z-1)}\nabla f(z\cdot \boldsymbol{x})\mathrm{d}z$ so that we could set $F(\boldsymbol{x})=\int_{0}^{1}\frac{e^{\gamma(z-1)}}{z}f(z\cdot \boldsymbol{x})\mathrm{d}z$.

For the final one, 
\begin{equation}\label{equ:25}
    \begin{aligned}
    \left\|\nabla F(\boldsymbol{x})-\nabla F(\boldsymbol{y})\right\|&=\left\|\int_{0}^{1}e^{\gamma(z-1)}(\nabla f(z\cdot \boldsymbol{x})-\nabla f(z\cdot \boldsymbol{y}))\mathrm{d}z\right\|\\
    &\le\int_{0}^{1}e^{\gamma(z-1)}\left\|\nabla f(z\cdot \boldsymbol{x})-\nabla f(z\cdot \boldsymbol{y})\right\|\mathrm{d}z\\
    &\le L(\int_{0}^{1}e^{\gamma(z-1)}z \mathrm{d}z)\left\|\boldsymbol{x}-\boldsymbol{y}\right\|\\
    &=\frac{\gamma+e^{-\gamma}-1}{\gamma^2}L\left\|\boldsymbol{x}-\boldsymbol{y}\right\|.
    \end{aligned}
\end{equation}
and
\begin{equation}
    \begin{aligned}
        |F(\bx)-F(\by)|&\leq \int_0^1 \frac{e^{\gamma(z-1)}}{z} \left|f(z\cdot \bx)-f(z\cdot\by)\right|\mathrm{d}z
        \\&\leq \int_0^1 e^{\gamma(z-1)} L_1\|\bx-\by\|\mathrm{d}z=\frac{1-e^{-\gamma}}{\gamma}L_1\|\bx-\by\|.
    \end{aligned}
\end{equation}
\end{proof}

\subsection{Proof of \texorpdfstring{\cref{nonmonotone auxiliary}}{5}} \label{proof:nonmontone auxilliary}
Before proving this lemma, we first show a lemma that bounds the $f(\boldsymbol{x}\vee \boldsymbol{y})$ for non-monotone DR-submodular function $f$.
\begin{lemma}[Restatement of Lemma 3 in \citep{bian2017continuous}] \label{vee bound}
    Given $\bx\in\X $, for any $\by\in \X$, it holds $f(\bx \vee \by)\geq (1-\|\bx\|_{\infty}) f(\by)$.
\end{lemma}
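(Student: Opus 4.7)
The plan is to exploit the concavity of $f$ along any non-negative direction---which follows from DR-submodularity since every entry of $\nabla^{2}f$ is non-positive---together with the non-negativity of $f$. Rather than moving only along the segment from $\by$ to $\bx\vee\by$, I would \emph{extend} this segment past the endpoint $\bx\vee\by$ to a farther still-feasible point, and then use the resulting three collinear points in a single concavity interpolation; the ratio $(1-\|\bx\|_{\infty})$ will appear naturally as the concavity coefficient.

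Concretely, set $\boldsymbol{u}:=\bx\vee\by-\by=(\bx-\by)_{+}\ge\boldsymbol{0}$ and define
\begin{equation*}
\boldsymbol{p}(t)\;=\;\by+t\,\boldsymbol{u},\qquad t\in\bigl[0,\,1/\|\bx\|_{\infty}\bigr]
\end{equation*}
(the degenerate case $\|\bx\|_{\infty}=0$ gives $\bx\vee\by=\by$, and the case $\|\bx\|_{\infty}=1$ makes the claim trivial because $f\ge 0$, so I may assume $\|\bx\|_{\infty}\in(0,1)$). The first step is to verify that the whole segment lies in $\X=[0,1]^{n}$: non-negativity is immediate, and for any coordinate $i$ with $y_{i}<x_{i}$ the upper bound $p_{i}(1/\|\bx\|_{\infty})\le 1$ reduces to $x_{i}-y_{i}\le\|\bx\|_{\infty}(1-y_{i})$, which follows at once from $x_{i}\le\|\bx\|_{\infty}$ and $0\le y_{i}\le 1$; in coordinates with $y_{i}\ge x_{i}$ one simply has $p_{i}(t)=y_{i}\in[0,1]$.

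Once the segment is inside $\X$, DR-submodularity implies that $\phi(t):=f(\boldsymbol{p}(t))$ is concave on $[0,1/\|\bx\|_{\infty}]$, since $\phi''(t)=\boldsymbol{u}^{\top}\nabla^{2}f(\boldsymbol{p}(t))\,\boldsymbol{u}\le 0$ (every coordinate of $\boldsymbol{u}$ is non-negative and every entry of $\nabla^{2}f$ is non-positive). The main step is then a single concavity interpolation at the ordered points $0<1<1/\|\bx\|_{\infty}$:
\begin{equation*}
\phi(1)\;\ge\;(1-\|\bx\|_{\infty})\,\phi(0)+\|\bx\|_{\infty}\,\phi\!\left(1/\|\bx\|_{\infty}\right).
\end{equation*}
Substituting $\phi(0)=f(\by)$ and $\phi(1)=f(\bx\vee\by)$, and discarding the last term using $\phi(1/\|\bx\|_{\infty})\ge 0$, gives the claimed inequality. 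The only genuinely delicate step is the domain-containment check that legitimises taking $1/\|\bx\|_{\infty}$ as the extended endpoint; everything after that is a one-line application of concavity plus non-negativity of $f$.
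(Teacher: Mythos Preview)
The paper does not supply its own proof of this lemma; it is invoked as a direct restatement of Lemma~3 in \citet{bian2017continuous}. Your argument is correct and is essentially the standard route for this inequality: extend the segment from $\by$ toward $\bx\vee\by$ out to the farthest feasible point $\by+\boldsymbol{u}/\|\bx\|_{\infty}$, use concavity of $f$ along non-negative directions, and discard the non-negative endpoint value. Your feasibility check for the extended endpoint is accurate.

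One minor technical point: you establish concavity via $\phi''(t)=\boldsymbol{u}^{\top}\nabla^{2}f(\boldsymbol{p}(t))\,\boldsymbol{u}\le 0$, which presumes twice differentiability, whereas Assumption~\ref{assumption2} in the paper only demands differentiability. This is easily patched: the paper's characterization of DR-submodularity for differentiable $f$ (namely $\nabla f(\bx_1)\le\nabla f(\bx_2)$ whenever $\bx_1\ge\bx_2$) gives directly that $\phi'(t)=\langle\boldsymbol{u},\nabla f(\boldsymbol{p}(t))\rangle$ is non-increasing in $t$ (since $\boldsymbol{u}\ge\boldsymbol{0}$ and $\boldsymbol{p}(t)$ is coordinate-wise non-decreasing), so $\phi$ is concave without ever touching the Hessian.
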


Now we can prove Lemma \ref{nonmonotone auxiliary}.
\begin{proof}
    For simplicity, let $\bx^{\alpha z}:=(1-\alpha z)\cdot \underline{\bx} +\alpha z \cdot \bx$, we first bound $\langle \by, \nabla F(\bx)\rangle$, 
    \begin{equation}
        \begin{aligned}
            &\langle \by, \nabla F(\bx)\rangle \\
            &= \int_0^1 \omega(z)\langle \by, \nabla f(\bx^{\alpha z})\rangle \mathrm{d}z\\
            &= \int_0^1 \omega(z)\langle \by-\bx^{\alpha z}\wedge \by,\nabla f(\bx^{\alpha z})\rangle \mathrm{d}z + \int_0^1 \omega(z)\langle \bx^{\alpha z}\wedge \by-\bx^{\alpha z},\nabla f(\bx^{\alpha z})\rangle \mathrm{d}z \\
            &\quad\quad  + \int_0^1 \omega(z)\langle\bx^{\alpha z},\nabla F(\bx^{\alpha z})\rangle \mathrm{d}z\\
            &=\int_0^1 \omega(z)\langle \bx^{\alpha z}\vee \by-\bx^{\alpha z},\nabla f(\bx^{\alpha z})\rangle \mathrm{d}z + \int_0^1 \omega(z)\left(f(\bx^{\alpha z}\wedge \by) - f(\bx^{\alpha z}) \right)\mathrm{d}z\\
            &\quad\quad +\int_0^1 \omega(z)\langle\bx^{\alpha z},\nabla f(\bx^{\alpha z})\rangle \mathrm{d}z\\
            &\geq\int_0^1 \omega(z) f(\bx^{\alpha z}\vee \by)  \mathrm{d}z -\int_0^1 2\omega(z) f(\bx^{\alpha z})\mathrm{d}z + \int_0^1 \omega(z)\langle\bx^{\alpha z},\nabla f(\bx^{\alpha z})\rangle \mathrm{d}z.
        \end{aligned}
    \end{equation}
    The third equality is because $\by-\bx^{\alpha z}\wedge \by = \bx^{\alpha z}\vee \by -\bx^{\alpha z}$. Next we bound $\int_0^1 \omega(z) f(\bx^{\alpha z} \vee \by)\mathrm{d}z$.
    \begin{equation}
        \begin{aligned}
            \int_0^1 \omega(z) f(\bx^{\alpha z}\vee \by) \mathrm{d}z &\geq \int_0^1 (1-\|\bx^{\alpha z}\|)\omega(z) f(\by)\mathrm{d}z.\\
            &\geq \int_0^1 \left(1-((1-\alpha z) \|\underline{\bx}\|_{\infty}+\alpha z\|\bx\|_{\infty})\right)\omega(z)f(\by) \mathrm{d}z\\
            &\geq \int_0^1 \left(1-((1-\alpha z) \|\underline{\bx}\|_{\infty}+\alpha z)\right)\omega(z)f(\by) \mathrm{d}z\\
            &=\int_0^1 (1-\alpha z)(1-\|\underline{\bx}\|_{\infty}) \omega(z) f(\by) \mathrm{d}z.
        \end{aligned}
    \end{equation}
    Then, 
    \begin{equation}\label{y bound}
        \begin{aligned}
            \langle \by,\nabla F(\bx)\rangle &\geq \int_0^1 (1-\alpha z)(1-\|\underline{\bx}\|_{\infty}) \omega(z) f(\by) \mathrm{d}z-\int_0^1 2\omega(z) f(\bx^{\alpha z})\\
            &\quad\quad + \int_0^1 \omega(z)\langle\bx^{\alpha z},\nabla f(\bx^{\alpha z})\rangle \mathrm{d}z.
        \end{aligned}
    \end{equation}
    Next we bound $\langle \bx,\nabla F(\bx)\rangle$.
    \begin{equation}\label{x bound}
        \begin{aligned}
            \langle \bx,\nabla F(\bx)\rangle &= \int_0^1 \omega(z)\langle \bx - \bx^{\alpha z},\nabla f(\bx^{\alpha z})\rangle \mathrm{d}z +\int_0^1 \omega(z)\langle\bx^{\alpha z},\nabla f(\bx^{\alpha z})\rangle \mathrm{d}z\\
        \end{aligned}
    \end{equation}
    For the first term,
    \begin{equation}\label{x bound 1}
        \begin{aligned}
            &\int_0^1 \omega(z)\langle \bx - \bx^{\alpha z},\nabla f(\bx^{\alpha z})\rangle \mathrm{d}z
            \\&=\int_0^1 \omega(z)\langle (1-\alpha z)(\bx-\underline{\bx}),\nabla f(\bx^{\alpha z})\rangle \mathrm{d}z\\
            &=\int_0^1 \frac{(1-\alpha z)\omega(z)}{\alpha }\langle \alpha (\bx-\underline{\bx}),\nabla f(\bx^{\alpha z})\rangle \mathrm{d}z\\
            &=\int_0^1 \frac{(1-\alpha z)\omega(z)}{\alpha } d f(\bx^{\alpha z})\\
            &=\frac{(1-\alpha z)\omega(z)}{\alpha} f(\bx^{\alpha z})\bigg|_{z=0}^1-\int_0^1 \frac{(1-\alpha z) \omega'(z)-\alpha \omega(z)}{\alpha}f(\bx^{\alpha z}) \mathrm{d}z \\
            &=\frac{(1-\alpha)\omega(1)}{\alpha} f(\bx^{\alpha})-\frac{\omega(0)}{\alpha}f(\underline{\bx})-\int_0^1\frac{(1-\alpha z)\omega'(z)}{\alpha}f(\bx^{\alpha z})\mathrm{d}z +\int_0^1 \omega(z) f(\bx^{\alpha z})\mathrm{d}z\\
            &\leq \frac{(1-\alpha)\omega(1)}{\alpha} f(\bx^{\alpha})-\int_0^1\frac{(1-\alpha z)\omega'(z)}{\alpha}f(\bx^{\alpha z})\mathrm{d}z +\int_0^1 \omega(z) f(\bx^{\alpha z})\mathrm{d}z
        \end{aligned}
    \end{equation}
    Combine (\ref{y bound}), (\ref{x bound}) and (\ref{x bound 1}), we have,
    \begin{equation}
        \begin{aligned}
            &\langle \by-\bx ,\nabla F(\bx^{\alpha z})\rangle\\
            &\geq \int_0^1 (1-\alpha z)(1-\|\underline{\bx}\|_{\infty}) \omega(z) f(\by) \mathrm{d}z-\left(\int_0^1 3\omega(z) \frac{f(\bx^{\alpha z})}{f(\bx^{\alpha})}\mathrm{d}z \right)f(\bx^{\alpha})\\
            &\quad \quad +\left(\int_0^1\frac{(1-\alpha z)\omega'(z)}{\alpha}\frac{f(\bx^{\alpha z})}{f(\bx^{\alpha})}\mathrm{d}z\right) f(\x^{\alpha}) -\frac{(1-\alpha)\omega(1)}{\alpha} f(\bx^{\alpha})
            \\&=\left( (1-\|\bx\|_{\infty})\int_0^1(1-\alpha z)\omega(z)\mathrm{d}z\right)f(\by)\\
            &\quad\quad - \left(\frac{(1-\alpha)\omega(1)}{\alpha}+\int_0^1\left(3\omega(z)-\frac{1-\alpha z}{\alpha}\omega'(z)\right)\frac{f(\bx^{\alpha z})}{f(\bx^{\alpha})} \mathrm{d}z \right) f(\bx^{\alpha})
            \\&\geq \left((1-\|\bx\|_{\infty})\int_0^1(1-\alpha z)\omega(z)\mathrm{d}z\right) \left(f(\by)-\theta(\omega)f(\alpha \bx + (1-\alpha \underline{\bx}))\right).
        \end{aligned}
    \end{equation}
    Where $\theta(\omega)= \max_{f,\bx}\theta(\omega,f,\bx)$ and
    \begin{equation}
        \theta(\omega,f,\bx) = \frac{\frac{(1-\alpha)\omega(1)}{\alpha}+\int_0^1\left(3\omega(z)-\frac{1-\alpha z}{\alpha}\omega'(z)\right)\frac{f(\bx^{\alpha z})}{f(\bx^{\alpha})} \mathrm{d}z}{(1-\|\bx\|_{\infty})\int_0^1(1-\alpha z)\omega(z)\mathrm{d}z}.
    \end{equation}

    \subsection{Proof of \texorpdfstring{\cref{nonmonotone nonoblivious properties}}{6}}\label{proof: nonmontone nonoblivious properties}
    \begin{proof}
        We first verify that 
    \begin{align}\label{F the form}
        \int_0^1 \frac{1}{4 z(1-\frac{z}{2})^3}\left(f\left(\frac{z}{2} \cdot (\bx-\underline{\bx})+\underline{\bx}\right) - f(\underline{\bx})\right)\mathrm{d}z
    \end{align}
    is bounded for any $\boldsymbol{x}$. The following holds
    \begin{equation}
        \begin{aligned}
            (\ref{F the form})&\leq \int_0^{1}\frac{1}{4 z(1-\frac{z}{2})^3}\left(\frac{z}{2}L_1\|\bx-\underline{\bx}\|\right) \mathrm{d}z\\
            &\leq \left(\int_0^1 \frac{1}{8(1-\frac{z}{2})^3}\mathrm{d}z\right)L_1\mathrm{diam}(\mathcal{C})\\
            &\leq \frac{3}{8}L_1 \mathrm{diam}(\mathcal{C}).
        \end{aligned}
    \end{equation}
    Then we can check that $\nabla \int_0^1 \frac{1}{4 z(1-\frac{z}{2})^3} \left(f\left(\frac{z}{2} \cdot (\bx-\underline{\bx})+\underline{\bx}\right) - f(\underline{\bx})\right)\mathrm{d}z = \int_0^1 \frac{1}{8(1-\frac{z}{2})^3}\nabla f\left(\frac{z}{2} \cdot (\bx-\underline{\bx})+\underline{\bx}\right) \mathrm{d}z$. Thus, $(i)$ holds.

    For any $\bx,\by\in \X$, 
    \begin{equation}
        \begin{aligned}
            \nabla F(\bx)- \nabla F(\by) &= \int_0^1 \frac{1}{8(1-\frac{z}{2})^3}\left(\nabla f\left(\frac{z}{2}\cdot(\bx-\underline{\bx})+\underline{\bx}\right)-\nabla f\left(\frac{z}{2}\cdot(\by-\underline{\bx})+\underline{\bx}\right)\right)\mathrm{d}z\\
            &\leq \int_0^1 \frac{1}{8(1-\frac{z}{2})^3} \frac{zL}{2} \|\bx-\by\| \mathrm{d}z\\
            &=\left(\int_0^1 \frac{z}{8(1-\frac{z}{2})^3}\mathrm{d}z\right) \frac{L}{2}\|\bx-\by\|\\
            &=\frac{1}{8}L\|\bx-\by\|.
        \end{aligned}
    \end{equation}
    and 
    \begin{equation}
        \begin{aligned}
            |F(\bx)-F(\by)| &= \int_0^1\frac{1}{4z(1-\frac{z}{2})^3} \left| f\left(\frac{z}{2}\cdot(\bx-\underline{\bx})+\underline{\bx}\right)- f\left(\frac{z}{2}\cdot(\by-\underline{\bx})+\underline{\bx}\right)\right|\mathrm{d}z
            \\&\leq \int_0^1 \frac{1}{8(1-\frac{z}{2})^3}L_1 \|\bx-\by\| \mathrm{d}z
            \\&\leq \frac{3}{8}L_1\|\bx-\by\|.
        \end{aligned}
    \end{equation}
    Thus, $(ii)$ holds.
    \end{proof}
\end{proof}

\subsection{Proof of \texorpdfstring{\cref{prop:1}}{7}} \label{proof:prop1}
\begin{proof}
For the first one, fixed $z$, $\mathbb{E}\left(\left.\widetilde{\nabla}f(z\cdot \boldsymbol{x})\right|\boldsymbol{x},z\right)=\nabla f(z\cdot \boldsymbol{x})$ such that $\mathbb{E}\left(\left.\widetilde{\nabla}f(z\cdot \boldsymbol{x})\right|\boldsymbol{x}\right)=\mathbb{E}_{z\sim\mathbf{Z}_{\uparrow}}\left(\mathbb{E}\left(\left.\widetilde{\nabla}f(z\cdot \boldsymbol{x})\right|\boldsymbol{x},z
\right)\right)=\mathbb{E}_{z\sim\mathbf{Z}_{\uparrow}}\left(\left.\nabla f(z\cdot \boldsymbol{x})\right|\boldsymbol{x}\right)=\int_{z=0}^{1} \frac{\gamma e^{\gamma(z-1)}}{1-e^{-\gamma}}\nabla f(z\cdot \boldsymbol{x})\mathrm{d}z=\frac{\gamma}{1-e^{-\gamma}}F(\boldsymbol{x})$. For the second one, 
        \begin{align*}
        &\mathbb{E}\left(\left.\left\|\frac{1-e^{-\gamma}}{\gamma}\widetilde{\nabla}f(z\cdot \boldsymbol{x})-\nabla F(\boldsymbol{x})\right\|^{2}\right|\boldsymbol{x}\right)\\
        =&\mathbb{E}\left(\left.\left\|\frac{1-e^{-\gamma}}{\gamma}(\widetilde{\nabla}f(z\cdot \boldsymbol{x})-\nabla f(z\cdot \boldsymbol{x}))+\frac{1-e^{-\gamma}}{\gamma}\nabla f(z\cdot \boldsymbol{x})-\nabla F(\boldsymbol{x})\right\|^{2}\right|\boldsymbol{x}\right)\\
        \le & 2\mathbb{E}_{z\sim\mathbf{Z}_{\uparrow}}\left(\mathbb{E}\left(\left.\left\|\frac{1-e^{-\gamma}}{\gamma}(\widetilde{\nabla}f(z\cdot \boldsymbol{x})-\nabla f(z\cdot \boldsymbol{x}))\right\|^{2}\right|\boldsymbol{x},z\right) +\left\|\frac{1-e^{-\gamma}}{\gamma}\nabla f(z\cdot \boldsymbol{x})-\nabla F(\boldsymbol{x})\right\|^{2}\right)\\
        \le & 2\frac{(1-e^{-\gamma})^{2}\sigma^{2}}{\gamma^{2}}+2\mathbb{E}_{z\sim\mathbf{Z}_{\uparrow}}\left(\left.\left\|\frac{1-e^{-\gamma}}{\gamma}\nabla f(z\cdot \boldsymbol{x})-\nabla F(\boldsymbol{x})\right\|^{2}\right|\boldsymbol{x}\right) \\
        \le & 2\frac{(1-e^{-\gamma})^{2}\sigma^{2}}{\gamma^{2}}+2\mathbb{E}_{z\sim\mathbf{Z}_{\uparrow}}\left(\left.\left\|\int_{0}^{1} e^{\gamma(u-1)}(\nabla f(z\cdot \boldsymbol{x})-\nabla f(u\cdot\boldsymbol{x}))\mathrm{d}u\right
        \|^{2}\right|\boldsymbol{x}\right)\\
        \le & 2\frac{(1-e^{-\gamma})^{2}\sigma^{2}}{\gamma^{2}}+2\mathbb{E}_{z\sim\mathbf{Z}_{\uparrow}}\left(\left.\left(\int_{0}^{1}e^{\gamma(u-1)}|z-u|L\left\|\boldsymbol{x}\right
        \|\mathrm{d}u\right)^{2}\right|\boldsymbol{x}\right)\\
        \le & 2\frac{(1-e^{-\gamma})^{2}\sigma^{2}}{\gamma^{2}}+2\mathbb{E}_{z\sim\mathbf{Z}_{\uparrow}}\left(\left.\int_{0}^{1}e^{\gamma(u-1)}\mathrm{d}u\int_{u=0}^{1}e^{\gamma(u-1)}(z-u)^2L^{2}\left\|\boldsymbol{x}\right
        \|^{2}\mathrm{d}u\right|\boldsymbol{x}\right)\\
        = & 2\frac{(1-e^{-\gamma})^{2}\sigma^{2}}{\gamma^{2}}+2 \int_{z=0}^{1}\int_{u=0}^{1}e^{\gamma(u+z-2)}(z-u)^{2}L^{2}\left\|\boldsymbol{x}\right
        \|^{2} \mathrm{d}u\mathrm{d}z\\
        \le & 2\frac{(1-e^{-\gamma})^{2}\sigma^{2}}{\gamma^{2}}+\frac{2L^{2}r^{2}(\mathcal{X})(1-e^{-2\gamma})}{3\gamma},
        \end{align*}
     where the first and fifth inequalities come from Cauchy–Schwarz inequality.
    \end{proof}

\subsection{Proof of \texorpdfstring{\cref{prop:2}}{8}}\label{proof:prop2}
\begin{proof}
    \begin{align*}
        \E\left(\widetilde{\nabla} F(\bx) \big| \bx\right)&= \E_z\left(\E\left(\frac{3}{8}\widetilde{\nabla} f(\frac{z}{2}(\bx-\underline{\bx})+\underline{\bx})\big| \bx,z\right)\big| \bx\right)\\
        &=\E_z \left(\frac{3}{8}\nabla f\left(\frac{z}{2}(\bx-\underline{\bx})+\underline{\bx}\right)\big| \bx\right)\\
        &=\int_0^1 \frac{1}{8(1-\frac{z}{2})^3}\nabla f\left(\frac{z}{2}(\bx-\underline{\bx})+\underline{\bx}\right) \mathrm{d}z = \nabla F(\bx),
    \end{align*}
    which shows that $(i)$ holds. For the second one, 
    \begin{align*}
        &\E\left(\|\widetilde{\nabla} F(\bx) - \nabla F(\bx) \|^2 \bigg| \bx\right) 
        \\& \leq \E\left( \left\|\frac{3}{8}\widetilde{\nabla} f(\frac{z}{2}(\bx-\underline{\bx})+\underline{\bx})- \frac{3}{8}\nabla f(\frac{z}{2}(\bx-\underline{\bx})+\underline{\bx}) \right\|^2 + \left\|\frac{3}{8}\nabla f(\frac{z}{2}(\bx-\underline{\bx})+\underline{\bx})-\nabla F(\bx)\right\|^2 \bigg| \bx \right)\\
        & \leq \E_{z\sim\mathbf{Z}_{\sim}}\left( \E\left(\left\|\frac{3}{8}\widetilde{\nabla} f(\frac{z}{2}(\bx-\underline{\bx})+\underline{\bx})- \frac{3}{8}\nabla f(\frac{z}{2}(\bx-\underline{\bx})+\underline{\bx}) \right\|^2 \bigg| \bx,z \right)\bigg| \bx\right)\\
        &\quad\quad + \E_{z\sim\mathbf{Z}_{\sim}}\left(\left\|\frac{3}{8}\nabla f(\frac{z}{2}(\bx-\underline{\bx})+\underline{\bx}) - \int_0^1 \frac{1}{8(1-\frac{z}{2})^3}\nabla f(\frac{u}{2}(\bx-\underline{\bx})+\underline{\bx})\mathrm{d}u\right\|^2  \bigg| \bx\right)\\
        &\leq \E_z\left(\frac{3}{8} \sigma^2 \big| \bx\right) + \int_0^1 \left(\frac{1}{3(1-\frac{z}{2})^3} \left\| \frac{3}{8}\nabla f(\frac{z}{2}(\bx-\underline{\bx})+\underline{\bx}) - \int_0^1  \frac{1}{8(1-\frac{u}{2})^3}\nabla f(\frac{u}{2}(\bx-\underline{\bx})+\underline{\bx})\mathrm{d}u\right\|^2 \right)\mathrm{d}z\\
        &= \frac{3}{8}\sigma^2+ \int_0^1 \left(\frac{1}{3(1-\frac{z}{2})^3} \left\| \int_0^1   \frac{1}{8(1-\frac{u}{2})^3}\left(\nabla f(\frac{z}{2}(\bx-\underline{\bx})+\underline{\bx})-\nabla f(\frac{u}{2}(\bx-\underline{\bx})+\underline{\bx})\right)\mathrm{d}u\right\|^2 \right)\mathrm{d}z\\
        &\leq \frac{3}{8}\sigma^2 +\int_0^1 \frac{1}{3(1-\frac{z}{2})^3}  \int_0^1   \frac{1}{8(1-\frac{u}{2})^3}\left\|\nabla f(\frac{z}{2}(\bx-\underline{\bx})+\underline{\bx})-\nabla f(\frac{u}{2}(\bx-\underline{\bx})+\underline{\bx})\right\|^2  \mathrm{d}u \mathrm{d}z \\
        &\leq \frac{3}{8}\sigma^2 +\int_0^1 \frac{1}{3(1-\frac{z}{2})^3}  \int_0^1 \frac{1}{8(1-\frac{u}{2})^3} \frac{(z-u)^2 L^2}{4}\|\bx-\underline{\bx}\|^2 \mathrm{d}u \mathrm{d}z\\
        &\leq \frac{3}{8}\sigma^2 + \frac{L^2 \mathrm{diam}^2(\mathcal{C})}{4}\int_0^1 \int_0^1 \frac{(z-u)^2}{24(1-\frac{z}{2})^3(1-\frac{u}{2})^3} \mathrm{d}u \mathrm{d}z\\
        &= \frac{3}{8}\sigma^2 + \frac{\ln (64)-4}{12} L^2 \mathrm{diam}^2(\X).
    \end{align*}
\end{proof}
\section{Proofs in Section~\ref{sec:gradient_ascent}}\label{Appendix:B}
In this section, we omit the subscripts of $F_{\uparrow}$ and $F_{\sim}$ and use $F$ to represent both non-oblivious functions, which will not lead to ambiguity.

\subsection{Proof of \texorpdfstring{\cref{thm:4}}{9}}\label{Appendix:B1}
Before verifying the \cref{thm:4} and \cref{thm:4 nonmonotone}, we first provide following lemma. 
\begin{lemma} \label{lemma:10} In the $t$-round update in ~\cref{alg:1},
if we select Option I, then for any $\boldsymbol{y}\in\mathcal{C}$ and $\mu_{t}>0$, we have 
\begin{align*}
    &\mathbb{E}\left(F(\boldsymbol{x}_{t+1})-F(\boldsymbol{x}_{t})+f(\boldsymbol{x}_{t})-(1-e^{-\gamma})f(\boldsymbol{y})\right)\\
    \ge & \mathbb{E}\left(\frac{1}{2\eta_{t}}(\left\|\boldsymbol{y}-\boldsymbol{x}_{t+1}\right\|^{2}-\left\|\boldsymbol{y}-\boldsymbol{x}_{t}\right\|^{2})-\frac{1}{2\mu_{t}}\left\|\nabla F(\boldsymbol{x}_{t})-\widetilde{\nabla} F(\boldsymbol{x}_t) \right\|^{2}+(\frac{1}{2\eta_{t}}-\frac{\mu_{t}+L_{\gamma}}{2})\left\|\boldsymbol{x}_{t+1}-\boldsymbol{x}_{t})\right\|^{2}\right).
\end{align*} 

\end{lemma}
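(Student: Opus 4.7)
The plan is to combine four ingredients: the $L_\gamma$-smoothness of $F$ (\cref{thm:2}.(ii)), the projected gradient step inequality \eqref{eq: gradient step 2} from \cref{lem:gradient step}, the non-oblivious gradient bound from \cref{corollary: monotone stationary point} giving $\langle\nabla F(\bx),\by-\bx\rangle\ge(1-e^{-\gamma})f(\by)-f(\bx)$, and Young's inequality to control the stochastic gradient error. The target inequality is essentially a descent-type lemma that splices all these pieces together at the iterate $\bx_t$.

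First, by $L_\gamma$-smoothness applied to $F$ with lower-bound form in \cref{smooth function bound}, I obtain
\begin{equation*}
F(\bx_{t+1})-F(\bx_t)\ge \langle\nabla F(\bx_t),\bx_{t+1}-\bx_t\rangle - \tfrac{L_\gamma}{2}\|\bx_{t+1}-\bx_t\|^2.
\end{equation*}
Next I split $\nabla F(\bx_t)=\widetilde{\nabla}F(\bx_t)+(\nabla F(\bx_t)-\widetilde{\nabla}F(\bx_t))$ in the inner product and insert $\pm\by$ to get
\begin{equation*}
\langle\nabla F(\bx_t),\bx_{t+1}-\bx_t\rangle = \langle\widetilde{\nabla}F(\bx_t),\bx_{t+1}-\by\rangle+\langle\widetilde{\nabla}F(\bx_t),\by-\bx_t\rangle+\langle\nabla F(\bx_t)-\widetilde{\nabla}F(\bx_t),\bx_{t+1}-\bx_t\rangle.
\end{equation*}

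For the first term I apply the ascent-projection inequality \eqref{eq: gradient step 2} with $g=-F$ (i.e., $\bx_{t+1}=\mathcal{P}_{\mathcal{C}}(\bx_t+\eta_t\widetilde{\nabla}F(\bx_t))$), which yields $\langle\widetilde{\nabla}F(\bx_t),\bx_{t+1}-\by\rangle\ge\tfrac{1}{2\eta_t}(\|\bx_{t+1}-\by\|^2+\|\bx_{t+1}-\bx_t\|^2-\|\bx_t-\by\|^2)$. For the cross term I use Young's inequality with parameter $\mu_t$:
\begin{equation*}
\langle\nabla F(\bx_t)-\widetilde{\nabla}F(\bx_t),\bx_{t+1}-\bx_t\rangle\ge -\tfrac{1}{2\mu_t}\|\nabla F(\bx_t)-\widetilde{\nabla}F(\bx_t)\|^2-\tfrac{\mu_t}{2}\|\bx_{t+1}-\bx_t\|^2.
\end{equation*}

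For the middle term, I take expectation and exploit that $z_t\sim\mathbf{Z}_\uparrow$ is drawn independently of $\bx_t$, so \cref{prop:1}.(i) gives $\mathbb{E}[\widetilde{\nabla}F(\bx_t)\mid\bx_t]=\nabla F(\bx_t)$, hence $\mathbb{E}\langle\widetilde{\nabla}F(\bx_t),\by-\bx_t\rangle=\mathbb{E}\langle\nabla F(\bx_t),\by-\bx_t\rangle$, and \cref{corollary: monotone stationary point} bounds this below by $\mathbb{E}[(1-e^{-\gamma})f(\by)-f(\bx_t)]$. Collecting the three pieces, taking expectation, and rearranging the $\|\bx_{t+1}-\bx_t\|^2$ coefficient into $\tfrac{1}{2\eta_t}-\tfrac{\mu_t+L_\gamma}{2}$ delivers the claimed inequality.

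The only delicate step is the bookkeeping that keeps the $\frac{1}{2\eta_t}$ telescoping term clean while correctly absorbing the $-\frac{L_\gamma}{2}\|\bx_{t+1}-\bx_t\|^2$ from smoothness and the $-\frac{\mu_t}{2}\|\bx_{t+1}-\bx_t\|^2$ from Young into the final coefficient; the sign of the inner product with the boost-gradient bound (which is a lower bound, not equality) has to be preserved throughout, which is why splitting the stochastic gradient into mean plus noise \emph{before} invoking \cref{corollary: monotone stationary point} is essential.
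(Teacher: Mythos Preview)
Your proposal is correct and follows essentially the same approach as the paper's proof: smoothness lower bound for $F$, split the inner product via $\widetilde{\nabla}F(\bx_t)$ and $\pm\by$, apply the projection inequality \eqref{eq: gradient step 2} to the $\langle\widetilde{\nabla}F(\bx_t),\bx_{t+1}-\by\rangle$ piece, Young's inequality to the noise term, then take expectation and invoke \cref{prop:1}.(i) followed by \cref{corollary: monotone stationary point}. The only cosmetic difference is that the paper first separates the noise term and then inserts $\by$, whereas you do both decompositions at once; the resulting bounds are identical.
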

\begin{proof} 
From the \cref{thm:2}, when $f$ is $L$-$smooth$, the non-oblivious function $F$ is $L_{\gamma}$-$smooth$. Hence
\begin{equation}\label{equ:43}
    \begin{aligned}
    F(\boldsymbol{x}_{t+1})-F(\boldsymbol{x}_{t})&=\int_{0}^{1}\langle \boldsymbol{x}_{t+1}-\boldsymbol{x}_{t}, \nabla F(\boldsymbol{x}_{t}+z(\boldsymbol{x}_{t+1}-\boldsymbol{x}_{t}))\rangle \mathrm{d}z \\
    &\ge \langle \boldsymbol{x}_{t+1}-\boldsymbol{x}_{t}, \nabla F(\boldsymbol{x}_t) \rangle-\frac{L_{\gamma}}{2}\left\|\boldsymbol{x}_{t+1}-\boldsymbol{x}_{t}\right\|^{2}.\\
    \end{aligned}
\end{equation} 
 
Then,
\begin{equation}\label{equ:44}
 \begin{aligned}
 &\langle \boldsymbol{x}_{t+1}-\boldsymbol{x}_{t}, \nabla F(\boldsymbol{x}_t) \rangle\\
= &\langle \boldsymbol{x}_{t+1}-\boldsymbol{x}_{t}, \widetilde{\nabla} F(\boldsymbol{x}_t) \rangle+\langle \boldsymbol{x}_{t+1}-\boldsymbol{x}_{t}, \nabla F(\boldsymbol{x}_{t})-\widetilde{\nabla} F(\boldsymbol{x}_t) \rangle\\
\ge & \langle \boldsymbol{x}_{t+1}-\boldsymbol{x}_{t}, \widetilde{\nabla} F(\boldsymbol{x}_t) \rangle-\frac{1}{2\mu_{t}}\left\|\nabla F(\boldsymbol{x}_{t})-\widetilde{\nabla} F(\boldsymbol{x}_t) \right\|^{2}-\frac{\mu_{t}}{2}\left\|\boldsymbol{x}_{t+1}-\boldsymbol{x}_{t}\right\|^{2},
\end{aligned}
\end{equation} where the first inequality is from the Young's inequality.
   
Since $\bx_{t+1} = \mathcal{P}_{\mathcal{C}}\left(\bx_t +\eta \widetilde{\nabla} F(\boldsymbol{x}_t)\right)$, we have, for any $\boldsymbol{y}\in\mathcal{C}$
\begin{equation}\label{equ:45}
   \begin{aligned}
    &\langle \boldsymbol{x}_{t+1}-\boldsymbol{x}_{t}, \widetilde{\nabla} F(\boldsymbol{x}_t) \rangle\\
    = & \langle\boldsymbol{x}_{t+1}-\boldsymbol{y}, \widetilde{\nabla} F(\boldsymbol{x}_t) \rangle+\langle \boldsymbol{y}-\boldsymbol{x}_{t}, \widetilde{\nabla} F(\boldsymbol{x}_t) \rangle\\
    \geq & \frac{1}{2\eta_{t}}(\left\|\boldsymbol{y}-\boldsymbol{x}_{t+1}\right\|^{2}+\left\|\boldsymbol{x}_{t+1}-\boldsymbol{x}_{t}\right\|^{2}-\left\|\boldsymbol{y}-\boldsymbol{x}_{t}\right\|^{2})+\langle \boldsymbol{y}-\boldsymbol{x}_{t}, \widetilde{\nabla} F(\boldsymbol{x}_t) \rangle,
\end{aligned}    
\end{equation} where the inequality follows from the~\cref{lem:gradient step}.

From the Equation~\eqref{equ:43}-\eqref{equ:45}, we have
\begin{equation}\label{equ:46}
    \begin{aligned}
    &F(\boldsymbol{x}_{t+1})-F(\boldsymbol{x}_{t})\\
    \ge & \frac{1}{2\eta_{t}}(\left\|\boldsymbol{y}-\boldsymbol{x}_{t+1}\right\|^{2}+\left\|\boldsymbol{x}_{t+1}-\boldsymbol{x}_{t}\right\|^{2}-\left\|\boldsymbol{y}-\boldsymbol{x}_{t}\right\|^{2})+\langle \boldsymbol{y}-\boldsymbol{x}_{t}, \widetilde{\nabla} F(\boldsymbol{x}_t) \rangle\\
    &-\frac{1}{2\mu_{t}}\left\|\nabla F(\boldsymbol{x}_{t})-\widetilde{\nabla} F(\boldsymbol{x}_t) \right\|^{2}-\frac{\mu_{t}+L_{\gamma}}{2}\left\|\boldsymbol{x}_{t+1}-\boldsymbol{x}_{t}\right\|^{2}\\
    \ge &\frac{1}{2\eta_{t}}(\left\|\boldsymbol{y}-\boldsymbol{x}_{t+1}\right\|^{2}-\left\|\boldsymbol{y}-\boldsymbol{x}_{t}\right\|^{2})+\langle \boldsymbol{y}-\boldsymbol{x}_{t}, \widetilde{\nabla} F(\boldsymbol{x}_t) \rangle\\
    &-\frac{1}{2\mu_{t}}\left\|\nabla F(\boldsymbol{x}_{t})-\widetilde{\nabla} F(\boldsymbol{x}_t) \right\|^{2}+(\frac{1}{2\eta_{t}}-\frac{\mu_{t}+L_{\gamma}}{2})\left\|\boldsymbol{x}_{t+1}-\boldsymbol{x}_{t}\right\|^{2}.
    \end{aligned}
\end{equation} 

From the \cref{prop:1}, $\mathbb{E}(\widetilde{\nabla}F(\boldsymbol{x}_{t})|\boldsymbol{x}_{t})=\nabla F(\boldsymbol{x}_{t})$ and we also have
\begin{equation}\label{equ:47}
    \begin{aligned}
      &\mathbb{E}\left(F(\boldsymbol{x}_{t+1})-F(\boldsymbol{x}_{t})\right)\\
      \ge 
      &\mathbb{E}\left(\frac{1}{2\eta_{t}}(\left\|\boldsymbol{y}-\boldsymbol{x}_{t+1}\right\|^{2}-\left\|\boldsymbol{y}-\boldsymbol{x}_{t}\right\|^{2})+\mathbb{E}(\langle \boldsymbol{y}-\boldsymbol{x}_{t}, \widetilde{\nabla} F(\boldsymbol{x}_t) \rangle|\boldsymbol{x}_{t})\right.\\
      &\left.-\frac{1}{2\mu_{t}}\left\|\nabla F(\boldsymbol{x}_{t})-\widetilde{\nabla} F(\boldsymbol{x}_t) \right\|^{2}+(\frac{1}{2\eta_{t}}-\frac{\mu_{t}+L_{\gamma}}{2})\left\|\boldsymbol{x}_{t+1}-\boldsymbol{x}_{t}\right\|^{2}\right)\\
      =&\mathbb{E}\left(\frac{1}{2\eta_{t}}(\left\|\boldsymbol{y}-\boldsymbol{x}_{t+1}\right\|^{2}-\left\|\boldsymbol{y}-\boldsymbol{x}_{t}\right\|^{2})+\langle \boldsymbol{y}-\boldsymbol{x}_{t}, \nabla F(\boldsymbol{x}_t) \rangle \right.\\
      &\left.-\frac{1}{2\mu_{t}}\left\|\nabla F(\boldsymbol{x}_{t})-\widetilde{\nabla} F(\boldsymbol{x}_t) \right\|^{2}+(\frac{1}{2\eta_{t}}-\frac{\mu_{t}+L_{\gamma}}{2})\left\|\boldsymbol{x}_{t+1}-\boldsymbol{x}_{t}\right\|^{2}\right)\\
      \ge&\mathbb{E}\left(\frac{1}{2\eta_{t}}(\left\|\boldsymbol{y}-\boldsymbol{x}_{t+1}\right\|^{2}-\left\|\boldsymbol{y}-\boldsymbol{x}_{t}\right\|^{2})+(1-e^{-\gamma})f(\boldsymbol{y})-f(\boldsymbol{x}_{t})\right.\\
      &\left.-\frac{1}{2\mu_{t}}\left\|\nabla F(\boldsymbol{x}_{t})-\widetilde{\nabla} F(\boldsymbol{x}_t) \right\|^{2}+(\frac{1}{2\eta_{t}}-\frac{\mu_{t}+L_{\gamma}}{2})\left\|\boldsymbol{x}_{t+1}-\boldsymbol{x}_{t}\right\|^{2}\right),
    \end{aligned}
\end{equation} where the final inequality from the definition of $F$.
\end{proof}
Next, we prove the \cref{thm:4}.
\begin{proof}
From the Lemma~\ref{lemma:10}, if we set $\boldsymbol{y}=\boldsymbol{x}^{*}=\arg\max_{\boldsymbol{x}\in\mathcal{C}}f(\boldsymbol{x})$, $\mu_{t}=\frac{\sigma_{\gamma}\sqrt{t}}{\mathrm{diam}(\mathcal{C})}$ and $\eta_t=\frac{1}{\mu_{t}+L_{r}}$, we have $\frac{1}{2\eta_{t}}-\frac{\mu_{t}+L_{\gamma}}{2}=0$. Then,
 \begin{equation}\label{equ:48}
     \begin{aligned}
        &\sum_{t=1}^{T-1}\mathbb{E}\left(F(\boldsymbol{x}_{t+1})-F(\boldsymbol{x}_{t})+f(\boldsymbol{x}_{t})-(1-e^{-\gamma})f(\boldsymbol{x}^{*})\right)\\
        \ge&\sum_{t=1}^{T-1}\mathbb{E}\left(\frac{1}{2\eta_{t}}(\left\|\boldsymbol{x}^{*}-\boldsymbol{x}_{t+1}\right\|^{2}-\left\|\boldsymbol{x}^{*}-\boldsymbol{x}_{t}\right\|^{2})-\sum_{t=1}^{T-1}\frac{1}{2\mu_{t}}\left\|\nabla F(\boldsymbol{x}_{t})-\widetilde{\nabla} F(\boldsymbol{x}_t) \right\|^{2}\right)\\
        \ge&-\sigma_{\gamma}^2\sum_{t=1}^{T-1}\frac{1}{2\mu_{t}}+\sum_{t=2}^{T-1}\mathbb{E}(\left\|\boldsymbol{x}^{*}-\boldsymbol{x}_{t}\right\|^{2})(\frac{1}{2\eta_{t-1}}-\frac{1}{2\eta_{t}})+\mathbb{E}(\frac{\left\|\boldsymbol{x}^{*}-\boldsymbol{x}_{T}\right\|^{2}}{2\eta_{T-1}}-\frac{\left\|\boldsymbol{x}^{*}-\boldsymbol{x}_{1}\right\|^{2}}{2\eta_{1}})\\
        \ge& -\frac{\mathrm{diam}^{2}(\mathcal{C})}{2\eta_{1}}-\sigma_{\gamma}^2\sum_{t=1}^{T-1}\frac{1}{2\mu_{t}}\\
        \ge& -\left(\frac{\mathrm{diam}(\mathcal{C})(\sigma_{\gamma}+L_{\gamma}\mathrm{diam}(\mathcal{C}))}{2}+\frac{3}{2}\sigma_{\gamma}\mathrm{diam}(\mathcal{C})\sqrt{T-1}\right)
        \end{aligned}
 \end{equation} 
the second inequality from the \cref{prop:1} and the Abel's inequality; the third inequality from the definition of $\mathrm{diam}(\mathcal{C})$. The last inequality is because $\sum_{t=1}^{T-1}\frac{1}{\sqrt{t}}\leq 1+\int_1^{T-1} \frac{1}{\sqrt{t}}\mathrm{d}t=1+2\sqrt{T-1}\leq 3\sqrt{T}$.

 Finally, we have:
 \begin{equation}\label{equ:49}
     \begin{aligned}
        \mathbb{E}\left(\sum_{t=1}^{T-1}f( \boldsymbol{x}_{t})+F( \boldsymbol{x}_{T})-F(\bx_1)\right)\ge(1-e^{-\gamma})(T-1)f( \boldsymbol{x}^{*})-\mathrm{diam}(\mathcal{C})\left(\frac{(\sigma_{\gamma}+L_{\gamma})}{2}+\frac{3}{2}\sigma_{\gamma}\mathrm{diam}(\mathcal{C})\sqrt{T-1}\right).
     \end{aligned}
 \end{equation}
According to \cref{thm:2}, $F(\bx)$ is $\frac{1-e^{-\gamma}}{\gamma}L_1$-lipschitz continuous, then
\begin{equation}\label{equ:50}
     \begin{aligned}
        &\mathbb{E}\left(\sum_{t=1}^{T-1}f(\boldsymbol{x}_{t})\right)
        \\&\ge(1-e^{-\gamma})(T-1)f(\boldsymbol{x}^{*})-\mathrm{diam}(\mathcal{C})\left(\frac{(\sigma_{\gamma}+L_{\gamma})}{2}+\frac{3}{2}\sigma_{\gamma}\mathrm{diam}(\mathcal{C})\sqrt{T-1}\right)-\frac{1-e^{-\gamma}}{\gamma}L_1\mathrm{diam}(\mathcal{C}).
     \end{aligned}
 \end{equation}
 Therefore
\begin{equation}\label{equ:52}
     \begin{aligned}
        &\mathbb{E}\left(\sum_{t=1}^{T-1}\frac{1}{T-1}f(\boldsymbol{x}_{t})\right)
        \\&\ge(1-e^{-\gamma})f(\boldsymbol{x}^{*})-\frac{\mathrm{diam}(\mathcal{C})\left(\frac{(\sigma_{\gamma}+L_{\gamma})}{2}+\frac{3}{2}\sigma_{\gamma}\mathrm{diam}(\mathcal{C})\sqrt{T-1}\right)+(1-e^{-\gamma})L_1\mathrm{diam}(\mathcal{C})/\gamma}{T-1}.
     \end{aligned}
 \end{equation}
 We have
\begin{align*}
        \mathbb{E}(f(\boldsymbol{x}_{l}))\ge(1-e^{-\gamma})f(\boldsymbol{x}^{*})-O\left(\frac{1}{\sqrt{T}}\right).
     \end{align*}
\end{proof}

\subsection{Proof of \texorpdfstring{\cref{thm:4 nonmonotone}}{10}}\label{Appendix:B2}
The proof follows the same way of \cref{thm:4}, we have the following lemma similar to \cref{lemma:10}. 
\begin{lemma}\label{nonmonotone single step}
    In \cref{alg:1}, if we select Option II, then the following inequality holds for any $\by\in\mathcal{C}$ and $\eta_t>0$.
    \begin{equation}
        \begin{aligned}
            &\E\left(F(\bx_{t+1})-F(\bx_t) - \frac{1-\|\underline{\bx}\|_{\infty}}{4} f(\by)+ f\left(\frac{\bx_t+\underline{\bx}}{2}\right) \right)
            \\&\geq \E\left(\frac{1}{2\eta_t}\left(\|\by-\bx_{t+1}\|^2-\|\by-\bx_t\|^2\right) \right)
            \\&\quad \quad - \E\left(\frac{1}{2\mu_t}\left\|\nabla F(\bx_t)-\widetilde{\nabla} F(\bx_t)\right\|^2 -\left(\frac{L}{6}+\frac{\mu_t}{2} - \frac{1}{2\eta_t}\right) \left\|\bx_{t+1}-\bx_t\right\|^2 \right).
        \end{aligned}
    \end{equation}
\end{lemma}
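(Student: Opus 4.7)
The proof follows the same template as Lemma~\ref{lemma:10}, replacing the monotone ingredients with their non-monotone counterparts, namely the $\frac{1}{8}L$-smoothness of $F$ from Theorem~\ref{nonmonotone nonoblivious properties}, the unbiasedness and variance bound of $\widetilde{\nabla}F$ from Proposition~\ref{prop:2}, and the key lower bound from Corollary~\ref{corollary: nonmonotone stationary point}, i.e., $\langle \nabla F(\bx_t), \by - \bx_t\rangle \geq \frac{1-\|\underline{\bx}\|_{\infty}}{4} f(\by) - f\left(\frac{\bx_t+\underline{\bx}}{2}\right)$.

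The plan is as follows. First, I would invoke the smoothness of $F$ (using Lemma~\ref{smooth function bound}) to write
\[ F(\bx_{t+1}) - F(\bx_t) \geq \langle \bx_{t+1}-\bx_t, \nabla F(\bx_t)\rangle - \frac{L_F}{2}\|\bx_{t+1}-\bx_t\|^2,\]
where $L_F$ is the smoothness constant of $F$ from Theorem~\ref{nonmonotone nonoblivious properties}. Next, I would decompose the inner product as $\langle \bx_{t+1}-\bx_t, \nabla F(\bx_t)\rangle = \langle \bx_{t+1}-\bx_t, \widetilde{\nabla} F(\bx_t)\rangle + \langle \bx_{t+1}-\bx_t, \nabla F(\bx_t)-\widetilde{\nabla} F(\bx_t)\rangle$ and apply Young's inequality with parameter $\mu_t$ to the second term, exactly as in \eqref{equ:44}, producing the variance term $-\frac{1}{2\mu_t}\|\nabla F(\bx_t) - \widetilde{\nabla}F(\bx_t)\|^2$ and an extra $-\frac{\mu_t}{2}\|\bx_{t+1}-\bx_t\|^2$ quadratic penalty.

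Second, since $\bx_{t+1} = \mathcal{P}_{\mathcal{C}}(\bx_t + \eta_t \widetilde{\nabla} F(\bx_t))$, I would apply Lemma~\ref{lem:gradient step} (equation~\eqref{eq: gradient step 2}) with $g$ chosen so that the gradient equals $\widetilde{\nabla}F(\bx_t)$ to obtain, for arbitrary $\by\in\mathcal{C}$,
\[\langle \bx_{t+1}-\bx_t, \widetilde{\nabla} F(\bx_t)\rangle \geq \frac{1}{2\eta_t}\bigl(\|\by-\bx_{t+1}\|^2 - \|\by-\bx_t\|^2 + \|\bx_{t+1}-\bx_t\|^2\bigr) + \langle \by-\bx_t, \widetilde{\nabla} F(\bx_t)\rangle.\]
Collecting the quadratic terms in $\|\bx_{t+1}-\bx_t\|^2$ yields the coefficient $\frac{1}{2\eta_t} - \frac{\mu_t}{2} - \frac{L_F}{2}$.

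Finally, I would take expectation. By Proposition~\ref{prop:2}.(i), $\widetilde{\nabla}F(\bx_t)$ is an unbiased estimator of $\nabla F(\bx_t)$ conditioned on $\bx_t$, so $\E\langle \by-\bx_t, \widetilde{\nabla}F(\bx_t)\rangle = \E\langle \by-\bx_t, \nabla F(\bx_t)\rangle$. Applying Corollary~\ref{corollary: nonmonotone stationary point} converts the latter into $\frac{1-\|\underline{\bx}\|_{\infty}}{4} f(\by) - f\bigl(\frac{\bx_t+\underline{\bx}}{2}\bigr)$, and rearranging gives the claimed inequality. The only non-mechanical step is invoking the correct auxiliary inequality from Corollary~\ref{corollary: nonmonotone stationary point} in place of the monotone identity $\langle \by - \bx_t, \nabla F(\bx_t)\rangle \geq (1-e^{-\gamma})f(\by) - f(\bx_t)$ used in \eqref{equ:47}; since that corollary is already established, the main obstacle is simply careful bookkeeping of the quadratic coefficient $\frac{L_F}{2}$ (matching whatever constant the paper reports for the $\|\bx_{t+1}-\bx_t\|^2$ term) and making sure the expectation passes through the conditional unbiasedness correctly.
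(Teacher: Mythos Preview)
Your proposal is correct and follows essentially the same approach as the paper's proof: smoothness of $F$ (with $L_F=\tfrac{L}{8}$ from Theorem~\ref{nonmonotone nonoblivious properties}), Young's inequality, the projection inequality from Lemma~\ref{lem:gradient step}, then expectation and Corollary~\ref{corollary: nonmonotone stationary point} in place of the monotone bound. Note that your bookkeeping actually gives the coefficient $\tfrac{L_F}{2}=\tfrac{L}{16}$ on $\|\bx_{t+1}-\bx_t\|^2$, which is exactly what the paper's proof derives (the $\tfrac{L}{6}$ in the lemma statement appears to be a typo for $\tfrac{L}{16}$).
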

\begin{proof}
    By Theorem \ref{nonmonotone nonoblivious properties}, $F(\bx)$ is $\frac{1}{8}L$-smooth, then follow the derivation of \cref{equ:46}, 
    \begin{equation}\label{equ:48.1}
        \begin{aligned}
        F(\boldsymbol{x}_{t+1})-F(\boldsymbol{x}_{t})
        \ge &\frac{1}{2\eta_{t}}(\left\|\boldsymbol{y}-\boldsymbol{x}_{t+1}\right\|^{2}-\left\|\boldsymbol{y}-\boldsymbol{x}_{t}\right\|^{2})+\langle \boldsymbol{y}-\boldsymbol{x}_{t}, \widetilde{\nabla} F(\boldsymbol{x}_t) \rangle\\
        &-\frac{1}{2\mu_{t}}\left\|\nabla F(\boldsymbol{x}_{t})-\widetilde{\nabla} F(\boldsymbol{x}_t) \right\|^{2}+(\frac{1}{2\eta_{t}}-\frac{\mu_{t}}{2}-\frac{L}{16})\left\|\boldsymbol{x}_{t+1}-\boldsymbol{x}_{t}\right\|^{2}.
        \end{aligned}
    \end{equation} 
    
    Take expectations on both sides and apply \cref{corollary: nonmonotone stationary point}, we have 
    \begin{equation}
        \begin{aligned}
            &\E\left(F(\bx_{t+1})-F(\bx_t) \right)
            \\&\geq \E\left(\frac{1}{2\eta_t}\left(\|\by-\bx_{t+1}\|^2-\|\by-\bx_t\|^2\right) \right) + \E\left(\E\left(\langle \by - \bx_t, \widetilde{\nabla} F(\bx_t)\rangle \bigg| \bx_t\right)\right)
            \\&\quad \quad - \E\left(\frac{1}{2\mu_t}\left\|\nabla F(\bx_t)-\widetilde{\nabla} F(\bx_t)\right\|^2 -\left(\frac{L}{16}+\frac{\mu_t}{2} - \frac{1}{2\eta_t}\right) \left\|\bx_{t+1}-\bx_t\right\|^2 \right)
            \\&= \E\left(\frac{1}{2\eta_t}\left(\|\by-\bx_{t+1}\|^2-\|\by-\bx_t\|^2\right) \right) + \E\left(\langle \by - \bx_t, \nabla F(\bx_t)\rangle\right)
            \\&\quad \quad - \E\left(\frac{1}{2\mu_t}\left\|\nabla F(\bx_t)-\widetilde{\nabla} F(\bx_t)\right\|^2 -\left(\frac{L}{16}+\frac{\mu_t}{2} - \frac{1}{2\eta_t}\right) \left\|\bx_{t+1}-\bx_t\right\|^2\right)
            \\&\geq \E\left(\frac{1}{2\eta_t}\left(\|\by-\bx_{t+1}\|^2-\|\by-\bx_t\|^2\right) \right) + \E\left( \frac{1-\|\underline{\bx}\|_{\infty}}{4}f(\by) - f\left(\frac{\bx_t+\underline{\bx}}{2}\right)\right)
            \\&\quad \quad - \E\left(\frac{1}{2\mu_t}\left\|\nabla F(\bx_t)-\widetilde{\nabla} F(\bx_t)\right\|^2 -\left(\frac{L}{16}+\frac{\mu_t}{2} - \frac{1}{2\eta_t}\right) \left\|\bx_{t+1}-\bx_t\right\|^2 \right).
        \end{aligned}
    \end{equation}
\end{proof}

Next we prove \cref{thm:4 nonmonotone}.
\begin{proof}
    Set $\by = \bx^*= \argmax_{\bx \in \mathcal{C}} f(\bx)$ in Lemma \ref{nonmonotone single step}, and let $\mu_t = \frac{1}{\eta_t}-\frac{L}{8}$, then we have
    \begin{equation}
        \begin{aligned}
            &\sum_{t=1}^{T-1}\E \left( F(\bx_{t+1})-F(\bx_t)-\frac{1-\|\underline{\bx}\|_{\infty}}{4} f(\bx^*) +f\left(\frac{\bx_t+\underline{\bx}}{2}\right)\right)
            \\&\geq \sum_{t=1}^{T-1} \E\left(\frac{1}{2\eta_t}(\|\bx^*-\bx_{t+1}\|^2-\|\bx^*-\bx_{t}\|^2)\right)-\sum_{t=1}^{T-1}\E\left(\frac{1}{2\mu_t}\left\|\nabla F(\bx_t)-\widetilde{\nabla}F(\bx_t)\right\|^2\right)
            \\&\geq -\left(\frac{3}{8}\sigma^2 + \frac{\ln (64)-4}{12} L^2 \mathrm{diam}^2(\mathcal{C})\right)\sum_{t=1}^{T-1}\frac{1}{2\mu_t}-\sum_{t=2}^{T-1} \left(\frac{1}{2\eta_{t}}-\frac{1}{2\eta_{t-1}}\right)\E\left(\|\bx^*-\bx_t\|^2\right)
            \\&\quad\quad + \E\left(\frac{\|\bx^*-\bx_{T}\|^2}{2\eta_{T-1}}-\frac{\|\bx^*-\bx_1\|^2}{2\eta_1}\right)
            \\&\geq -\left(\frac{3}{8}\sigma^2 + \frac{\ln (64)-4}{12} L^2 \mathrm{diam}^2(\mathcal{C})\right)\sum_{t=1}^{T-1}\frac{1}{2\mu_t} - \mathrm{diam}^2(\mathcal{C})\sum_{t=2}^{T-1}\left(\frac{1}{2\eta_{t}}-\frac{1}{2\eta_{t-1}}\right) - \frac{\mathrm{diam}^2(\mathcal{C})}{2\eta_1}
            \\&=-\left(\frac{3}{8}\sigma^2 + \frac{\ln (64)-4}{12} L^2 \mathrm{diam}^2(\mathcal{C})\right)\sum_{t=1}^{T-1}\frac{8\eta_t}{2(8-L\eta_t)} -\frac{\mathrm{diam}^2(\mathcal{C})}{2\eta_{T-1}}.
        \end{aligned}
    \end{equation}
    Let $\eta_t = \frac{1}{L\sqrt{t}}$, we get
    \begin{equation}
        \begin{aligned}
            &\sum_{t=1}^{T-1}\E \left( F(\bx_{t+1})-F(\bx_t)-\frac{1-\|\underline{\bx}\|_{\infty}}{4}f(\bx^*) +f\left(\frac{\bx_t+\underline{\bx}}{2}\right)\right)
            \\&\geq -\left(\frac{3}{8}\sigma^2 + \frac{\ln (64)-4}{12} L^2 \mathrm{diam}^2(\mathcal{C})\right)\sum_{t=1}^{T-1}\frac{4}{7L\sqrt{t}} - L\mathrm{diam}^2(\mathcal{C})\sqrt{T-1}
            \\&\geq -\frac{4}{7L}\left(\frac{3}{8}\sigma^2 + \left(\frac{\ln (64)-4}{12} L^2+\frac{4L^{2}}{7}\right) \mathrm{diam}^2(\mathcal{C})\right)\sqrt{T-1}.
        \end{aligned}
    \end{equation}
    Then,
    \begin{equation}
        \begin{aligned}
            &\E\left(\sum_{t=1}^{T-1} f\left(\frac{\bx_t+\underline{\bx}}{2}\right)\right)
            \\&\geq \frac{(1-\|\bx\|_{\infty})(T-1)}{4}f(\bx^*)- -\frac{4}{7L}\left(\frac{3}{8}\sigma^2 + \left(\frac{\ln (64)-4}{12} L^2+\frac{4L^{2}}{7}\right) \mathrm{diam}^2(\mathcal{C})\right)\sqrt{T-1}
            \\&\quad \quad +  F(\bx_1)-F(\bx_T)
            \\&\geq \frac{(1-\|\bx\|_{\infty})(T-1)}{4}f(\bx^*)-\frac{4}{7L}\left(\frac{3}{8}\sigma^2 + \left(\frac{\ln (64)-4}{12} L^2+\frac{4L^{2}}{7}\right) \mathrm{diam}^2(\mathcal{C})\right)\sqrt{T-1}
            \\&\quad \quad - \frac{L \mathrm{diam}^2(\mathcal{C})}{8}.
        \end{aligned}
    \end{equation}
    Which shows,
    \begin{equation}
        \begin{aligned}  
            &\E\left(\sum_{t=1}^{T-1} \frac{1}{T-1} f\left(\frac{\bx_t+\underline{\bx}}{2}\right)\right)
            \\&\geq \frac{(1-\|\bx\|_{\infty})}{4}f(\bx^*)- \frac{4}{7L}\left(\frac{3}{8}\sigma^2 + \left(\frac{\ln (64)-4}{12} L^2+\frac{4L^{2}}{7}\right) \mathrm{diam}^2(\mathcal{C})\right)\frac{1}{\sqrt{T-1}}
            \\&\quad \quad - \frac{L_1 \mathrm{diam}(\mathcal{C})}{8(T-1)}
            \\&\geq \frac{(1-\|\bx\|_{\infty})}{4}f(\bx^*)- O\left(\frac{1}{\sqrt{T}}\right).
        \end{aligned}
    \end{equation}
    Therefore, the solution returned by \cref{alg:1} satisfies
    \[\E\left(f\left(\frac{\bx_l+\underline{\bx}}{2}\right)\right)\geq \frac{1-\|\bx\|_{\infty}}{4}f(\bx^*)- O\left(\frac{1}{\sqrt{T}}\right).\]
\end{proof}
\section{Proofs in Section~\ref{sec:delay}}\label{Appendix:D}
Since it will not lead to ambiguity, we omit the subscripts of $F_{\uparrow,t}$ and $F_{\sim,t}$ and use $F_t$ to represent both non-oblivious functions of $f_t$ according to its monotonicity.
\subsection{Proof of \texorpdfstring{\cref{thm:5}}{21}}\label{Appendix:D1}
\begin{proof}
    We denote $\widetilde{\nabla} F_{t}(\boldsymbol{x}_t)=\frac{1-e^{-\gamma}}{\gamma}\widetilde{\nabla}f(z_{t}\cdot\boldsymbol{x}_{t})$ and $\boldsymbol{x}^{*}=\arg\max_{\boldsymbol{x}\in\mathcal{C}}\sum_{t=1}^{T}f_{t}(\boldsymbol{x})$. From the projection, we know that
    \begin{equation}\label{equ:b43}
        \begin{aligned}
           \left\|\boldsymbol{x}_{t+1}-\boldsymbol{x}^{*}\right\|&\le\left\|\boldsymbol{y}_{t+1}-\boldsymbol{x}^{*}\right\|=\left\|\boldsymbol{x}_{t}+\eta\sum_{s\in\mathcal{F}_{t}}\widetilde{\nabla} F_{s}(\boldsymbol{x}_{s})-\boldsymbol{x}^{*}\right\|,
        \end{aligned}
    \end{equation}where the first inequality from the projection; and the first equality from $\boldsymbol{y}_{t+1}=\boldsymbol{x}_{t}+\eta\sum_{s\in\mathcal{F}_{t}}\frac{1-e^{-\gamma}}{\gamma}\widetilde{\nabla}f_{s}(z_{s}\cdot\boldsymbol{x}_{s})$ in \cref{alg:2}.
    
    We order the set  $\mathcal{F}_{t}=\{s_1,\dots,s_{|\mathcal{F}_{t}|}\}$, where $s_1<s_2<\dots<s_{|\mathcal{F}_{t}|}$ and $|\mathcal{F}_{t}|=\#\{u\in[T]: u+d_{u}-1=t\}$. Moreover, we also denote $\mathcal{F}_{t,m}=\{u\in\mathcal{F}_{t}\ and\ u<m\}$, $\boldsymbol{x}_{t+1,m}=x_{t}+\eta\sum_{s\in\mathcal{F}_{t,m}}\widetilde{\nabla} F_{s}(\boldsymbol{x}_{s})$ and $s_{|\F_{t}|+1}=t+1$. Therefore,
    \begin{equation}\label{equ:b44}
        \begin{aligned}
           \left\|\boldsymbol{x}_{t+1,s_{k+1}}-\boldsymbol{x}^{*}\right\|^{2}&=\left\|\boldsymbol{x}_{t+1,s_{k}}+\eta\widetilde{\nabla}F_{s_{k}}(\boldsymbol{x}_{s_{k}})-\boldsymbol{x}^{*}\right\|^{2}\\
           &=\left\|\boldsymbol{x}_{t+1,s_{k}}-\boldsymbol{x}^{*}\right\|^{2}+2\eta\langle \boldsymbol{x}_{t+1,s_{k}}-\boldsymbol{x}^{*},\widetilde{\nabla}F_{s_{k}}(\boldsymbol{x}_{s_{k}})\rangle+\eta^2\left\|\widetilde{\nabla}F_{s_{k}}(\boldsymbol{x}_{s_{k}})\right\|^{2}
           \end{aligned}
    \end{equation}
    According to \cref{equ:b44}, we have
    
    \begin{equation}\label{equ:b45}
        \begin{aligned}
           &\left\|\boldsymbol{y}_{t+1}-\boldsymbol{x}^{*}\right\|^{2}-\left\|\boldsymbol{x}_{t}-\boldsymbol{x}^{*}\right\|^{2}\\
           = & \sum_{k=1}^{|\mathcal{F}_{t}|}(\left\|\boldsymbol{x}_{t+1,s_{k+1}}-\boldsymbol{x}^{*}\right\|^{2}-\left\|\boldsymbol{x}_{t+1,s_{k}}-\boldsymbol{x}^{*}\right\|^{2})\\
           = & 2\eta\sum_{s\in\mathcal{F}_{t}}\langle \boldsymbol{x}_{t+1,s}-\boldsymbol{x}^{*},\widetilde{\nabla}F_{s}(\boldsymbol{x}_{s})\rangle+\eta^2\sum_{s\in\mathcal{F}_{t}}\left\|\widetilde{\nabla}F_{s}(\boldsymbol{x}_{s})\right\|^{2}\\
           = & 2\eta\sum_{s\in\mathcal{F}_{t}}\langle \boldsymbol{x}_{t+1,s}-\boldsymbol{x}_{s},\widetilde{\nabla}F_{s}(\boldsymbol{x}_{s})\rangle+2\eta\sum_{s\in\mathcal{F}_{t}}\langle \boldsymbol{x}_{s}-\boldsymbol{x}^{*},\widetilde{\nabla}F_{s}(\boldsymbol{x}_{s})\rangle+\eta^2\sum_{s\in\mathcal{F}_{t}}\left\|\widetilde{\nabla}F_{s}(\boldsymbol{x}_{s})\right\|^{2},
           \end{aligned}
    \end{equation} where the first equality follows from setting $\boldsymbol{x}_{t+1,|\mathcal{F}_{t}|+1}=\boldsymbol{y}_{t+1}$; the second from \cref{equ:b44}. 
    
    \noindent Therefore,
    \begin{equation}\label{equ:b46}
        \begin{aligned}
           &\mathbb{E}(\left\|\boldsymbol{y}_{t+1}-\boldsymbol{x}^{*}\right\|^{2}-\left\|\boldsymbol{x}_{t}-\boldsymbol{x}^{*}\right\|^{2})\\
           = & 2\eta\mathbb{E}\left(\sum_{s\in\mathcal{F}_{t}}\langle \boldsymbol{x}_{t+1,s}-\boldsymbol{x}_{s},\widetilde{\nabla}F_{s}(\boldsymbol{x}_{s})\rangle+\sum_{s\in\mathcal{F}_{t}}\langle \boldsymbol{x}_{s}-\boldsymbol{x}^{*},\mathbb{E}(\widetilde{\nabla}F_{s}(\boldsymbol{x}_{s})|\boldsymbol{x}_{s})\rangle\right)+\eta^2\mathbb{E}(\sum_{s\in\mathcal{F}_{t}}\left\|\widetilde{\nabla}F_{s}(\boldsymbol{x}_{s})\right\|^{2})\\
           = & 2\eta\mathbb{E}\left(\sum_{s\in\mathcal{F}_{t}}\langle \boldsymbol{x}_{t+1,s}-\boldsymbol{x}_{s},\widetilde{\nabla}F_{s}(\boldsymbol{x}_{s})\rangle+\sum_{s\in\mathcal{F}_{t}}\langle \boldsymbol{x}_{s}-\boldsymbol{x}^{*},\nabla F_{s}(\boldsymbol{x}_{s})\rangle\right)+\eta^2\mathbb{E}(\sum_{s\in\mathcal{F}_{t}}\left\|\widetilde{\nabla}F_{s}(\boldsymbol{x}_{s})\right\|^{2})\\
           \le & 2\eta\mathbb{E}\left(\sum_{s\in\mathcal{F}_{t}}\langle \boldsymbol{x}_{t+1,s}-\boldsymbol{x}_{s},\widetilde{\nabla}F_{s}(\boldsymbol{x}_{s})\rangle+\sum_{s\in\mathcal{F}_{t}}\left(f_{s}(\boldsymbol{x}_{s})-(1-e^{-\gamma})f_{s}(\boldsymbol{x}^{*})\right)\right)+\eta^2\mathbb{E}(\sum_{s\in\mathcal{F}_{t}}\left\|\widetilde{\nabla}F_{s}(\boldsymbol{x}_{s})\right\|^{2})
          \end{aligned}
    \end{equation}where the first inequality from the definition of non-oblivious function $F$.
    
    \noindent Therefore, we have:
    \begin{equation}\label{equ:b47}
        \begin{aligned}
           &2\eta\mathbb{E}\left((1-e^{-\gamma})\sum_{t=1}^{T}f_t(\boldsymbol{x}^{*})-\sum_{t=1}^{T}f_{t}(\boldsymbol{x}_{t})\right)\\
           = & 2\eta\mathbb{E}\left(\sum_{t=1}^{T}\sum_{s\in\mathcal{F}_{t}}\left((1-e^{-\gamma})f_s(\boldsymbol{x}^{*})-f_{s}(\boldsymbol{x}_{s})\right)\right)\\
           \le & \sum_{t=1}^{T}\left(\mathbb{E}(\left\|\boldsymbol{x}_{t}-\boldsymbol{x}^{*}\right\|^{2}-\left\|\boldsymbol{y}_{t+1}-\boldsymbol{x}^{*}\right\|^{2})+2\eta\mathbb{E}(\sum_{s\in\mathcal{F}_{t}}\langle \boldsymbol{x}_{t+1,s}-\boldsymbol{x}_{s},\widetilde{\nabla}F_{s}(\boldsymbol{x}_{s})\rangle)+\eta^2\mathbb{E}(\sum_{s\in\mathcal{F}_{t}}\left\|\widetilde{\nabla}F_{s}(\boldsymbol{x}_{s})\right\|^{2})\right)\\
           \le & \sum_{t=1}^{T}\left(\mathbb{E}(\left\|\boldsymbol{x}_{t}-\boldsymbol{x}^{*}\right\|^{2}-\left\|\boldsymbol{x}_{t+1}-\boldsymbol{x}^{*}\right\|^{2})+2\eta\mathbb{E}(\sum_{s\in\mathcal{F}_{t}}\langle \boldsymbol{x}_{t+1,s}-\boldsymbol{x}_{s},\widetilde{\nabla}F_{s}(\boldsymbol{x}_{s})\rangle)+\eta^2\mathbb{E}(\sum_{s\in\mathcal{F}_{t}}\left\|\widetilde{\nabla}F_{s}(\boldsymbol{x}_{s})\right\|^{2})\right)\\
           \le & \mathrm{diam}^{2}(\mathcal{C})+\sum_{t=1}^{T}\left(2\eta\mathbb{E}(\sum_{s\in\mathcal{F}_{t}}\langle \boldsymbol{x}_{t+1,s}-\boldsymbol{x}_{s},\widetilde{\nabla}F_{s}(\boldsymbol{x}_{s})\rangle)+\eta^2\mathbb{E}(\sum_{s\in\mathcal{F}_{t}}\left\|\widetilde{\nabla}F_{s}(\boldsymbol{x}_{s})\right\|^{2})\right)\\
           \le & \mathrm{diam}^{2}(\mathcal{C})+\eta^{2}\E\left(\sum_{t=1}^{T}|\mathcal{F}_{t}|\left(\frac{1-e^{-\gamma}}{\gamma}\right)^2\|\widetilde{\nabla}f(z_t\cdot\bx_t)\|^2\right)+2\eta\sum_{t=1}^{T}\left(\mathbb{E}\left(\sum_{s\in\mathcal{F}_{t}}\langle \boldsymbol{x}_{t+1,s}-\boldsymbol{x}_{s},\widetilde{\nabla}F_{s}(\boldsymbol{x}_{s})\rangle\right)\right)\\
           \leq & \mathrm{diam}^{2}(\mathcal{C})+\left(\frac{1-e^{-\gamma}}{\gamma}\right)^2\eta^2 \widetilde{G}^2 T + 2\eta\sum_{t=1}^{T}\left(\mathbb{E}\left(\sum_{s\in\mathcal{F}_{t}}\langle \boldsymbol{x}_{t+1,s}-\boldsymbol{x}_{s},\widetilde{\nabla}F_{s}(\boldsymbol{x}_{s})\rangle\right)\right).
        \end{aligned}
    \end{equation}
    \noindent For the final part in \cref{equ:b47}, 
    \begin{equation}\label{equ:b48}
        \begin{aligned}
           &\E\left(\langle \boldsymbol{x}_{t+1,s}-\boldsymbol{x}_{s},\widetilde{\nabla}F_{s}(\boldsymbol{x}_{s})\rangle\right)\\
           \le & \E\left(\left\|\widetilde{\nabla}F_{s}(\boldsymbol{x}_{s})\right\|\left\|\boldsymbol{x}_{t+1,s}-\boldsymbol{x}_{s}\right\|\right)\\
           \le & \E\left(\left\|\widetilde{\nabla}F_{s}(\boldsymbol{x}_{s})\right\|(\left\|\boldsymbol{x}_{t+1,s}-\boldsymbol{x}_{t}\right\|+\left\|\boldsymbol{x}_{t}-\boldsymbol{x}_{s}\right\|)\right)\\
           \le & \E\left(\left\|\widetilde{\nabla}F_{s}(\boldsymbol{x}_{s})\right\|(\left\|\boldsymbol{x}_{t+1,s}-\boldsymbol{x}_{t}\right\|+\sum_{m=s}^{t-1}\left\|\boldsymbol{y}_{m+1}-\boldsymbol{x}_{m}\right\|)\right)\\
           \le & \E\left(\left\|\widetilde{\nabla}F_{s}(\boldsymbol{x}_{s})\right\|( \sum_{k\in\mathcal{F}_{t,s}}\eta\|\widetilde{\nabla}F_k(\bx_k)\|+\sum_{m=s}^{t-1} \sum_{k\in\mathcal{F}_{m}}\eta\|\widetilde{\nabla}F_k(\bx_k)\|)\right) \\
            \le & \left(\frac{1-e^{-\gamma}}{\gamma}\right)^2 \eta(|\mathcal{F}_{t,s}|+\sum_{m=s}^{t-1}|\mathcal{F}_{m}|)\widetilde{G}^2
        \end{aligned}
    \end{equation} where the third inequality follows from $\left\|\boldsymbol{x}_{t}-\boldsymbol{x}_{s}\right\|\le\left\|\boldsymbol{y}_{t}-\boldsymbol{x}_{s}\right\|\le\left\|\boldsymbol{y}_{t}-\boldsymbol{x}_{t-1}\right\|+\left\|\boldsymbol{x}_{t-1}-\boldsymbol{x}_{s}\right\|\le\dots\le\sum_{m=s}^{t-1}\left\|\boldsymbol{y}_{m+1}-\boldsymbol{x}_{m}\right\|$.
    
    \noindent Finally, we have
    \begin{equation}\label{equ:b49}
        \begin{aligned}
           &\mathbb{E}\left((1-e^{-\gamma})\sum_{t=1}^{T}f_t(\boldsymbol{x}^{*})-\sum_{t=1}^{T}f_{t}(\boldsymbol{x}_{t})\right)\\
           \le & \frac{\mathrm{diam}^{2}(\mathcal{C})}{2\eta}+\left(\frac{1-e^{-\gamma}}{\gamma}\right)^2\frac{\eta \widetilde{G}^2 T}{2}+\left(\frac{1-e^{-\gamma}}{\gamma}\right)^2\eta \widetilde{G}^2\sum_{t=1}^{T}\sum_{s\in\mathcal{F}_{t}}\left(|\mathcal{F}_{t,s}|+\sum_{m=s}^{t-1}|\mathcal{F}_{m}|\right)
        \end{aligned}
    \end{equation}
    
    Firstly, $\sum_{t=1}^{T}|\mathcal{F}_{t}|\le T$. Next, we investigate the $|\mathcal{F}_{t,s}|+\sum_{m=s}^{t-1}|\mathcal{F}_{m}|$ when $s\in\mathcal{F}_{t}$. 
    
    When $s\in\mathcal{F}_{t}$, i.e., $s+d_{s}-1=t$, for any $q\in(\mathcal{F}_{t,s}\bigcup(\cup_{m=s}^{t-1}\mathcal{F}_{m}))$, if $s+1\le q\le t-1$, the feedback of round $q$ must be delivered before the round $t$, namely, $q+d_{q}-1\le t-1$. Moreover, if $q\le s-1$, the feedback of round $q$ could be delivered between round $s$ and round $t$. Therefore, 
    \begin{equation}\label{equ:b50}
        \begin{aligned}
           |\mathcal{F}_{t,s}|+\sum_{m=s}^{t-1}|\mathcal{F}_{m}|= & |\{i| s+1\le i\le t-1,\ and \ i+d_{i}-1\le t-1 \}|\\&+|\{i| 1\le i\le s-1,\ and \ s\le i+d_{i}-1\le t \}|.
        \end{aligned}
    \end{equation} 
    
    \noindent When $s\in\mathcal{F}_{t}$, we can derive that $|\{i| s+1\le i\le t-1,\ and\ i+d_{i}-1\le t-1 \}|\le t-s-1\le d_{s}$. Thus, $\sum_{t=1}^{T}\sum_{s\in\mathcal{F}_{t}}|\{i| s+1\le i\le t-1,\ and\ i+d_{i}-1\le t-1 \}|\le\sum_{i=1}^{T}d_{i}=D$.
    
    Next, for each $b\in\{i| 1\le i\le s-1,\ and\ s\le i+d_{i}-1\le t \}$, we have $b\le s\le b+d_{b}-1\le s+d_{s}-1$ so that $\sum_{t=1}^{T}\sum_{s\in \mathcal{F}_{t}}|\{i| 1\le i\le s-1,\ and\ s \le i+d_{i}-1\le t \}|\le\sum_{i=1}^{T}|\{s|\  i<s\le i+d_{i}-1\le s+d_{s}-1\}|\le\sum_{i=1}^{T} d_{i}$. 
    
    Hence,
    \begin{equation}\label{equ:51}
        \begin{aligned}
           &\mathbb{E}\left((1-e^{-\gamma})\sum_{t=1}^{T}f_t(\boldsymbol{x}^{*})-\sum_{t=1}^{T}f_{t}(\boldsymbol{x}_{t})\right)\\
           \le & \frac{\mathrm{diam}^{2}(\mathcal{C})}{2\eta}+\left(\frac{1-e^{-\gamma}}{\gamma}\right)^2\frac{\eta \widetilde{G}^2 T}{2} + \left(\frac{1-e^{-\gamma}}{\gamma}\right)^2\eta \widetilde{G}^2 D\\
           \le & O(\sqrt{D}).
        \end{aligned}
    \end{equation} where the final equality from $\eta=\frac{\mathrm{diam}(\mathcal{C})}{\widetilde{G}\sqrt{D}}$.
    \end{proof}

\subsection{Proof of \texorpdfstring{\cref{thm:5 nonmonotone}}{22}}\label{Appendix:D2}
\begin{proof}
    In this proof, $\widetilde{\nabla} F_s$ represent $\widetilde{\nabla} F_{\sim,s}$, we omit the subscript $\sim$ which indicates that it is a non-oblivious function designed for non-monotone functions.
    
    Similar to (\ref{equ:b46}), we have
    \begin{equation}
        \begin{aligned}
            &\E\left(\|\boldsymbol{y}_{t+1}-\bx^*\|^2-\|\bx_t-\bx^*\|^2\|\right)\\
            =&2\eta\mathbb{E}\left(\sum_{s\in\mathcal{F}_{t}}\langle \boldsymbol{x}_{t+1,s}-\boldsymbol{x}_{s},\widetilde{\nabla}F_{s}(\boldsymbol{x}_{s})\rangle+\sum_{s\in\mathcal{F}_{t}}\langle \boldsymbol{x}_{s}-\boldsymbol{x}^{*},\nabla F_{s}(\boldsymbol{x}_{s})\rangle\right)+\eta^2\mathbb{E}(\sum_{s\in\mathcal{F}_{t}}\left\|\widetilde{\nabla}F_{s}(\boldsymbol{x}_{s})\right\|^{2})\\
            \leq & 2\eta \mathbb{E}\left(\sum_{s\in\mathcal{F}_{t}}\langle \boldsymbol{x}_{t+1,s}-\boldsymbol{x}_{s},\widetilde{\nabla}F_{s}(\boldsymbol{x}_{s})\rangle+\sum_{s\in\mathcal{F}_{t}} \left(f_s\left(\frac{\bx_s+\underline{\bx}}{2}\right) -\frac{1-\|\underline{\bx}\|_{\infty}}{4}f_s(\bx^*)\right) \right)+\eta^2\mathbb{E}\left(\sum_{s\in\mathcal{F}_{t}}\left\|\widetilde{\nabla}F_{s}(\boldsymbol{x}_{s})\right\|^{2}\right).
        \end{aligned}
    \end{equation}
    Therefore, the following inequality holds from the similar derivation in (\ref{equ:b47}) $\sim$ (\ref{equ:b50}),
    \begin{equation}
        \begin{aligned}
            &2\eta \E\left(\frac{1-\|\underline{\bx}\|_{\infty}}{4}\sum_{t=1}^T f_t(\bx^*) - \sum_{t=1}^T f_t\left(\frac{\bx_t+\underline{\bx}}{2}\right)\right)
            \\\leq &\mathrm{diam}^2(\mathcal{C}) + \sum_{t=1}^T\left(2\eta \E\left(\sum_{s\in \mathcal{F}_t}\langle\bx_{t+1,s}-\bx_s,\widetilde{\nabla} F_s(\bx_s)\rangle\right) + \eta^2 \E\left(\sum_{s\in \mathcal{F}_t}\|\widetilde{\nabla}F_s(\bx_s)\|^2\right)\right)\\
            \le & \mathrm{diam}^{2}(\mathcal{C})+\eta^{2}\E\left(\sum_{t=1}^{T}\frac{3|\mathcal{F}_{t}|}{8}\left\|\widetilde{\nabla}f\left(\frac{z_t}{2}\cdot\bx_t+(1-\frac{z_t}{2})\cdot\underline{\bx}\right)\right\|^2\right)+2\eta\sum_{t=1}^{T}\left(\mathbb{E}\left(\sum_{s\in\mathcal{F}_{t}}\langle \boldsymbol{x}_{t+1,s}-\boldsymbol{x}_{s},\widetilde{\nabla}F_{s}(\boldsymbol{x}_{s})\rangle\right)\right)\\
           \leq & \mathrm{diam}^{2}(\mathcal{C})+\frac{3}{8}\eta^2 \widetilde{G}^2 T+ 2\eta\sum_{t=1}^{T}\left(\mathbb{E}\left(\sum_{s\in\mathcal{F}_{t}}\langle \boldsymbol{x}_{t+1,s}-\boldsymbol{x}_{s},\widetilde{\nabla}F_{s}(\boldsymbol{x}_{s})\rangle\right)\right)\\
           \leq &\mathrm{diam}^{2}(\mathcal{C})+\frac{3}{8}\eta^2 \widetilde{G}^2 T + \frac{3}{4}\eta^2 \widetilde{G}^2 D .
        \end{aligned}
    \end{equation}
    Hence,
    \begin{equation}
        \begin{aligned}
            \E\left(\frac{1-\|\underline{\bx}\|_{\infty}}{4}\sum_{t=1}^T f_t(\bx^*) -\sum_{t=1}^T f_t\left(\frac{\bx_t+\underline{\bx}}{2}\right)\right)
            \leq & \frac{\mathrm{diam}^2(\mathcal{C})}{2\eta} + \frac{9\eta \widetilde{G}^2 D}{16}\\
            = & O(\sqrt{D}),
        \end{aligned}
    \end{equation}
    where the final equality is from $\eta=\frac{\mathrm{diam}(\mathcal{C})}{\widetilde{G}\sqrt{D}}$.
\end{proof}
\section{Proofs in Section~\ref{sec:bandit}}\label{Appendix:bandit}
\subsection{Proof of \texorpdfstring{\cref{lem:interior}}{23}} \label{sect:proof of lemma 6}
\begin{proof}
    We first show the convexity of $\mathcal{C}_{\delta,\by}$. Let $\bx_1,\bx_2\in \mathcal{C}_{\delta,\by}$ and $h\in [0,1]$. Consider their convex combination $\bx_3:=h\bx_1+(1-h)\bx_2$. Since 
    \begin{align*}
        \by+ (1+\delta) (\bx_3-\by)= h(\by+ (1+\delta) (\bx_1-\by))+ (1-h)\left(\by+ (1+\delta) (\bx_2-\by)\right)
    \end{align*}
    and $\by+ (1+\delta) (\bx_1-\by)\in \mathcal{C}$, $\by+ (1+\delta) (\bx_2-\by)\in \mathcal{C}$, we have $\by+ (1+\delta) (\bx_3-\by)\in \mathcal{C}$. Then $\pi_{\by}(\bx_3)\leq (1+\delta)^{-1}$ and $\bx_3\in\mathcal{C}_{\delta,\by}$, which shows the convexity of $\mathcal{C}_{\delta,\by}$.

    Then we turn to prove $\mathbb{B}(\bx,\delta R)\subseteq \mathcal{C}$. Let $\bv \in \mathbb{B}(\boldsymbol{0},R)$, consider $\bx + \delta \bv$. Check that
    \[\bx+\frac{\delta}{1+\delta}\bv = \frac{\delta}{1+\delta}\left(\by+ \bv\right) + \frac{1}{1+\delta}\left(\by+(1+\delta)(\bx-\by)\right) \]
    holds. Since $\by+\bv\in \mathbb{B}(\by,R)\subseteq \mathcal{C}$ and $\by+(1+\delta)(\bx-\by)\in\mathcal{C}$, we have $\bx+\frac{\delta}{1+\delta}\bv\in\mathcal{C}$. Then $\mathbb{B}(\bx,\frac{\delta}{1+\delta} R)\subseteq \mathcal{C}$.
\end{proof}

\subsection{Supporting Lemmas}
The following lemma shows that by project some point $\by\in \mathrm{int}(\mathcal{C})$ onto the Minkowski set, we obtain a point that is close to $\by$.
\begin{lemma}[\citep{abernethy2008competing}]\label{lem:Minkowski projection}
    Let $\mathcal{C}$ be a compact convex set, $\by\in \mathrm{int}(\mathcal{C})$, $\by^*\in \mathcal{C}$ and $\hat{\by}^*\triangleq \mathcal{P}_{\mathcal{C}_{\delta',\by}}(\by^*)$ be the projection of $\by^*$ onto the Minkowski set $\mathcal{C}_{\delta',\by}$, then
    \[\|\by^*-\hat{\by}^*\|\leq \frac{\delta'}{1+\delta'} \mathrm{diam}(\mathcal{C}).\]
    Moreover, if $\delta'$ is set to $\frac{\delta}{R-\delta}$, we have
    \[\|\by^*-\hat{\by}^*\|\leq \frac{\delta}{R} \mathrm{diam}(\mathcal{C}).\]
\end{lemma}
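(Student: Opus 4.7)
The plan is to exhibit an explicit feasible point of the Minkowski set $\mathcal{C}_{\delta',\by}$ that is close to $\by^*$, and then use the defining property of projection to conclude. Concretely, I would propose the candidate $\tilde{\by} \triangleq \frac{1}{1+\delta'}\by^* + \frac{\delta'}{1+\delta'}\by$, which is a convex combination of two points of $\mathcal{C}$ and therefore lies in $\mathcal{C}$.

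The first step is to check that $\tilde{\by} \in \mathcal{C}_{\delta',\by}$. To do this I would compute $\by + (1+\delta')(\tilde{\by}-\by)$ and verify that this equals $\by^*$, which is in $\mathcal{C}$ by assumption. By the definition of the Minkowski functional $\pi_{\by}$, this gives $\pi_{\by}(\tilde{\by}) \leq (1+\delta')^{-1}$, so $\tilde{\by}\in\mathcal{C}_{\delta',\by}$ as required.

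The second step is a direct norm computation: since $\by^* - \tilde{\by} = \frac{\delta'}{1+\delta'}(\by^* - \by)$, we get $\|\by^* - \tilde{\by}\| = \frac{\delta'}{1+\delta'}\|\by^*-\by\| \leq \frac{\delta'}{1+\delta'}\mathrm{diam}(\mathcal{C})$. Because $\hat{\by}^*$ is by definition the closest point in $\mathcal{C}_{\delta',\by}$ to $\by^*$, and $\tilde{\by}$ is a feasible competitor, we conclude $\|\by^*-\hat{\by}^*\|\le \|\by^*-\tilde{\by}\| \le \frac{\delta'}{1+\delta'}\mathrm{diam}(\mathcal{C})$.

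For the ``moreover'' clause, the only thing left is an algebraic simplification: substituting $\delta' = \frac{\delta}{R-\delta}$ yields $\frac{\delta'}{1+\delta'} = \frac{\delta}{R}$, giving the stated bound. There is essentially no obstacle here; the only subtlety is recognising the correct convex combination in the first step, after which everything reduces to the optimality of the projection and a one-line algebraic identity.
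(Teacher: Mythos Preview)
Your proposal is correct and follows essentially the same argument as the paper: the paper's point $\bx$ on the segment $[\by,\by^*]$ with $\|\bx-\by\|/\|\by^*-\by\|=1/(1+\delta')$ is exactly your $\tilde{\by}=\frac{1}{1+\delta'}\by^*+\frac{\delta'}{1+\delta'}\by$, and the remaining steps (checking $\by+(1+\delta')(\tilde{\by}-\by)=\by^*\in\mathcal{C}$, using projection optimality, and substituting $\delta'=\delta/(R-\delta)$) coincide line for line.
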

\begin{proof}
    Consider the point $\bx$ in the segment $[\by,\by^*]$ satisfying $\frac{\|\bx-\by\|}{\|\by^*-\by\|}=\frac{1}{1+\delta}$. Since $\by+(1+\delta')(\bx-\by)=\by^*\in \mathcal{K}$, we can deduce that $\bx\in \mathcal{C}_{\delta,\by}$. Thus,
    \[\|\hat{\by}^*-\by^*\|\leq \|\bx-\by^*\|=\left(1-\frac{1}{1+\delta'}\right)\|\by^*-
    \by\|\leq \frac{\delta'}{1+\delta'}\mathrm{diam}(\mathcal{C}).\]
    Plug $\delta'=\frac{\delta}{R-\delta}$, we have
    \[\|\by^*-\hat{\by}^*\|\leq \frac{\delta}{R} \mathrm{diam}(\mathcal{C}).\]
\end{proof}

We then show the properties of the $\delta$-smoothed version of DR-submodular functions in the following lemma.
\begin{lemma}\label{lem:submodular smooth}
    The following properties hold for $\delta$-smoothed version of a twice differentiable function $f(\boldsymbol{x})$.
    \begin{itemize}
        \item[(i)] If $f(\bx)$ is monotone, then its $\delta$-smoothed version $\hat{f}^{\delta}(\bx)$ is also monotone.
        \item[(ii)] If $f(\boldsymbol{x})$ is DR-submodular, then its $\delta$-smoothed version $\hat{f}^{\delta}(\boldsymbol{x})$ is also DR-submodular. Moreover, if $f(\boldsymbol{x})$ is monotone $\gamma$-weakly DR-submodular, then $\hat{f}^{\delta}(\bx)$ is also monotone $\gamma$-weakly DR-submodular.
        \item[(iii)] If $f(\boldsymbol{x})$ is $L_1$-lipschitz continuous and $L_2$-smooth, then $\hat{f}^{\delta}(\boldsymbol{x})$ is $L_1$-lipschitz continuous and $L_2$-smooth. 
    \end{itemize}
\end{lemma}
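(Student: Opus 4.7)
My plan is to derive each of the three properties by exploiting the fact that $\hat{f}^{\delta}(\bx) = \E_{\boldsymbol{u}\sim \mathbb{B}_d}\left(f(\bx+\delta\boldsymbol{u})\right)$ is just a convex combination (via the uniform density on $\mathbb{B}_d$) of translated copies of $f$, and each of the desired properties (monotonicity, DR-submodularity, weak DR-submodularity, Lipschitz continuity, smoothness) is preserved under translation and averaging. A preliminary step is to justify differentiating under the integral sign: since $f$ is twice differentiable and the ball $\mathbb{B}_d$ is compact, standard domination arguments give $\nabla \hat{f}^{\delta}(\bx) = \E\bigl(\nabla f(\bx+\delta\boldsymbol{u})\bigr)$ and $\nabla^2 \hat{f}^{\delta}(\bx) = \E\bigl(\nabla^2 f(\bx+\delta\boldsymbol{u})\bigr)$.

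For (i), I would fix $\bx \le \by$ componentwise and note that $\bx+\delta\boldsymbol{u}\le \by+\delta\boldsymbol{u}$ for every realization of $\boldsymbol{u}$; monotonicity of $f$ then gives a pointwise inequality that survives the expectation, yielding $\hat{f}^{\delta}(\bx)\le \hat{f}^{\delta}(\by)$. For (ii), I would use the twice-differentiable characterization from Section~\ref{sec:pre}, i.e.\ DR-submodularity is equivalent to $\partial^2 f / \partial x_i\partial x_j \le 0$ for all $i,j$. Pushing the expectation inside the second derivative gives $\partial^2 \hat{f}^{\delta}(\bx)/\partial x_i \partial x_j = \E\bigl(\partial^2 f(\bx+\delta\boldsymbol{u})/\partial x_i \partial x_j\bigr)\le 0$, so $\hat{f}^{\delta}$ is DR-submodular. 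For the $\gamma$-weakly monotone version, monotonicity is already handled in~(i); for the ratio condition, given $\bx\le \by$, weak DR-submodularity of $f$ yields $[\nabla f(\bx+\delta\boldsymbol{u})]_i \ge \gamma\,[\nabla f(\by+\delta\boldsymbol{u})]_i \ge 0$ for every $\boldsymbol{u}$, and averaging then gives $[\nabla \hat{f}^{\delta}(\bx)]_i \ge \gamma\,[\nabla \hat{f}^{\delta}(\by)]_i$, which is exactly the definition with parameter $\gamma$.

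For (iii), the Lipschitz bound follows directly from
\begin{equation*}
|\hat{f}^{\delta}(\bx)-\hat{f}^{\delta}(\by)| \le \E\bigl(|f(\bx+\delta\boldsymbol{u})-f(\by+\delta\boldsymbol{u})|\bigr) \le L_1\|\bx-\by\|,
\end{equation*}
and the smoothness bound from the analogous computation applied to $\nabla \hat{f}^{\delta}$ using $\nabla \hat{f}^{\delta}(\bx)=\E(\nabla f(\bx+\delta\boldsymbol{u}))$ together with Jensen's inequality for $\|\cdot\|$.

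The argument is essentially routine once one is allowed to interchange expectation and differentiation, so I do not anticipate a substantial obstacle; the only mildly delicate point is making that interchange rigorous, which requires observing that $\nabla f$ and $\nabla^2 f$ are continuous on the compact set $\{\bx+\delta\boldsymbol{u} : \boldsymbol{u}\in\mathbb{B}_d\}$ (hence bounded and dominated), so dominated convergence applies.
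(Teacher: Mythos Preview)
Your proposal is correct and follows essentially the same route as the paper: the paper also handles (i) by the pointwise inequality $f(\bx+\delta\boldsymbol{u})\le f(\by+\delta\boldsymbol{u})$ inside the expectation, (ii) via the Leibniz rule to push derivatives inside the integral and then use the Hessian-sign characterization (and the coordinatewise gradient inequality for the $\gamma$-weak case), and (iii) by averaging the Lipschitz/smoothness bounds of the translates. Your explicit justification of the interchange of differentiation and expectation via dominated convergence is slightly more careful than the paper, which simply cites the Leibniz integral rule.
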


\begin{proof}
    \begin{itemize}
        \item[(i)] If $\bx\leq \by$, then
        \begin{align*}
            \hat{f}^{\delta}(\bx) &= \E_{\boldsymbol{u} \sim \mathbb{B}_d}\left(f(\bx+ \delta\boldsymbol{u})\right)
            \leq \E_{\boldsymbol{u} \sim \mathbb{B}_d}\left(f(\by+ \delta\boldsymbol{u})\right)
            =\hat{f}^{\delta}(\by),
        \end{align*}
        which shows the monotonicity of $\hat{f}^{\delta}$.
        \item[(ii)] By Leibniz integral rule, for any $i,j\in [d]$,
    \begin{align*}
        \frac{\partial}{\partial x_i x_j}\hat{f}^{\delta}(\boldsymbol{x}) &= \int_{\boldsymbol{u}\in\mathbb{B}_{d}}\frac{1}{\mbox{Vol}(\mathbb{B}_d)}\frac{\partial}{\partial x_i x_j}f(\boldsymbol{x}+\delta\boldsymbol{v})d\boldsymbol{v}
        \\&\leq 0.
    \end{align*}
    The last inequality is because $\frac{\partial}{\partial \boldsymbol{x}_i \boldsymbol{x}_j}f(\boldsymbol{x}+\delta\boldsymbol{v})\leq 0$ for any $i,j\in [d]$. Moreover, if $f$ is $\gamma$-weakly DR-submodular, then for any $\bx\leq \by$ and $i\in [n]$,
    \begin{align*}
        \frac{\partial \hat{f}^{\delta}}{\partial x_i} (\bx) &=  \int_{\boldsymbol{u}\in\mathbb{B}_{d}}  \frac{1}{\mbox{Vol}(\mathbb{B}_d)}\frac{\partial f}{\partial x_i }(\bx+\delta \boldsymbol{u})d\boldsymbol{u}
        \\&\geq \int_{\boldsymbol{u}\in\mathbb{B}_{d}}  \frac{\gamma}{\mbox{Vol}(\mathbb{B}_d)}\frac{\partial f}{\partial x_i }(\by+\delta \boldsymbol{u})d\boldsymbol{u} = \gamma\frac{\partial \hat{f}^{\delta}}{\partial x_i} (\by).
    \end{align*}
    Since the inequality holds for any $\bx\leq \by$ and $i\in [n]$,
    \[\inf_{\bx\leq \by}\inf_{i\in [n]} \frac{[\nabla \hat{f}^{\delta}(\bx)]_i}{[\nabla \hat{f}^{\delta}(\by)]_i}\geq \gamma.\]
    Then, $\hat{f}^{\delta}$ is $\gamma$-weakly DR-submodular.
        \item[(ii)] For any $\bx,\by$,
            \begin{align*}
                \hat{f}^{\delta}(\boldsymbol{x})- \hat{f}^{\delta}(\boldsymbol{y})&= \int_{\boldsymbol{u}\in\mathbb{B}_d} \frac{1}{\mbox{Vol}(\mathbb{B}_d)}\left(f(\boldsymbol{x}+\delta\boldsymbol{u})-f(\boldsymbol{y}+\delta\boldsymbol{u})\right)d\boldsymbol{u}
                \\&\leq \int_{\boldsymbol{v}\in\mathbb{B}_d} \frac{1}{\mbox{Vol}(\mathbb{B}_d)}L_1\|\boldsymbol{x}+\delta\boldsymbol{u}-\boldsymbol{y}-\delta\boldsymbol{u}\|d\boldsymbol{v}
                \\&=L_1 \|\boldsymbol{x}-\boldsymbol{y}\|.
            \end{align*}
            Thus, $\hat{f}^{\delta}(\boldsymbol{x})$ is $L_1$-lipschitz continuous.
            \begin{align*}
                \nabla \hat{f}^{\delta}(\boldsymbol{x})- \nabla \hat{f}^{\delta}(\boldsymbol{y})&= \nabla \int_{\boldsymbol{u}\in\mathbb{B}_d} \frac{1}{\mbox{Vol}(\mathbb{B}_d)}\left(f(\boldsymbol{x}+\delta\boldsymbol{u})-f(\boldsymbol{y}+\delta\boldsymbol{u})\right)d\boldsymbol{u}
                \\&=\int_{\boldsymbol{u}\in\mathbb{B}_d} \frac{1}{\mbox{Vol}(\mathbb{B}_d)}\left(\nabla f(\boldsymbol{x}+\delta\boldsymbol{u})-\nabla f(\boldsymbol{y}+\delta\boldsymbol{u})\right)d\boldsymbol{u}
                \\&\leq \int_{\boldsymbol{u}\in\mathbb{B}_d} \frac{1}{\mbox{Vol}(\mathbb{B}_d)}L_2 \|\boldsymbol{x}-\boldsymbol{y}\|d\boldsymbol{u}
                \\&=L_2\|\boldsymbol{x}-\boldsymbol{y}\|.
            \end{align*}
    \end{itemize} 
\end{proof}
From \cref{lem:submodular smooth}, the $\delta$-smoothed version of $f$ inherits the good properties of $f$, such as DR-submodularity, monotonicity and smoothness. This indicates that our non-oblivous boosting technique can also apply on $\hat{f}$. 

The next lemma shows that $\hat{f}$ is not far from $f$.

\begin{lemma}\label{lem:smoothed version bias}
    If $f$ is $L_1$-lipschitz, then for any $\by$, $|\hat{f}^{\delta}(\by) - f(\by)|\leq L_1\delta$.
\end{lemma}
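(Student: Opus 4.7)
The plan is to proceed directly from the definition of the $\delta$-smoothed version of $f$ and exploit the Lipschitz property together with Jensen's inequality (or the triangle inequality for integrals). Specifically, I would start by writing
\begin{equation*}
\hat{f}^{\delta}(\by) - f(\by) = \E_{\boldsymbol{u}\sim \mathbb{B}_d}\bigl(f(\by+\delta \boldsymbol{u}) - f(\by)\bigr),
\end{equation*}
where the constant $f(\by)$ can be pulled inside the expectation since $\mathbb{B}_d$ is a probability space.

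Next, I would take absolute values on both sides and apply the standard inequality $|\E(X)| \leq \E(|X|)$ to get
\begin{equation*}
|\hat{f}^{\delta}(\by) - f(\by)| \leq \E_{\boldsymbol{u}\sim \mathbb{B}_d}\bigl(|f(\by+\delta \boldsymbol{u}) - f(\by)|\bigr).
\end{equation*}
Then I would invoke the $L_1$-Lipschitz assumption on $f$ to bound each integrand by $L_1 \|\delta \boldsymbol{u}\| = L_1 \delta \|\boldsymbol{u}\|$, and finally use the fact that $\|\boldsymbol{u}\| \leq 1$ for $\boldsymbol{u}\in \mathbb{B}_d$ to conclude $|\hat{f}^{\delta}(\by) - f(\by)| \leq L_1 \delta$.

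There is essentially no obstacle here; the lemma is a routine consequence of Lipschitz continuity combined with averaging over a ball of radius $\delta$. The proof is a few lines and does not require any of the deeper machinery (non-oblivious functions, DR-submodularity, etc.) developed earlier in the paper — it merely uses that the smoothing perturbs the argument by at most $\delta$ in norm.
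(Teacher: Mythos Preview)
Your proposal is correct and essentially identical to the paper's proof: the paper writes $\hat{f}^{\delta}(\by)-f(\by)$ as an expectation over the ball, applies $|\E(\cdot)|\le\E(|\cdot|)$, uses the $L_1$-Lipschitz bound, and then bounds $\|\boldsymbol{u}\|\le \delta$ (equivalently, your $\delta\|\boldsymbol{u}\|\le\delta$). The only cosmetic difference is that the paper samples $\boldsymbol{u}\sim\delta\mathbb{B}_d$ rather than $\boldsymbol{u}\sim\mathbb{B}_d$ scaled by $\delta$.
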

\begin{proof}
    \begin{align*}
        |\hat{f}^{\delta}(\by) - f(\by)| &= \left| \E_{\boldsymbol{u}\sim \delta \mathbb{B}_d} \left(f(\by+\boldsymbol{u}) - f(\by)\right)\right|
        \\&\leq L_1\cdot \E_{\boldsymbol{u}\sim \delta \mathbb{B}_d}\left(\|\boldsymbol{u}\|\right)\leq L_1\delta.
    \end{align*}
\end{proof}

Let $\hat{F}_t^{\delta}$ denotes the non-oblivious function of $\hat{f}_t$. That is, $\nabla \hat{F}_t^{\delta}(\bx) = \int_0^1 e^{\gamma(z-1)}\nabla \hat{f}^{\delta}_t(z\cdot\boldsymbol{x})\mathrm{d}z$ if we select option I, and $\nabla \hat{F}_t^{\delta}(\bx) = \int_0^1\frac{1}{8(1-\frac{z}{2})^3}\nabla \hat{f}_{t}^{\delta}\left(\frac{z}{2}(\bx-\underline{\bx})+\underline{\bx}\right) \mathrm{d}z$ if we select option II. The following lemma shows that $\widetilde{\nabla}F_t(\by_t)$ is an unbiased estimate of $\nabla \hat{F}_t^{\delta}(\by_t)$ with bounded variance.

\begin{lemma}\label{lem:estimate option I}
    If we select Option I in \cref{algo:BBGA}, the following holds
    \begin{itemize}
        \item[(i)] $\left\|\E\left(\left.\widetilde{\nabla} F_t(\by_t) \right| \by_t\right)-\nabla \hat{F}_t^{\delta}(\by_t)\right\| \leq \frac{1-e^{-\gamma}}{\gamma} L_2 \delta \mathrm{diam}(\mathcal{C})$.
        \item[(ii)] $\E\left(\left.\|\widetilde{\nabla}F_t(\by_t)\|^2\right|\by_t\right)\leq \frac{(1-e^{-\gamma})^2}{\gamma^2}\frac{d^2M^2}{\lambda^2\delta^2}$.
    \end{itemize}
\end{lemma}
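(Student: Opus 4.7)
The plan is to handle the two claims separately, using the tower property of conditional expectation to split on the value of the exploration indicator $\Upsilon_t$.

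For part (i), conditioning on $\by_t$, I would first note that when $\Upsilon_t=\mathrm{exploit}$ the estimator equals $\boldsymbol{0}$ and contributes nothing, so only the $\mathrm{explore}$ branch (occurring with probability $\lambda$) matters. Conditioning further on $z_t$ and applying the single-point spherical estimator (Lemma~\ref{lem:fkm estimator}) to $f_t$ at the point $z_t\cdot\by_t+(1-z_t)\boldsymbol{0}_{\delta'}$, the inner $\bv_t$-expectation produces $\nabla \hat{f}_t^{\delta}\!\left(z_t\cdot\by_t+(1-z_t)\boldsymbol{0}_{\delta'}\right)$, and the prefactor $\frac{d}{\lambda\delta}$ cancels with $\lambda$. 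Taking the expectation over $z_t\sim \mathbf{Z}_{\uparrow}$, whose density is $\frac{\gamma e^{\gamma(z-1)}}{1-e^{-\gamma}}$, the factor $\frac{1-e^{-\gamma}}{\gamma}$ cancels the density's normalization and I arrive at
\begin{equation*}
\E\!\left(\widetilde{\nabla}F_t(\by_t)\mid\by_t\right)=\int_0^1 e^{\gamma(z-1)}\nabla \hat{f}_t^{\delta}\!\left(z\cdot\by_t+(1-z)\boldsymbol{0}_{\delta'}\right)\mathrm{d}z.
\end{equation*}
Since $\nabla\hat{F}_t^{\delta}(\by_t)=\int_0^1 e^{\gamma(z-1)}\nabla\hat{f}_t^{\delta}(z\cdot\by_t)\mathrm{d}z$, the difference is controlled by $L_2$-smoothness of $\hat{f}_t^{\delta}$ (Lemma~\ref{lem:submodular smooth}(iii)) applied to the perturbation $(1-z)\boldsymbol{0}_{\delta'}$.

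For part (ii), I would proceed by direct bookkeeping: conditional on $\by_t$, the $\mathrm{exploit}$ branch contributes $0$ and the $\mathrm{explore}$ branch contributes, with probability $\lambda$, the squared norm
\begin{equation*}
\left\|\tfrac{1-e^{-\gamma}}{\gamma}\tfrac{d}{\lambda\delta}f_t(\bx_t)\bv_t\right\|^2=\tfrac{(1-e^{-\gamma})^2}{\gamma^2}\tfrac{d^2}{\lambda^2\delta^2}f_t(\bx_t)^2,
\end{equation*}
using $\|\bv_t\|=1$. Applying the uniform bound $|f_t(\bx_t)|\le M$ from Assumption~\ref{assumption4} and multiplying by $\lambda$ from the exploration probability yields a bound of $\frac{(1-e^{-\gamma})^2}{\gamma^2}\frac{d^2 M^2}{\lambda\delta^2}$, which is in particular dominated by the stated $\frac{(1-e^{-\gamma})^2}{\gamma^2}\frac{d^2M^2}{\lambda^2\delta^2}$ since $\lambda\in(0,1)$.

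The only non-routine step is bounding the bias in (i). The key is that $\boldsymbol{0}_{\delta'}=\mathcal{P}_{\mathcal{C}_{\delta',\by}}(\boldsymbol{0})$ is close to $\boldsymbol{0}$ by Lemma~\ref{lem:Minkowski projection} (which gives $\|\boldsymbol{0}_{\delta'}\|\le \frac{\delta}{R}\mathrm{diam}(\mathcal{C})$, a quantity of order $\delta$ under Assumption~\ref{assum:inner ball}); combined with the $L_2$-smoothness estimate $\|\nabla\hat{f}_t^{\delta}(z\by_t+(1-z)\boldsymbol{0}_{\delta'})-\nabla\hat{f}_t^{\delta}(z\by_t)\|\le L_2(1-z)\|\boldsymbol{0}_{\delta'}\|$, integrating $e^{\gamma(z-1)}(1-z)$ over $[0,1]$ (bounded crudely by $\int_0^1 e^{\gamma(z-1)}\mathrm{d}z=\frac{1-e^{-\gamma}}{\gamma}$) delivers the desired bound. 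I expect the arithmetic there to be entirely mechanical once the conditioning is set up correctly; the conceptual obstacle is recognizing the correct factorization of the law of $\widetilde{\nabla}F_t(\by_t)$ across the three independent randomness sources ($\Upsilon_t$, $z_t$, $\bv_t$) so the spherical estimator and the non-oblivious density can be applied in turn.
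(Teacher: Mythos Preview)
Your proposal is correct and follows essentially the same approach as the paper: split on $\Upsilon_t$, apply the spherical one-point estimator (Lemma~\ref{lem:fkm estimator}) over $\bv_t$, then integrate out $z_t\sim\mathbf{Z}_{\uparrow}$ to obtain the shifted non-oblivious gradient, and finally control the shift $(1-z)\boldsymbol{0}_{\delta'}$ via $L_2$-smoothness and Lemma~\ref{lem:Minkowski projection}. Your treatment of part~(ii) is in fact slightly sharper than the paper's (you retain the extra factor of $\lambda$ from the exploration probability before relaxing), and your remark that Lemma~\ref{lem:Minkowski projection} gives $\|\boldsymbol{0}_{\delta'}\|\le \tfrac{\delta}{R}\,\mathrm{diam}(\mathcal{C})$ rather than $\delta\,\mathrm{diam}(\mathcal{C})$ is accurate---the paper absorbs the constant $1/R$ without comment, which does not affect the eventual $O(d^{1/3}T^{4/5})$ regret.
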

\begin{proof}
\begin{itemize}
    \item[(i)] \begin{align*}
        &\E\left(\left.\widetilde{\nabla} F_t(\by_t)\right| \by_t\right) 
        \\&=  \E\left(\left.\lambda\E\left(\left.\widetilde{\nabla}F_t(\by_t)\right| \by_t,z_t,\Upsilon_t=\mathrm{explore}\right)+ (1-\lambda)\E\left(\left.\widetilde{\nabla}F_t(\by_t)\right| \by_t,z_t,\Upsilon_t=\mathrm{exploit}\right)\right| \by_t\right)
        \\&= \E\left(\left.\lambda\E\left(\left. \frac{1-e^{-\gamma}}{\gamma}\frac{d}{\lambda \delta}f_t(z_t\cdot \by_t +(1-z_t)\boldsymbol{0}_{\delta} +\delta\cdot \bv_t)\bv_t\right| \by_t,z_t\right)\right| \by_t\right)
        \\&=  \E\left(\left.\frac{1-e^{-\gamma}}{\gamma}\nabla \hat{f}^{\delta}_t(z_t\cdot\by_t+(1-z_t)\boldsymbol{0}_{\delta})\right| \by_t\right).
    \end{align*}
    The third equality is because \cref{lem:fkm estimator}.

    Since 
    \begin{align*}
        \nabla \hat{F}^{\delta}_t(\by_t)&=\E\left(\left.\frac{1-e^{-\gamma}}{\gamma}\nabla \hat{f}^{\delta}_t(z_t\cdot\by_t)\right| \by_t\right),
    \end{align*}
    we have 
    \begin{align*}
        \left\|\E\left(\left.\widetilde{\nabla} F_t(\by_t) \right| \by_t\right)-\nabla \hat{F}_t^{\delta}(\by_t)\right\|&\leq \frac{1-e^{-\gamma}}{\gamma}\E\left(\left.\left\|\nabla \hat{f}_t^{\delta}(z_t\cdot \by_t+(1-z_t)\boldsymbol{0}_{\delta})-\nabla \hat{f}_t^{\delta}(z_t\cdot \by_t)\right\|\right| \by_t\right)
        \\& \leq \frac{1-e^{-\gamma}}{\gamma} L_2\|\boldsymbol{0}_{\delta}\|
        \\&\leq \frac{1-e^{-\gamma}}{\gamma} L_2 \delta \mathrm{diam}(\mathcal{C}).
    \end{align*}
    Where the second inequality is because of the smoothness of $\hat{f}_t^{\delta}$, the third inequality is from $\|\boldsymbol{0}_{\delta}\|\leq \delta \mathrm{diam}(\mathcal{C})$ by \cref{lem:Minkowski projection}.
    \item[(ii)] \begin{align*}
        \E\left(\left.\|\widetilde{\nabla} F_t(\by_t)\|^2\right|\by_t\right)&\leq \E\left(\left.\left\|\frac{1-e^{-\gamma}}{\gamma}\frac{d}{\lambda \delta} f_t(z_t\cdot \by_t+(1-z_t)\boldsymbol{0}_{\delta}+\delta\cdot \bv_t)\bv_t \right\|^2\right|\by_t\right)
        \\&\leq \frac{(1-e^{-\gamma})^2}{\gamma^2}\frac{d^2}{\lambda^2\delta^2}f_t^2(z_t\cdot \by_t+(1-z_t)\boldsymbol{0}_{\delta}+\delta\cdot \bv_t)\|\bv_t\|^2 
        \\& \leq\frac{(1-e^{-\gamma})^2}{\gamma^2}\frac{d^2M^2}{\lambda^2\delta^2}.
    \end{align*}
\end{itemize}

\end{proof}

\begin{lemma}
     If we select Option II in \cref{algo:BBGA}, the following holds
    \begin{itemize}
        \item[(i)] $\left\|\E\left(\left.\widetilde{\nabla} F_t(\by_t) \right| \by_t\right)-\nabla \hat{F}_t^{\delta}(\by_t)\right\| \leq \frac{3}{8} L_2 \delta \mathrm{diam}(\mathcal{C})$.
        \item[(ii)] $\E\left(\left.\|\widetilde{\nabla}F_t(\by_t)\|^2\right|\by_t\right)\leq \frac{9}{64}\frac{d^2M^2}{\lambda^2\delta^2}$.
    \end{itemize}
\end{lemma}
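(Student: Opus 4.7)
The proof will mirror the preceding lemma for Option I essentially line for line; the only substantive differences are that the non-monotone gradient estimator is centred around $\underline{\bx}_{\delta'}$ in place of $\boldsymbol{0}_{\delta'}$, and the sampling weight is $p_{\mathbf{Z}_{\sim}}(z)=\frac{1}{3(1-z/2)^{3}}$ rather than the exponential density used in the monotone analysis. Both parts then reduce to carefully unrolling an iterated expectation over the Bernoulli explore/exploit indicator $\Upsilon_t$, the depth variable $z_t\sim\mathbf{Z}_{\sim}$, and the sphere sample $\bv_t\sim\mathbb{S}_{d-1}$.

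For part (i), I would first condition on $\by_t$ and split on the explore/exploit event. The exploit branch contributes zero because $\widetilde{\nabla}F_t(\by_t)=\boldsymbol{0}$ there. On the explore branch, conditioning further on $z_t$ and applying \cref{lem:fkm estimator} to $\hat{f}_t^{\delta}$ centred at $\tfrac{z_t}{2}(\by_t-\underline{\bx}_{\delta'})+\underline{\bx}_{\delta'}$ yields
\[
\E\!\left[\widetilde{\nabla}F_t(\by_t)\,\big|\,\by_t,z_t,\Upsilon_t=\mathrm{explore}\right]=\tfrac{3}{8\lambda}\,\nabla\hat{f}_t^{\delta}\!\left(\tfrac{z_t}{2}(\by_t-\underline{\bx}_{\delta'})+\underline{\bx}_{\delta'}\right).
\]
Marginalizing over $\Upsilon_t$ (with weight $\lambda$) cancels the $1/\lambda$ factor, and then marginalizing over $z_t$ gives $\E[\widetilde{\nabla}F_t(\by_t)\mid\by_t]=\tfrac{3}{8}\E_{z_t}[\nabla\hat{f}_t^{\delta}(\tfrac{z_t}{2}(\by_t-\underline{\bx}_{\delta'})+\underline{\bx}_{\delta'})]$. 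By \cref{corollary: nonmonotone stationary point} applied to $\hat{f}_t^{\delta}$ combined with the density $p_{\mathbf{Z}_{\sim}}$, the target $\nabla\hat{F}_t^{\delta}(\by_t)$ equals exactly the same expression but with $\underline{\bx}_{\delta'}$ replaced by $\underline{\bx}$. Subtracting and invoking the $L_2$-smoothness of $\hat{f}_t^{\delta}$ (\cref{lem:submodular smooth}), the discrepancy between the two arguments equals $(1-z_t/2)(\underline{\bx}_{\delta'}-\underline{\bx})$, whose Euclidean norm is at most $\|\underline{\bx}_{\delta'}-\underline{\bx}\|$ because $|1-z_t/2|\leq 1$. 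Finally, \cref{lem:Minkowski projection} bounds $\|\underline{\bx}_{\delta'}-\underline{\bx}\|\leq\delta\,\mathrm{diam}(\mathcal{C})$, completing (i).

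For part (ii), the exploit branch again contributes zero, so it suffices to bound $\|\widetilde{\nabla}F_t(\by_t)\|^{2}$ on the explore branch. Plugging in $|f_t(\bx_t)|\leq M$ (\cref{assumption4}) and $\|\bv_t\|=1$ produces the pointwise bound $\tfrac{9}{64}\tfrac{d^{2}M^{2}}{\lambda^{2}\delta^{2}}$, which survives the outer expectation with the trivial estimate $\lambda\leq 1$. The only real obstacle here is bookkeeping: tracking exactly which source of randomness ($\Upsilon_t$, $z_t$, $\bv_t$) has been conditioned on at each stage so that the tower property is applied correctly; no analytical tool beyond Flaxman's estimator, the Minkowsky projection bound, and smoothness inheritance of $\hat{f}_t^{\delta}$ is needed.
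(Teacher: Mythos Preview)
Your proposal is correct and follows the paper's proof essentially line for line: both split on the explore/exploit indicator, apply \cref{lem:fkm estimator} on the explore branch, identify $\nabla\hat{F}_t^{\delta}(\by_t)$ as the same expectation with $\underline{\bx}$ in place of $\underline{\bx}_{\delta'}$, and then invoke $L_2$-smoothness of $\hat{f}_t^{\delta}$ together with \cref{lem:Minkowski projection}; part (ii) is the same pointwise bound via $|f_t|\le M$ and $\|\bv_t\|=1$. Your observation that the argument discrepancy is $(1-z_t/2)(\underline{\bx}_{\delta'}-\underline{\bx})$ is slightly sharper than the paper, which drops the $(1-z_t/2)\le 1$ factor immediately.
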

\begin{proof}
\begin{itemize}
    \item[(i)] \begin{align*}
        &\E\left(\left.\widetilde{\nabla} F_t(\by_t)\right| \by_t\right) 
        \\&=  \E\left(\left.\lambda\E\left(\left.\widetilde{\nabla}F_t(\by_t)\right| \by_t,z_t,\Upsilon_t=\mathrm{explore}\right)+ (1-\lambda)\E\left(\left.\widetilde{\nabla}F_t(\by_t)\right| \by_t,z_t,\Upsilon_t=\mathrm{exploit}\right)\right| \by_t\right)
        \\&= \E\left(\left.\lambda\E\left(\left. \frac{3}{8}\frac{d}{\lambda \delta}f_t\left(\frac{z_t}{2} (\by_t - \underline{\bx}_{\delta})+\underline{\bx}_{\delta} +\delta \cdot \bv_t\right)\bv_t\right| \by_t,z_t\right)\right| \by_t\right)
        \\&=  \E\left(\left.\frac{3}{8}\nabla \hat{f}^{\delta}_t\left(\frac{z_t}{2} (\by_t - \underline{\bx}_{\delta})+\underline{\bx}_{\delta}\right)\right| \by_t\right)
    \end{align*}
    The third equality is because \cref{lem:fkm estimator}. Notice that
    \[\nabla \hat{F}^{\delta}_t(\by_t)= \E\left(\left.\frac{3}{8}\nabla \hat{f}^{\delta}_t\left(\frac{z_t}{2} (\by_t - \underline{\bx})+\underline{\bx}\right)\right| \by_t\right),\]
    then we have 
    \begin{align*}
        \left\|\E\left(\left.\widetilde{\nabla} F_t(\by_t) \right| \by_t\right)-\nabla \hat{F}_t^{\delta}(\by_t)\right\| &\leq \frac{3}{8}\E\left(\left.\left\|\nabla\hat{f}^{\delta}_t\left(\frac{z_t}{2} (\by_t - \underline{\bx}_{\delta})+\underline{\bx}_{\delta}\right)- \nabla \hat{f}^{\delta}_t\left(\frac{z_t}{2} (\by_t - \underline{\bx})+\underline{\bx}\right)\right\|\right| \by_t\right)
        \\&\leq \frac{3}{8}L_2\|\underline{\bx}_{\delta}-\underline{\bx}\|
        \\&\leq \frac{3}{8}L_2\delta \mathrm{diam}(\mathcal{C}).
    \end{align*}
    
    \item[(ii)] \begin{align*}
        \E\left(\left|\|\widetilde{\nabla} F_t(\by_t)\|^2\right|\by_t\right)&\leq \E\left(\left.\left\|\frac{3}{8}\frac{d}{\lambda \delta}f_t\left(\frac{z_t}{2} (\by_t - \underline{\bx}_{\delta})+\underline{\bx}_{\delta} +\delta \cdot \bv_t\right)\bv_t \right\|^2\right|\by_t\right)
        \\&\leq \frac{9}{64}\frac{d^2}{\lambda^2\delta^2}f_t^2\left(\frac{z_t}{2} (\by_t - \underline{\bx}_{\delta})+\underline{\bx}_{\delta} +\delta \cdot \bv_t\right)\|\bv_t\|^2 
        \\& \leq \frac{9}{64}\frac{d^2M^2}{\lambda^2\delta^2}.
    \end{align*}
\end{itemize}

\end{proof}

\subsection{Proof of \texorpdfstring{\cref{thm:monotone bandit}}{24}}\label{Appendix:monotone_bandit}

\begin{proof}
    Let $F_t$ be the non-oblivious function of monotone function $f_t$. Let $\bx^*\triangleq \argmax_{\bx\in \mathcal{C}}\sum_{t=1}^{T}f_t(\bx)$, $\bx^*_{\delta'} \triangleq \mathcal{P}_{\mathcal{C_{\delta',\by}}}(\bx^*)$. Consider $\|\by_{t+1}-\bx_{\delta'}^*\|^2$,
    \begin{align*}
        \|\by_{t+1}-\bx^*_{\delta'}\|^2 &\leq \|\by_t+\eta \widetilde{\nabla} F_{t}(\by_t)-\bx^*_{\delta'}\|^2 \\
        & =\|\by_t-\bx_{\delta'}^*\|^2 + 2\eta \langle \by_t-\bx_{\delta'}^*,\widetilde{\nabla} F_t(\by_t) \rangle+\eta^2 \|\widetilde{\nabla} F_{t}(\by_t)\|^2.
    \end{align*}
    Therefore,
    \begin{equation}\label{eq:85}
        \begin{aligned}
            &\E\left(\left.\|\by_{t+1}-\bx^*_{\delta'}\|^2-\|\by_t-\bx^*_{\delta'}\|^2\right| \by_t \right)
        \\&\leq \E\left(\left. 2\eta \langle \by_t-\bx^*_{\delta'},\widetilde{\nabla} F_{t}(\by_t) \rangle+\eta^2 \|\widetilde{\nabla} F_{t}(\by_t)\|^2\right| \by_t \right)
        \\&\leq \E\left(\left.2\eta \langle \by_t-\bx^*_{\delta'},\nabla \hat{F}^{\delta}_{t}(\by_t) \rangle+2\eta \langle \by_t-\bx^*_{\delta'}, \widetilde{\nabla}F_{t}(\by_t)-\nabla \hat{F}^{\delta}_t(\by_t)\rangle+\eta^2 \|\widetilde{\nabla} F_{t}(\by_t)\|^2\right| \by_t\right)\\
        &\leq 2\eta \E\left(\hat{f}^{\delta}_t(\by_t) - (1-e^{-\gamma})\hat{f}^{\delta}_t(\bx^*_{\delta'})\mid \by_t\right)+2\eta\left\langle \by_t-\bx^*_{\delta'}, \E\left(\left.\widetilde{\nabla}F_{t}(\by_t)\right| \by_t\right)-\nabla \hat{F}^{\delta}_t(\by_t)\right\rangle
        \\&\quad\quad+\eta^2\E \left(\left.\|\widetilde{\nabla} F_{t}(\by_t)\|^2\right| \by_t\right)\\
        &\leq 2\eta \E\left(\hat{f}^{\delta}_t(\by_t) - (1-e^{-\gamma})\hat{f}^{\delta}_t(\bx^*_{\delta'})\mid \by_t\right)+2\frac{1-e^{-\gamma}}{\gamma}\eta \|\by_t-\bx_{\delta'}^*\|L_2\delta \mathrm{diam}(\mathcal{C})+\eta^2\E \left(\left.\|\widetilde{\nabla} F_{t}(\by_t)\|^2\right| \by_t\right)
        \\&\leq 2\eta \E\left(\hat{f}^{\delta}_t(\by_t) - (1-e^{-\gamma})\hat{f}^{\delta}_t(\bx^*_{\delta'})\mid \by_t\right)+2\frac{1-e^{-\gamma}}{\gamma}\eta L_2\delta \mathrm{diam}^2(\mathcal{C})+\frac{(1-e^{-\gamma})^2}{\gamma^2} \frac{\eta^2 d^2}{\lambda^2\delta^2}M^2.
        \end{aligned}
    \end{equation}
    The third inequality is due to \cref{corollary: monotone stationary point}. The fourth inequality is due to \cref{lem:estimate option I}. Then,
    \begin{equation}\label{eq:86}
        \begin{aligned}
            &\E\left(\sum_{t=1}^T\left( (1-e^{-\gamma})\hat{f}^{\delta}_t(\bx_{\delta'}^*)-\hat{f}^{\delta}_t(\by_t)\right)\right)
        \\&\leq \frac{(1-e^{-\gamma})^2}{\gamma^2} \frac{\eta d^2 M^2}{2\lambda^2\delta^2} T +\frac{1-e^{-\gamma}}{\gamma}L_2\delta \mathrm{diam}^2(\mathcal{C})T + \frac{1}{2\eta}\E \left(\sum_{t=1}^T\left(\|\by_t-\bx^*_{\delta'}\|^2-\|\by_{t+1}-\bx^*_{\delta'}\|^2\right)\right)
        \\& \leq \frac{(1-e^{-\gamma})^2}{\gamma^2}\frac{\eta d^2 M^2}{2\lambda^2\delta^2}T +\frac{1-e^{-\gamma}}{\gamma}L_2\delta \mathrm{diam}^2(\mathcal{C}) T+ \frac{\|\by_1-\bx_{\delta'}^*\|^2}{2\eta}
        \\& \leq \frac{(1-e^{-\gamma})^2}{\gamma^2}\frac{\eta d^2 M^2}{2\lambda^2\delta^2}T + \frac{1-e^{-\gamma}}{\gamma} L_2\delta \mathrm{diam}^2(\mathcal{C}) T+\frac{\mathrm{diam}^2(\mathcal{C})}{2\eta}.
        \end{aligned}
    \end{equation}
    The $(1-e^{-\gamma})$-regret can be bounded as
    \begin{align*}
        &(1-e^{-\gamma})\sum_{t=1}^T f_t(\bx^*) - \E \left(\sum_{t=1}^T f_t(\bx_t)\right)
        \\&= (1-e^{-\gamma})\sum_{t=1}^T f_t(\bx^*) - \lambda\E\left( \sum_{t=1}^T\E\left(\left. f_t(\bx_t)\right|\Upsilon_t = \mathrm{explore}\right)\right)-(1-\lambda)\E\left( \sum_{t=1}^T\E\left(\left. f_t(\bx_t)\right|\Upsilon_t = \mathrm{exploit}\right)\right)
        \\& \leq (1-e^{-\gamma})\sum_{t=1}^T f_t(\bx^*) - (1-\lambda) \E \left(\sum_{t=1}^T f_t(\by_t)\right)
        \\& =  \E\left(\sum_{t=1}^T\left( (1-e^{-\gamma})\hat{f}^{\delta}_t(\bx_{\delta'}^*)-\hat{f}^{\delta}_t(\by_t)\right)\right) +(1-e^{-\gamma})\E\left(\sum_{t=1}^T\left(f_t(\bx^*_{\delta'})-\hat{f}^{\delta}_t(\bx^*_{\delta'})\right)\right)
        \\&\quad\quad + \E\left(\sum_{t=1}^T\left(\hat{f}^{\delta}_t(\by_{t})-(1-\lambda)f_t(\by_t)\right)\right) + (1-e^{-\gamma})\E\left(\sum_{t=1}^T\left(f_t(\bx^*)-f_t(\bx_{\delta'}^*)\right)\right)
        \\&\leq  \E\left(\sum_{t=1}^T\left( (1-e^{-\gamma})\hat{f}^{\delta}_t(\bx_{\delta'}^*)-\hat{f}^{\delta}_t(\by_t)\right)\right) +(1-e^{-\gamma})\E\left(\sum_{t=1}^T\left|f_t(\bx^*_{\delta'})-\hat{f}^{\delta}_t(\bx^*_{\delta'})\right|\right)
        \\&\quad\quad + \E\left(\sum_{t=1}^T\left|\hat{f}^{\delta}_t(\by_{t})-f_t(\by_t)\right|\right)+ \lambda \E\left(\sum_{t=1}^T f_t(\by_t)\right) + (1-e^{-\gamma})\E\left(\sum_{t=1}^T\left|f_t(\bx_{\delta'}^*)-f_t(\bx^*)\right|\right).
    \end{align*}
    Notice that
    $|f_t(\bx_{\delta'}^*)-\hat{f}_t^{\delta}(\bx_{\delta'}^*)|\leq L_1 \delta$ and $|\hat{f}^{\delta}_t(\by_t)-f_t(\by_t)|\leq L_1\delta$ by \cref{lem:smoothed version bias}. Also, $|f_t(\bx_{\delta'}^*)-f_t(\bx^*)|\leq L_1\|\bx_{\delta'}^*-\bx^*\|\leq L_1\frac{\delta}{R} \mathrm{diam}(\mathcal{C})$ by \cref{lem:Minkowski projection}. Therefore,
    \begin{align*}
        &(1-e^{-\gamma})\sum_{t=1}^T f_t(\bx^*) - \E \left(\sum_{t=1}^T f_t(\bx_t)\right)
        \\&\leq  \frac{(1-e^{-\gamma})^2}{\gamma^2}\frac{\eta d^2 M^2}{2\lambda^2\delta^2}T +\frac{1-e^{-\gamma}}{\gamma} L_2\delta \mathrm{diam}^2(\mathcal{C}) T + \frac{\mathrm{diam}^2(\mathcal{C})}{2\eta}
        \\&\quad \quad + (1-e^{-\gamma})L_1\delta T + L_1\delta T + \lambda M T+ (1-e^{-\gamma})L_1\frac{\delta}{R} T\mathrm{diam}(\mathcal{C}).
    \end{align*}
    
    Let $\lambda = \mathrm{diam}^{2/3}(\mathcal{C}) d^{1/3}T^{-1/5}, \delta = \mathrm{diam}^{-1/3}(\mathcal{C})d^{1/3}T^{-1/5}, \eta = \mathrm{diam}^{4/3}(\mathcal{C})d^{-1/3}T^{-4/5}$, we have,
    \begin{align*}
        (1-e^{-\gamma})\sum_{t=1}^T f_t(\bx_{\delta'}^*) - \E \left(\sum_{t=1}^T f_t(\bx_t)\right)\leq O(d^{1/3}T^{4/5}).
    \end{align*}
\end{proof}

\subsection{Proof of \texorpdfstring{\cref{thm:nonmonotone bandit}}{25}}\label{Appendix:non_monotone_bandit}
\begin{proof}
    Similar to \eqref{eq:85}, we have the following inequality,
    \begin{align*}
        &\E\left(\left.\|\by_{t+1}-\bx_{\delta'}^*\|-\|\by_t-\bx_{\delta'}^*\|^2 \right| \by_t\right)
            \\ \leq &\E\left(\left.2\eta \langle \by_t-\bx^*_{\delta'},\nabla \hat{F}^{\delta}_{t}(\by_t) \rangle+2\eta \langle \by_t-\bx^*_{\delta'}, \widetilde{\nabla}F_{t}(\by_t)-\nabla \hat{F}^{\delta}_t(\by_t)\rangle+\eta^2 \|\widetilde{\nabla} F_{t}(\by_t)\|^2\right| \by_t\right)
            \\ \leq & 2\eta \E \left( \left. \hat{f}_t^{\delta}\left(\frac{\by_t+\underline{\bx}}{2}\right) - \frac{1-\|\underline{\bx}\|_{\infty}}{4}\hat{f}_t^{\delta}(\bx_{\delta'}^*) \right| \by_t\right) +2\eta\E\left(\left.\langle \by_t-\bx^*_{\delta'}, \widetilde{\nabla}F_{t}(\by_t)-\nabla \hat{F}^{\delta}_t(\by_t)\rangle\right| \by_t\right)
            \\&\quad\quad+\eta^2\E \left(\left.\|\widetilde{\nabla} F_{t}(\by_t)\|^2\right| \by_t\right)
            \\\leq & 2\eta \E\left(\left.\hat{f}^{\delta}_t(\frac{\by_t+\underline{\bx}}{2}) -\frac{1-\|\underline{\bx}\|_{\infty}}{4}\hat{f}^{\delta}_t(\bx^*_{\delta'})\right| \by_t\right)+\frac{3}{4}L_2\eta \delta\mathrm{diam}^2(\mathcal{C})+\eta^2\E \left(\left.\|\widetilde{\nabla} F_{t}(\by_t)\|^2\right| \by_t\right)
            \\\leq & 2\eta \E\left(\left.\hat{f}^{\delta}_t(\frac{\by_t+\underline{\bx}}{2}) - \frac{1-\|\underline{\bx}\|_{\infty}}{4}\hat{f}^{\delta}_t(\bx^*_{\delta'})\right| \by_t\right)+\frac{3}{4}L_2\eta \delta\mathrm{diam}^2(\mathcal{C})+\frac{9}{64}\frac{\eta^2 d^2}{\lambda^2\delta^2}M^2.
    \end{align*}
    Then, 
    \begin{align*}
        &\E\left(\sum_{t=1}^T\left( \frac{1-\|\underline{\bx}\|_{\infty}}{4}\hat{f}^{\delta}_t(\bx_{\delta'}^*)-\hat{f}^{\delta}_t\left(\frac{\by_t+\underline{\bx}}{2}\right)\right)\right) \leq \frac{9}{64}\frac{\eta d^2 M^2}{2\lambda^2\delta^2}T + \frac{3}{8}L_2\delta\mathrm{diam}^2(\mathcal{C})T+\frac{\mathrm{diam}^2(\mathcal{C})}{2\eta}.
    \end{align*}
    The $ (1-\|\underline{\bx}\|_{\infty})/4$-regret can be bounded as
    \begin{align*}
        & \frac{1-\|\underline{\bx}\|_{\infty}}{4}\sum_{t=1}^T f_t(\bx^*) - \E \left(\sum_{t=1}^T f_t(\bx_t)\right)
        \\\leq & \E\left(\sum_{t=1}^T\left(  \frac{1-\|\underline{\bx}\|_{\infty}}{4}\hat{f}^{\delta}_t(\bx_{\delta'}^*)-\hat{f}^{\delta}_t(\by_t)\right)\right) + \frac{1-\|\underline{\bx}\|_{\infty}}{4}\E\left(\sum_{t=1}^T\left|f_t(\bx^*_{\delta'})-\hat{f}^{\delta}_t(\bx^*_{\delta'})\right|\right)
        \\&\quad\quad + \E\left(\sum_{t=1}^T\left|\hat{f}^{\delta}_t(\by_{t})-f_t(\by_t)\right|\right)+ \lambda \E\left(\sum_{t=1}^T f_t(\by_t)\right) +  \frac{1-\|\underline{\bx}\|_{\infty}}{4}\sum_{t=1}^T\E(\left|f_t(\bx_{\delta'}^*)-f_t(\bx^*)\right|)
        \\\leq &\frac{9}{64}\frac{\eta d^2 M^2}{2\lambda^2\delta^2}T +\frac{3}{8}L_2\delta\mathrm{diam}^2(\mathcal{C})T+ \frac{\mathrm{diam}^2(\mathcal{C})}{2\eta} + \frac{1-\|\underline{\bx}\|_{\infty}}{4}L\delta T 
        \\&\quad \quad + L_1\delta T + \lambda M T+ \frac{1-\|\underline{\bx}\|_{\infty}}{4}L_1\frac{\delta}{R} T\mathrm{diam}(\mathcal{C}).
    \end{align*}
    Let $\lambda = \mathrm{diam}^{2/3}(\mathcal{C}) d^{1/3}T^{-1/5}, \delta = \mathrm{diam}^{-1/3}(\mathcal{C})d^{1/3}T^{-1/5}, \eta = \mathrm{diam}^{4/3}(\mathcal{C})d^{-1/3}T^{-4/5}$, we have,
    \begin{align*}
        \frac{1-\|\underline{\bx}\|_{\infty}}{4}\sum_{t=1}^T f_t(\bx_{\delta'}^*) - \E \left(\sum_{t=1}^T f_t(\bx_t)\right)\leq O(d^{1/3}T^{4/5}).
    \end{align*} 
\end{proof}
\section{Proofs in Section~\ref{sec:minimax}}\label{Appendix:minimax}

The next Lemma is immediately derived according to \cref{corollary: monotone stationary point}, \cref{corollary: nonmonotone stationary point} and the property of convex functions.
\begin{lemma}\label{lem:nonoblivious first order relation}
    Let $\hat{f}$ be a multi-linear extension of a convex-submodular function $f$. For any $\bx,\bx_1,\bx_2\in \mathcal{K}$, $\by,\by_1,\by_2\in\mathcal{C}$, the following holds 
    \begin{align}
        \langle \bx_2-\bx_1, \nabla_{\bx} \hat{f}(\bx_1,\by)\rangle &\leq \hat{f}(\bx_2,\by) - \hat{f}(\bx_1,\by).
    \end{align}
    If $f(\bx,S)$ is monotone w.r.t. $S$, then 
    \begin{align}
        \left\langle \by_2-\by_1, \int_0^1 e^{z-1} \nabla_{\by} \hat{f}(\bx,z\cdot \by_1) \mathrm{d}z\right\rangle &\geq \left(1-\frac{1}{e}\right)\hat{f}(\bx,\by_2) - \hat{f}(\bx,\by_1).
    \end{align}
    If $f(\bx,S)$ is not assumed to be monotone, then
    \begin{align}
        \left\langle \by_2-\by_1, \int_0^1 \frac{1}{8(1-\frac{z}{2})^3} \nabla_{\by} \hat{f}\left(\bx,\frac{z}{2}\cdot\by_1+(1-\frac{z}{2})\cdot\underline{\by}\right) \mathrm{d}z\right\rangle &\geq \frac{1-\|\underline{\by}\|_{\infty}}{4}\hat{f}(\bx,\by_2) -  \hat{f}\left(\bx,\frac{\by_1+\underline{\by}}{2}\right),
    \end{align}where $\underline{\boldsymbol{y}}:=\argmin_{\boldsymbol{y}\in \mathcal{C}} \|\boldsymbol{y}\|_{\infty}$.
\end{lemma}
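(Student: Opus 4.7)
The plan is to reduce each of the three inequalities to a tool that has already been established earlier in the paper: the first-order characterization of convex functions for the $\bx$-inequality, \cref{corollary: monotone stationary point} for the monotone $\by$-inequality, and \cref{corollary: nonmonotone stationary point} for the non-monotone $\by$-inequality. The key structural observation is that the multi-linear extension $\hat{f}(\bx,\by)=\E_{S\sim\by}\bigl(f(\bx,S)\bigr)$ inherits convexity in $\bx$ from $f(\cdot,S)$ (since a convex combination of convex functions is convex), and inherits monotonicity and DR-submodularity in $\by$ from $f(\bx,\cdot)$ (this is the classical multi-linear relaxation fact, and is also implicit in the discussion following \cref{qx_add_thm:2}).

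First I would prove the $\bx$-inequality. Fixing $\by\in\mathcal{C}$, the function $\bx\mapsto\hat{f}(\bx,\by)$ is a convex combination of $\bx\mapsto f(\bx,S)$ over subsets $S$, hence convex. The claimed inequality is then just the standard first-order inequality $\hat{f}(\bx_2,\by)-\hat{f}(\bx_1,\by)\geq\langle\bx_2-\bx_1,\nabla_{\bx}\hat{f}(\bx_1,\by)\rangle$, obtained by exchanging $\nabla_{\bx}$ with the finite (or probabilistic) sum defining $\hat{f}$.

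Next I would prove the monotone $\by$-inequality. For fixed $\bx\in\mathcal{K}$, define $g(\by):=\hat{f}(\bx,\by)$; this $g$ is monotone DR-submodular on $[0,1]^{|V|}$, and therefore fits the hypothesis of \cref{corollary: monotone stationary point} with $\gamma=1$. That corollary, instantiated with $g$ in place of $f$ and with the non-oblivious function $G$ whose gradient is $\nabla G(\by)=\int_0^1 e^{z-1}\nabla g(z\cdot\by)\mathrm{d}z$, gives exactly $\langle\nabla G(\by_1),\by_2-\by_1\rangle\geq(1-e^{-1})g(\by_2)-g(\by_1)$. Rewriting $\nabla g(z\cdot\by_1)=\nabla_{\by}\hat{f}(\bx,z\cdot\by_1)$ yields the stated inequality.

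Finally, for the non-monotone $\by$-inequality I would apply \cref{corollary: nonmonotone stationary point} to the same $g(\by)=\hat{f}(\bx,\by)$, which is DR-submodular in $\by$ (without monotonicity). The corollary yields $\langle\nabla F(\by_1),\by_2-\by_1\rangle\geq\tfrac{1-\|\underline{\by}\|_{\infty}}{4}g(\by_2)-g\!\left(\tfrac{\by_1+\underline{\by}}{2}\right)$, where $\nabla F(\by_1)=\int_0^1\tfrac{1}{8(1-z/2)^3}\nabla g\bigl(\tfrac{z}{2}(\by_1-\underline{\by})+\underline{\by}\bigr)\mathrm{d}z$; writing $\nabla g=\nabla_{\by}\hat{f}(\bx,\cdot)$ gives precisely the third inequality. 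The only thing to verify carefully is the inheritance of monotonicity/DR-submodularity by the multi-linear extension, which is standard and can be checked by differentiating the explicit representation of $\hat{f}$ — this is the main (albeit routine) obstacle, since everything else is an immediate invocation of previously proved corollaries.
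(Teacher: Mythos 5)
Your proposal matches the paper's own argument: the paper derives this lemma directly from \cref{corollary: monotone stationary point}, \cref{corollary: nonmonotone stationary point}, and the first-order property of convex functions, after noting (as you do) that the multi-linear extension is convex in $\bx$ and inherits DR-submodularity and monotonicity in $\by$. Your invocation of the corollaries with $\gamma=1$ for fixed $\bx$ is exactly the intended reduction, so the proof is correct and essentially identical in approach.
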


\subsection{Proof of \texorpdfstring{\cref{thm:minimax}}{26}}\label{Appendix:mono_minimax}
\begin{proof}
At first, we prove that, for any $\bx\in\mathcal{K}$,
\begin{equation*}
    \begin{aligned}
        \|\bx_{t+1}-\bx\|^2&\le\|\bx_{t}-\eta\widetilde{\nabla}_{\bx} \hat{f}(\bx_t,\by_t)-\bx\|^2\\
        &=\|\bx_{t}-\bx\|^2-2\eta \langle \bx_{t} - \bx, \widetilde{\nabla}_{\bx} \hat{f}(\bx_t,\by_t)\rangle+\|\eta\widetilde{\nabla}_{\bx} \hat{f}(\bx_t,\by_t)\|^{2}.
    \end{aligned}
\end{equation*}
As a result, we have that
\begin{equation}\label{equ:m3}
    2\eta \langle \bx_{t} - \bx, \widetilde{\nabla}_{\bx} \hat{f}(\bx_t,\by_t)\rangle\le \|\bx_{t}-\bx\|^2- \|\bx_{t+1}-\bx\|^2+\|\eta\widetilde{\nabla}_{\bx} \hat{f}(\bx_t,\by_t)\|^{2}.
\end{equation}
Similarly, we also can show that, for any $\by\in\mathcal{C}$,
\begin{equation}\label{equ:m4}
    -2\eta\left\langle \by_{t} - \by, (1-e^{-1})\widetilde{\nabla}_{\by} \hat{f}(\bx_t,z_t\cdot \by_t)\right\rangle\le\|\by_t-\by\|^2 - \|\by_{t+1}-\by\|^2+\|\eta(1-e^{-1})\widetilde{\nabla}_{\by} \hat{f}(\bx_t,z_t\cdot \by_t)\|^{2}.
\end{equation}

    According to \cref{lem:nonoblivious first order relation}, we have
    \begin{equation}\label{equ:m1}
        \begin{aligned}
            2\eta\left(\hat{f}(\bx,\by_t)-\hat{f}(\bx_{t},\by_t)\right)&\geq 2\eta\langle \bx-\bx_{t}, \widetilde{\nabla}_{\bx} \hat{f}(\bx_t,\by_t) \rangle 
            \\& \geq \|\bx_{t+1}-\bx\|^2-\|\bx_t-\bx\|^2-\|\eta\widetilde{\nabla}_{\bx} \hat{f}(\bx_t,\by_t)\|^{2},
        \end{aligned}
    \end{equation}
    and
    \begin{equation}\label{equ:m2}
        \begin{aligned}
            &2\eta\left((1-e^{-1})\hat{f}(\bx_t,\by)-\hat{f}(\bx_t,\by_t)\right)
            \\&\leq  2\eta \left\langle \by-\by_t, \int_0^1 e^{z-1}\nabla_{\by} \hat{f}(\bx_t,z\cdot \by_t) \mathrm{d}z\right\rangle
            \\& = 2\eta\E \left(\left\langle \by-\by_t, (1-e^{-1})\widetilde{\nabla}_{\by} \hat{f}(\bx_t,z_t\cdot \by_t) \right\rangle \bigg| \bx_t,\by_t\right)
            \\&\leq \E\left(\left.\|\by_t-\by\|^2-\|\by_{t+1}-\by\|^2+\|\eta(1-e^{-1})\widetilde{\nabla}_{\by} \hat{f}(\bx_t,z_t\cdot \by_t)\|^{2} \right| \bx_t, \by_t \right).
        \end{aligned}
    \end{equation}
    Combining (\ref{equ:m1}) and (\ref{equ:m2}), we have 
    \begin{equation}
        \begin{aligned}
            &2\eta \E \left((1-e^{-1})\hat{f}(\bx_t,\by)-\hat{f}(\bx,\by_t)\right)\\
            &=2\eta \E \left((1-e^{-1})\hat{f}(\bx_t,\by)-\hat{f}(\bx_t,\by_t)+\hat{f}(\bx_t,\by_t)-\hat{f}(\bx,\by_t)\right)
        \\&\leq \E\left(\|\by_t-\by\|^2-\|\by_{t+1}-\by\|^2+\|\bx_t-\bx\|^2-\|\bx_{t+1}-\bx\|^2\right)\\
        &+\E\left(\|\eta(1-e^{-1})\widetilde{\nabla}_{\by} \hat{f}(\bx_t,z_t\cdot \by_t)\|^{2}+\|\eta\widetilde{\nabla}_{\bx} \hat{f}(\bx_t,\by_t)\|^{2}\right).
        \end{aligned}
    \end{equation}

     Sum over $t$ and divide by $2\eta T$,
     \begin{equation}
         \begin{aligned}
             &\E\left(\sum_{t=1}^{T} \frac{1}{T}\left((1-e^{-1})\hat{f}(\bx_t,\by)-\hat{f}(\bx,\by_t)\right)\right)
             \\&\leq\frac{\|\by_1-\by\|^2+\|\bx_1-\bx\|^2}{2\eta T} + \frac{1}{2\eta T}\sum_{t=1}^T\E\left(\|\eta \widetilde{\nabla}_{\bx} \hat{f}(\bx_{t},\by_t) \|^2 + \|\eta (1-e^{-1})\widetilde{\nabla}_{\by} \hat{f}(\bx_t,z_t \cdot\by_t)\|^2\right)
             \\&\leq \frac{\mathrm{diam}^2(\mathcal{C})+\mathrm{diam}^2(\mathcal{K})}{2\eta T}+\frac{(2-e^{-1})}{2}\eta \widetilde{G}^2.
         \end{aligned}
     \end{equation}
    Let $\eta = \frac{\sqrt{\mathrm{diam}^2(\mathcal{C})+\mathrm{diam}^2(\mathcal{K})}}{\widetilde{G}\sqrt{T}}$, $\bx_{sol}=\sum_{t=1}^T\frac{1}{T}\bx_t$, $\by^* = \argmax_{\by\in\mathcal{C}} \E\left(\hat{f}(\bx_{sol},\by)\right)$, $\bx^*= \arg \min_{\bx\in \mathcal{K}}\max_{\by\in\mathcal{C}} \hat{f}(\bx,\by)$, then
    \begin{equation}\label{eq:77}
        \begin{aligned}
            \E\left(\sum_{t=1}^{T} \frac{1}{T}\left((1-e^{-1})\hat{f}(\bx_t,\by)-\hat{f}(\bx,\by_t)\right)\right)\leq \frac{(3-e^{-1})\widetilde{G}\sqrt{\mathrm{diam}^2(\mathcal{C})+\mathrm{diam}^2(\mathcal{K})}}{2\sqrt{T}}.
        \end{aligned}
    \end{equation}
    Therefore,
    \begin{equation}
        \begin{aligned}
            (1-e^{-1})\E\left(f\left(\bx_{sol},\by^*\right)\right)-OPT
            &=(1-e^{-1})\E\left(f\left(\bx_{sol},\by^*\right)\right)-\max_{\by\in\mathcal{C}}\hat{f}(\bx^*,\by)
            \\&\leq \E\left(\sum_{t=1}^T \frac{1}{T}\left((1-e^{-1})\hat{f}(\bx_t,\by^*)-\hat{f}(\bx^*,\by_t)\right)\right)
            \\&\leq\frac{(3-e^{-1})\widetilde{G}\sqrt{\mathrm{diam}^2(\mathcal{C})+\mathrm{diam}^2(\mathcal{K})}}{2\sqrt{T}}.
        \end{aligned}
    \end{equation}
    The first inequality comes from the convexity of $\hat{f}(\bx,\by)$ w.r.t. $\bx$ and $\max_{\by\in\mathcal{C}}\hat{f}(\bx^*,\by)\geq \hat{f}(\bx^*,\by)\geq \hat{f}(\bx^*,\by_t), \forall t\in [T]$. The second inequality is achieved by setting $\by = \by^*, \bx=\bx^*$ in \eqref{eq:77}.
    Let $T = \frac{(3-e^{-1})^2 \widetilde{G}^2\left(\mathrm{diam}^2(\mathcal{C})+\mathrm{diam}^2(\mathcal{K})\right)}{4\epsilon^2}$, $\bx_{sol}$ is a $(1-e^{-1},\epsilon)$-approximation solution.
\end{proof}

\subsection{Proof of \texorpdfstring{\cref{thm:minimax nonmonotone}}{27}}\label{Appendix:non_mono_minimax}
\begin{proof}
    Similar to (\ref{equ:m3}) and (\ref{equ:m4}), for any $\bx\in \mathcal{K}$ and $\by\in\mathcal{C}$, we have
    \begin{equation}
        \begin{aligned}
            2\eta\left\langle\bx_t-\bx, \widetilde{\nabla}_{\bx}\hat{f}\left(\bx_t,\frac{\by_t+\underline{\by}}{2}\right)\right\rangle \leq \|\bx_t-\bx\|^2-\|\bx_{t+1}-\bx\|^2 +\|\eta\widetilde{\nabla}_{\bx}\hat{f}\left(\bx_t,\frac{\by_t+\underline{\by}}{2}\right)\|^2
        \end{aligned}
    \end{equation}
    and 
    \begin{equation}
    \begin{aligned}
         -2\eta\left\langle\by_t-\by, \frac{3}{8}\widetilde{\nabla}_{\by}\hat{f}\left(\bx_t,\frac{z_t}{2}\cdot\by_t +\left(1-\frac{z_t}{2}\right)\underline{\by}\right)\right\rangle&\leq \|\by_t-\by\|^2-\|\by_{t+1}-\by\|^2\\&+\|\frac{3\eta}{8}\widetilde{\nabla}_{\by}\hat{f}\left(\bx_t,\frac{z_t}{2}\cdot\by_t +\left(1-\frac{z_t}{2}\right)\underline{\by}\right)\|^2.
    \end{aligned}
    \end{equation}

    According to \cref{lem:nonoblivious first order relation}, we have
    \begin{equation}\label{equ:m5}
        \begin{aligned}
            2\eta\left(\hat{f}\left(\bx,\frac{\by_t+\underline{\by}}{2}\right)-\hat{f}\left(\bx_t,\frac{\by_t+\underline{\by}}{2}\right)\right)&\geq 2\eta\left\langle \bx-\bx_{t}, \widetilde{\nabla}_{\bx} \hat{f}\left(\bx_t,\frac{\by_t+\underline{\by}}{2}\right) \right\rangle 
            \\& \geq \|\bx_{t+1}-\bx\|^2-\|\bx_t-\bx\|^2-\|\eta\widetilde{\nabla}_{\bx}\hat{f}\left(\bx_t,\frac{\by_t+\underline{\by}}{2}\right)\|^2,
        \end{aligned}
    \end{equation}
    and
    \begin{equation}\label{equ:m6}
        \begin{aligned}
            &2\eta\left(\frac{1-\|\underline{\by}\|_{\infty}}{4}\hat{f}(\bx_t,\by)-\hat{f}\left(\bx_t,\frac{\by_t+\underline{\by}}{2}\right)\right)
            \\&\leq  2\eta \left\langle \by-\by_t, \int_0^1 \frac{1}{8(1-\frac{z}{2})^3}\nabla_{\by} \hat{f}\left(\bx_t,\frac{z}{2}\cdot\by_t + (1-\frac{z}{2})\underline{\by_t}\right) \mathrm{d}z\right\rangle
            \\& = 2\eta\E \left(\left\langle \by-\by_t,\frac{3}{8} \widetilde{\nabla}_{\by} \hat{f}\left(\bx_t,\frac{z_t}{2}\cdot\by_t + (1-\frac{z_t}{2})\cdot\underline{\by_t}\right) \right\rangle \bigg| \bx_t,\by_t\right)
            \\&\leq \E\left(\left.\|\by_t-\by\|^2-\|\by_{t+1}-\by\|^2 +\|\frac{3\eta}{8}\widetilde{\nabla}_{\by}\hat{f}\left(\bx_t,\frac{z_t}{2}\cdot\by_t +\left(1-\frac{z_t}{2}\right)\underline{\by}\right)\|^2\right| \bx_t, \by_t \right).
        \end{aligned}
    \end{equation}
    Combining (\ref{equ:m5}) and (\ref{equ:m6}), we have 
    \begin{equation}
        \begin{aligned}
            &2\eta \E \left(\frac{1-\|\underline{\by}\|_{\infty}}{4}\hat{f}(\bx_t,\by)-\hat{f}\left(\bx_t,\frac{\by_t+\underline{\by}}{2}\right)\right)
        \\&\leq \E\left(\|\by_t-\by\|^2-\|\by_{t+1}-\by\|^2+\|\bx_t-\bx\|^2-\|\bx_{t+1}-\bx\|^2\right)\\
        &+\E\left(\|\frac{3\eta}{8}\widetilde{\nabla}_{\by}\hat{f}\left(\bx_t,\frac{z_t}{2}\cdot\by_t +\left(1-\frac{z_t}{2}\right)\underline{\by}\right)\|^2+\|\eta\widetilde{\nabla}_{\bx}\hat{f}\left(\bx_t,\frac{\by_t+\underline{\by}}{2}\right)\|^2\right).
        \end{aligned}
    \end{equation}

     Sum over $t$ and divide by $\eta T$,
     \begin{equation}
         \begin{aligned}
             &\E\left(\sum_{t=1}^{T} \frac{1}{T}\left(\frac{1-\|\underline{\by}\|_{\infty}}{4}\hat{f}(\bx_t,\by)-\hat{f}\left(\bx_t,\frac{\by_t+\underline{\by}}{2}\right)\right)\right)
             \\&\leq \frac{\mathrm{diam}^2(\mathcal{K})+\mathrm{diam}^2(\mathcal{C})}{2\eta T} + \frac{11}{16}\eta \widetilde{G}^2.
         \end{aligned}
     \end{equation}
    Let $\eta=\frac{\sqrt{\mathrm{diam}^2(\mathcal{K})+\mathrm{diam}^2(\mathcal{C})}}{\widetilde{G}\sqrt{T}}$, $\bx_{sol}=\sum_{t=1}^T\frac{1}{T}\bx_t$, $\by^* = \argmax_{\by\in\mathcal{C}} \E\left(\hat{f}(\bx_{sol},\by)\right)$, $\bx^*= \arg \min_{\bx\in \mathcal{K}}\max_{\by\in\mathcal{C}} \hat{f}(\bx,\by)$,
    \begin{equation}
        \begin{aligned}
            \frac{1-\|\underline{\by}\|_{\infty}}{4}\E\left(f\left(\bx_{sol},\by^*\right)\right)-OPT
            &=\frac{1-\|\underline{\by}\|_{\infty}}{4}\E\left(f\left(\bx_{sol},\by^*\right)\right)-\max_{\by\in\mathcal{C}}\hat{f}(\bx^*,\by)
            \\&\leq \E\left(\sum_{t=1}^T \frac{1}{T}\left(\frac{1-\|\underline{\by}\|_{\infty}}{4}\hat{f}(\bx_t,\by^*)-\hat{f}\left(\bx^*,\frac{\by_t+\underline{\by}}{2}\right)\right)\right)
            \\&\leq \frac{19G\sqrt{\mathrm{diam}^2(\mathcal{K})+\mathrm{diam}^2(\mathcal{C})}}{16\sqrt{T}}
        \end{aligned}
    \end{equation}
    The first inequality comes from the convexity of $\hat{f}(\bx,\by)$ w.r.t. $\bx$ and $\max_{\by\in\mathcal{C}}\hat{f}(\bx^*,\by)\geq \hat{f}(\bx^*,\by)\geq \hat{f}(\bx^*,\frac{\by_t+\underline{\by}}{2}), \forall t\in [T]$. Let $T = \frac{361 \widetilde{G}^2\left(\mathrm{diam}^2(\mathcal{C})+\mathrm{diam}^2(\mathcal{C})\right)}{256\epsilon^2}$, since $\|\underline{\by}\|_{\infty}=0$ when $\mathcal{C}$ is a matroid convex hull, $\bx_{sol}$ is a $(\frac{1}{4},\epsilon)$-approximation solution.
\end{proof}
\section{Experiments about Submodular Quadratic Programming}\label{Appendix:QP}
\begin{table}[t]\label{tab:QP_mono}
	\renewcommand\arraystretch{1.35}
	\centering
	\caption{\small \cref{tab:tab:QP_online_mono} shows the final $(1-1/e)$-Regret ratio and running time of online monotone DR-submodular quadratic programming. Note that `\textbf{Feedback Type}' means the form of objectives revealed by the environment during the process of online learning, `Full Feedback', `Delayed Feedback' and `Bandit Feedback' means that the object function is returned in full, delayed and bandit setting respectively. `\textbf{$(1-1/e)$-Regret Ratio}' means the ratio between $(1-1/e)$-Regret and timestamp at the $150$-th iteration, where we use a $500$-round continuous greedy method, namely, Algorithm 1 in \citep{bian2017guaranteed} as baseline to compute the $(1-1/e)$-regret.}
	\vspace{0.5em}
	\resizebox{0.8\textwidth}{!}{
		\setlength{\tabcolsep}{1.0mm}{
			\begin{tabular}{c|c|c|c}
				\toprule[1.5pt]
				\textbf{Feedback Type} &\textbf{Algorithm}& \textbf{$(1-1/e)$-Regret Ratio}&\textbf{Running time(seconds)}\\
				\hline
				\multirow{7}{*}{Full Feedback}& OGA(5)&0.620&0.191s\\
    \cline{2-4}
				~&\textbf{OBGA(5)}&\textbf{0.609}&\textbf{0.194s}\\
    \cline{2-4}
   ~&3/2-Meta-FW&0.708&422.81s\\
    \cline{2-4}
    ~&3/2-Meta-FW-VR&0.602&421.80s\\
    \cline{2-4}
 ~&Mono-FW&8.690&0.193s\\ 
    \cline{2-4}
 ~&3/4-Meta-FW-VR&0.636&7.80s\\   
 \cline{2-4}
  ~&1/2-Meta-FW-VR& 0.664&2.22s\\
    \hline
     \hline
    \multirow{6}{*}{Delayed Feedback}&OGA(5)&1.016&0.214s\\
    \cline{2-4}
				~&\textbf{OBGA(5)}&\textbf{1.013}&\textbf{0.214s}\\
    \cline{2-4}
 ~&3/2-Meta-FW&1.076&455.72s\\
\cline{2-4}
~&3/2-Meta-FW-VR&1.002&456.81s\\
\cline{2-4}
~&3/4-Meta-FW-VR&1.024&8.64s\\
\cline{2-4}
~&1/2-Meta-FW-VR&1.046&2.46s\\
    \hline
     \hline
\multirow{2}{*}{Bandit Feedback}&\textbf{Bandit-BGA(5)}&\textbf{23.265}&\textbf{0.037s}\\
\cline{2-4}
~&Bandit-FW&61.339&0.164s\\
\midrule[1.5pt]
\end{tabular}}}
	\vspace{-1.5em}
	\label{tab:tab:QP_online_mono}
\end{table}
\begin{table}[t]
	\renewcommand\arraystretch{1.35}
	\centering
 \caption{\small\cref{tab:QP_online_non_mono} shows the  final regret ratio and running time of online non-monotone DR-submodular quadratic programming. Note that `\textbf{Feedback Type}' means the form of objectives revealed by the environment during the process of online learning, `Full Feedback', `Delayed Feedback' and `Bandit Feedback' means that the object function is returned in full, delayed and bandit setting respectively. `\textbf{Regret Ratio}' means the ratio between regret and time horizon at the $150$-th iteration, where we use a $500$-round deterministic Measured Frank Wolfe, namely, Algorithm 2 in \citep{mitra2021submodular+} as baseline to compute the regret.}
	\vspace{0.5em}
	\resizebox{0.77\textwidth}{!}{
		\setlength{\tabcolsep}{1.0mm}{
			\begin{tabular}{c|c|c|c}
				\toprule[1.5pt]
				\textbf{Feedback Type} &\textbf{Algorithm}& \textbf{Regret Ratio}&\textbf{Running time(seconds)}\\
				\hline
				\multirow{8}{*}{Full Feedback}& OGA(5)&0.082&0.218s\\
    \cline{2-4}
				~&\textbf{OBGA(5)}&\textbf{0.022}&\textbf{0.226s}\\
    \cline{2-4} 
    ~&Non-mono-Meta-FW&0.0340&10.010s\\
    \cline{2-4}
   ~&3/2-Measured-MFW&0.215& 374.882s\\
    \cline{2-4}
    ~&3/2-Measured-MFW-VR&0.079&384.325s\\
    \cline{2-4}
    ~&3/4-Measured-MFW-VR&0.147&8.744s\\
    \cline{2-4}
 ~&Mono-MFW&0.231&0.190s\\ 
    \cline{2-4}
 ~&1/2-Meta-FW-VR&0.186&2.479s\\
    \hline
     \hline
     
    \multirow{7}{*}{Delayed Feedback}&OGA(5)&0.093&0.230s\\
    \cline{2-4}
				~&\textbf{OBGA(5)}&\textbf{0.024}&\textbf{0.245s}\\
    \cline{2-4}
 ~&3/2-Measured-MFW&0.2145&398.24s\\
\cline{2-4}
 ~&Non-mono-Meta-FW&0.0337&10.83s\\
\cline{2-4}
~&3/2-Measured-MFW-VR&0.0763&409.405s\\
\cline{2-4}
~&3/2-Measured-MFW-VR&0.141&9.243s\\
\cline{2-4}
~&1/2-Measured-MFW-VR&0.181&2.642s\\
    \hline
     \hline
\multirow{2}{*}{Bandit Feedback}&\textbf{Bandit-BGA(5)}&\textbf{0.042}&\textbf{0.040s}\\
\cline{2-4}
~&Bandit-MFW&0.231&0.176s\\
\midrule[1.5pt]
\end{tabular}}}
	\vspace{-1.5em}
	\label{tab:QP_online_non_mono}
\end{table}
\subsection{Non-Convex/Non-Concave Quadratic Programming}\label{sec:qp}
\ \ \ \textbf{Monotone Settings:} We consider the quadratic objective $f(\boldsymbol{x}) = \frac{1}{2}\boldsymbol{x}^{T}\boldsymbol{H}\boldsymbol{x} + \boldsymbol{h}^{T}\boldsymbol{x}$ and constraints $P=\{\boldsymbol{x}\in \mathbb{R}^{n}_{+} | \boldsymbol{A}\boldsymbol{x}\le\boldsymbol{b}, \boldsymbol{0}\le\boldsymbol{x}\le\boldsymbol{u}, \boldsymbol{A}\in \mathbb{R}^{m\times n}_{+}, \boldsymbol{b}\in \mathbb{R}_{+}^{m}\}$. Following \cite{bian2017guaranteed}, we choose the matrix $\boldsymbol{H}\in \mathbb{R}^{n\times n}$ to be a randomly generated symmetric matrix with entries uniformly distributed in $[-1,0]$, and the matrix $\boldsymbol{A}$ to be a random matrix with entries uniformly distributed in $[0,1]$. 
It can be verified that $f$ is a continuous DR-submodular function. We also set $\boldsymbol{b}=\boldsymbol{u}=\boldsymbol{1}$, $n=50$ and $m=\lfloor0.2n\rfloor$. To ensure the monotonicity, we set $\boldsymbol{h}=-\boldsymbol{H}^{T}\boldsymbol{u}$. Thus, the objective becomes $f(x)=(\frac{1}{2}\boldsymbol{x}-\boldsymbol{u})^{T}\boldsymbol{H}\boldsymbol{x}\ge0$. We consider the Gaussian noise for gradient, i.e., $[\widetilde{\nabla}f(\boldsymbol{x})]_{i}=[\nabla f(\boldsymbol{x})]_{i}+\delta\mathcal{N}(0,1)$ for any $i\in[n]$ where $\delta=5$. Furthermore, we start all algorithms from the origin.
As shown in Figure~\ref{graph21}, BGA(5) converges faster than GA(5) and achieves nearly the same objective values as GA after $70$ iterations. Similar to the previous experiment, BGA(5) and GA(5) exceed Frank-Wolfe-type algorithms with respect to the convergence rate.

\textbf{Non-Monotone Settings:} We consider the quadratic objective $g(\boldsymbol{x}) = \frac{1}{2}\boldsymbol{x}^{T}\boldsymbol{H}\boldsymbol{x} + \boldsymbol{h}^{T}\boldsymbol{x}+c$ and constraints $P=\{\boldsymbol{x}\in \mathbb{R}^{n}_{+} | \boldsymbol{A}\boldsymbol{x}\le\boldsymbol{b}, \boldsymbol{0}\le\boldsymbol{x}\le\boldsymbol{u}, \boldsymbol{A}\in \mathbb{R}^{m\times n}_{+}, \boldsymbol{b}\in \mathbb{R}_{+}^{m}\}$. Similarly, we choose the matrix $\boldsymbol{H}\in \mathbb{R}^{n\times n}$ to be a randomly generated symmetric matrix with entries uniformly distributed in $[-1,0]$, $\boldsymbol{h}$ to be a random vector with entries uniformly distributed in $[0,1]$ and the matrix $\boldsymbol{A}$ to be a random matrix with entries uniformly distributed in $[0,1]$. As a result, $g$ is a continuous DR-submodular function. To ensure the $g\ge 0$, we set $c=n$. We also set $\boldsymbol{b}=\boldsymbol{u}=\boldsymbol{1}$, $n=50$ and $m=\lfloor0.2n\rfloor$. We consider the Gaussian noise for gradient, i.e., $[\widetilde{\nabla}g(\boldsymbol{x})]_{i}=[\nabla g(\boldsymbol{x})]_{i}+\delta\mathcal{N}(0,1)$ for any $i\in[n]$ where $\delta=1$.

According to the results in Figure~\ref{graph22}, BGA(5) achieves better function value than Measured FW and Non-mono FW. Measured FW-VR surpasses BGA(5) after $100$ iterations, which may be caused by the down-closed property of $P$. Despite the bad approximation guarantee of gradient ascent method in Lemma~\ref{qx_add_lemma:1}, GA(5) achieves the best result over all other four algorithms in the setting of Figure~\ref{graph22}.
\begin{figure*}[t]
\vspace{-1.0em}
\centering
\subfigure[Monotone QP \label{graph21}]{\includegraphics[width=0.45\linewidth]{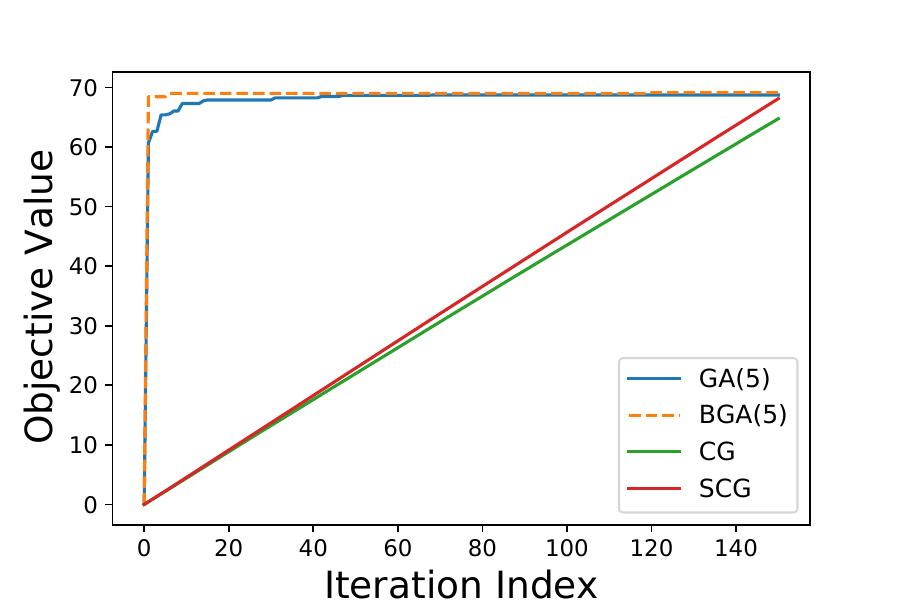}}
\subfigure[Non-Monotone QP\label{graph22}]{\includegraphics[width=0.45\linewidth]{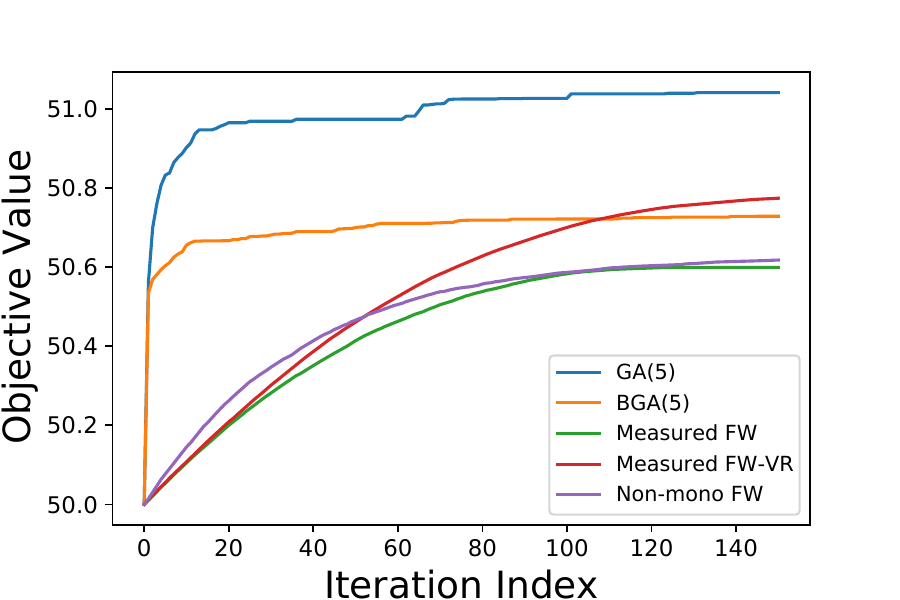}}
\caption{In ~\cref{graph21}, we test the performance of \textbf{GA(5)},\textbf{BGA(5)},\textbf{CG},and \textbf{SCG} in simulated continuous monotone
DR-submodular quadratic programming. \cref{graph22} reports the results of \textbf{GA(5)},\textbf{BGA(5)},\textbf{Measured FW},\textbf{Measured FW-VR} and \textbf{Non-mono FW} in simulated non-monotone
DR-submodular quadratic programming.} 
\vspace{-1.0em}
\end{figure*}
\subsection{Online Non-Convex/Non-Concave Quadratic Programming}
\begin{figure*}[t]
\vspace{-1.0em}
\centering
\subfigure[Monotone Case \label{graph_appendix_QP_1}]{\includegraphics[width=0.3\linewidth]{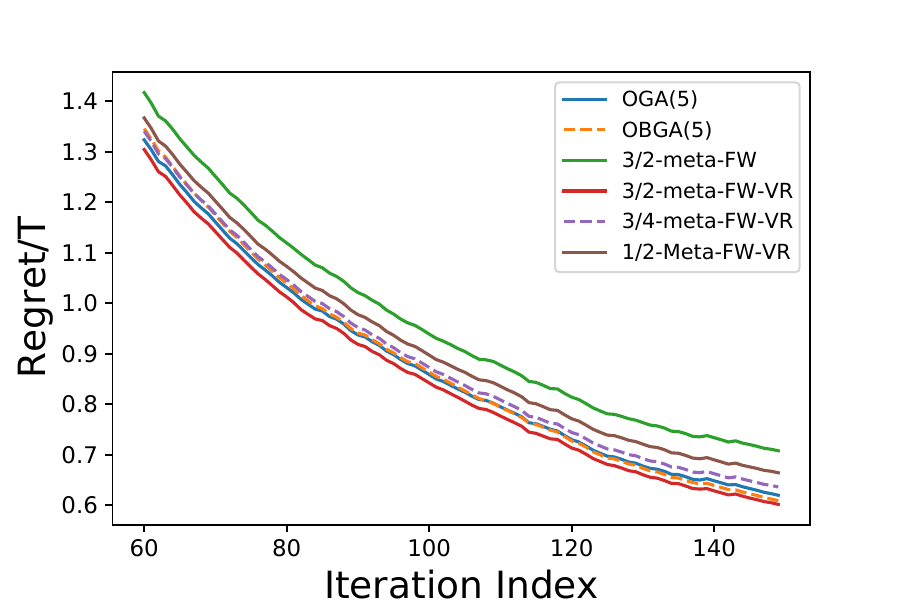}}
\subfigure[Delayed Monotone Case\label{graph_appendix_QP_2}]{\includegraphics[width=0.3\linewidth]{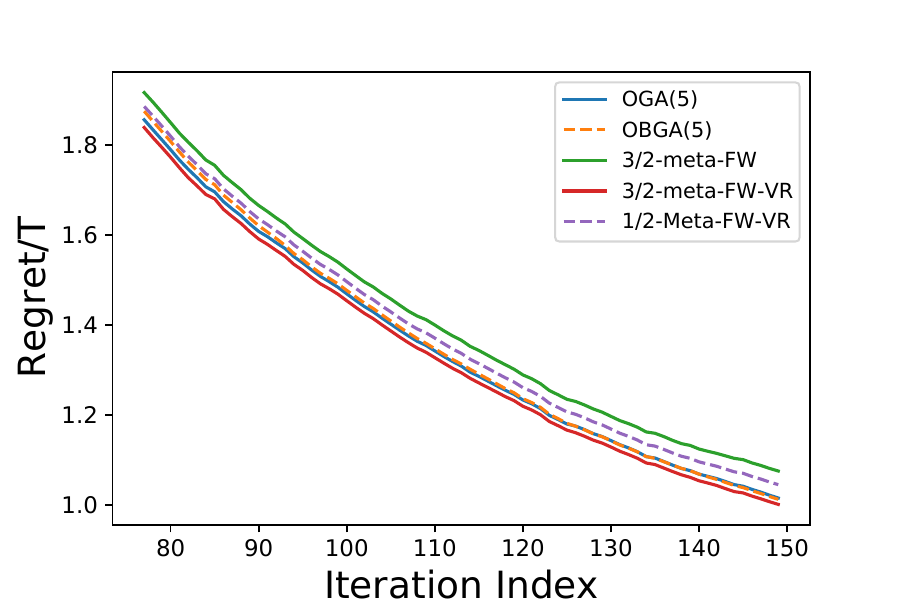}}
\subfigure[Bandit Monotone Case\label{graph_appendix_QP_3}]{\includegraphics[width=0.3\linewidth]{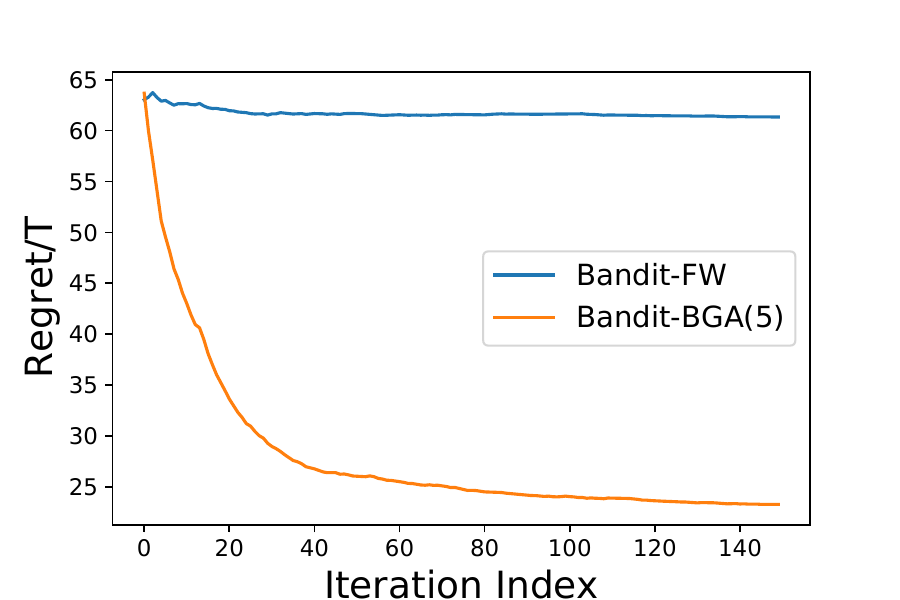}}

\subfigure[Non-Monotone Case\label{graph_appendix_QP_4}]{\includegraphics[width=0.3\linewidth]{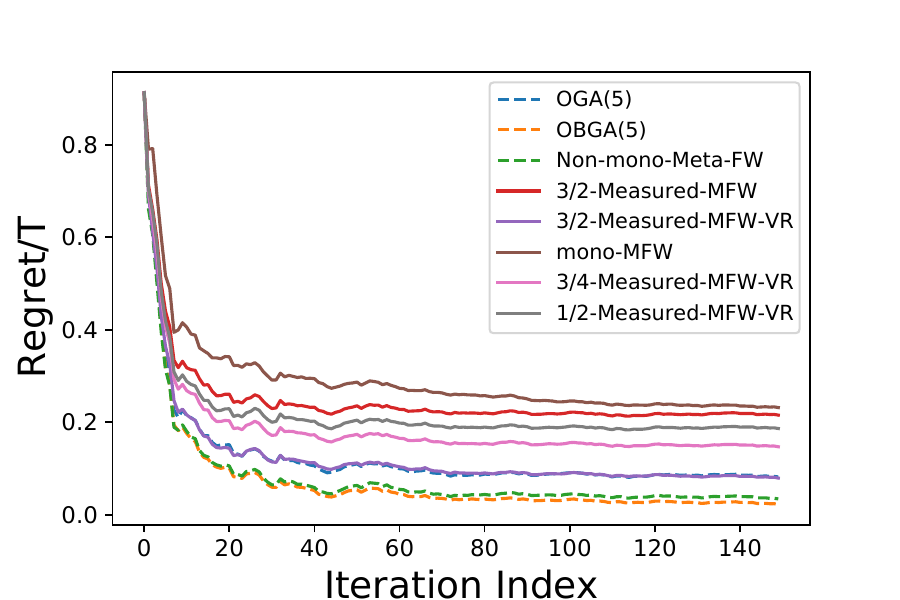}}
\subfigure[Delayed Non-Monotone Case~\label{graph_appendix_QP_5}]{\includegraphics[width=0.3\linewidth]{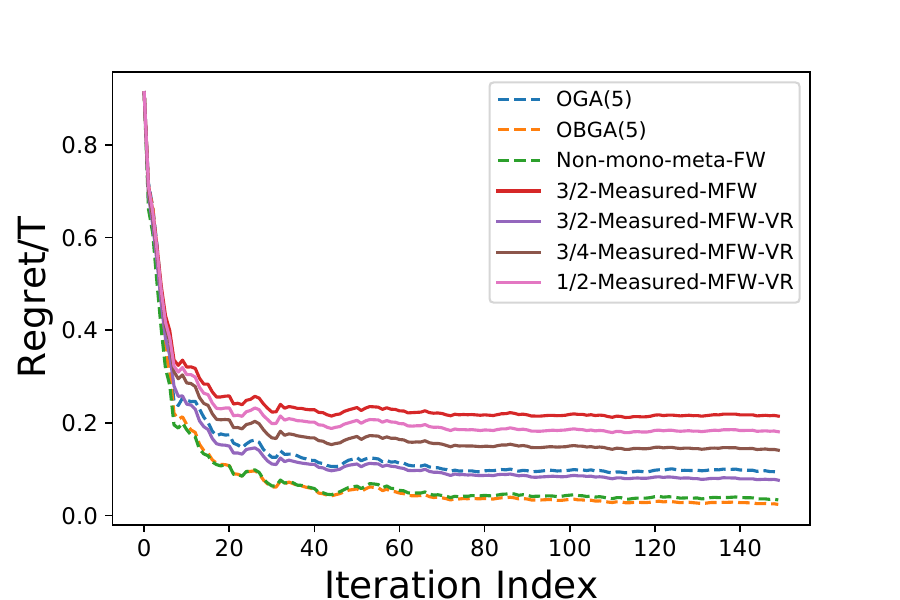}}
\subfigure[Bandit Non-Monotone Case\label{graph_appendix_QP_6}]{\includegraphics[width=0.3\linewidth]{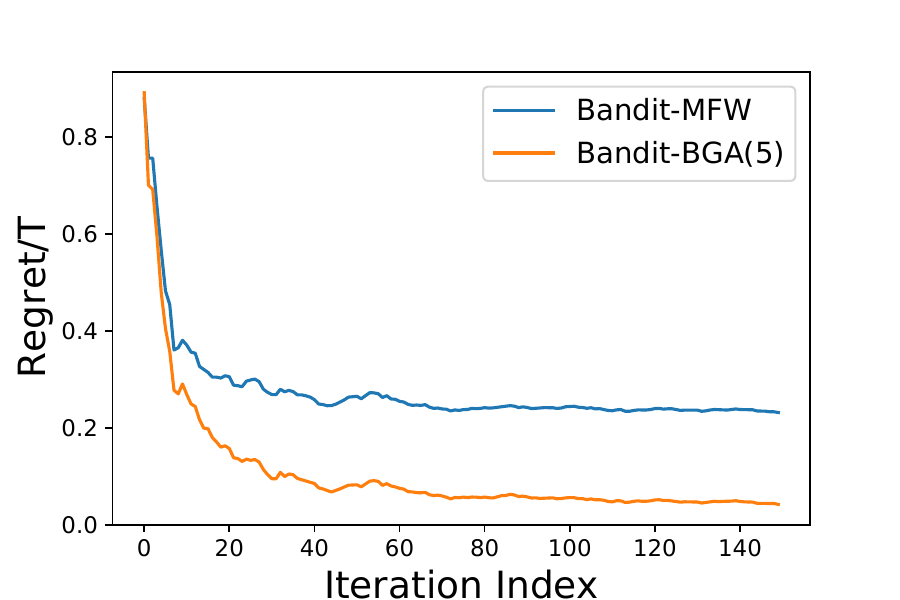}}
\caption{In ~\cref{graph_appendix_QP_1}-\ref{graph_appendix_QP_3}, we report the results for the online monotone DR-submodular quadratic programming under full information, delayed feedback and bandit feedback. Similarly, ~\cref{graph_appendix_QP_4}-\ref{graph_appendix_QP_6} show the results of three different scenarios about general online DR-submodular quadratic programming tasks.}
\vspace{-1.0em}
\end{figure*}
The same as the monotone cases in \cref{sec:qp}, we first generate $T=150$ quadratic objective functions $f_{1}, f_2, \dots, f_{T}$. The symmetric random matrix $H_{t}$ of each $f_t$ is uniformly generated from $[-1,0]^{n\times n}$ for $t=1,\ldots, T$, and the matrix $\boldsymbol{A}$ in constraint is randomly generated from the uniform distribution in $[0,1]^{m\times n}$. As for non-monotone cases, we also generate $T=150$ general quadratic objective functions $g_{1},\dots,g_{T}$ as the non-monotone part of \cref{sec:qp}.  We consider adding the standard Gaussian noise for the gradient of each $f_{t}$ or $g_{t}$. To simulate the feedback delays, we generate a uniform random number $d_{t}$ from $\{1,2,3,4,5\}$ for the $t$-th round stochastic gradient information.  We present the trend of the ratio between regret and time horizon in the \cref{graph_appendix_QP_1}-\ref{graph_appendix_QP_6}, and report the running time and the ratio at $150$-th iteration in \cref{tab:tab:QP_online_mono}-\ref{tab:QP_online_non_mono}, where we leverage the results of deterministic Frank Wolfe algorithms with $500$ iterations as a baseline to compute the regret at each time horizon.

As shown in \cref{tab:tab:QP_online_mono}-\ref{tab:QP_online_non_mono}, our OBGA(5) achieves the minimum $(1-1/e)$-regret except monotone cases with both full and delayed feedback. Moreover, OBGA(5) performs better than OGA(5) at the final stage for all six experiments. According to \cref{tab:tab:QP_online_mono}, our OBGA(5) can be 2000 times faster than the best Frank-Wolfe-tyle algorithm `3/2-Meta-FW-VR' in monotone settings. Similarly, our OBGA(5) is more effective than the best non-monotone Frank-Wolfe-tyle algorithm `3/2-Measured-MFW-VR' according to \cref{tab:QP_online_non_mono}. 
\end{appendices}
\end{document}